\documentclass{article}

\usepackage{etoolbox}
\newtoggle{neurips}
\togglefalse{neurips}     

\newcommand{\neurips}[1]{\iftoggle{neurips}{#1}{}}
\newcommand{\arxiv}[1]{\iftoggle{neurips}{}{#1}}
\newcommand{\loose}{\looseness=-1}

\PassOptionsToPackage{round}{natbib}
\PassOptionsToPackage{dvipsnames}{xcolor}
\PassOptionsToPackage{hypertexnames=false}{hyperref}
\neurips{\usepackage{neurips_2026}}
\arxiv{\usepackage{natbib}}

\usepackage[utf8]{inputenc}
\usepackage[T1]{fontenc}
\usepackage{times}
\usepackage{amsmath}
\usepackage{amssymb}
\usepackage{amsfonts}
\usepackage{mathtools}
\usepackage{bm}
\usepackage{bbm}
\usepackage{nicefrac}
\usepackage[nopatch=footnote]{microtype}
\usepackage{ninecolors}
\usepackage{xcolor}
\usepackage{transparent}
\usepackage{xspace}
\usepackage{pifont}
\usepackage{dsfont}
\usepackage{url}
\usepackage{graphicx}
\usepackage{booktabs}
\usepackage{longtable}
\usepackage{multirow}
\usepackage{makecell}
\usepackage{standalone}
\usepackage{wrapfig}
\usepackage{enumitem}
\usepackage{setspace}
\usepackage{tocloft}
\arxiv{\usepackage{fancyhdr}}
\usepackage{breakcites}
\usepackage{mathrsfs}
\usepackage[scaled=.91]{helvet}
\usepackage{inconsolata}
\usepackage{tablefootnote}
\usepackage{listings}
\usepackage{algorithm}
\usepackage[noend]{algorithmic}
\usepackage{tikz}
\usetikzlibrary{arrows.meta, automata, decorations.pathreplacing, positioning, bending}
\usepackage{amsthm}
\allowdisplaybreaks[4]
\usepackage{thm-restate}
\usepackage{hyperref}
\hypersetup{
  colorlinks=true,
  linkcolor=blue3,
  citecolor=blue3,
  urlcolor=blue3,
  pdftitle={When Does On-Policy Interaction Help? Representational Tradeoffs in Value-Based Imitation Learning},
  pdfauthor={Luca Viano, Antoine Moulin, Audrey Huang, Volkan Cevher, Philip Amortila, Dylan J. Foster}
}
\usepackage[nameinlink,capitalize]{cleveref}
\usepackage{caption}
\usepackage{subcaption}
\usepackage{array}

\newcommand{\appendixtocfooterfields}{%
  \fancyhf{}%
  \renewcommand{\headrulewidth}{0pt}%
  \renewcommand{\footrulewidth}{0pt}%
  \fancyfoot[C]{\thepage}%
  \fancyfoot[R]{\footnotesize\hyperlink{appendix-toc}{Appendix table of contents}}%
}
\newcommand{\enableappendixtocfooter}{%
  \fancypagestyle{appendixfooter}{\appendixtocfooterfields}%
  \fancypagestyle{plain}{\appendixtocfooterfields}%
  \pagestyle{appendixfooter}%
}

\makeatletter
\crefname{ALC@unique}{line}{lines}
\Crefname{ALC@unique}{Line}{Lines}
\crefname{ALC@line}{line}{lines}
\Crefname{ALC@line}{Line}{Lines}
\makeatother

\addtocontents{toc}{\protect\setcounter{tocdepth}{0}}

\arxiv{
  \usepackage[letterpaper, left=1in, right=1in, top=1in, bottom=1in]{geometry}
  \usepackage{parskip}
}

\usepackage{xargs}

\makeatletter
\newcommand{\neutralize}[1]{\expandafter\let\csname c@#1\endcsname\count@}
\makeatother

\declaretheoremstyle[
  spaceabove=0.2\baselineskip plus 1pt minus 0.5pt,
  spacebelow=0.05\baselineskip plus 1pt minus 0.5pt,
  headfont=\bfseries,
  bodyfont=\itshape,
  notefont=\normalfont,
  headpunct={.},
  postheadspace=0.5em,
]{papertheorem}

\declaretheorem[name=Theorem,parent=section,style=papertheorem]{theorem}
\declaretheorem[name=Lemma,parent=section,style=papertheorem]{lemma}
\declaretheorem[name=Assumption,parent=section,style=papertheorem]{assumption}
\declaretheorem[name=Definition,parent=section,style=papertheorem]{definition}
\declaretheorem[name=Condition,parent=section,style=papertheorem]{condition}
\declaretheorem[name=Corollary,parent=section,style=papertheorem]{corollary}

\declaretheorem[qed=$\triangleleft$,name=Example,parent=section,style=papertheorem]{example}

\declaretheorem[name=Proposition,parent=section,style=papertheorem]{proposition}

\newcommand{\pfref}[1]{Proof of \cref{#1}}

\renewcommand{\eqref}[1]{\texorpdfstring{\hyperref[#1]{(\ref*{#1})}}{(\ref*{#1})}}
\crefformat{equation}{#2Eq.\,(#1)#3}
\Crefformat{equation}{#2Eq.\,(#1)#3}
\Crefformat{figure}{#2Figure~#1#3}
\Crefformat{assumption}{#2Assumption~#1#3}
\Crefname{assumption}{Assumption}{Assumptions}
\crefname{fact}{Fact}{Facts}

\usepackage{crossreftools}
\usepackage{xpatch}
\makeatletter
\renewenvironment{proof}[1][Proof]%
{\par\noindent{\bfseries\upshape {#1.}\ }}%
{\qed\newline}
\makeatother

\xpatchcmd{\proof}{\itshape}{\normalfont\proofnameformat}{}{}
\newcommand{\proofnameformat}{\bfseries}

\usepackage[most]{tcolorbox}

\tcbset{
  base/.style={
    arc=0mm,
    bottomtitle=0.5mm,
    boxrule=0mm,
    colbacktitle=!10!white,
    coltitle=black,
    fonttitle=\bfseries,
    left=2.5mm,
    leftrule=1mm,
    right=3.5mm,
    title={#1},
    toptitle=0.75mm,
  }
}

\newtcolorbox{mainbox}[1]{
  colframe=blue!10!black,
  colbacktitle=blue!50!black!30!white,
  colback=blue!2!white,
  enhanced,
  fonttitle=\bfseries,
  attach boxed title to top left={yshift=-2.5mm},
  boxed title style={size=small,colframe=blue!40!black,colback=blue!40!black},
  title={\small\textcolor{white}{\textsc{#1}}}
}

\newtcolorbox{minbox}[1]{
  colframe=blue!10!black,
  colbacktitle=blue!50!black!30!white,
  colback=blue!2!white,
  enhanced,
  fonttitle=\bfseries,
}

\DeclarePairedDelimiter{\prn}{(}{)}

\DeclareMathOperator*{\argmin}{arg\,min} 
\DeclareMathOperator*{\argmax}{arg\,max}             

\newcommand{\mb}[1]{\boldsymbol{#1}}
\newcommand{\wt}[1]{\widetilde{#1}}
\newcommand{\wh}[1]{\widehat{#1}}
\newcommand{\wb}[1]{\widebar{#1}}

\def\ddefloop#1{\ifx\ddefloop#1\else\ddef{#1}\expandafter\ddefloop\fi}
\def\ddef#1{\expandafter\def\csname bb#1\endcsname{\ensuremath{\mathbb{#1}}}}
\ddefloop ABCDEFGHIJKLMNOPQRSTUVWXYZ\ddefloop
\def\ddefloop#1{\ifx\ddefloop#1\else\ddef{#1}\expandafter\ddefloop\fi}
\def\ddef#1{\expandafter\def\csname b#1\endcsname{\ensuremath{\mathbf{#1}}}}
\ddefloop ABCDEFGHIJKLMNOPQRSTUVWXYZ\ddefloop
\def\ddef#1{\expandafter\def\csname sf#1\endcsname{\ensuremath{\mathsf{#1}}}}
\ddefloop ABCDEFGHIJKLMNOPQRSTUVWXYZ\ddefloop
\def\ddef#1{\expandafter\def\csname c#1\endcsname{\ensuremath{\mathcal{#1}}}}
\ddefloop ABCDEFGHIJKLMNOPQRSTUVWXYZ\ddefloop
\def\ddef#1{\expandafter\def\csname fk#1\endcsname{\ensuremath{\mathfrak{#1}}}}
\ddefloop ABCDEFGHIJKLMNOPQRSTUVWXYZ\ddefloop
\def\ddef#1{\expandafter\def\csname h#1\endcsname{\ensuremath{\widehat{#1}}}}
\ddefloop ABCDEFGHIJKLMNOPQRSTUVWXYZ\ddefloop
\def\ddef#1{\expandafter\def\csname hc#1\endcsname{\ensuremath{\widehat{\mathcal{#1}}}}}
\ddefloop ABCDEFGHIJKLMNOPQRSTUVWXYZ\ddefloop
\def\ddef#1{\expandafter\def\csname t#1\endcsname{\ensuremath{\widetilde{#1}}}}
\ddefloop ABCDEFGHIJKLMNOPQRSTUVWXYZ\ddefloop
\def\ddef#1{\expandafter\def\csname tc#1\endcsname{\ensuremath{\widetilde{\mathcal{#1}}}}}
\ddefloop ABCDEFGHIJKLMNOPQRSTUVWXYZ\ddefloop
\def\ddefloop#1{\ifx\ddefloop#1\else\ddef{#1}\expandafter\ddefloop\fi}
\def\ddef#1{\expandafter\def\csname scr#1\endcsname{\ensuremath{\mathscr{#1}}}}
\ddefloop ABCDEFGHIJKLMNOPQRSTUVWXYZ\ddefloop

\newcommand{\veps}{\varepsilon}

\newcommand{\ldef}{\vcentcolon=}

\let\underbar\undefined

\makeatletter
\let\save@mathaccent\mathaccent
\newcommand*\if@single[3]{%
  \setbox0\hbox{${\mathaccent"0362{#1}}^H$}%
  \setbox2\hbox{${\mathaccent"0362{\kern0pt#1}}^H$}%
  \ifdim\ht0=\ht2 #3\else #2\fi
  }
\newcommand*\rel@kern[1]{\kern#1\dimexpr\macc@kerna}
\newcommand*\widebar[1]{\@ifnextchar^{{\wide@bar{#1}{0}}}{\wide@bar{#1}{1}}}
\newcommand*\underbar[1]{\@ifnextchar_{{\under@bar{#1}{0}}}{\under@bar{#1}{1}}}
\newcommand*\wide@bar[2]{\if@single{#1}{\wide@bar@{#1}{#2}{1}}{\wide@bar@{#1}{#2}{2}}}
\newcommand*\under@bar[2]{\if@single{#1}{\under@bar@{#1}{#2}{1}}{\under@bar@{#1}{#2}{2}}}
\newcommand*\wide@bar@[3]{%
  \begingroup
  \def\mathaccent##1##2{%
    \let\mathaccent\save@mathaccent
    \if#32 \let\macc@nucleus\first@char \fi
    \setbox\z@\hbox{$\macc@style{\macc@nucleus}_{}$}%
    \setbox\tw@\hbox{$\macc@style{\macc@nucleus}{}_{}$}%
    \dimen@\wd\tw@
    \advance\dimen@-\wd\z@
    \divide\dimen@ 3
    \@tempdima\wd\tw@
    \advance\@tempdima-\scriptspace
    \divide\@tempdima 10
    \advance\dimen@-\@tempdima
    \ifdim\dimen@>\z@ \dimen@0pt\fi
    \rel@kern{0.6}\kern-\dimen@
    \if#31
      \overline{\rel@kern{-0.6}\kern\dimen@\macc@nucleus\rel@kern{0.4}\kern\dimen@}%
      \advance\dimen@0.4\dimexpr\macc@kerna
      \let\final@kern#2%
      \ifdim\dimen@<\z@ \let\final@kern1\fi
      \if\final@kern1 \kern-\dimen@\fi
    \else
      \overline{\rel@kern{-0.6}\kern\dimen@#1}%
    \fi
  }%
  \macc@depth\@ne
  \let\math@bgroup\@empty \let\math@egroup\macc@set@skewchar
  \mathsurround\z@ \frozen@everymath{\mathgroup\macc@group\relax}%
  \macc@set@skewchar\relax
  \let\mathaccentV\macc@nested@a
  \if#31
    \macc@nested@a\relax111{#1}%
  \else
    \def\gobble@till@marker##1\endmarker{}%
    \futurelet\first@char\gobble@till@marker#1\endmarker
    \ifcat\noexpand\first@char A\else
      \def\first@char{}%
    \fi
    \macc@nested@a\relax111{\first@char}%
  \fi
  \endgroup
}
\newcommand*\under@bar@[3]{%
  \begingroup
  \def\mathaccent##1##2{%
    \let\mathaccent\save@mathaccent
    \if#32 \let\macc@nucleus\first@char \fi
    \setbox\z@\hbox{$\macc@style{\macc@nucleus}_{}$}%
    \setbox\tw@\hbox{$\macc@style{\macc@nucleus}{}_{}$}%
    \dimen@\wd\tw@
    \advance\dimen@-\wd\z@
    \divide\dimen@ 3
    \@tempdima\wd\tw@
    \advance\@tempdima-\scriptspace
    \divide\@tempdima 10
    \advance\dimen@-\@tempdima
    \ifdim\dimen@>\z@ \dimen@0pt\fi
    \rel@kern{0.6}\kern-\dimen@
    \if#31
      \underline{\rel@kern{-0.6}\kern\dimen@\macc@nucleus\rel@kern{0.4}\kern\dimen@}%
      \advance\dimen@0.4\dimexpr\macc@kerna
      \let\final@kern#2%
      \ifdim\dimen@<\z@ \let\final@kern1\fi
      \if\final@kern1 \kern-\dimen@\fi
    \else
      \underline{\rel@kern{-0.6}\kern\dimen@#1}%
    \fi
  }%
  \macc@depth\@ne
  \let\math@bgroup\@empty \let\math@egroup\macc@set@skewchar
  \mathsurround\z@ \frozen@everymath{\mathgroup\macc@group\relax}%
  \macc@set@skewchar\relax
  \let\mathaccentV\macc@nested@a
  \if#31
    \macc@nested@a\relax111{#1}%
  \else
    \def\gobble@till@marker##1\endmarker{}%
    \futurelet\first@char\gobble@till@marker#1\endmarker
    \ifcat\noexpand\first@char A\else
      \def\first@char{}%
    \fi
    \macc@nested@a\relax111{\first@char}%
  \fi
  \endgroup
}
\makeatother

\makeatletter
\g@addto@macro\appendix{%
  \crefalias{section}{appendixsection}%
  \crefalias{subsection}{appendixsubsection}%
  \crefalias{subsubsection}{appendixsubsubsection}%
}
\makeatother
\crefname{appendixsection}{Appendix}{Appendices}
\Crefname{appendixsection}{Appendix}{Appendices}
\crefname{appendixsubsection}{Appendix}{Appendices}
\Crefname{appendixsubsection}{Appendix}{Appendices}
\crefname{appendixsubsubsection}{Appendix}{Appendices}
\Crefname{appendixsubsubsection}{Appendix}{Appendices}

\newcommand{\BC}{\texttt{BC}\xspace}
\newcommand{\SPOIL}{\texttt{SPOIL}\xspace}
\newcommand{\IQLEARN}{\texttt{IQ-Learn}\xspace}
\newcommand{\ISPIL}{\texttt{OVI}\xspace}
\newcommand{\DAEQUIL}{\texttt{DAeQuIL}\xspace}
\newcommand{\DAGGER}{\texttt{DAgger}\xspace}
\newcommand{\AGGREVATE}{\texttt{AggreVaTe}\xspace}
\newcommand{\DQN}{\texttt{DQN}\xspace}
\newcommand{\PPO}{\texttt{PPO}\xspace}
\newcommand{\SOFTDQN}{\texttt{Soft DQN}\xspace}

\newcommand{\CARTPOLE}{\texttt{CartPole-v1}\xspace}
\newcommand{\ACROBOT}{\texttt{Acrobot-v1}\xspace}

\newcommand{\LUNARLANDER}{\texttt{LunarLander-v2}\xspace}
\newcommand{\PENDULUM}{\texttt{Pendulum-v1}\xspace}

\newcommand{\QMAX}{Q_{\texttt{max}}}

\newcommand{\maxcovering}[1]{\cN_{#1}^{\texttt{max}} \spr{\cQ}}

\newcommand{\experttag}{\texttt{E}}

\newcommand{\expert}{{\pi_{\experttag}}}
\newcommand{\expertone}{{\pi_{\experttag,1}}}
\newcommand{\experttwo}{{\pi_{\experttag,2}}}
\newcommand{\experti}{{\pi_{\experttag,i}}}
\newcommand{\experth}{{\pi_{\experttag, h}}}

\newcommand{\qexpert}{Q^{\expert}}
\newcommand{\qexperth}{Q^{\expert}_h}

\newcommand{\PiE}{\Pi}

\newcommand{\tauE}{n_{\experttag}}
\newcommand{\cDE}{\mathcal{D}_{\experttag}}

\newcommand{\XEih}{\mb{x}_{\experttag, h}^i}

\newcommand{\AEh}{\mb{a}_{\experttag, h}}

\newcommand{\AEih}{\mb{a}_{\experttag, h}^i}
\newcommand{\AEkh}{\mb{a}_{\experttag, h}^k}

\newcommand{\expertsup}{\A^{\experttag}_{x,h}}

\newcommand{\simplex}{\Delta}
\newcommand{\initial}{\nu_0}

\newcommand{\smalltriangle}[1][0.6]{\mathord{\vcenter{\hbox{\scalebox{#1}{$\scriptstyle\triangle$}}}}}

\newcommand{\Xih}{\mb{x}_h^i}
\newcommand{\Xkh}{\mb{x}_h^k}

\makeatletter
\newcommand{\piout}{\@ifnextchar_{\piout@sub}{{\pi_{\texttt{out}}}}}
\def\piout@sub_#1{{\pi_{\texttt{out}, #1}}}
\makeatother
\newcommand{\piouth}{\piout_h}

\newcounter{protocol}


\newcolumntype{P}[1]{>{\centering\arraybackslash}p{#1}}
\newcolumntype{M}[1]{>{\centering\arraybackslash}m{#1}}
\newcommand{\PreserveBackslash}[1]{\let\temp=\\#1\let\\=\temp}
\newcolumntype{C}[1]{>{\centering\arraybackslash}m{#1}}
\newcolumntype{R}[1]{>{\PreserveBackslash\raggedleft}p{#1}}
\newcolumntype{L}[1]{>{\PreserveBackslash\raggedright}p{#1}}

\definecolor{goldcolor}{HTML}{D4AF37}

\newcommand{\algcomment}[1]{\textcolor{blue!70!black}{\transparent{0.7}\small{\texttt{\textbf{\#\hspace{2pt}#1}}}}}

\makeatletter
\DeclareRobustCommand\onedot{\futurelet\@let@token\@onedot}
\def\@onedot{\ifx\@let@token.\else.\null\fi\xspace}
\makeatother

\def\cf{cf\onedot}
\def\eg{e.g\onedot}

\def\ie{i.e\onedot}

\def\vs{vs\onedot}

\newcommand{\delimgiven}[1][]{\nonscript\mathpunct{}#1\vert\nonscript\mathpunct{}}
\newcommand{\delimcontent}[1]{\renewcommand{\given}{\delimgiven[\delimsize]}#1}
\DeclarePairedDelimiterX{\abs}[1]{\lvert}{\rvert}{\delimcontent{#1}}
\DeclarePairedDelimiterX{\norm}[1]{\lVert}{\rVert}{\delimcontent{#1}}
\DeclarePairedDelimiterX{\inp}[1]{\langle}{\rangle}{\delimcontent{#1}}
\DeclarePairedDelimiterX{\spr}[1]{(}{)}{\delimcontent{#1}}
\DeclarePairedDelimiterX{\sbr}[1]{[}{]}{\delimcontent{#1}}
\DeclarePairedDelimiterX{\scbr}[1]{\{}{\}}{\delimcontent{#1}}
\DeclarePairedDelimiterX{\sdbr}[1]{[\![}{]\!]}{\delimcontent{#1}}
\DeclarePairedDelimiterX{\ceil}[1]{\lceil}{\rceil}{\delimcontent{#1}}
\DeclarePairedDelimiterX{\floor}[1]{\lfloor}{\rfloor}{\delimcontent{#1}}

\newcommand*\diff{\mathop{}\!\mathrm{d}}
\providecommand{\given}{}
\renewcommand{\given}{\delimgiven}
\newcommand{{\transpose}}{^\mathsf{\scriptscriptstyle T}}

\DeclareMathOperator{\softmax}{softmax}

\newcommand{\regret}{\mathfrak{R}}

\newcommand{\DTV}[2]{\mathcal{D}_{\mathsf{TV}}\spr{#1,#2}}

\renewcommand{\phi}{\varphi}  

\newcommand{\II}[1]{\bbI \scbr*{#1}}

\newcommand{\sumin}{\sum_{i=1}^n}

\newcommand{\sumkK}{\sum_{k=1}^K}

\newcommand{\sumhH}{\sum_{h=1}^H}

\newcommand{\bc}[1]{\left\{{#1}\right\}}
\newcommand{\br}[1]{\left({#1}\right)}

\def\A{\mathcal{A}}

\newcommand{\start}{\texttt{start}}

\lstdefinestyle{pythonstyle}{
    language=Python,
    basicstyle=\ttfamily\small,
    keywordstyle=\color{blue},
    stringstyle=\color{teal},
    commentstyle=\color{gray},
    numbers=left,
    numberstyle=\tiny\color{gray},
    stepnumber=1,
    numbersep=8pt,
    showstringspaces=false,
    breaklines=true,
    frame=single,
    tabsize=2,
    moredelim=**[is][\color{orange}]{@@}{@@},
}

\newtheorem{Lem}[lemma]{Lemma}

\usepackage[suppress]{color-edits}
\addauthor{df}{ForestGreen}
\addauthor{ab}{red}
\addauthor{ah}{olive}
\addauthor{am}{BurntOrange}
\addauthor{lv}{purple}
\addauthor{pa}{cyan}

\usepackage{booktabs}
\usepackage{xcolor}
\usepackage{pifont}
\usepackage{capt-of}
\usepackage{diagbox}

\newcommand{\goodcell}{%
  \textcolor{green!50!black}{\ding{51}}%
}
\newcommand{\badcell}{%
  \textcolor{red!75!black}{\ding{55}}%
}

\let\oldparagraph\paragraph
\renewcommand{\paragraph}[1]{\oldparagraph{#1.}}

\newcommand{\para}[1]{\textbf{{#1}.}}

\title{When Does On-Policy Interaction Help? \\ Representational Tradeoffs in Value-Based Imitation Learning}

\newcommand{\email}[1]{\href{mailto:#1}{\scriptsize\texttt{#1}}}

\author{Luca Viano\thanks{Equal contribution.}\\\email{luca.viano@epfl.ch} \and Antoine Moulin\footnotemark[1]\\\email{antoine.moulin@upf.edu} \and Audrey Huang\\\email{audreyh5@illinois.edu} \and Volkan Cevher\\\email{volkan.cevher@epfl.ch} \and Philip Amortila\\\email{p.amortila@berkeley.edu} \and Dylan J. Foster\\\email{dylanfoster@microsoft.com}
}
\date{}

\begin{document}

\maketitle

\begin{abstract}
Imitation learning (IL)---training an agent to replicate expert behavior from demonstrations---underpins applications from robotics to language model training. Standard approaches such as Behavior Cloning (\BC) are known to suffer from compounding errors and performance plateaus, particularly when the learner cannot perfectly represent the expert's policy (as is typical, \eg, in distillation). Two interventions are widely understood empirically to improve performance: querying the expert \emph{interactively} along the learner's own trajectories, and using value function estimation en route to generating a policy rather than directly fitting the expert's full action distribution.\loose

We investigate the nature of these improvements and their potentially surprising interplay. Our main finding is that expert interaction relaxes the \emph{representational} demands on the learner: one only needs a model capable of realizing the expert's value function, bypassing the (often stricter) requirement of realizing the expert's policy itself. Concretely, we introduce \ISPIL, an interactive on-policy IL algorithm that is statistically efficient whenever the learner can represent the expert's value function and computationally efficient given access to a linear maximization oracle. We complement this with a negative result showing that interaction is necessary. Namely, without stronger assumptions beyond expert-value realizability alone, any offline IL algorithm must scale with the complexity of the expert policy class. Our findings bear out empirically. \ISPIL outperforms offline policy-based (\BC), interactive policy-based (\DAGGER), and offline value-based IL methods, with the largest gains when the learner network is substantially less expressive than the expert's.\loose

\end{abstract}

\section{Introduction}
\label{sec:introduction}
Training a model to replicate the behavior of a more capable expert---a process known as \emph{imitation learning} (IL)---has become a central paradigm in modern AI. In language modeling, distillation trains a smaller language model (LM) to reproduce the outputs of a larger one \citep{agarwal2024policy,lu2025onpolicydistillation}; in robotics and autonomous navigation, agents learn complex manipulation skills directly from expert demonstrations \citep{osa2018algorithmic,pomerleau1988alvinn}. Predominant approaches such as Behavior Cloning (\BC, \citealp{Pomerleau:1991}) often suffer from slow convergence, training instability, and performance plateaus even given ample offline expert data \citep{de2019causal,block2023butterfly,laskey2017dart,hu2025rac,spencer2021feedback}. These shortcomings are commonly attributed to two interrelated phenomena: error propagation along the horizon \citep{Ross:2010,Ross:2011}, and the inability of the learner to faithfully capture the expert's policy \citep{espinosa2025efficient,rohatgi2025computational}. Two interventions are commonly observed empirically to mitigate each of these problems in turn.

The first intervention is that of \emph{on-policy interaction}, \ie, the act of querying the expert along the learner's own trajectories. \emph{Interactive} (or \emph{on-policy}) IL is understood to avoid error amplification by learning to correct mistakes on-policy. This benefit has classically been formalized through improved horizon dependence \citep{Ross:2011,rajaraman2021value}. However, recent work by \citet{foster2024behavior} establishes that when the learner can accurately represent the expert policy, the apparent gap between \BC and interactive IL disappears, making the precise role of interaction less clear than it once seemed.

A second, parallel intervention is \emph{value-based} IL, where an estimated \emph{value or reward function} is used to assist policy learning \citep{Garg:2021,swamy2021moments,swamy2022sequence}, subsuming most inverse reinforcement learning (IRL) and apprenticeship learning methods \citep{Abbeel:2004,Syed:2007,Ho:2016b}. Rather than fitting the expert's full distribution over actions, including potentially arbitrary or exogenous choices with equal value, value-based IL seeks to recover only enough information to match the expert's return. This raises the hope that a value-based learner can succeed with a weaker representational burden, \eg, needing only to represent the expected value of each action even when representing the expert policy itself is too demanding. While previous works have sought to develop a theoretical understanding of value-based IL \citep{Abbeel:2004,Syed:2007,joshi2025learning,moulin2025inverse}, existing methods either do not address the sequential setting or require structural assumptions on the environment; as a result, our understanding of the benefits of value-based IL remains incomplete.

\begin{figure*}[!t]
    \centering
    \includegraphics[width=\linewidth]{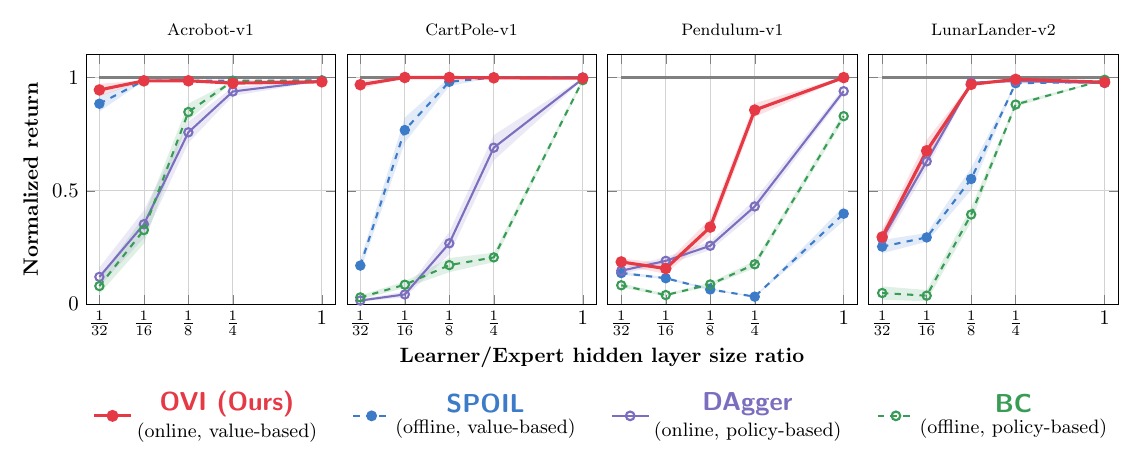}
    \caption{\arxiv{\vspace{-0.1em}%
    \textbf{\ISPIL achieves higher return with smaller learner networks.}} Average normalized return ($y$-axis, $1$ and $0$ correspond to the expert's and to a bad policy's returns, respectively) as a function of the learner-to-expert network-width ratio ($x$-axis), over $50$ seeds and $10$ expert trajectories for offline methods or $10$ rounds of expert queries along learner trajectories for interactive methods. The expert network has width $64$. We compare with offline and interactive policy-based methods (\BC; \citealp{pomerleau1988alvinn}; \DAGGER; \citealp{Ross:2011}), and the offline value-based method \SPOIL \citep{moulin2025inverse}. \textbf{When the learner network shrinks, \ISPIL outperforms each of the following methods alone: (1) offline value-based and (2) interactive policy-based---thereby supporting the theory} that predicts that representational gains can emerge only from expert interaction and value-based design jointly. See \cref{sec:experiments} for more details.\loose
    }
    \label{fig:size_scaling}
    \vspace{-0.8em}
\end{figure*}

In this work, we investigate the mechanisms through which value functions and interaction help in IL. Our results reveal a potentially surprising interplay: \textbf{interaction relaxes representational demands on the learner.}
\begin{center}
\neurips{\vspace{-.5\baselineskip}}
    \emph{Leveraging on-policy imitation, one only needs a model capable of realizing the expert's value function, bypassing the (often stricter) requirement of realizing the expert's policy itself. Furthermore, this is not possible in offline IL.\loose}
\neurips{\vspace{-.5\baselineskip}}
\end{center}

Our contributions are detailed below.

\para{Interaction permits efficient value-based IL (\cref{sec:ub})}
We introduce \ISPIL (\cref{alg:interactive_finite-H}), the first algorithm that is statistically and computationally oracle-efficient under only $\qexpert$\emph{-realizability}---a natural representational assumption requiring only that the learner can represent the expert's value function---\emph{provided that the learner can interactively query the expert on-policy.} \ISPIL queries the expert along the learner's own trajectories and leverages a saddle-point formulation of IL to identify a useful value function, building on prior work \citep{moulin2025inverse,swamy2021moments}. We complement this with a study on the representational advantages of $\qexpert$-realizability (\cref{sec:value-realizability-weaker}): we show in several natural problem classes that representing the expert's value function can be significantly easier than representing its policy, and furthermore that the weaker assumption of \emph{reward realizability} is in general insufficient for (offline or interactive) IL.\loose

\para{Interactive value-based IL empirically enables learning with fewer parameters (\cref{sec:experiments})}
We validate these findings empirically in Gymnasium \citep{towers2024gymnasium}; see \cref{fig:size_scaling}. Consistent with our theory, \ISPIL outperforms both interactive policy-based methods \citep[\DAGGER;][]{Ross:2011} and offline value-based methods \citep[\SPOIL;][]{moulin2025inverse}, especially as the learner's network shrinks relative to the expert's---a regime in which policy realizability is likely to be increasingly violated. The code for these experiments is available at \url{https://github.com/lviano/ovi}.\loose

\para{Interaction is necessary for value-based IL (\cref{sec:lb})}
We prove that without conditions stronger than $\qexpert$-realizability, all offline IL algorithms given only an offline dataset of expert trajectories necessarily scale with the number of states in the environment or the complexity of the expert policy class (\cref{thm:lower_any_algo}). We further establish that an important subclass of value-based learners (including popular empirical algorithms) can fail to compete with the expert policy in the offline setting even in the limit of infinite data (\cref{thm:lower}). Intuitively, the hardness arises because, unlike in traditional policy-based IL (\eg, via \BC or \DAGGER), value-based learners match the expert's return without matching its state visitation distribution (\cf \cref{fig:section4_experiments}); our results indicate that this can be drastically more efficient under interaction but fails to overcome well-studied distribution shift challenges \citep{Ross:2011,rajaraman2020toward,foster2024behavior} present in offline IL.\loose

\para{Computational-representational tradeoffs in chain-of-thought learning (\cref{sec:llm_tradeoff})}
Lastly, we provide a complementary perspective on the benefits of interaction in IL by showing that \ISPIL can achieve an exponential computational improvement over existing reward-based approaches \citep{joshi2025learning} when specialized to chain-of-thought learning. Prior theoretical approaches for value-based IL in chain-of-thought learning extract an imitating policy from a learned outcome reward function, but require intractable exponential-weights-type updates and response-level sampling. We show that, leveraging interaction and value function realizability, \ISPIL uses value functions to decompose the search over complete responses into a series of tractable token-level policy updates. This gives new theoretical understanding and motivation for the successful use of \emph{on-policy} updates \citep{agarwal2024policy,gu2024minillm,yang2025qwen3} and \emph{process reward models} \citep{lightman2023let,uesato2022solving} in LM training.\loose

\emph{When does interaction help in IL?}
Going beyond the classical understanding of improved horizon dependence, our results reveal additional (and perhaps stronger) \textbf{representational} benefits of interaction. Taken together, they provide a comprehensive picture of the interplay between interaction and representation in value-based IL.\loose

\newcommand{\verticaloffset}{0.8ex}
\newcommand{\spacingfirstcol}{-1.75ex}

\section{Problem Setting: Value-Based Imitation Learning}
\label{sec:preliminaries}
We first provide general definitions of Markov decision processes (MDPs) and the general IL problem, and then formally describe our problem setting of \emph{value-based imitation learning}.\loose

\subsection{Imitation Learning}
\label{sec:il-background}

\para{Episodic MDPs}
An \emph{episodic} MDP is defined as a tuple $\cM = \spr{\cX, \cA, H, P, r, \initial}$, where $\cX$ is a (potentially large) finite state space, $\cA$ is a finite action space with $A \ge 2$ actions, $H \in \bbN$ is the horizon, $P = \spr{P_h\colon \cX \times \cA \to \simplex \spr{\cX}}_{h = 1}^H$ is the transition kernel, $r = \spr{r_h\colon \cX \times \cA \to \sbr{0, 1}}_{h=1}^H$ is the reward function, and $\initial \in \simplex \spr{\cX}$ is the initial state distribution. A nonstationary Markov policy (henceforth simply \emph{policy}) is a sequence of \emph{decision rules} $\pi = \spr{\pi_h\colon \cX \to \simplex \spr{\cA}}_{h=1}^H$, it induces a distribution $\bbP^\pi$ over trajectories $\spr{\mb{x}_h, \mb{a}_h, \mb{r}_h}^H_{h=1}$ via the following interaction protocol: an initial state $\mb{x}_1 \sim \initial$ is drawn, and then for each $h = 1, \ldots, H$, we have $\mb{a}_h \sim \pi_h \spr{\cdot \given \mb{x}_h}$, $\mb{r}_h = r_h \spr{\mb{x}_h, \mb{a}_h}$, and $\mb{x}_{h+1} \sim P_h \spr{\cdot \given \mb{x}_h, \mb{a}_h}$. We denote by $\bbE^\pi$ the corresponding expectation. The expected return of a policy $\pi$ is given by $J^\pi = \bbE^\pi \bigl[\sumhH \mb{r}_h \bigr]$, and the value functions are defined for any state-action pair $\spr{x, a}$ by
\begin{equation*}
    V^\pi_h \spr*{x}
    =
    \bbE^\pi \sbr*{{\textstyle\sum_{h'=h}^H} \mb{r}_{h'} \given \mb{x}_h = x}
    \,\, \text{ and } \,\,
    Q^\pi_h \spr*{x, a}
    =
    \bbE^\pi \sbr*{{\textstyle\sum_{h'=h}^H} \mb{r}_{h'} \given \mb{x}_h = x, \mb{a}_h = a}.
\end{equation*}
The value functions satisfy the Bellman equations
\begin{equation} \label{eq:bellman}
    Q^\pi_h \spr*{x, a} = r_h \spr*{x, a} + \bbE_{\mb{x}' \sim P_h \spr*{\cdot \given x, a}} \sbr*{V^\pi_{h+1} \spr*{\mb{x}'}},
    \quad
    V^\pi_h \spr*{x} = \bbE_{\mb{a} \sim \pi_h \spr*{\cdot \given x}} \sbr*{Q^\pi_h \spr*{x, \mb{a}}},
\end{equation}
where $V_{H+1}^\pi = 0$. We define the occupancy measures via $d^\pi_h \spr{x, a} = \bbP^\pi \sbr{\mb{x}_h = x, \mb{a}_h = a}$ and $d^\pi_h \spr{x} = \bbP^\pi \sbr{\mb{x}_h = x}$. To disentangle the effects of reward scaling from the horizon $H$, we assume rewards are normalized such that for some $\QMAX > 0$ and any feasible trajectory, $\sum_{h=1}^H \mb{r}_h \in \sbr{0, \QMAX}$ \citep{jiang2018open,wang2020long,zhang2021reinforcement}.\footnote{In general (the \emph{dense reward} setting), $\QMAX$ can be as large as $H$, but in some settings (\eg, in LM reasoning tasks with binary outcome-level rewards) we can have $\QMAX = \cO \spr{1}$ (the \emph{sparse reward} setting).} For any policy $\pi$ and function $f\colon \cX \times \cA \to \bbR$, we sometimes use the notation $f \spr{x, \pi} = \sum_a \pi \spr{a \given x} f \spr{x, a}$.\loose

\para{Offline and interactive imitation learning}
In the imitation learning setting that we consider, there is an unknown \emph{expert policy} $\expert$ and an unknown MDP $\cM$, and the goal is to learn a policy that performs as well as $\expert$, as measured by expected return. Formally, for error tolerance $\varepsilon > 0$ and failure probability $\delta > 0$, the learner's objective is to output a policy $\piout$ such that, with probability at least $1 - \delta$,\loose
\begin{equation} \label{eq:suboptimality}
    J^{\expert} - J^{\piout}
    \leq
    \varepsilon.
\end{equation}
Central to our paper is the distinction between offline and interactive imitation learning. In \emph{offline IL}, the learner is given a dataset of pre-collected independent expert trajectories
\begin{equation} \label{eq:expert-dataset}
    \cDE = \scbr*{\spr*{\spr*{\XEih, \AEih}}_{h=1}^H}^{\tauE}_{i=1}, \quad \text{ where } \quad \spr*{\XEih, \AEih} \sim d^{\expert}_h,
\end{equation}
where $\tauE$ is the number of samples. The learner cannot interact with the MDP or the expert further.

In contrast, in \emph{interactive IL} the learner can query the expert $\expert$ for actions while rolling out in the unknown MDP $\cM$---importantly, without observing the rewards. For the purpose of our paper, we formalize \emph{on-policy interaction} as follows: in each round of interaction, the learner selects a policy $\pi$ to generate a state trajectory $\spr{\mb{x}_h}^H_{h=1}$ where $\mb{x}_h \sim d^\pi_h$ and might query expert actions at states along the trajectory, \ie, $\AEh \sim \experth \spr{\cdot \given \mb{x}_h}$.

\subsection{Value-Based Imitation Learning}
\label{sec:vb-il}

Our focus is on \emph{value-based imitation learning}, by which we mean methods that derive policies from learned value or reward functions. Classical \emph{policy-based} methods (\eg, \BC, \DAGGER) aim to imitate the full conditional action distribution of the expert $\expert$ and to match its performance as a consequence. By contrast, \emph{value-based} learners aim to recover only enough information to match the expert's return, which may be a simpler task whenever there is redundancy in the expert's trajectory distribution (\cref{ex:QfromPI}) or relevant structure in the dynamics that can be exploited (\cref{ex:exbmdp}).\loose

In particular, our goal is to develop value-based IL algorithms that succeed with \emph{only realizability of the expert's value function}, a natural representational assumption for value-based IL. Stated in learning-theoretic terms, the learner has access to a value function class $\cQ \subseteq \scbr{\spr{Q_h\colon \cX \times \cA \to \sbr{0, \QMAX}}_{h=1}^H}$ that can be used to recover a good policy, and we assume only the following on its expressivity.
\begin{assumption}[$Q^\expert$-realizability] \label{asp:q-expert-realizability}
   The class $\cQ$ contains the expert's value function, \ie, $Q^{\expert} \in \cQ$.
\end{assumption}
By contrast, policy-based methods, such as \BC and \DAGGER, commonly require policy realizability assumptions \citep{foster2024behavior}, which render them inapplicable in this setting. In \cref{sec:value-realizability-weaker}, we highlight several natural settings in which \emph{representing the expert's value function is drastically less stringent than realizing the expert's policy}. Accordingly, in \cref{sec:algo} we will develop algorithms for interactive IL whose statistical rates depend only on the statistical capacity of the value function class $\cQ$ (\eg, $\log(\abs{\cQ})$ for finite function classes) and not the policy class used by the learner.\loose

Value-based realizability assumptions comparable to \cref{asp:q-expert-realizability} have been analyzed in previous works, although they further required assumptions about the dynamics of the MDP. In particular, \citeauthor{joshi2025learning} only consider the $H=1$ (``contextual bandit'') setting, where \cref{asp:q-expert-realizability} simplifies to realizability of the reward function, and \citeauthor{moulin2025inverse} require closure-type assumptions on $\cQ$, \ie, that it can realize $Q^\pi$ for any policy $\pi$ generated by the learner.\footnote{Such ``completeness'' assumptions implicitly place restrictions on the dynamics \citep{chen2019information,foster2022offline,jiang2025offline}, and by contrast with \cref{asp:q-expert-realizability} cannot be satisfied by simply taking a more expressive $\cQ$ class.} In \cref{sec:value-realizability-weaker}, we further establish that mere reward realizability is insufficient for general environments when $H>1$ (\cref{thm:reward-realizability-lb}), indicating that $Q^{\expert}$-realizability is a natural minimal assumption for value-based IL.\loose

\para{Comparing interactive value-based IL to related settings}
Finally, we clarify the differences between our value-based IL setting and related IL settings considered in the literature. All of the settings mentioned below allow the learner to roll out in the MDP, but use protocols for interacting with the expert that differ from ours. In \emph{inverse RL}, the learner interacts with the MDP or with known MDP dynamics, but does not query the expert interactively \citep{Abbeel:2004, Syed:2007,Ziebart:2008}. In \emph{value-feedback} algorithms such as \AGGREVATE \citep{ross2014reinforcement,sun2017deeply}, the learner interacts with the MDP and can observe the expert value $Q^{\expert} \spr{\mb{x}_h, \mb{a}_h}$, whereas it is unclear how to directly estimate these values in our setting given that the learner does not observe rewards. Lastly, in a setting we call \emph{RL with expert advice}, the learner both interacts with the MDP and queries the expert\arxiv{ (or a reference policy that covers it well)} interactively, but additionally observes rewards in the MDP \citep{amortila2022few, tiapkin2023regularized, foster2025good}. See \cref{sec:additional_related_work} for a comprehensive discussion of related work.\loose


\section{Interaction Enables \texorpdfstring{$\qexpert$}{q-expert}-Realizable Value-Based Imitation}
\label{sec:ub}
The main result of this section is \ISPIL, the first provably efficient algorithm for value-based IL under $\qexpert$-realizability. In \cref{sec:algo}, we describe the algorithm and its guarantees. We then show in \cref{sec:value-realizability-weaker} that $\qexpert$-realizability can be weaker than policy realizability, while reward realizability alone is insufficient. Finally, in \cref{sec:experiments} we present empirical evidence that \ISPIL outperforms both policy-based methods \citep[\DAGGER;][]{Ross:2011} and offline value-based methods \citep[\SPOIL;][]{moulin2025inverse} in regimes where policy realizability is likely to be violated.\loose

\subsection{\ISPIL{}: Value-Based Imitation Learning With Only \texorpdfstring{$\qexpert$}{q-expert}-Realizability}
\label{sec:algo}

\para{Algorithm design} Following an approach suggested by the ``On-Q'' moment matching template \citep{swamy2021moments}, \ISPIL treats the problem of minimizing the suboptimality gap in \cref{eq:suboptimality} as a min-max game between a $\pi$-player and a $Q$-player. The starting point is the performance difference lemma \citep{howard1960dynamic,kakade2002approximately}:
\begin{align*}
    J^{\expert} - J^\pi &= \sumhH \cL_h^{d^\pi} \spr*{\pi_h, \qexpert_h}, \\
    ~~\text{where, for any $d = \spr*{d_h \in \simplex \spr*{\cX}}_{h=1}^H$,}~~&\cL_h^d \spr*{\pi, Q} := \underbrace{\sum_{x \in \cX} d_h \spr*{x}}_{\texttt{(I)}} \underbrace{\sum_{a \in \cA} \spr*{\experth \spr*{a \given x} - \pi \spr*{a \given x}} Q \spr*{x, a}}_{\texttt{(II)}}.
\end{align*}
This form is not enough yet to derive an algorithm because $\qexpert$ is unknown. However, assuming $\qexpert \in \cQ$, we bound the suboptimality by taking the supremum over $\cQ_h \ldef \scbr{Q_h : \exists Q' \in \cQ, Q_h' = Q_h}$ for every $h$ on the right hand side.\loose
\begin{equation*}
    J^{\expert} - J^\pi \leq \sumhH \sup_{Q_h \in \cQ_h} \cL_h^{d^\pi} \spr*{\pi_h, Q_h}.
\end{equation*}
Accordingly, what remains is to find an approximate saddle point of the expected advantage function, \ie, a policy $\piout$ such that, for all $h \in \sbr{H}$, $\sup_{Q_h \in \cQ_h} \cL_h^{d^{\piout}} \spr{\piout_h, Q_h} \lesssim \varepsilon / H$.

Algorithmically, the main point of departure from prior value-based IL algorithms is the \emph{layer-wise} learning of a good policy, in the vein of the \texttt{Forward} algorithm \citep{Ross:2010}. To understand this design choice, we recall that the approach employed by past work \citep{moulin2025inverse} is to compute a sequence $\spr{\pi^k, Q^k}_{k=1}^K$ where $\pi^k$ performs online mirror ascent \citep{Beck:2003} and $Q^k$ performs a best response to (an empirical estimate of) the loss function $\cL$. This technique does not seem to apply in the $\qexpert$-realizability setting, as the sequence $\pi^k$ is also dictating the current sampling distribution $d$ under which the error is measured (see \texttt{(I)} above), preventing us from simultaneously forming an empirical estimate of $\cL$ and controlling the error in \texttt{(II)} via online learning. This is addressed in \ISPIL by learning $\piout$ one layer at a time (note the outer loop over $h = 1, \ldots, H$ in \cref{line:h-loop} of \cref{alg:interactive_finite-H}), as $d^{\piout}_h(x)$ only depends on $(\piout_1,\ldots,\piout_{h-1})$ and we can thus fix the sampling distribution for stage $h$ before optimizing for $\piouth$.\loose

We view this algorithm as a principled algorithmic realization of the ``On-Q'' moment matching template introduced by \cite{swamy2021moments}. The pseudocode for \ISPIL is given in \cref{alg:interactive_finite-H}. The main sample complexity guarantee is stated below. It establishes that \ISPIL only requires a number of expert queries that scales with the statistical capacity of the value function class $\cQ$, as represented by its log-cardinality $\log \abs{\cQ}$.\loose

\begin{algorithm}[t]
    \caption{\textbf{\ISPIL}: \textbf{O}n-Policy \textbf{V}alue-Based \textbf{I}mitation Learning \label{alg:interactive_finite-H}}
    \begin{algorithmic}[1]
        \STATE \textbf{input:} Learning rate $\eta$, iterations $K$, expert queries per stage $\tauE$.
        \FOR{$h = 1, \ldots, H$}\label{line:h-loop}
        \STATE Create dataset at stage $h$: for $i \in \sbr{\tauE}$, sample $\Xih \sim  d^{\piout}_h$, and query $\AEih \sim \experth \spr{\cdot \given \Xih}$.
        \STATE Initialize $\pi^1_h = \mathrm{Unif} \spr{\cA}$.
        \FOR{$k = 1, \ldots, K$}
        \STATE Set $Q^{k}_h \in \argmax_{Q_h \in \cQ_h} \sum^{\tauE}_{i=1} \spr{Q_h \spr{\Xih, \AEih} - Q_h \spr{\Xih, \pi^k_h}}$.\label{line:test}
        \STATE Set $\pi^{k+1}_h \spr{a \given x} \propto \pi^k_h \spr{a \given x} e^{\eta Q^k_h \spr{x, a}}$, for any state-action pair $\spr{x, a}$.\label{line:pi-alg}
        \ENDFOR
        \STATE Create the output policy at layer $h$: $\piouth = \frac1K \sumkK \pi^k_h$.
        \ENDFOR
    \end{algorithmic}
\end{algorithm}

\begin{restatable}[Sample complexity of \ISPIL]{theorem}{pifirstmain} \label{thm:pi_first_main}
    Let \Cref{asp:q-expert-realizability} hold for a finite class $\cQ$. Then, for any $\varepsilon, \delta \in (0, 1)$, \ISPIL (\Cref{alg:interactive_finite-H}) with parameters $\tauE = \tcO \spr{H^4 \QMAX^4 \log \spr{A} \log \spr{\abs{\cQ} / \delta} \varepsilon^{-4}}$, $\eta = \spr{\log \spr{A} / \spr{K \QMAX^2}}^{1/2}$, and $K = \cO \spr{H^2 \QMAX^2 \log \spr{A} \varepsilon^{-2}}$, outputs a policy $\piout$ such that with probability at least $1 - \delta$, $J^\expert - J^{\piout} \leq \varepsilon$ after
    \begin{equation*}
        \tcO \spr*{ \frac{H^5 \QMAX^4 \log \spr*{A} \log \spr*{\abs*{\cQ} / \delta}}{\varepsilon^4}} \qquad \text{expert queries}.
    \end{equation*}
\end{restatable}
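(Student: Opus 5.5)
The plan is to start from the performance-difference decomposition already displayed before \cref{alg:interactive_finite-H}:
\begin{equation*}
J^{\expert}-J^{\piout}=\sumhH \cL_h^{d^{\piout}}\spr{\piouth,\qexpert_h}.
\end{equation*}
It then suffices to show that, with probability at least $1-\delta/H$ for each fixed stage $h$, we have $\cL_h^{d^{\piout}}(\piouth,\qexpert_h)\le \varepsilon/H$; a union bound over $h$ finishes. The key structural point is that $d^{\piout}_h$ depends only on $\piout_1,\dots,\piout_{h-1}$, which are frozen before stage $h$ begins. Conditional on the past, the stage-$h$ samples $(\Xih,\AEih)$ are therefore i.i.d.\ from $d^{\piout}_h\otimes\experth$, and the stage reduces to a one-step (contextual-bandit-like) saddle-point problem with a fixed context distribution.

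Fix $h$ and write $\widehat{\cL}(\pi,Q)=\frac{1}{\tauE}\sum_i\bigl(Q(\Xih,\AEih)-Q(\Xih,\pi)\bigr)$. Since $\cL$ is linear in $\pi$, we get
\begin{equation*}
\cL(\piouth,\qexpert_h)=\frac1K\sum_k \cL(\pi^k_h,\qexpert_h).
\end{equation*}
Decompose each term as
\begin{equation*}
\cL(\pi^k_h,\qexpert_h)=\bigl[\cL-\widehat\cL\bigr](\pi^k_h,\qexpert_h)+\widehat\cL(\pi^k_h,\qexpert_h).
\end{equation*}
The second term is at most $\widehat\cL(\pi^k_h,Q^k_h)$ by the best-response step in \cref{line:test}, using $\qexpert_h\in\cQ_h$. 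Next, $\sum_k \widehat\cL(\pi^k_h,Q^k_h)=\sum_k \frac1{\tauE}\sum_i\bigl(Q^k_h(\Xih,\AEih)-Q^k_h(\Xih,\pi^k_h)\bigr)$. The term $-\sum_k Q^k_h(x,\pi^k_h)$ is exactly what exponential weights (\cref{line:pi-alg}) controls state by state. The standard Hedge regret bound with payoffs in $[0,\QMAX]$ and the stated $\eta$ gives, for every comparator $p\in\simplex(\cA)$ and every $x$,
\begin{equation*}
\sum_k\bigl(Q^k_h(x,p)-Q^k_h(x,\pi^k_h)\bigr)\le \cO\bigl(\QMAX\sqrt{K\log A}\bigr).
\end{equation*}
Taking $p=\experth(\cdot\mid x)$ bounds the \emph{population-in-action} version $\frac1{\tauE}\sum_i \sum_k\bigl(Q^k_h(\Xih,\experth)-Q^k_h(\Xih,\pi^k_h)\bigr)$. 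There remains a gap between sampled expert actions $\AEih$ and their conditional means $Q^k_h(\Xih,\experth)$.

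The main obstacle is this concentration step. Both $\pi^k_h$ and $Q^k_h$ are data-dependent, since they are computed from the same $\tauE$ samples, so we need uniform control. For the first deviation term, $\pi^k_h$ depends on the data only through $(Q^1_h,\dots,Q^{k-1}_h)\in\cQ_h^{k-1}$. I would therefore take a union bound over all sequences of length at most $K$ from $\cQ_h$, which gives a log-covering of order $K\log\abs{\cQ}$. Hoeffding then yields
\begin{equation*}
\sup\abs{\cL-\widehat\cL}\lesssim \QMAX\sqrt{K\log(\abs{\cQ}H/\delta)/\tauE}
\end{equation*}
for the relevant pairs. The action-sampling gap $\frac1{\tauE}\sum_i\bigl(Q^k_h(\Xih,\AEih)-Q^k_h(\Xih,\experth)\bigr)$ is handled uniformly over $Q\in\cQ_h$ by Hoeffding plus a union bound, since it involves no policy. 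This $K\log\abs{\cQ}$ price is what produces the $\varepsilon^{-4}$ rate. An alternative I would try first, to avoid the factor $K$, is to note that the Hedge iterates are deterministic functions of cumulative sums of $\cQ_h$ elements, but the sequence union bound suffices for the stated rate.

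Collecting terms, we get
\begin{equation*}
\cL(\piouth,\qexpert_h)\lesssim \QMAX\sqrt{\log A/K}+\QMAX\sqrt{K\log(\abs{\cQ}H/\delta)/\tauE}.
\end{equation*}
Choosing $K\asymp H^2\QMAX^2\log A/\varepsilon^2$ makes the first term at most $\varepsilon/(2H)$. The second term is at most $\varepsilon/(2H)$ once $\tauE\asymp H^2\QMAX^2K\log(\abs{\cQ}H/\delta)/\varepsilon^2 = \tcO(H^4\QMAX^4\log A\log(\abs{\cQ}/\delta)\varepsilon^{-4})$. Summing over the $H$ stages gives $J^\expert-J^{\piout}\le\varepsilon$ with total queries $H\tauE$, matching the bound in \cref{thm:pi_first_main}.
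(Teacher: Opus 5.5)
Your proposal is correct and follows essentially the paper's route: the performance-difference lemma, a best-response $Q$-player, per-state Hedge regret for the $\pi$-player, and layer-wise freezing of $d^{\piout}_h$. Like the paper, you take a union bound over the at most $K|\cQ|^{K}$ softmax iterates generated from $\cQ_h$, which is what produces the $K\log|\cQ|$ term and the $\varepsilon^{-4}$ rate in \cref{thm:ovi-main-covering}. The only difference is that you apply the Hedge regret bound at the sampled states rather than under $d^{\piout}_h$, replacing the paper's second deviation term at $(\pi^k_h, Q^k_h)$ by an action-noise term that needs uniformity over $\cQ_h$ alone; the dominant union bound over iterates paired with $\qexpert_h$ is unchanged, so the rate is the same.
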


Note that the total number of learner interactions with the environment obeys the same bound with an extra factor of $H$. Up to the usual polynomial factors in\arxiv{ the horizon} $H$,\arxiv{ the suboptimality} $1/\varepsilon$, and\arxiv{ the log failure probability} $\log \spr{\delta^{-1}}$, the main appealing feature of \ISPIL is that its sample complexity is controlled by the statistical complexity of \emph{only the value function class $\cQ$}, through its log-cardinality. In particular, the guarantee imposes no statistical assumption on the learner's policy class. We show in \cref{sec:value-realizability-weaker} that, in many natural settings, a function class $\cQ$ satisfying $\qexpert$-realizability can be drastically smaller than a policy class $\Pi$ satisfying $\expert$-realizability, while it is the latter that governs the sample complexity of policy-based methods such as \BC or \DAGGER. We note that the dependence on the problem horizon is larger than that obtained by policy-based methods such as \BC or \DAGGER \citep{foster2024behavior}; it is an interesting question whether this can be improved further or if this is fundamental to value-based IL.\loose

\Cref{thm:pi_first_main} is a special case of \Cref{thm:ovi-main-covering}, which extends this guarantee to infinite value function classes via covering numbers. The result also shows that, when $\cQ$ is convex, the expert-query complexity improves to $\tcO \spr{\varepsilon^{-2}}$.

Computationally, \ISPIL only requires access to a standard linear maximization oracle for the $Q$ updates (\Cref{line:test} in \cref{alg:interactive_finite-H}) and standard softmax policy updates for the $\pi$ player (\Cref{line:pi-alg} in \cref{alg:interactive_finite-H}). One drawback is that the layer-wise updates learn a potentially nonstationary policy and introduce computation and memory requirements that scale linearly with the horizon. However, in our experiments (\cref{sec:experiments}), we find that a stationary approximation of \ISPIL performs well. We refer to \cref{sec:experiments_details} for details.\loose

\para{Improved rates for nonconvex classes} The rate attained by \ISPIL for nonconvex classes $\cQ$ has a suboptimal $\tcO \spr{\varepsilon^{-4}}$ dependence. This can be improved to an $\tcO \spr{\varepsilon^{-2}}$ rate via a variant of \ISPIL, named $Q$-\ISPIL (\Cref{alg:exp_weights_finite_H}), where the roles of the value functions and the policies are reversed, \ie, the $Q$ player performs online learning and the $\pi$ player performs a best response (\Cref{thm:Q_first_main}). While statistically more efficient, this relies on explicitly discretizing the value-function class $\cQ$, making the method computationally unattractive for large or continuous classes such as neural networks. It is an interesting question to obtain a rate of $\tcO \spr{\varepsilon^{-2}}$ in a computationally efficient manner. Another advantage of $Q$-\ISPIL is that it can be used to compute a stationary expert-matching policy (see \Cref{sec:infinite_horizon}), whereas \ISPIL learns a nonstationary policy. Whether a computationally efficient value-based IL algorithm can compute such a stationary policy remains open.\loose

\subsection{Representational Advantages of \texorpdfstring{$\qexpert$}{q-expert}-Realizability}
\label{sec:value-realizability-weaker}

\ISPIL requires only access to a value-function class that realizes the expert's value function, $\qexpert$. In this section, we show that, in many natural settings, $\qexpert$-realizability is substantially weaker than the assumptions used by prior policy- and value-based approaches. We also establish that reward realizability alone is insufficient.

\arxiv{\subsubsection{Value realizability can be weaker than policy realizability}}
\neurips{\para{Values can be easier to represent than policies}}
Classical IL algorithms, such as \BC, \DAGGER, and their derivatives, are \emph{policy-based}: they operate by directly fitting the expert's entire conditional distribution of actions, and the performance of these algorithms classically relies on a \emph{policy realizability} assumption \citep{foster2024behavior}.
\begin{assumption}[Policy realizability] \label{ass:policy-realizability}
    The policy class $\Pi$ contains the expert policy, \ie, $\expert \in \Pi$.
\end{assumption}
This assumption is natural if the goal is to reproduce the expert's trajectory distribution \emph{in a reward-free manner}. But the objective in IL is weaker: to match the expert's return. For this objective, it may be unnecessary and wasteful to attempt to fit the specific details of the expert distribution (\eg, how it breaks ties among equally good actions, or how it responds to features that are irrelevant for value). The examples below formalize this intuition.

\citet{joshi2025learning} consider the special case of our setting in which $H = 1$, and show that value-based IL can be strictly weaker than policy-based IL (note that when $H = 1$, $\qexpert$-realizability reduces to reward realizability). In particular, they show that when the expert is optimal and realized by the policy class $\Pi$, the class can be used to construct a realizable reward-function class $\cR$\footnote{Up to a reward shaping term that does not alter the optimal policy.} satisfying $\abs{\cR} \leq \abs{\Pi}$, simply by taking $\cR = \scbr{r: \exists \pi \in \Pi, r \spr{x, a} = \mathbbm{1} \scbr{a \in \texttt{supp} \spr{\pi \spr{\cdot \given x}}}}$. By contrast, they show that the existence of a small realizable reward class $\cR$ cannot be used to construct a small realizable policy class $\Pi$ under which \BC will succeed.\footnote{While the learner knows that policies in $\Pi$ must be supported on reward-maximizing actions, the set of such policies is prohibitively large.} Our first example extends this to the multi-step setting.
\begin{example}[Learning to answer from correct demonstrations, with $H > 1$] \label{ex:QfromPI}
    Given a finite class $\Pi$ that realizes an optimal expert $\expert$, we show in \cref{sec:Qisweaker1} that it is possible to construct a class $\cQ$ such that $\abs{\cQ} \leq \abs{\Pi}$ and $\cQ$ realizes a function $\tQ^\expert$ which assigns maximal value to actions within the expert support, defined by $\tQ^\expert \spr{x, a} = \QMAX \ \mathbbm{1} \scbr{a \in \texttt{supp} \spr{\expert \spr{\cdot \given x}}}$. \ISPIL still succeeds, even if we cannot guarantee that $\qexpert \in \cQ$ (see \Cref{sec:relaxedQrea}). Conversely, as above, when given a small $\cQ$ class satisfying $\qexpert$-realizability, the learner cannot construct a small policy class $\Pi$ satisfying $\expert \in \Pi$. We come back to this example in \Cref{sec:llm_tradeoff}.\loose
\end{example}

We note that this reduction is enabled by the relaxed $\qexpert$-realizability condition detailed in \Cref{sec:relaxedQrea}. In the $H>1$ setting, the closest comparable value-based IL work is the \SPOIL algorithm of \citeauthor{moulin2025inverse}, which requires that $\cQ$ can realize $Q^\pi$ for all policies $\pi$ generated by the learner. In that setting, it is highly unclear whether there exists a similar reduction from a small policy class $\Pi$ to a small function class $\cQ$ that satisfies this closure-type condition. Thus, under prior value expressivity conditions, value-based IL may not be representationally weaker than policy-based IL.

Our second example shows that, even with suboptimal experts, the same separation can arise from structure in the dynamics, as the value function may ignore high-dimensional information needed to reproduce the expert's exact actions.\loose
\begin{example}[Structure in the dynamics] \label{ex:exbmdp}
    In \Cref{app:hard-decoder-exbmdp}, we study an Exogenous Block MDP \citep{efroni2022sample,mhammedi2024power,amortila2024reinforcement}, where the dynamics and reward depend only on low-dimensional \emph{endogenous} state components, while the observations contain high-dimensional exogenous noise. We construct a family of MDPs and expert policies and show that any expert-agnostic policy class $\Pi$ satisfying \cref{ass:policy-realizability} must be exponentially large. In contrast, the same two-dimensional class $\cQ$ satisfies \Cref{asp:q-expert-realizability} for the entire family.\loose
\end{example}

Although the two assumptions need not be comparable in general, we expect value realizability to be weaker than policy realizability in a wide range of analogous settings. This is empirically supported by our experiments in \cref{sec:experiments}.\loose

\subsubsection{Reward realizability is insufficient}

The expert's value function $\qexpert$ encodes information about the expert policy $\expert$, the reward function, and the transitions of the MDP via the Bellman equations (\cref{eq:bellman}). A natural question is whether the assumption can be weakened to reward realizability alone while retaining efficient learning. Specifically, assume that the learner has access to a reward function class $\cR \subseteq \scbr{\spr{r'_h\colon \cX \times \cA \to \sbr{0, 1}}_{h=1}^H}$ satisfying the following expressivity condition.
\begin{assumption}[Reward realizability] \label{ass:reward-realizability}
   The class $\cR$ contains the MDP's reward function, \ie, $r \in \cR$.
\end{assumption}
The following result shows that mere reward realizability is insufficient for efficient IL. Our lower bound is established against the class of interactive IL algorithms defined below, formalizing the interaction protocol described in \cref{sec:il-background}.\loose

\begin{definition}[Interactive IL algorithm] \label{def:interactive-il-algo}
    An interactive IL algorithm $\texttt{Alg}$ with a fixed interaction budget $\tauE \in \bbN^\star$ is a (potentially randomized) procedure that, in each episode $k \in \sbr{\tauE}$, selects a policy $\pi^k$ based on the preceding interaction history, rolls it out in the MDP to generate states $\Xkh \sim d^{\pi^k}_h$ for all $h \in [H]$, and \textbf{is allowed} to query the expert for actions $\AEkh \sim \experth \spr{\cdot \given \Xkh}$ for all $h \in \sbr{H}$, without observing rewards. After the $\tauE$ episodes, it maps the resulting interaction history to a nonstationary policy $\spr{\piout_h\colon \cX \to \Delta \spr{\cA}}_{h=1}^H$.
\end{definition}

Our negative result for this setting establishes a lower bound on the \emph{rounds of interactions} used by any algorithm following the protocol defined in \cref{def:interactive-il-algo}. Given an MDP $\cM$, we write $J^{\pi}_{\!\scriptscriptstyle{\cM}}$ for the return of a policy $\pi$, and $\pi^\star_{\!\scriptscriptstyle{\cM}}$ for one of its optimal deterministic policies.\loose
\begin{restatable}[Lower bound under reward realizability]{theorem}{rewardrealizabilitylb} \label{thm:reward-realizability-lb}
    For every $X \geq 1$ and $\varepsilon \in (0, 1 / 4]$, there exists a family $\cF$ of MDPs with: i) stochastic initial states but deterministic rewards and transitions, ii) horizon $H=2$, iii) state space size $\abs{\cX} = \cO \spr{X}$, and iv) family size $\log \spr{\abs{\cF}} = \cO \spr{X}$, such that for every $\cM \in \cF$, \cref{ass:reward-realizability} is satisfied by the same singleton reward class $\cR = \scbr{r^\star}$. Let $\texttt{Alg}$ be any interactive IL algorithm (\cref{def:interactive-il-algo}). Then, there exists an MDP $\cM \in \cF$, with corresponding expert $\expert = \pi^\star_{\scriptscriptstyle{\cM}}$, such that $\texttt{Alg}$ needs $\Omega\prn*{\frac{X}{\varepsilon}}$ \emph{rounds of interactions} to output a policy $\piout$ such that
    $
        \bbE \sbr*{J_{\scriptscriptstyle{\cM}}^{\expert} - J_{\scriptscriptstyle{\cM}}^{\piout}}
        \leq
        \varepsilon
    $.\footnote{
    The expectation is over all randomness in the protocol, including the trajectories sampled from the MDP, the actions sampled from the expert policy, and the internal randomness of $\texttt{Alg}$.
    }
\end{restatable}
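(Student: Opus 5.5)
The plan is to exploit the fact that a known reward function says nothing about the \emph{transitions}, and that at $H=2$ the only way to learn the relevant transition information at a given initial state is to actually visit that state. Since the initial states are rare, this forces $\Omega(X/\varepsilon)$ rounds. I would use Yao's principle: put a uniform prior on the family and lower bound the Bayes risk of an arbitrary (possibly randomized) algorithm.

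\textbf{Construction.} The state space consists of a dummy initial state $x_0$, informative initial states $x_1,\dots,x_X$, and two second-layer states $g$ (good) and $b$ (bad), so $\abs{\cX}=X+3=\cO(X)$. The action set is $\cA=\{1,2\}$; any further actions can be treated as copies of action $2$. The initial distribution is $\initial(x_0)=1-4\varepsilon$ and $\initial(x_i)=4\varepsilon/X$ for each $i$, which is valid since $\varepsilon\le 1/4$. The reward is shared by all members of the family:
\[
r^\star_1\equiv 0,\qquad r^\star_2(g,\cdot)=1,\qquad r^\star_2(b,\cdot)=0,
\]
so $\cR=\{r^\star\}$ is realizable for every member and $\QMAX=1$. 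For each $\sigma\in\{1,2\}^X$ the MDP $\cM_\sigma$ has deterministic transitions: at $x_i$, action $\sigma_i$ leads to $g$ and the other action leads to $b$, while at $x_0$ every action leads to $g$. Thus $\abs{\cF}=2^X$ and $\log\abs{\cF}=\cO(X)$. The expert is the deterministic optimal policy that plays $\sigma_i$ at $x_i$ and action $1$ everywhere else, so $J^{\expert}_{\cM_\sigma}=1$.

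\textbf{Information argument.} Draw $\sigma$ uniformly at random. In every episode, the first-layer state is drawn from $\initial$ independently of $\sigma$ and of the learner's choices, because the learner's policy only acts after $\mb{x}_1$ is drawn. Apart from this draw, the observations in an episode are the following:
\begin{itemize}
\item expert actions at layer $1$, which equal $\sigma_i$ only if $\mb{x}_1=x_i$;
\item the layer-$2$ state, which reveals $\sigma_i$ only if $\mb{x}_1=x_i$;
\item expert actions at layer $2$, which are constant.
\end{itemize}
All of these are independent of $\sigma_i$ unless $x_i$ was the initial state in that episode. By induction over episodes, the interaction history is independent of $\sigma_i$ conditioned on the event $U_i$ that $x_i$ was never drawn as an initial state. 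Hence, on $U_i$, the posterior of $\sigma_i$ stays uniform, and $\piout_1(\sigma_i\mid x_i)$ has conditional expectation $1/2$.

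\textbf{Risk computation.} Since $J^{\expert}-J^{\piout}=\sum_i \frac{4\varepsilon}{X}\bigl(1-\piout_1(\sigma_i\mid x_i)\bigr)$, the Bayes risk is at least
\[
\sum_{i=1}^X \frac{4\varepsilon}{X}\cdot\frac12\,\bbP(U_i)=2\varepsilon\,\Bigl(1-\frac{4\varepsilon}{X}\Bigr)^{\tauE}.
\]
With $\tauE\le cX/\varepsilon$ for a small constant $c$ (for example $c=1/16$, using $(1-p)^n\ge 1-np$), this is strictly greater than $\varepsilon$. Some member of $\cF$ must therefore have expected suboptimality above $\varepsilon$, which proves the claim. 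Querying the expert at every state does not help, since the learner can only query at states it visits.

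\textbf{Main obstacle.} The step needing the most care is the information argument: rigorously showing that the learner's adaptively chosen policies cannot influence whether it visits $x_i$. This holds because $\mb{x}_1\sim\initial$ is drawn before the learner acts, and the policy at $x_0$ or $x_j$ for $j\neq i$ never yields information about $\sigma_i$. Once that conditional independence is formalized, the remaining steps are direct calculations.
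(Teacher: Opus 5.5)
Your proposal is correct and follows essentially the same route as the paper. Both use $2^X$ deterministic MDPs in which rare initial states (each of mass $4\varepsilon/X$) carry a hidden bit selecting which action reaches the rewarding state, place a uniform prior on the bits, and observe that the transcript and internal randomness are independent of $\sigma_i$ on the event that $x_i$ is never drawn; the risk bound $\frac{Xq}{2}(1-q)^{\tauE}$ then follows, and Bernoulli's inequality finishes the proof. The only differences are cosmetic: you share the two second-layer states across copies, giving $\abs{\cX}=X+3$ instead of the paper's $3X+3$ with per-copy terminal states, and you use the constant $1/16$ instead of $1/8$; neither affects the argument.
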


We recall that the results of \citet{joshi2025learning} establish that reward realizability is sufficient in the offline setting when $H=1$. By sharp contrast, our lower bound establishes that this is insufficient even when $H = 2$, and even with interaction. As a simple corollary of \cref{thm:reward-realizability-lb}, either the number of environment interactions \emph{or} the number of expert queries must be $\Omega \spr{\min \scbr{\abs{\cX}, \log \spr{\abs{\cF}}} / \varepsilon}$, and thus it is not possible to be polynomial in the relevant problem parameters ($\log \spr{\abs{\cR}}, H, \varepsilon^{-1}, \log \spr{\delta^{-1}}$) for both resources simultaneously.

Our results provide some worst-case impossibility results for the ``reward'' moment matching template of \citet{swamy2021moments}. In general, to avoid the ``tabular'' rate $\Omega \spr{\abs{\cX}}$, we expect that algorithms for this setting must estimate functionals of the dynamics, and thus additional structural conditions on the environment or additional representational ability, e.g. of the dynamics \citep{Liu:2022,kidambi2021mobile,viano2024imitation,moulin2025optimistically} or temporal differences of candidate $Q$ functions \citep{Garg:2021}, may be needed. We leave this as an interesting direction for future work.\loose 

Intuitively, the source of hardness leading to \cref{thm:reward-realizability-lb} is that, even though the reward provides information on which states are rewarding, it does not provide information on which actions lead to those states. Consequently, the learner cannot use the reward information to learn how to act at step $h=1$, and must resort to \emph{either} using an expert query and cloning the received action \emph{or} applying a learner's action in the environment and using the observed next state to learn the dynamics. Overall this takes $\min \scbr{\abs{\cX}, \log \spr{\abs{\cF}}} / \varepsilon$ samples. By contrast, \cref{thm:pi_first_main} shows that this transition-dependent information is encoded in $\qexpert$ and can be exploited by \ISPIL.

All in all, \textbf{we view $\qexpert$-realizability as a minimal representational assumption for efficient learning}: it is more informative than reward realizability alone, which is insufficient, but less demanding than other assumptions (\eg, policy realizability, realizability of $Q^\pi$ for infinitely-many policies $\pi$, or realizability of rewards and transitions) discussed in this paper and prior work. We next provide empirical evidence that the minimality of $\qexpert$-realizability translates to more effective IL when using learners with limited expressivity.\loose

\subsection{Experiments}
\label{sec:experiments}

We test whether the representational benefits suggested by \cref{sec:value-realizability-weaker} translate into practical performance when the learner has limited capacity. In each of four Gymnasium environments \citep{towers2024gymnasium}, the expert is an RL-trained policy represented by a network with two hidden layers of $64$ neurons each, while the learners use the same depth but have widths in $\{2,4,8,16,32,64\}$. We use network width as a proxy for representational capacity: as the learner's network becomes smaller relative to that of the expert, representing the expert's action distribution should become increasingly difficult, and policy-based methods such as \BC and \DAGGER may therefore degrade relative to \ISPIL. The code for these experiments is available at \url{https://github.com/lviano/ovi}.

We evaluate \ISPIL, \SPOIL, \BC, and \DAGGER across the above range of widths. We use $10$ expert trajectories for the offline methods, $10$ learner-trajectory labeling rounds for the interactive methods, and $50$ seeds (see \cref{sec:experiments_details} for more experimental details). \Cref{fig:size_scaling} shows that when the learner also has $64$ neurons per layer, \BC and \DAGGER perform well in all four environments. Their performance degrades, however, as the learner network shrinks. In contrast, \ISPIL dominates across all learner sizes and attains good performance even at the smallest width. \SPOIL generally deteriorates faster than \ISPIL, especially in \CARTPOLE and \ACROBOT, consistent with the possibility that $\qexpert$-realizability is easier to satisfy in these experiments than the closure-type realizability conditions required by \SPOIL. We use a stationary approximation to \ISPIL in these experiments; see \Cref{alg:approximateOVI} for pseudocode.\loose

Finally, we compare \DAGGER and \ISPIL in a synthetic $\qexpert$-realizable MDP to highlight that, despite both IL algorithms being interactive, \ISPIL implements a different imitation mechanism than \DAGGER.

\para{(\cref{fig:section4_experiments}) \DAGGER \vs \ISPIL: a different imitation mechanism}
Although also an interactive IL method, \DAGGER is surprisingly suboptimal in this environment, which is an extended version of the simple $\qexpert$-realizable instance used in \cref{thm:lower} (defined for \cref{thm:lower_any_algo} and illustrated in \Cref{fig:extended_hard}). As displayed in \Cref{fig:section4_experiments}, the reward suboptimality, \ie, $J^{\expert} - J^{\piout}$, of \ISPIL (solid red line) converges significantly faster than the suboptimality of \DAGGER (solid purple line). The advantage of \ISPIL stems from the fact that it directly matches the expert's return, while \DAGGER, being a policy-based method, attempts to match the expert's actions almost everywhere, which is a harder problem in the MDP under consideration because there are $\cO \spr{2^{\abs{\cX}}}$ possible optimal policies from which the expert can choose. In comparison, \ISPIL acts approximately\footnote{The \ISPIL iterates become more and more greedy with respect to $\qexpert$ as $K$ increases.} greedily with respect to its learned $\qexpert$, which is sufficient to achieve the optimal return with significantly fewer samples but at the cost of potentially learning a policy different from the expert's. To illustrate the difference between the learning mechanisms of these two algorithms, we plot in \cref{fig:section4_experiments} their trajectory-level TV distance to the expert, $\DTV{\bbP^\expert}{\bbP^{\piout}}$, which decreases under \DAGGER (dashed purple line) but remains constant for \ISPIL (dashed red line).\loose

%

\section{Interaction Is Necessary for \texorpdfstring{$\qexpert$}{q-expert}-Realizable Value-Based Imitation}
\label{sec:lb}
In \cref{sec:ub}, we showed that interaction enables \ISPIL to succeed under only $\qexpert$-realizability, a weaker representational requirement than those imposed by comparable algorithms. Here, we prove that interaction is in fact \emph{necessary}: under $\qexpert$-realizability alone, \emph{no offline IL method can learn efficiently}. We formally define an offline IL algorithm below.
\begin{definition}[Offline IL Algorithm] \label{def:IL}
    An offline IL algorithm $\mathrm{Alg}$ is a (potentially randomized) mapping from an expert dataset $\cDE$ (sampled according to \cref{eq:expert-dataset}) to a nonstationary policy $\spr{\piout_h\colon \cX \to \Delta \spr{\cA}}_{h=1}^H$.\loose
\end{definition}

Our main negative result (\Cref{thm:lower_any_algo}), stated formally below, establishes that there exists a large family of MDPs (and corresponding optimal expert policies) with a small value function class exhibiting $\qexpert$-realizability for each MDP in the class, but for which the sample complexity of any offline IL algorithm must scale with either the size of the state space or the number of MDPs in the class. For the statement below, we recall that, for an MDP $\cM$, we write $J^{\pi}_{\scriptscriptstyle{\cM}}$ for the return of a policy $\pi$, and $\pi^\star_{\scriptscriptstyle{\cM}}$ for one of its optimal deterministic policies.
\begin{restatable}[Main lower bound for offline IL]
{theorem}{LowerAny} \label{thm:lower_any_algo}
    For any $X \in \bbN^\star$ and any $\varepsilon \in (0,\frac{1}{8}]$,
    there exists a family $\cF$ of MDPs with: i) stochastic initial states but deterministic rewards and transitions, ii) horizon $H = 2$, iii) state space size $\abs{\cX} = \cO \spr{X}$, and iv) family size $\log \spr{\abs{\cF}} = \cO \spr{X}$, as well as a value function class $\cQ$ with $\abs{\cQ} = 2$ such that for every $\cM \in \cF$, \cref{asp:q-expert-realizability} is satisfied for expert policy $\expert = \pi^\star_{\scriptscriptstyle\!\cM}$. Let $\texttt{Alg}$ be any offline IL algorithm (\cref{def:IL}). Then, there exists an MDP $\cM \in \cF$ such that $\mathrm{Alg}$ needs  $\tauE \geq \Omega \spr{\frac{X}{\varepsilon}}$ trajectories sampled according to \cref{eq:expert-dataset} with expert policy $\pi^\star_{\scriptscriptstyle\!\cM}$, to output a policy $\piout$ such that $\bbE \sbr{J_{\scriptscriptstyle\!\cM}^{\pi^\star_{\scriptscriptstyle\!\cM}} - J_{\scriptscriptstyle\!\cM}^{\piout}} \leq \varepsilon$.\footnote{The expectation is over the training data and the internal randomness of $\texttt{Alg}$.}
\end{restatable}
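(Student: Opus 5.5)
The plan is to reduce to a ``needle-in-a-haystack'' argument over many hidden bits. The construction must make the \emph{only} useful information about how to act sit on state--action pairs that the expert's data reveals one initial state at a time. The two candidate value functions in $\cQ$ must agree on everything the expert visits, so that realizability gives the learner nothing beyond what it could read off the dataset.

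\emph{The family.} Index $\cF$ by bit strings $\sigma \in \{0,1\}^X$, so $\log\abs{\cF} = \cO(X)$. At $h=1$ there are $X$ ``rare'' initial states $s_1,\dots,s_X$, each with probability $c\varepsilon/X$, and one ``common'' state $s_0$ carrying the remaining mass. From $s_i$, actions $a$ and $b$ lead deterministically to level-$2$ states $u_i$ and $w_i$. The transition wiring is chosen by $\sigma_i$: which of $u_i, w_i$ is the rewarding state is swapped according to $\sigma_i$. At $h=2$, rewards are fixed functions of the state and action; one of the two successor states pays $1$ for every action and the other pays $0$. The reward table therefore never depends on $\sigma$, and only the transitions do. The expert $\pi^\star_{\cM}$ is the deterministic optimal policy, which plays action $\sigma_i$ at $s_i$.

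\emph{Realizability with $\abs{\cQ}=2$.} Since $r_2$ is fixed, $Q^{\expert}_2 = r_2$ is the same for all $\cM$. The main obstacle is $Q^{\expert}_1(s_i,\cdot)$, which equals $1$ on the expert's action and $0$ on the other, and so depends on $\sigma_i$. My first attempt to fix this is to relabel actions per state, so that the expert's action is always called the same name at $s_i$ and the dependence moves into which action is labeled which. That does not work, because the learner would then simply imitate the fixed label. The fix I would actually pursue makes $Q^{\expert}_1(s_i,a) = Q^{\expert}_1(s_i,b) = 1$, with both actions leading to rewarding states. The loss is then pushed to level $2$: the successor state visited by the expert has a $\sigma$-independent name, say $u_i$, on which one action pays $1$, while the non-visited successor is a state whose rewarding action is unknown. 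I would split this among two candidate $Q$-tables, $Q^{A}$ and $Q^{B}$, that agree on every pair the expert visits and disagree only off-support. I would also use $s_0$ to encode which of the two tables is correct, in a way that is only detectable through the rare states. The key check I expect to take the most care is that such a construction exists with exactly two $Q$-functions while the off-support information still carries one independent bit per rare state. If pure $\abs{\cQ}=2$ proves too tight, I would pad the state space to $\cO(X)$ duplicates so that the two tables encode the parity structure.

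\emph{The lower bound.} Given the construction, the argument is standard. First, $J^{\pi^\star} - J^{\piout} \ge \sum_i (c\varepsilon/X)\, \Pr[\piout \text{ errs at } s_i]$. Second, if $s_i$ never appears in $\cDE$, the posterior on $\sigma_i$ under a uniform prior is uniform, because the data distribution does not depend on $\sigma_i$ except through visits to $s_i$ and the $\cQ$ class is identical for both values. Hence the error probability at $s_i$ is at least $1/2$. Third, each $s_i$ is missed with probability $(1-c\varepsilon/X)^{\tauE} \ge 1/2$ whenever $\tauE \le X/(2c\varepsilon)$. Summing over $i$ gives an expected suboptimality of at least $c\varepsilon/4$. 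Choosing $c$ suitably, for example $c=8$ so that this exceeds $\varepsilon$ under the restriction $\varepsilon \le 1/8$ (which keeps the rare mass at most $1$), gives the claim by averaging over the prior to extract a single bad $\cM \in \cF$.
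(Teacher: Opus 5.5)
Your counting argument (rare initial states of mass $c\varepsilon/X$, the event that $s_i$ is absent from $\cDE$, averaging over a uniform prior to extract one bad instance) matches the paper's. The construction, however, is never completed, and it is the real content of the theorem.

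Your first family is exactly the paper's reward-realizability family. There $Q^{\expert}_1(s_i,\cdot)$ encodes $\sigma_i$, which forces one value function per $\sigma$, as you observe. Your fix points the right way: equal $\qexpert$-values for both actions at $s_i$, and two tables differing only off the expert's support. But you leave its existence as an open ``key check,'' and the requirements you attach to it are wrong.
\begin{itemize}
\item The off-support information does not need to carry one independent bit per rare state. Two tables cannot do that anyway.
\item Making the identity of the correct table ``detectable through the rare states'' would destroy the bound. In the regime $\tauE \asymp X/\varepsilon$ that the bound must cover, the data contain order-$X$ rare states with overwhelming probability. The learner would identify the table and then act optimally at every unvisited successor, so deviating at an unseen $s_i$ would cost nothing.
\end{itemize}

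The missing idea is to separate two roles.
\begin{itemize}
\item The per-copy bit $\sigma_i$ lives only in the wiring at $s_i$: which action bypasses a trap state $w_i$ and which enters it.
\item The correct action at \emph{every} trap is a single global bit $\Theta$, on which the two elements of $\cQ$ disagree.
\end{itemize}
Because the expert is optimal at the trap, entering it has the same $\qexpert$-value as bypassing it. Hence $\qexpert$ depends on $\Theta$ alone and $\abs{\cQ}=2$. Because the expert always bypasses the traps and rewards are unobserved, the data are independent of $\Theta$. They are also independent of $\sigma_i$ on the event that $s_i$ is unseen.

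Your second step must then change. Deviating at $s_i$ is now free in $\qexpert$ terms, so ``errs at $s_i$ with probability $1/2$'' is not the relevant event. The loss on an unseen copy is the compound probability $\frac12\cdot\frac12=\frac14$ of entering the trap and then choosing wrongly there. This requires $(\sigma_i,\Theta)$ to be independent and uniform given the transcript.

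With your constants ($c=8$, miss probability at least $1/2$) this yields only suboptimality $\geq\varepsilon$, not the strict $>\varepsilon$ you need. The paper takes $q=8\varepsilon/X$ and $\tauE\le X/(32\varepsilon)$, so that $\frac{X}{4}\,q\,(1-q)^{\tauE}\ge\frac32\varepsilon$.
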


In other words, any offline imitation learner needs at least $\Omega \spr{\min \scbr{\abs{\cX}, \log \spr{\abs{\cF}}} / \varepsilon}$ samples to obtain a near-optimal policy, even when there exists a small value function class satisfying $\qexpert$-realizability for every MDP in the problem class.\footnote{By contrast, \ISPIL succeeds on the MDPs defined in the proof of \cref{thm:lower_any_algo} with one expert query outside the support of its state distribution.} We note that similar ``tabular'' lower bound rates have appeared for offline IL \citep{rajaraman2020toward,rajaraman2021provablybreaking, foster2024behavior}, though with a focus on establishing the optimal dependence on horizon. By comparison, our lower bound shows that the worst-case offline IL rate continues to depend on the number of states even when there exists a small value function class for the problem class, and even for $H = 2$.

In \cref{tab:tractability}, we summarize various representational assumptions considered in this work and whether they are statistically tractable or intractable in the offline or interactive IL settings.

\begin{table}[H]
\centering
\renewcommand{\arraystretch}{1.2}
\begin{tabular*}{0.85\textwidth}{@{\extracolsep{\fill}}lccc}
\toprule
\textbf{Representation}
&
\textbf{Reward $r$}
&
\textbf{Value} $\qexpert$
&
\textbf{Policy} $\expert$\\
\midrule
\textbf{Offline IL} (\cref{def:IL})
& \badcell\,\, (\cref{thm:reward-realizability-lb})
& \badcell\,\, (\cref{thm:lower_any_algo})
& \goodcell \,\,(\eg, \BC) \\
\textbf{Interactive IL} (\cref{def:interactive-il-algo})
& \badcell \,\, (\cref{thm:reward-realizability-lb})
& \goodcell$^\star$ \,\, (\ISPIL; \cref{thm:pi_first_main})
& \goodcell \,\, (\eg, \DAGGER)\\
\bottomrule
\end{tabular*}
\caption{Tractability of imitation learning (for $H > 1$) under different representational assumptions. \underline{\goodcell}: A $\mathrm{poly} \spr{\log \spr{\abs{\cF}}, H, \varepsilon^{-1}, \log \spr{\delta^{-1}}}$ sample complexity is possible, where $\cF$ is a given finite reward/value/policy class satisfying realizability (Assumptions \ref{ass:reward-realizability}, \ref{asp:q-expert-realizability}, and \ref{ass:policy-realizability}, respectively). \underline{\badcell}: Said polynomial sample complexity is not possible. \underline{$^\star$}: incurs additional $\log \spr{A}$ dependence.\loose}
\label{tab:tractability}
\end{table}

\pagebreak[4]
\vspace{-\baselineskip}
\begin{samepage}
    Next, we show a stronger negative result for any algorithm which only uses \emph{value-induced} policies, defined below.\loose\par\nopagebreak[4]
    \begin{definition}[Value-induced policy class] \label{def:valueIL}
        An offline IL algorithm $\mathrm{Alg}$ (\cref{def:IL}) uses a \emph{value-induced} (VI) policy class (for value function class $\cQ$) if it outputs policies of the form $\piout_h \spr{a \given x} \propto f \spr{w\transpose \spr{Q_h^{1:K}} \spr{x, a}}$, for some parameter $K \in \bbN$, sequence $Q^{1:K} \in \cQ^K$, linear parameter $w \in \bbR^K$, and function $f\colon \bbR \to \bbR_+$.
    \end{definition}
\end{samepage}
The VI policy class encompasses the policies used by a broad class of algorithms that extract policies from value functions by applying activation functions (\eg, softmax, ReLU, greedy) to linear combinations of value functions from $\cQ$.\footnote{We note that greedy policies are only included as pointwise limits of other VI policies, as shown in \Cref{sec:proofs_lb}.} In particular, it includes the implicit policy classes used by prior theoretical and empirical value-based IL methods \citep{moulin2025inverse,joshi2025learning,Garg:2021,watson2023coherent} as well as by \ISPIL. See \Cref{sec:proofs_lb} for details. The following lower bound shows that, under $\qexpert$-realizability alone, offline learners with VI policy classes fail to compete with the expert's performance, even in the limit of infinite data.\loose

\begin{restatable}[Unidentifiability under VI policies]{theorem}{Lower} \label{thm:lower}
    There exists a family $\cF$ of MDPs with: i) deterministic initial states, rewards, and transitions, ii) horizon $H = 2$, iii) state space size $\abs{\cX} = 3$, and iv) family size $\abs{\cF} = 2$, as well as a value function class $\cQ$ with $\abs{\cQ} = 2$ such that for every $\cM \in \cF$, \cref{asp:q-expert-realizability} is satisfied for the expert $\expert = \pi^\star_{\!\scriptscriptstyle{\cM}}$. Moreover, for any algorithm $\texttt{Alg}$ (\cref{def:IL}) with a VI policy class (\cref{def:valueIL}), for any $\tauE \in \bbN^\star$, there exists an MDP $\cM \in \cF$ such that, given $\tauE$ expert trajectories sampled from $\expert$, $\texttt{Alg}$ outputs policies $\piout$ such that $\bbE\sbr{J_{\scriptscriptstyle\!\cM}^\expert - J_{\scriptscriptstyle\!\cM}^{\piout}} \geq \frac14$.\footnote{The expectation is over the training data and the internal randomness of $\texttt{Alg}$.}
\end{restatable}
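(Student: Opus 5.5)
The plan is to build two deterministic MDPs on which the expert's data distribution is \emph{identical}, while any VI policy is forced, by ties in $\qexpert$ at the initial state, to visit an unvisited state. At that state the two MDPs demand opposite actions. The argument is a two-point indistinguishability argument. Because the data are identical in law, the output policy has the same law in both MDPs, so it cannot be good in both.

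\textbf{Construction.} Take states $\cX=\{s_0,s_1,s_2\}$, actions $\cA=\{a,b\}$, horizon $H=2$, and deterministic initial state $s_0$. In both MDPs $\cM^1,\cM^2$, action $a$ at $s_0$ leads to $s_1$ and action $b$ leads to $s_2$, and the step-1 rewards are zero. At $h=2$ we set:
\begin{itemize}
\item in both MDPs, $r_2(s_1,a)=1$ and $r_2(s_1,b)=0$;
\item in $\cM^1$, $r_2(s_2,a)=1$ and $r_2(s_2,b)=0$;
\item in $\cM^2$, these two rewards at $s_2$ are swapped.
\end{itemize}
In each MDP both step-1 actions are optimal, and we fix the optimal deterministic expert $\expert$ that plays $a$ at $s_0$ and $a$ at $s_1$. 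It plays the optimal action at $s_2$, namely $a$ in $\cM^1$ and $b$ in $\cM^2$.

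\textbf{Value class and identical data.} The expert values are as follows:
\begin{itemize}
\item $\qexpert_1(s_0,\cdot)=(1,1)$ in both MDPs;
\item $\qexpert_2(s_1,\cdot)=(1,0)$ in both MDPs;
\item $\qexpert_2(s_2,\cdot)$ equals $(1,0)$ in $\cM^1$ and $(0,1)$ in $\cM^2$.
\end{itemize}
Let $\cQ=\{Q^1,Q^2\}$ with $Q^i=Q^{\expert}_{\cM^i}$, so that \cref{asp:q-expert-realizability} holds for each MDP. The expert trajectory in either MDP is deterministically $(s_0,a),(s_1,a)$. Hence the dataset $\cDE$ has the same law under $\cM^1$ and $\cM^2$ for every $\tauE$. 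Consequently the (random) output of $\texttt{Alg}$, namely the tuple $(K,Q^{1:K},w,f)$ and the policy it induces, has the same law in both MDPs.

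\textbf{Forced behaviour of VI policies and the return gap.} The key observation is that every function in $\cQ$ takes the constant value $1$ on $s_0$. Hence $w\transpose(Q_1^{1:K})(s_0,\cdot)$ is the same for $a$ and $b$, and $\piout_1(\cdot\given s_0)$ is uniform whenever it is well defined. Greedy limits are also handled by uniform tie-breaking, or by the pointwise-limit convention noted after \cref{def:valueIL}. The learner therefore reaches $s_2$ with probability $1/2$.

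Let $p=\bbE[\piout_2(a\given s_2)]$, where the expectation is over the data and $\texttt{Alg}$'s randomness. The value of $p$ is the same in both MDPs. The expert earns return $1$. The learner loses $\tfrac12(1-p)$ in expectation in $\cM^1$ and $\tfrac12 p$ in $\cM^2$, since it always collects the reward at $s_1$ by acting well there (or does even worse). The two losses sum to $\tfrac12$, so one of them is at least $\tfrac14$, which is the claimed bound.

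\textbf{Main obstacle.} The delicate step is making the uniform-at-$s_0$ claim rigorous for \emph{every} $f\colon\bbR\to\bbR_+$. This includes degenerate cases such as $f=0$ at the common value, where the normalisation is undefined, and greedy policies obtained as limits. I would handle this by stipulating that an undefined normalisation defaults to the uniform distribution. Since $\piout_1(\cdot\given s_0)$ must be symmetric in $a$ and $b$ by construction, any such convention that respects this symmetry gives the same conclusion. The same symmetry argument also covers the limits.
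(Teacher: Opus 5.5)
Your proposal is correct and is essentially the paper's proof. It is a two-point family in which the expert's trajectory never touches the disagreement state, the ties in $\qexpert$ at the start state force every VI policy to be uniform there and hence to reach the unobserved state with probability $1/2$, and the two MDPs demand opposite actions at that state, giving average loss $\tfrac12 \cdot \tfrac12 = \tfrac14$. The differences are cosmetic: you use nonnegative rewards directly, avoiding the paper's centered-reward shift, and you conclude via equality of the output law across the two instances instead of invoking Yao's principle; your worry about a vanishing normalizer is moot, since the paper's definition requires the normalizing denominator to be strictly positive and finite.
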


\begin{figure}
    \centering
    \includegraphics[width=0.65\linewidth]{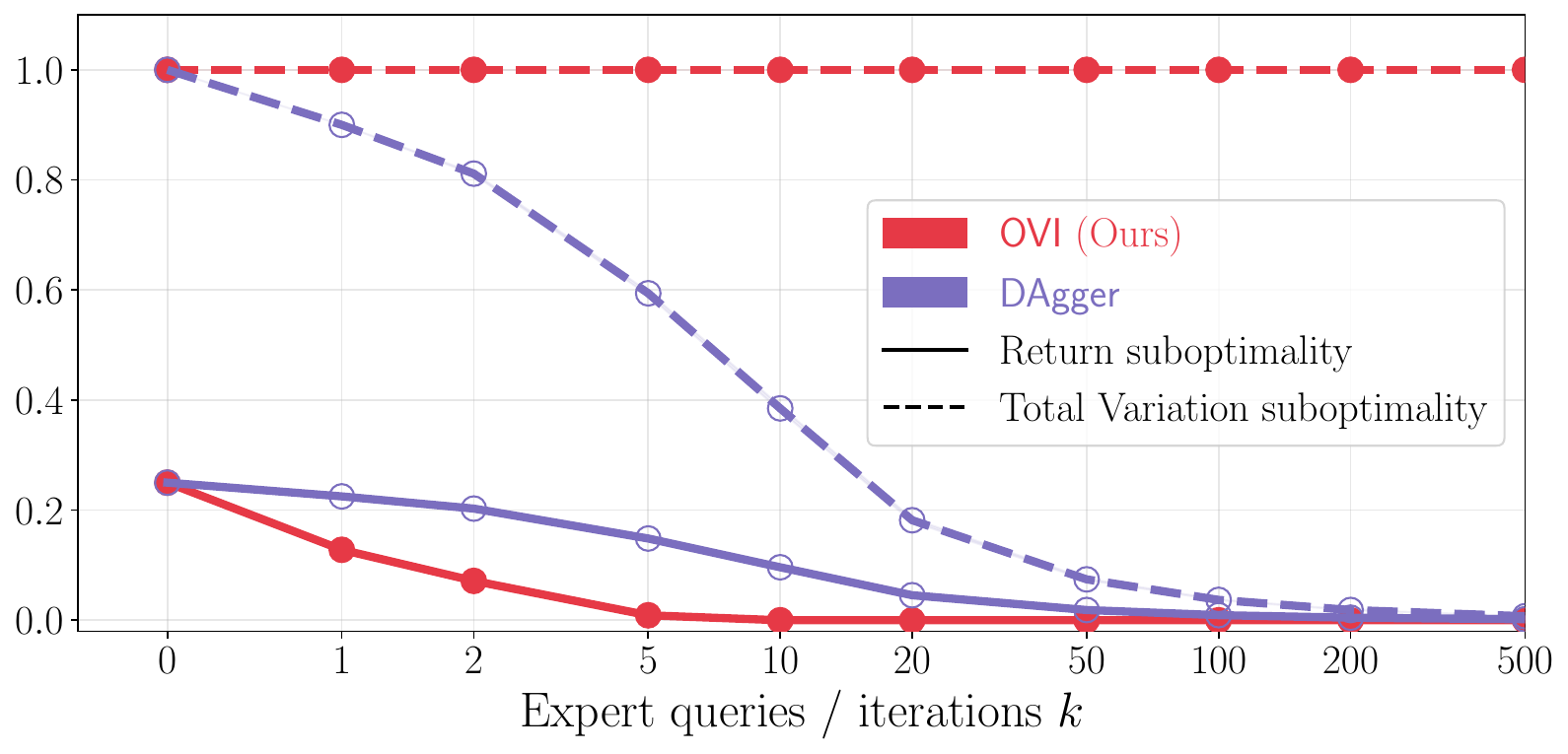}
    \caption{Comparison of reward suboptimality, $J^{\expert} - J^{\pi_{\texttt{out}}}$, and total variation distance, $\DTV{\bbP^{\expert}}{\bbP^{\pi_{\texttt{out}}}}$ for \ISPIL and \DAGGER in a synthetic environment satisfying \cref{asp:q-expert-realizability} (illustrated in \Cref{fig:extended_hard}). \ISPIL matches expert return (see solid lines) without matching the expert trajectory distribution (see dashed lines).\loose}
    \label{fig:section4_experiments}
    \vspace{-0.5em}
\end{figure}

The intuition for \cref{thm:lower} is that the value-induced policies must randomize uniformly among actions with equally high value, even though the expert itself may only play a subset of those actions. In the construction of \cref{thm:lower} (illustrated in \Cref{fig:hardMDPenv}), this implies that the learner's output policy goes out of distribution with constant probability, thus incurring a constant error rate for all dataset sizes. We expect that unidentifiability continues to hold under broader definitions of the VI policy class which assign probabilities to actions based on their \emph{value profiles} \citep{sun2019model}.

Our lower bounds in this section (\Cref{thm:lower_any_algo} and \Cref{thm:lower}) show that interaction is necessary in general for learning with only $\qexpert$-realizability. We conclude this section by briefly discussing how the interaction and representational requirements can be relaxed in certain settings.

\para{Coverage circumvents the offline lower bound}
If the expert induces a distribution over the state space with sufficient coverage, then distribution shift no longer causes issues for offline algorithms that output a VI policy such as \SPOIL (\Cref{sec:break_lb_with_coverage}). In particular, in \Cref{thm:SPOIL}, we prove sample complexity guarantees for \SPOIL under $\qexpert$-realizability, expert optimality and a standard coverage condition \citep{munos2003error,antos2008learning,chen2019information,xie2023role,amortila2024scalable,jiang2025offline}. However, we do not consider this an easy assumption to satisfy in practice.\loose

\para{Representational benefits of mixing expert and learner rollouts}
Finally, we show that additional representational benefits can arise when the dataset includes trajectories from \emph{both} the learner and the expert, with the expert being queried at all such states (a technique commonly used in LM distillation; \citealp{agarwal2024policy,li2026revisiting}). We show that a variant of \ISPIL that mixes learner and expert trajectories succeeds whenever $\cQ$ satisfies either $\qexpert$-realizability (\cref{asp:q-expert-realizability}) or the \SPOIL-like closure condition, which requires $Q^\pi \in \cQ$ for every policy $\pi$ generated by the learner, without knowing in advance which condition holds (\Cref{thm:adaptive-hybrid}).\loose


\section{\mbox{Chain-of-Thought Learning: Computational-Representational Tradeoffs}}
\label{sec:llm_tradeoff}
The preceding sections showed that interaction relaxes the representational requirements for the learner. Here, we show how \ISPIL can be instantiated in the setting of LM reasoning to achieve exponential improvements in computational efficiency over existing approaches \citep{joshi2025learning}, provided that the expert's value can be realized, thereby providing a complementary perspective on the value of interaction in IL.\loose

Reasoning tasks such as mathematics and code generation have become a central focus of modern LM research \citep{wei2022chain,li2022competition}. In these domains, objective evaluators (such as unit and integration tests for coding and formal verifiers for math) are often available and can be used as a ``ground truth'' reward function. To solve these problems, LMs are trained to generate a sequence of tokens, called a \emph{chain of thought} (CoT), before outputting the final answer. To acquire CoT reasoning, LMs can be trained via IL, commonly via next-token prediction \citep{ouyang2022training,cobbe2021training} on expert reasoning traces consisting of a prompt, the expert's complete chain-of-thought, and the correct final answer.\loose

CoT learning \citep{malach2023auto,joshi2025theory} has been formalized as follows. The LM is modeled as an autoregressive policy that, given a prompt $\mb{x} \in \cX$, samples a response $\mb{y} \in \cY$ via $\mb{y} \sim \pi \spr{\cdot \given \mb{x}}$, where $\mb{y} = \spr{\mb{y}_1, \ldots, \mb{y}_{H-1}, \mb{y}_H}$ is a sequence of $H$ tokens, belonging to a token vocabulary denoted by $\Sigma$, generated autoregressively as $\mb{y}_h \sim \pi \spr{\cdot \given \mb{x}, \mb{y}_{1}, \ldots, \mb{y}_{h-1}}$, $h = 1, \ldots, H$, and the last token $\mb{y}_H$ is the final answer. This interaction defines a \emph{token-level MDP} with horizon $H$, where states are the partial generations so far, $\mb{x}_h = \spr{\mb{x}, \mb{y}_{1:h-1}} \in \cX \times \Sigma^{h-1}$, the action is the next token $\mb{y}_h \in \Sigma$, and the next state is $\mb{x}_{h+1} = \spr{\mb{x}, \mb{y}_{1:h}}$. The learner is given access to a dataset $\cD = \scbr{\spr{\mb{x}, \mb{y}}}$, where the prompt $\mb{x}$ is sampled from a given prompt distribution, and the response is generated by the expert, $\mb{y} \sim \expert\spr{\cdot \given \mb{x}}$. The reward function is a binary \emph{outcome verifier} defined as $r\colon \cX \times \cY \rightarrow \scbr{0, 1}$ so that the expected return of a policy, $J^\pi = \bbE^\pi \sbr{r \spr{\mb{x}, \mb{y}}}$, is equivalent to its answer accuracy. The goal of IL then translates to the problem of producing a policy $\piout$ whose final-answer accuracy competes with the expert's.\loose

\para{Non-interactive reward-based IL is intractable computationally}
A natural simplification to CoT learning, exploiting that the dynamics are deterministic and known, is to treat the CoT problem as an $H=1$ problem (\ie, a contextual bandit), where the prompt $\mb{x}$ is the context and the response $\mb{y} \in \Sigma^H$ is an action. Under a reward (or outcome verifier) realizability assumption (\cref{ass:reward-realizability}), \citet{joshi2025learning} estimate a reward $\wh{r}$ from the demonstrations $\cD$ using an exponential-weights-type procedure which, at each iteration, must solve $\argmax_{y \in \Sigma^{H}} \wh{r} \spr{\mb{x}, y}$ to extract a response for a given prompt $\mb{x}$.\footnote{We note that our reward-based lower bound \cref{thm:reward-realizability-lb} does not apply in CoT learning since the dynamics in the token-level MDP are known.} Their procedure presents two computational barriers. First, the reward optimization is over the full response space $\cY = \Sigma^H$, which is computationally intractable for long horizons. Second, performing exponential weight updates over the reward class is infeasible as it requires enumerating over the reward class.

\para{Computational benefits of value-based interactive IL}
To exploit the inherently sequential nature of CoT learning, we can apply \ISPIL directly to the token-level MDP generated by the interaction of the prompt distribution, the LM, and the outcome verifier $r$. At each step $h$, \ISPIL learns a value function that naturally induces a softmax token sampling distribution (\Cref{line:pi-alg} of \cref{alg:interactive_finite-H}). This decomposes the previously intractable search over the response space $\Sigma^H$ into $H$ local search steps. Moreover, each stage of the problem is solved by \ISPIL using simple linear maximization oracles for the value function search (\Cref{line:test} of \cref{alg:interactive_finite-H}) and softmax updates for the policy (\Cref{line:pi-alg} of \cref{alg:interactive_finite-H}), which can be implemented at scale. Indeed, our approach suggests a loss function for the value network which can be minimized via backpropagation coupled with modern optimizers.\loose

It is worth noting that the computational benefits of \ISPIL over sequence-level methods like \citet{joshi2025learning} come at the cost of a stronger representational requirement (realizing the expert's value function for each step rather than simply the reward of the final answer) as well as interactive query access to the expert. Both of these additional assumptions have been widely shown to improve performance; the first can naturally be interpreted as requiring an architecture capable of expressing a process verifier (or process reward model; \citealp{lightman2023let,uesato2022solving}), as opposed to simply an outcome verifier, and the second corresponds to the setting of on-policy distillation \citep{agarwal2024policy,gu2024minillm,yang2025qwen3} which has been widely employed in language modeling. Our results therefore provide a novel theoretical justification for the use of process reward models and on-policy interaction in LM training. Whether these gains translate to practical value-based IL algorithms for language models is a very interesting empirical question.


\section{Conclusion}
\label{sec:conclusion}
What is the value of interaction in IL? In this work, we investigated the mechanisms through which interaction and value function estimation help in IL, revealing a perhaps surprising interplay: interaction relaxes the representational demands on the learner, allowing it to succeed while realizing only the expert's value function, as opposed to the expert's full policy or the value functions of all learner-generated policies. We formalized this through \ISPIL, an interactive value-based IL algorithm that is statistically and computationally efficient under $\qexpert$-realizability alone, and a complementary lower bound establishing that interaction is necessary for IL under this minimal assumption. Specializing to chain-of-thought reasoning, we further showed that these representational benefits can translate into exponential computational gains over non-interactive approaches \citep{joshi2025learning}. These findings suggest that the benefits of interaction in IL extend well beyond the classical understanding of improved horizon dependence.\loose

Several interesting future directions remain open. On the technical side, it would first be valuable to obtain an $\cO \spr{\veps^{-2}}$ rate for nonconvex classes $\cQ$ with a computationally efficient algorithm. Second, while under $\expert$-realizability nearly horizon-free bounds are possible \citep{foster2024behavior}, whether the same is achievable for value-based IL is an interesting open question. Finally, it would be interesting to study value-based IL in the presence of value function misspecification; we expect that the relevant forms of misspecification differ from their counterparts in value-based RL \citep{chen2019information,du2020good,amortila2023optimal,amortila2024mitigating,maran2026beyond}. On the empirical side, it would be valuable to understand whether \ISPIL (or value-based IL methods more broadly) can be effective in language model distillation, where expert policy realizability is unlikely to hold.\loose


\clearpage
\section*{Acknowledgments}
We thank Gergely Neu for interesting discussions about this project. LV was supported by the Swiss Data Science Center under fellowship number P22$\_$03. AM received funding from the European Research Council (ERC), under the European Union's Horizon 2020 research and innovation programme (Grant agreement No. 950180). PA gratefully acknowledges the support of DARPA through award No. HR00112520022. Part of this research was performed while AM and AH were visiting the Institute for Mathematical and Statistical Innovation (IMSI), which is supported by the National Science Foundation (Grant No. DMS-2425650).


\bibliography{ref_neurips}

@inproceedings{gu2024minillm,
  title        = {{MiniLLM}: Knowledge Distillation of Large Language Models},
  author       = {Gu, Yuxian and Dong, Li and Wei, Furu and Huang, Minlie},
  booktitle    = {International Conference on Learning Representations},
  year         = {2024},
  url          = {https://openreview.net/forum?id=5h0qf7IBZZ}
}

@inproceedings{amortila2024mitigating,
  title={Mitigating covariate shift in misspecified regression with applications to reinforcement learning},
  author={Amortila, Philip and Cao, Tongyi and Krishnamurthy, Akshay},
  booktitle={The Thirty Seventh Annual Conference on Learning Theory},
  year={2024},
  url = {https://arxiv.org/abs/2401.12216}
}

@inproceedings{amortila2024scalable,
  title={Scalable Online Exploration via Coverability},
  author={Amortila, Philip and Foster, Dylan J and Krishnamurthy, Akshay},
  booktitle={Forty-first International Conference on Machine Learning},
  year={2024},
  url = {https://arxiv.org/abs/2403.06571}
}

@article{antos2008learning,
  title={Learning near-optimal policies with Bellman-residual minimization based fitted policy iteration and a single sample path},
  author={Antos, Andr{\'a}s and Szepesv{\'a}ri, Csaba and Munos, R{\'e}mi},
  journal={Machine Learning},
  year={2008},
  publisher={Springer},
  url = {https://link.springer.com/article/10.1007/s10994-007-5038-2}
}

@article{yang2025qwen3,
  title        = {{Qwen3} Technical Report},
  author       = {Yang, An and Li, Anfeng and Yang, Baosong and Zhang, Beichen and Hui, Binyuan and Zheng, Bo and Yu, Bowen and Gao, Chang and Huang, Chengen and others},
  journal      = {arXiv preprint arXiv:2505.09388},
  year         = {2025},
  url          = {https://arxiv.org/abs/2505.09388}
}

@InProceedings{efroni2022sample,
  title = 	 {Sample-Efficient Reinforcement Learning in the Presence of Exogenous Information},
  author =       {Efroni, Yonathan and Foster, Dylan J and Misra, Dipendra and Krishnamurthy, Akshay and Langford, John},
  booktitle = 	 {Proceedings of Thirty Fifth Conference on Learning Theory},
  year = 	 {2022},
  url = 	 {https://proceedings.mlr.press/v178/efroni22a.html},
}

@inproceedings{mhammedi2024power,
title={The Power of Resets in Online Reinforcement Learning},
author={Zakaria Mhammedi and Dylan J Foster and Alexander Rakhlin},
booktitle={The Thirty-eighth Annual Conference on Neural Information Processing Systems},
year={2024},
url={https://openreview.net/forum?id=7sACcaOmGi}
}

@inproceedings{malach2023auto,
  title={Auto-Regressive Next-Token Predictors are Universal Learners},
  author={Malach, Eran},
  booktitle={Forty-first International Conference on Machine Learning},
  year={2024},
  url={https://arxiv.org/abs/2309.06979}
}

@inproceedings{
li2025near,
title={Near-Optimal Second-Order Guarantees for Model-Based Adversarial Imitation Learning},
author={Shangzhe Li and Dongruo Zhou and Weitong Zhang},
booktitle={The Fourteenth International Conference on Learning Representations},
year={2026},
url={https://openreview.net/forum?id=PD8wnZOV1J}
}

@article{wulfmeier2015maximum,
  title={Maximum entropy deep inverse reinforcement learning},
  author={Wulfmeier, Markus and Ondruska, Peter and Posner, Ingmar},
  journal={arXiv preprint arXiv:1507.04888},
  year={2015},
  url={https://arxiv.org/abs/1507.04888}
}

@inproceedings{xu2026non,
title={Non-Adversarial Imitation Learning Provably Free of Compounding Errors: The Value Flow Mechanism},
author={Tian Xu and Chenyang Wang and Xiaochen Zhai and Ziniu Li and Yi-Chen Li and Yang Yu},
booktitle={Forty-third International Conference on Machine Learning},
year={2026},
url={https://openreview.net/forum?id=1UUJrgYr20}
}

@InProceedings{laskey2017dart,
  title = 	 {DART: Noise Injection for Robust Imitation Learning},
  author = 	 {Laskey, Michael and Lee, Jonathan and Fox, Roy and Dragan, Anca and Goldberg, Ken},
  booktitle = 	 {Proceedings of the 1st Annual Conference on Robot Learning},
  year = 	 {2017},
  url = 	 {https://proceedings.mlr.press/v78/laskey17a.html},
}

@inproceedings{amortila2024reinforcement,
title={Reinforcement Learning Under Latent Dynamics: Toward Statistical and Algorithmic Modularity},
author={Philip Amortila and Dylan J Foster and Nan Jiang and Akshay Krishnamurthy and Zakaria Mhammedi},
booktitle={The Thirty-eighth Annual Conference on Neural Information Processing Systems},
year={2024},
url={https://openreview.net/forum?id=qf2uZAdy1N}
}

@InProceedings{chen2019information,
  title = 	 {Information-Theoretic Considerations in Batch Reinforcement Learning},
  author =       {Chen, Jinglin and Jiang, Nan},
  booktitle = 	 {36th International Conference on Machine Learning},
  year = 	 {2019},
  url = 	 {https://proceedings.mlr.press/v97/chen19e.html},
}

@inproceedings{foster2022offline,
  title={Offline Reinforcement Learning: Fundamental Barriers for Value Function Approximation},
  author={Foster, Dylan J and Krishnamurthy, Akshay and Simchi-Levi, David and Xu, Yunzong},
  booktitle={Conference on Learning Theory},
  year={2022},
  url={https://arxiv.org/abs/2111.10919}
}

@InProceedings{foster2025good,
  title = 	 {Is a Good Foundation Necessary for Efficient Reinforcement Learning? The Computational Role of the Base Model in Exploration},
  author =       {Foster, Dylan J and Mhammedi, Zakaria and Rohatgi, Dhruv},
  booktitle = 	 {Proceedings of Thirty Eighth Conference on Learning Theory},
  year = 	 {2025},
  url = 	 {https://proceedings.mlr.press/v291/foster25a.html},
}

@article{jiang2025offline,
  title={Offline reinforcement learning in large state spaces: Algorithms and guarantees},
  author={Jiang, Nan and Xie, Tengyang},
  journal={Statistical Science},
  year={2025},
  url={https://arxiv.org/abs/2510.04088}
}

@inproceedings{block2023butterfly,
title={Butterfly Effects of {SGD} Noise: Error Amplification in Behavior Cloning and Autoregression},
author={Adam Block and Dylan J Foster and Akshay Krishnamurthy and Max Simchowitz and Cyril Zhang},
booktitle={The Twelfth International Conference on Learning Representations},
year={2024},
url={https://openreview.net/forum?id=CgPs04l9TO}
}

@inproceedings{de2019causal,
 author = {de Haan, Pim and Jayaraman, Dinesh and Levine, Sergey},
 booktitle = {Advances in Neural Information Processing Systems},
 title = {Causal Confusion in Imitation Learning},
 url = {https://proceedings.neurips.cc/paper_files/paper/2019/file/947018640bf36a2bb609d3557a285329-Paper.pdf},
 year = {2019}
}

@article{spencer2021feedback,
  title={Feedback in imitation learning: The three regimes of covariate shift},
  author={Spencer, Jonathan and Choudhury, Sanjiban and Venkatraman, Arun and Ziebart, Brian and Bagnell, J Andrew},
  journal={arXiv preprint arXiv:2102.02872},
  year={2021},
  url={https://arxiv.org/abs/2102.02872}
}

@inproceedings{hu2025rac,
title={RaC: Robot Learning for Long-Horizon Tasks by Scaling Recovery and Correction},
author={Zheyuan Hu and Robyn Wu and Naveen Enock and Jasmine Jia-ni Li and Riya Kadakia and Zackory Erickson and Aviral Kumar},
booktitle={Workshop on Making Sense of Data in Robotics: Composition, Curation, and Interpretability at Scale at CoRL 2025},
year={2025},
url={https://openreview.net/forum?id=y8wskVS7BV}
}

@inproceedings{Abbeel:2004,
	title = {Apprenticeship learning via inverse reinforcement learning},
	author = {Abbeel, Pieter and Ng, Andrew Y.},
	year = {2004},
	booktitle = {International Conference on Machine Learning},
	url = {https://icml.cc/Conferences/2004/proceedings/papers/335.pdf}
}

@inproceedings{Abbeel:2008,
	title = {Apprenticeship learning for motion planning with application to parking lot navigation},
	author = {Abbeel, Pieter and Dolgov, Dmitri and Ng, Andrew Y. and Thrun, Sebastian},
	year = {2008},
	booktitle = {IEEE/RSJ International Conference on Intelligent Robots and Systems},
	url = {https://ai.stanford.edu/~ang/papers/iros08-ApprenticeshipLearningParkingLotNavigation.pdf}
}

@inproceedings{agarwal2024policy,
	title = {On-policy distillation of language models: Learning from self-generated mistakes},
	author = {Agarwal, Rishabh and Vieillard, Nino and Zhou, Yongchao and Stanczyk, Piotr and Garea, Sabela Ramos and Geist, Matthieu and Bachem, Olivier},
	year = {2024},
	booktitle = {The twelfth international conference on learning representations},
	url = {https://arxiv.org/abs/2306.13649}
}

@inproceedings{amortila2022few,
	title = {A Few Expert Queries Suffices for Sample-Efficient {RL} with Resets and Linear Value Approximation},
	author = {Philip Amortila and Nan Jiang and Dhruv Madeka and Dean Foster},
	year = {2022},
	booktitle = {Advances in Neural Information Processing Systems},
	url = {https://openreview.net/forum?id=d19Dsqtw421}
}

@article{Beck:2003,
	title = {Mirror descent and nonlinear projected subgradient methods for convex optimization},
	author = {Beck, Amir and Teboulle, Marc},
	year = {2003},
	journal = {Operations Research Letters},
	url = {https://www.tau.ac.il/~becka/3.pdf}
}

@book{Boucheron:2013,
	title = {Concentration inequalities: A nonasymptotic theory of independence},
	author = {Boucheron, St{\'e}phane and Lugosi, G{\'a}bor and Massart, Pascal},
	year = {2013},
	publisher = {Oxford university press}
}

@InProceedings{boularias2011relative,
  title = 	 {Relative Entropy Inverse Reinforcement Learning},
  author = 	 {Boularias, Abdeslam and Kober, Jens and Peters, Jan},
  booktitle = 	 {Proceedings of the Fourteenth International Conference on Artificial Intelligence and Statistics},
  year = 	 {2011},
  url = 	 {https://proceedings.mlr.press/v15/boularias11a.html},
}

@article{bubeck2012regret,
	title = {Regret analysis of stochastic and nonstochastic multi-armed bandit problems},
	author = {Bubeck, S{\'e}bastien and Cesa-Bianchi, Nicolo and others},
	year = {2012},
	journal = {Foundations and Trends{\textregistered} in Machine Learning},
	publisher = {Now Publishers, Inc.},
	url = {https://arxiv.org/abs/1204.5721}
}

@article{cao2026understanding,
	title = {Understanding Behavior Cloning with Action Quantization},
	author = {Cao, Haoqun and Xie, Tengyang},
	year = {2026},
	journal = {arXiv preprint arXiv:2603.20538},
	url = {https://arxiv.org/abs/2603.20538}
}

@book{Cesa-Bianchi:2006,
	title = {Prediction, learning, and games},
	author = {Cesa-Bianchi, Nicolo and Lugosi, G{\'a}bor},
	year = {2006},
	publisher = {Cambridge university press}
}

@article{cheng2018fast,
	title = {Fast policy learning through imitation and reinforcement},
	author = {Cheng, Ching-An and Yan, Xinyan and Wagener, Nolan and Boots, Byron},
	year = {2018},
	journal = {arXiv preprint arXiv:1805.10413},
	url = {https://arxiv.org/abs/1805.10413}
}

@article{cobbe2021training,
	title = {Training verifiers to solve math word problems},
	author = {Cobbe, Karl and Kosaraju, Vineet and Bavarian, Mohammad and Chen, Mark and Jun, Heewoo and Kaiser, Lukasz and others},
	year = {2021},
	journal = {arXiv preprint arXiv:2110.14168},
	url = {https://arxiv.org/abs/2110.14168}
}

@inproceedings{dadashi2021continuous,
	title = {Continuous Control with Action Quantization from Demonstrations},
	author = {Dadashi, Robert and Hussenot, L{\'e}onard and Vincent, Damien and Girgin, Sertan and Raichuk, Anton and Geist, Matthieu and Pietquin, Olivier},
	year = {2022},
	booktitle = {39th International Conference on Machine Learning},
	url = {https://proceedings.mlr.press/v162/dadashi22a.html}
}

@inproceedings{espinosa2025efficient,
	title = {Efficient Imitation under Misspecification},
	author = {Espinosa-Dice, Nicolas and Choudhury, Sanjiban and Sun, Wen and Swamy, Gokul},
	year = {2025},
	booktitle = {International Conference on Representation Learning},
	url = {https://openreview.net/forum?id=fn36V5qsCw}
}

@InProceedings{finn2016guided,
  title = 	 {Guided Cost Learning: Deep Inverse Optimal Control via Policy Optimization},
  author = 	 {Finn, Chelsea and Levine, Sergey and Abbeel, Pieter},
  booktitle = 	 {Proceedings of The 33rd International Conference on Machine Learning},
  year = 	 {2016},
  url = 	 {https://proceedings.mlr.press/v48/finn16.html},
}

@inproceedings{foster2024behavior,
	title = {Is behavior cloning all you need? understanding horizon in imitation learning},
	author = {Foster, Dylan J and Block, Adam and Misra, Dipendra},
	year = {2024},
	booktitle = {Annual Conference on Neural Information Processing Systems},
	url = {https://openreview.net/forum?id=8KPyJm4gt5}
}

@inproceedings{Fu:2018,
  title={Learning Robust Rewards with Adverserial Inverse Reinforcement Learning},
  author={Justin Fu and Katie Luo and Sergey Levine},
  booktitle={International Conference on Learning Representations},
  year={2018},
  url={https://openreview.net/forum?id=rkHywl-A-},
}

@inproceedings{Garg:2021,
	title = {{IQ}-Learn: Inverse soft-{Q} Learning for Imitation},
	author = {Divyansh Garg and Shuvam Chakraborty and Chris Cundy and Jiaming Song and Matthieu Geist and Stefano Ermon},
	year = {2021},
	booktitle = {Advances in Neural Information Processing Systems},
	url = {https://arxiv.org/abs/2106.12142},
	note = {Note: Read arXiv version for correct version and complete author list}
}

@inproceedings{haarnoja2017reinforcement,
	title = {Reinforcement learning with deep energy-based policies},
	author = {Haarnoja, Tuomas and Tang, Haoran and Abbeel, Pieter and Levine, Sergey},
	year = {2017},
	booktitle = {International Conference on Machine Learning},
	url = {https://proceedings.mlr.press/v70/haarnoja17a.html}
}

@inproceedings{Ho:2016,
	title = {Model-Free Imitation Learning with Policy Optimization},
	author = {Ho, Jonathan and Gupta, Jayesh and Ermon, Stefano},
	year = {2016},
	booktitle = {Proceedings of The 33rd International Conference on Machine Learning},
	url = {https://proceedings.mlr.press/v48/ho16.html}
}

@inproceedings{Ho:2016b,
 author = {Ho, Jonathan and Ermon, Stefano},
 booktitle = {Advances in Neural Information Processing Systems},
 title = {Generative Adversarial Imitation Learning},
 url = {https://proceedings.neurips.cc/paper_files/paper/2016/file/cc7e2b878868cbae992d1fb743995d8f-Paper.pdf},
 year = {2016}
}

@book{howard1960dynamic,
	title = {Dynamic programming and {Markov} processes.},
	author = {Howard, Ronald A},
	year = {1960},
	publisher = {John Wiley}
}

@inproceedings{joshi2025learning,
	title = {Learning to Answer from Correct Demonstrations},
	author = {Nirmit Joshi and Gene Li and Siddharth Bhandari and Shiva Kasiviswanathan and Cong Ma and Nathan Srebro},
	year = {2026},
	booktitle = {The Fourteenth International Conference on Learning Representations},
	url = {https://openreview.net/forum?id=69fIHgLjyH}
}

@inproceedings{joshi2025theory,
	title = {A Theory of Learning with Autoregressive Chain of Thought},
	author = {Joshi, Nirmit and Vardi, Gal and Block, Adam and Goel, Surbhi and Li, Zhiyuan and Misiakiewicz, Theodor and Srebro, Nathan},
	year = {2025},
	booktitle = {Proceedings of Thirty Eighth Conference on Learning Theory},
	url = {https://arxiv.org/abs/2503.07932}
}

@inproceedings{kakade2002approximately,
	title = {Approximately optimal approximate reinforcement learning},
	author = {Kakade, Sham and Langford, John},
	year = {2002},
	booktitle = {International Conference on Machine Learning},
	url = {https://homes.cs.washington.edu/~sham/papers/rl/aoarl.pdf}
}

@inproceedings{kidambi2021mobile,
	title = {Mobile: Model-based imitation learning from observation alone},
	author = {Kidambi, Rahul and Chang, Jonathan and Sun, Wen},
	year = {2021},
	booktitle = {Advances in Neural Information Processing Systems},
	url = {https://arxiv.org/abs/2102.10769}
}

@inproceedings{Kostrikov:2019,
title={Discriminator-Actor-Critic: Addressing Sample Inefficiency and Reward Bias in Adversarial Imitation Learning},
author={Ilya Kostrikov and Kumar Krishna Agrawal and Debidatta Dwibedi and Sergey Levine and Jonathan Tompson},
booktitle={International Conference on Learning Representations},
year={2019},
url={https://openreview.net/forum?id=Hk4fpoA5Km},
}

@inproceedings{Kostrikov:2020,
title={Imitation Learning via Off-Policy Distribution Matching},
author={Ilya Kostrikov and Ofir Nachum and Jonathan Tompson},
booktitle={International Conference on Learning Representations},
year={2020},
url={https://openreview.net/forum?id=Hyg-JC4FDr}
}

@inproceedings{
li2022efficient,
title={On Efficient Online Imitation Learning via Classification},
author={Yichen Li and Chicheng Zhang},
booktitle={Advances in Neural Information Processing Systems},
year={2022},
url={https://openreview.net/forum?id=h2imPVlCCyN}
}

@inproceedings{lightman2023let,
	title = {Let's verify step by step},
	author = {Lightman, Hunter and Kosaraju, Vineet and Burda, Yuri and Edwards, Harrison and Baker, Bowen and Lee, Teddy and Leike, Jan and Schulman, John and Sutskever, Ilya and Cobbe, Karl},
	year = {2024},
	booktitle = {The twelfth international conference on learning representations},
	url = {https://openreview.net/forum?id=v8L0pN6EOi}
}

@inproceedings{Liu:2022,
	title = {Learning from Demonstration: Provably Efficient Adversarial Policy Imitation with Linear Function Approximation},
	author = {Liu, Zhihan and Zhang, Yufeng and Fu, Zuyue and Yang, Zhuoran and Wang, Zhaoran},
	year = {2022},
	booktitle = {International Conference on Machine Learning},
	url = {https://proceedings.mlr.press/v162/liu22u.html}
}

@inproceedings{liu2023exponential,
	title = {Exponential hardness of reinforcement learning with linear function approximation},
	author = {Liu, Sihan and Mahajan, Gaurav and Kane, Daniel and Lovett, Shachar and Weisz, Gell{\'e}rt and Szepesv{\'a}ri, Csaba},
	year = {2023},
	booktitle = {The Thirty Sixth Annual Conference on Learning Theory},
	url = {https://proceedings.mlr.press/v195/liu23b.html}
}

@inproceedings{xie2023role,
  title={The Role of Coverage in Online Reinforcement Learning},
  author={Xie, Tengyang and Foster, Dylan J and Bai, Yu and Jiang, Nan and Kakade, Sham M},
  booktitle={The Eleventh International Conference on Learning Representations},
  year={2023},
  url={https://arxiv.org/abs/2210.04157}
}

@inproceedings{sun2019model,
  title={Model-based rl in contextual decision processes: Pac bounds and exponential improvements over model-free approaches},
  author={Sun, Wen and Jiang, Nan and Krishnamurthy, Akshay and Agarwal, Alekh and Langford, John},
  booktitle={Conference on learning theory},
  year={2019},
  url={https://arxiv.org/abs/1811.08540}
}

@inproceedings{jiang2018open,
  title={Open problem: The dependence of sample complexity lower bounds on planning horizon},
  author={Jiang, Nan and Agarwal, Alekh},
  booktitle={Conference On Learning Theory},
  year={2018},
  url={https://proceedings.mlr.press/v75/jiang18a.html}
}

@article{wang2020long,
  title={Is long horizon rl more difficult than short horizon rl?},
  author={Wang, Ruosong and Du, Simon S and Yang, Lin and Kakade, Sham},
  journal={Advances in Neural Information Processing Systems},
  year={2020},
  url={https://arxiv.org/abs/2005.00527}
}

@article{wei2022chain,
  title={Chain-of-thought prompting elicits reasoning in large language models},
  author={Wei, Jason and Wang, Xuezhi and Schuurmans, Dale and Bosma, Maarten and Xia, Fei and Chi, Ed and Le, Quoc V and Zhou, Denny and others},
  journal={Advances in neural information processing systems},
  year={2022},
  url={https://proceedings.neurips.cc/paper/2022/hash/9d5609613524ecf4f15af0f7b31abca4-Abstract-Conference.html}
}

@article{li2022competition,
  title={Competition-level code generation with alphacode},
  author={Li, Yujia and Choi, David and Chung, Junyoung and Kushman, Nate and Schrittwieser, Julian and Leblond, R{\'e}mi and Eccles, Tom and Keeling, James and Gimeno, Felix and Dal Lago, Agustin and others},
  journal={Science},
  year={2022},
  publisher={American Association for the Advancement of Science},
  url={https://arxiv.org/pdf/2203.07814}
}

@article{ouyang2022training,
  title={Training language models to follow instructions with human feedback},
  author={Ouyang, Long and Wu, Jeffrey and Jiang, Xu and Almeida, Diogo and Wainwright, Carroll and Mishkin, Pamela and Zhang, Chong and Agarwal, Sandhini and Slama, Katarina and Ray, Alex and others},
  journal={Advances in neural information processing systems},
  year={2022},
  url={https://proceedings.neurips.cc/paper/2022/hash/b1efde53be364a73914f58805a001731-Abstract.html}
}

@inproceedings{zhang2021reinforcement,
  title={Is reinforcement learning more difficult than bandits? a near-optimal algorithm escaping the curse of horizon},
  author={Zhang, Zihan and Ji, Xiangyang and Du, Simon},
  booktitle={Conference on Learning Theory},
  year={2021},
  url={https://proceedings.mlr.press/v134/zhang21b}
}

@inproceedings{du2020good,
  title={Is a Good Representation Sufficient for Sample Efficient Reinforcement Learning?},
  author={Du, Simon S and Kakade, Sham M and Wang, Ruosong and Yang, Lin F},
  booktitle={International Conference on Learning Representations},
  year={2020},
  url={https://arxiv.org/pdf/1910.03016}
}

@inproceedings{amortila2023optimal,
  title={The optimal approximation factors in misspecified off-policy value function estimation},
  author={Amortila, Philip and Jiang, Nan and Szepesv{\'a}ri, Csaba},
  booktitle={International Conference on Machine Learning},
  year={2023},
  url={https://proceedings.mlr.press/v202/amortila23a.html}
}

@article{maran2026beyond,
  title={Beyond least squares: Uniform approximation and the hidden cost of misspecification},
  author={Maran, Davide and Szepesv{\'a}ri, Csaba},
  journal={Advances in Neural Information Processing Systems},
  year={2026},
  url={https://papers.nips.cc/paper_files/paper/2025/file/f5ecc67e8b2f4941137bcbde901ac6be-Paper-Conference.pdf}
}

@article{lu2025onpolicydistillation,
	title = {On-Policy Distillation},
	author = {Kevin Lu and {Thinking Machines Lab}},
	year = {2025},
	journal = {Thinking Machines Lab: Connectionism},
	url = {https://thinkingmachines.ai/blog/on-policy-distillation}
}

@article{Mnih:2015,
	title = {Human-level control through deep reinforcement learning},
	author = {Mnih, Volodymyr and Kavukcuoglu, Koray and Silver, David and Rusu, Andrei A. and Veness, Joel and Bellemare, Marc G. and Graves, Alex and Riedmiller, Martin and Fidjeland, Andreas K. and Ostrovski, Georg and Petersen, Stig and Beattie, Charles and Sadik, Amir and Antonoglou, Ioannis and King, Helen and Kumaran, Dharshan and Wierstra, Daan and Legg, Shane and Hassabis, Demis},
	year = {2015},
	journal = {Nature},
	url = {https://www.nature.com/articles/nature14236}
}

@inproceedings{moulin2025inverse,
	title = {Inverse {Q}-Learning Done Right: Offline Imitation Learning in {$Q^\pi$}-Realizable {MDP}s},
	author = {Antoine Moulin and Gergely Neu and Luca Viano},
	year = {2025},
	booktitle = {The Thirty-ninth Annual Conference on Neural Information Processing Systems},
	url = {https://openreview.net/forum?id=tvEE9KQcLi}
}

@inproceedings{moulin2025optimistically,
	title = {Optimistically Optimistic Exploration for Provably Efficient Infinite-Horizon Reinforcement and Imitation Learning},
	author = {Moulin, Antoine and Neu, Gergely and Viano, Luca},
	year = {2025},
	booktitle = {Conference on Learning Theory},
	url = {https://proceedings.mlr.press/v291/moulin25a.html}
}

@inproceedings{munos2003error,
	title = {Error bounds for approximate policy iteration},
	author = {Munos, R{\'e}mi},
	year = {2003},
	booktitle = {Proceedings of the Twentieth International Conference on International Conference on Machine Learning},
	pages = {560--567}
}

@inproceedings{Nachum:2019a,
 author = {Nachum, Ofir and Chow, Yinlam and Dai, Bo and Li, Lihong},
 booktitle = {Advances in Neural Information Processing Systems},
 title = {DualDICE: Behavior-Agnostic Estimation of Discounted Stationary Distribution Corrections},
 url = {https://proceedings.neurips.cc/paper_files/paper/2019/file/cf9a242b70f45317ffd281241fa66502-Paper.pdf},
 year = {2019}
}

@inproceedings{Ng:2000,
	title = {Algorithms for inverse reinforcement learning},
	author = {Ng, Andrew Y. and Russell, Stuart J.},
	year = {2000},
	booktitle = {International Conference on Machine Learning},
	url = {https://ai.stanford.edu/~ang/papers/icml00-irl.pdf}
}

@misc{orabona2023modern,
	title = {A Modern Introduction to Online Learning},
	author = {Francesco Orabona},
	year = {2026},
	url = {https://arxiv.org/abs/1912.13213v10},
	eprint = {1912.13213},
	archiveprefix = {arXiv},
	primaryclass = {cs.LG}
}

@article{osa2018algorithmic,
	title = {An algorithmic perspective on imitation learning},
	author = {Osa, Takayuki and Pajarinen, Joni and Neumann, Gerhard and Bagnell, J Andrew and Abbeel, Pieter and Peters, Jan},
	year = {2018},
	journal = {Foundations and Trends{\textregistered} in Robotics},
	publisher = {Emerald Publishing Limited},
	url = {https://arxiv.org/abs/1811.06711}
}

@article{Pomerleau:1991,
	title = {Efficient Training of Artificial Neural Networks for Autonomous Navigation},
	author = {Dean A. {Pomerleau}},
	year = {1991},
	journal = {Neural Computation},
	url = {https://ieeexplore.ieee.org/document/6796843}
}

@inproceedings{pomerleau1988alvinn,
	title = {ALVINN: An Autonomous Land Vehicle in a Neural Network},
	author = {Pomerleau, Dean A.},
	year = {1988},
	booktitle = {Advances in Neural Information Processing Systems},
	url = {https://proceedings.neurips.cc/paper_files/paper/1988/file/812b4ba287f5ee0bc9d43bbf5bbe87fb-Paper.pdf}
}

@inproceedings{rajaraman2020toward,
	title = {Toward the fundamental limits of imitation learning},
	author = {Rajaraman, Nived and Yang, Lin and Jiao, Jiantao and Ramchandran, Kannan},
	year = {2020},
	booktitle = {Advances in Neural Information Processing Systems},
	url = {https://proceedings.neurips.cc/paper_files/paper/2020/hash/1e7875cf32d306989d80c14308f3a099-Abstract.html}
}

@misc{rajaraman2021provablybreaking,
	title = {Provably Breaking the Quadratic Error Compounding Barrier in Imitation Learning, Optimally},
	author = {Nived Rajaraman and Yanjun Han and Lin F. Yang and Kannan Ramchandran and Jiantao Jiao},
	year = {2021},
	url = {https://arxiv.org/abs/2102.12948},
	eprint = {2102.12948},
	archiveprefix = {arXiv},
	primaryclass = {cs.LG}
}

@inproceedings{rajaraman2021value,
	title = {On the value of interaction and function approximation in imitation learning},
	author = {Rajaraman, Nived and Han, Yanjun and Yang, Lin and Liu, Jingbo and Jiao, Jiantao and Ramchandran, Kannan},
	year = {2021},
	booktitle = {Advances in Neural Information Processing Systems},
	url = {https://proceedings.neurips.cc/paper_files/paper/2021/hash/09dbc1177211571ef3e1ca961cc39363-Abstract.html}
}

@inproceedings{Ratliff:2006,
	title = {Maximum margin planning},
	author = {Ratliff, N. D. and Bagnell, J. A. and Zinkevich, M. A.},
	year = {2006},
	booktitle = {International Conference on Machine Learning},
	url = {https://dl.acm.org/doi/10.1145/1143844.1143936}
}

@article{Reddy:2020,
	title = {{SQIL:} Imitation Learning via Regularized Behavioral Cloning},
	author = {Siddharth Reddy and Anca D. Dragan and Sergey Levine},
	year = {2019},
	journal = {arXiv:1905.11108},
	url = {https://arxiv.org/abs/1905.11108}
}

@inproceedings{rohatgi2025computational,
	title = {Computational-Statistical Tradeoffs at the Next-Token Prediction Barrier: Autoregressive and Imitation Learning under Misspecification},
	author = {Rohatgi, Dhruv and Block, Adam and Huang, Audrey and Krishnamurthy, Akshay and Foster, Dylan J.},
	year = {2025},
	booktitle = {Conference on Learning Theory},
	url = {https://proceedings.mlr.press/v291/rohatgi25a.html}
}

@inproceedings{Ross:2010,
	title = {Efficient reductions for imitation learning},
	author = {Ross, St{\'e}phane and Bagnell, Drew},
	year = {2010},
	booktitle = {International Conference on Artificial Intelligence and Statistics},
	url = {https://proceedings.mlr.press/v9/ross10a.html}
}

@inproceedings{Ross:2011,
	title = {A Reduction of Imitation Learning and Structured Prediction to No-Regret Online Learning},
	author = {Ross, Stephane and Gordon, Geoffrey and Bagnell, Drew},
	year = {2011},
	booktitle = {Proceedings of the Fourteenth International Conference on Artificial Intelligence and Statistics},
	url = {https://proceedings.mlr.press/v15/ross11a.html}
}

@article{ross2014reinforcement,
	title = {Reinforcement and imitation learning via interactive no-regret learning},
	author = {Ross, Stephane and Bagnell, J Andrew},
	year = {2014},
	journal = {arXiv preprint arXiv:1406.5979},
	url = {https://arxiv.org/abs/1406.5979}
}

@inproceedings{Russell:1998,
	title = {Learning agents for uncertain environments (Extended Abstract)},
	author = {Stuart Russell},
	year = {1998},
	booktitle = {Annual Conference on Computational Learning Theory},
	url = {https://dl.acm.org/doi/pdf/10.1145/279943.279964}
}

@InProceedings{Schulman:2015,
  title = 	 {Trust Region Policy Optimization},
  author = 	 {Schulman, John and Levine, Sergey and Abbeel, Pieter and Jordan, Michael and Moritz, Philipp},
  booktitle = 	 {Proceedings of the 32nd International Conference on Machine Learning},
  year = 	 {2015},
  url = 	 {https://proceedings.mlr.press/v37/schulman15.html},
}

@article{Schulman:2017,
	title = {Proximal policy optimization algorithms},
	author = {Schulman, John and Wolski, Filip and Dhariwal, Prafulla and Radford, Alec and Klimov, Oleg},
	year = {2017},
	journal = {arXiv:1707.06347},
	url = {https://arxiv.org/abs/1707.06347}
}

@inproceedings{Shani:2021,
	title = {Online Apprenticeship Learning},
	author = {Shani, Lior and Zahavy, Tom and Mannor, Shie},
	year = {2022},
	booktitle = {AAAI Conference},
	url = {https://ojs.aaai.org/index.php/AAAI/article/view/20798}
}

@inproceedings{sun2017deeply,
	title = {Deeply {A}ggre{V}a{T}e{D}: Differentiable Imitation Learning for Sequential Prediction},
	author = {Wen Sun and Arun Venkatraman and Geoffrey J. Gordon and Byron Boots and J. Andrew Bagnell},
	year = {2017},
	booktitle = {Proceedings of the 34th International Conference on Machine Learning},
	url = {https://proceedings.mlr.press/v70/sun17d.html}
}

@inproceedings{sun2019provably,
	title = {Provably Efficient Imitation Learning from Observation Alone},
	author = {Sun, Wen and Vemula, Anirudh and Boots, Byron and Bagnell, Drew},
	year = {2019},
	booktitle = {Proceedings of the 36th International Conference on Machine Learning},
	url = {https://proceedings.mlr.press/v97/sun19b.html}
}

@inproceedings{swamy2021moments,
	title = {Of moments and matching: A game-theoretic framework for closing the imitation gap},
	author = {Swamy, Gokul and Choudhury, Sanjiban and Bagnell, J Andrew and Wu, Steven},
	year = {2021},
	booktitle = {International Conference on Machine Learning},
	url = {https://proceedings.mlr.press/v139/swamy21a.html}
}

@inproceedings{swamy2022minimax,
	title = {Minimax optimal online imitation learning via replay estimation},
	author = {Swamy, Gokul and Rajaraman, Nived and Peng, Matt and Choudhury, Sanjiban and Bagnell, J and Wu, Steven Z and Jiao, Jiantao and Ramchandran, Kannan},
	year = {2022},
	booktitle = {Advances in Neural Information Processing Systems},
	url = {https://proceedings.neurips.cc/paper_files/paper/2022/hash/2e809adc337594e0fee330a64acbb982-Abstract-Conference.html}
}

@inproceedings{swamy2022sequence,
	title = {Sequence Model Imitation Learning with Unobserved Contexts},
	author = {Gokul Swamy and Sanjiban Choudhury and Drew Bagnell and Steven Wu},
	year = {2022},
	booktitle = {Advances in Neural Information Processing Systems},
	url = {https://openreview.net/forum?id=3nbKUphLBg5}
}

@inproceedings{Syed:2007,
	title = {A game-theoretic approach to apprenticeship learning},
	author = {Syed, Umar and Schapire, Robert E.},
	year = {2007},
	booktitle = {Advances in Neural Information Processing Systems},
	url = {https://papers.nips.cc/paper_files/paper/2007/hash/ca3ec598002d2e7662e2ef4bdd58278b-Abstract.html}
}

@inproceedings{Syed:2008,
	title = {Apprenticeship learning using linear programming},
	author = {Syed, Umar and Bowling, Michael and Schapire, Robert E.},
	year = {2008},
	booktitle = {International Conference on Machine Learning},
	url = {https://icml.cc/Conferences/2008/papers/645.pdf}
}

@inproceedings{tiapkin2023regularized,
	title = {Demonstration-Regularized {RL}},
	author = {Daniil Tiapkin and Denis Belomestny and Daniele Calandriello and Eric Moulines and Alexey Naumov and Pierre Perrault and Michal Valko and Pierre Menard},
	year = {2024},
	booktitle = {The Twelfth International Conference on Learning Representations},
	url = {https://arxiv.org/abs/2310.17303}
}

@inproceedings{torabi2018behavioral,
	title = {Behavioral Cloning from Observation},
	author = {Faraz Torabi and Garrett Warnell and Peter Stone},
	year = {2018},
	booktitle = {International Joint Conference on Artificial Intelligence,},
	url = {https://arxiv.org/abs/1805.01954}
}

@inproceedings{torabi2019recent,
	title = {Recent Advances in Imitation Learning from Observation},
	author = {Torabi, Faraz and Warnell, Garrett and Stone, Peter},
	year = {2019},
	booktitle = {Proceedings of the Twenty-Eighth International Joint Conference on Artificial Intelligence,},
	url = {https://arxiv.org/abs/1905.13566}
}

@misc{towers2024gymnasium,
	title = {Gymnasium: A Standard Interface for RL Environments},
	author = {Mark Towers and Ariel Kwiatkowski and Jordan Terry and John U. Balis and Gianluca De Cola and Tristan Deleu and Manuel Goulão and Andreas Kallinteris and Markus Krimmel and Arjun KG and Rodrigo Perez-Vicente and others},
	year = {2025},
	url = {https://arxiv.org/abs/2407.17032},
	eprint = {2407.17032},
	archiveprefix = {arXiv},
	primaryclass = {cs.LG}
}

@misc{uesato2022solving,
	title = {Solving Math Word Problems with Process-based and Outcome-based Feedback},
	author = {Jonathan Uesato and Nate Kushman and Ramana Kumar and H. Francis Song and Noah Yamamoto Siegel and Lisa Wang and Antonia Creswell and Geoffrey Irving and Irina Higgins},
	year = {2023},
	url = {https://openreview.net/forum?id=MND1kmmNy0O}
}

@book{vershynin2018high,
	title = {High-dimensional probability: An introduction with applications in data science},
	author = {Vershynin, Roman},
	year = {2018},
	publisher = {Cambridge university press}
}

@inproceedings{viano2022proximal,
	title = {Proximal point imitation learning},
	author = {Viano, Luca and Kamoutsi, Angeliki and Neu, Gergely and Krawczuk, Igor and Cevher, Volkan},
	year = {2022},
	booktitle = {Advances in Neural Information Processing Systems},
	url = {https://proceedings.neurips.cc/paper_files/paper/2022/hash/9988f2c8e07c1f98af7ba9ca31ccae0b-Abstract-Conference.html}
}

@inproceedings{viano2024imitation,
	title = {Imitation Learning in Discounted Linear {MDP}s without exploration assumptions},
	author = {Luca Viano and Stratis Skoulakis and Volkan Cevher},
	year = {2024},
	booktitle = {International Conference on Machine Learning},
	url = {https://openreview.net/forum?id=DChQpB4AJy}
}

@inproceedings{viel2025soar,
	title = {{IL}-{SOAR} : Imitation Learning with Soft Optimistic  Actor cRitic},
	author = {Stefano Viel and Luca Viano and Volkan Cevher},
	year = {2025},
	booktitle = {Forty-second International Conference on Machine Learning},
	url = {https://openreview.net/forum?id=NNr8DHb0L7}
}

@inproceedings{watson2023coherent,
	title = {Coherent Soft Imitation Learning},
	author = {Watson, Joe and Huang, Sandy and Heess, Nicolas},
	year = {2023},
	booktitle = {Advances in Neural Information Processing Systems},
	url = {https://proceedings.neurips.cc/paper_files/paper/2023/file/2f0435cffef91068ced08d7c7d8e643e-Paper-Conference.pdf}
}

@inproceedings{weisz2021query,
	title = {On Query-efficient Planning in MDPs under Linear Realizability of the Optimal State-value Function},
	author = {Weisz, Gellert and Amortila, Philip and Janzer, Barnab\'as and Abbasi-Yadkori, Yasin and Jiang, Nan and Szepesvari, Csaba},
	year = {2021},
	booktitle = {Proceedings of Thirty Fourth Conference on Learning Theory},
	url = {https://proceedings.mlr.press/v134/weisz21a.html}
}

@inproceedings{wu2024diffusing,
title={Diffusing States and Matching Scores: A New Framework for Imitation Learning},
author={Runzhe Wu and Yiding Chen and Gokul Swamy and Kiant{\'e} Brantley and Wen Sun},
booktitle={The Thirteenth International Conference on Learning Representations},
year={2025},
url={https://openreview.net/forum?id=kWRKNDU6uN}
}

@inproceedings{wulfmeier2024imitating,
	title = {Imitating Language via Scalable Inverse Reinforcement Learning},
	author = {Markus Wulfmeier and Michael Bloesch and Nino Vieillard and Arun Ahuja and Jorg Bornschein and Sandy Huang and Artem Sokolov and Matt Barnes and Guillaume Desjardins and Alex Bewley and others},
	year = {2024},
	booktitle = {The Thirty-eighth Annual Conference on Neural Information Processing Systems},
	url = {https://openreview.net/forum?id=5d2eScRiRC}
}

@inproceedings{xu2023provably,
	title = {Provably Efficient Adversarial Imitation Learning with Unknown Transitions},
	author = {Xu, Tian and Li, Ziniu and Yu, Yang and Luo, Zhi-Quan},
	year = {2023},
	booktitle = {Conference on Uncertainty in Artificial Intelligence},
	url = {https://proceedings.mlr.press/v216/xu23c.html}
}

@inproceedings{xu2024provably,
	title = {Provably and Practically Efficient Adversarial Imitation Learning with General Function Approximation},
	author = {Tian Xu and Zhilong Zhang and Ruishuo Chen and Yihao Sun and Yang Yu},
	year = {2024},
	booktitle = {The Thirty-eighth Annual Conference on Neural Information Processing Systems},
	url = {https://openreview.net/forum?id=7YdafFbhxL}
}

@inproceedings{Ziebart:2008,
	title = {Maximum entropy inverse reinforcement learning},
	author = {Ziebart, B. D. and Maas, A. and Bagnell, J. A. and Dey, A. K.},
	year = {2008},
	booktitle = {National Conference on Artificial Intelligence}
}

@inproceedings{li2024getting,
  title={Getting More Juice Out of the {SFT} Data: Reward Learning from Human Demonstration Improves {SFT} for {LLM} Alignment},
  author={Jiaxiang Li and Siliang Zeng and Hoi To Wai and Chenliang Li and Alfredo Garcia and Mingyi Hong},
  booktitle={Annual Conference on Neural Information Processing Systems},
  year={2024},
  url={https://openreview.net/forum?id=orxQccN8Fm}
}

@inproceedings{jia2024adversarial,
  title={Adversarial Moment-Matching Distillation of Large Language Models},
  author={Chen Jia},
  booktitle={The Thirty-eighth Annual Conference on Neural Information Processing Systems},
  year={2024},
  url={https://openreview.net/forum?id=0VeSCjRDBy}
}

@article{sun2025inverse,
  title={Inverse reinforcement learning meets large language model post-training: Basics, advances, and opportunities},
  author={Sun, Hao and van der Schaar, Mihaela},
  journal={arXiv preprint arXiv:2507.13158},
  year={2025},
  url={https://arxiv.org/abs/2507.13158}
}

@inproceedings{cai2026escaping,
title={Escaping the Verifier: Learning to Reason via Demonstrations},
author={Locke Cai and Max Ryabinin and Ivan Provilkov},
booktitle={International Conference on Machine Learning},
year={2026},
url={https://openreview.net/forum?id=pS1khvoxHT}
}

@inproceedings{sikchi2023dual,
title={Dual {RL}: Unification and New Methods for Reinforcement and Imitation Learning},
author={Harshit Sikchi and Qinqing Zheng and Amy Zhang and Scott Niekum},
booktitle={The Twelfth International Conference on Learning Representations},
year={2024},
url={https://openreview.net/forum?id=xt9Bu66rqv}
}

@inproceedings{cundy2023sequencematch,
title={SequenceMatch: Imitation Learning for Autoregressive Sequence Modelling with Backtracking},
author={Chris Cundy and Stefano Ermon},
booktitle={The Twelfth International Conference on Learning Representations},
year={2024},
url={https://openreview.net/forum?id=FJWT0692hw}
}

@article{li2026revisiting,
  title={Revisiting DAgger in the Era of LLM-Agents},
  author={Li, Changhao and Qiang, Rushi and Huang, Jiawei and Gao, Chenxiao and Zhang, Chao and He, Niao and Dai, Bo},
  journal={arXiv preprint arXiv:2605.12913},
  year={2026},
  url={https://arxiv.org/abs/2605.12913}
}

@misc{sriraman2026behavior,
      title={Behavior Cloning is Not All You Need: The Optimality of On-Policy Distillation for Noisy Expert Feedback}, 
      author={Ved Sriraman and Peihan Liu and Daniel Hsu and Adam Block},
      year={2026},
      eprint={2606.30923},
      archivePrefix={arXiv},
      primaryClass={cs.LG},
      url={https://arxiv.org/abs/2606.30923}, 
}
\bibliographystyle{plainnat}

\newpage
\appendix
\hypertarget{appendix-toc}{}
\renewcommand{\contentsname}{Contents of Appendix}
\addtocontents{toc}{\protect\setcounter{tocdepth}{2}}
{
  \setlength{\cftbeforesecskip}{.8em}
  \setlength{\cftbeforesubsecskip}{.5em}
  \tableofcontents
}

\clearpage
\arxiv{\enableappendixtocfooter}
\section{Notation Reference}
\label{sec:notation_reference}
The following table collects notation used throughout the paper. It is meant as a quick reference: the middle column gives only a short description, while the final column points to where the notation is first introduced or used substantively.

\begingroup
\small
\setlength{\LTpre}{0.5em}
\setlength{\LTpost}{0.5em}
\renewcommand{\arraystretch}{1.16}
\begin{longtable}{@{}L{0.2\linewidth}L{0.60\linewidth}L{0.15\linewidth}@{}}
\caption{Notation reference.}
\label{tab:notation_reference}
\\
\toprule
Notation & Meaning & First introduced \\
\midrule
\endfirsthead
\caption[]{Notation reference (continued).}
\\
\toprule
Notation & Meaning & First introduced \\
\midrule
\endhead
\midrule
\multicolumn{3}{r}{Continued on next page}
\\
\endfoot
\bottomrule
\endlastfoot
\multicolumn{3}{@{}l}{\textbf{MDP and imitation-learning setting}} \\
\addlinespace[0.25em]
$\cM = \spr{\cX, \cA, H, P, r, \initial}$ & Episodic MDP. State space $\cX$, action space $\cA$ with $A = \abs{\cA}$, horizon $H$, transition kernel $P$, reward function $r$, and initial-state law $\initial$. & \Cref{sec:preliminaries} \\
$\pi = \spr{\pi_h}_{h=1}^H$ & Nonstationary Markov policy. $\pi_h\colon \cX \to \simplex\spr{\cA}$ is a decision rule. & \Cref{sec:preliminaries} \\
$\bbP^\pi,\ \bbE^\pi,\ J^\pi$ & Trajectory law, expectation, and expected return induced by policy $\pi$. & \Cref{sec:preliminaries} \\
$V_h^\pi,\ Q_h^\pi$ & State-value and state-action-value functions of policy $\pi$ at stage $h$. & \Cref{sec:preliminaries} \\
$d_h^\pi \spr{x},\ d_h^\pi \spr{x,a}$ & State and state-action occupancy measures at stage $h$ under $\pi$. & \Cref{sec:preliminaries} \\
$\expert,\ \experth$ & Expert policy and its decision rule at stage $h$. & \Cref{sec:preliminaries} \\
$\piout,\ \piouth$ & Output policy returned by the learner and its decision rule at stage $h$. & \Cref{eq:suboptimality} \\
$\varepsilon,\ \delta$ & Target suboptimality and failure probability. & \Cref{eq:suboptimality} \\
$\cDE,\ \tauE,\ \XEih,\ \AEih$ & Offline expert dataset, number of expert samples, and the $i$-th expert state-action sample at stage $h$. & \Cref{eq:expert-dataset} \\
\addlinespace[0.5em]
\multicolumn{3}{@{}l}{\textbf{Value classes and realizability}} \\
\addlinespace[0.25em]
$\cQ,\ \cQ_h$ & Sequence-level value-function class and its projection at stage $h$. & \Cref{asp:q-expert-realizability} \\
$\QMAX$ & Uniform upper bound used for state-action values in the analysis. & \Cref{sec:preliminaries} \\
$\Pi$ & Policy class used when stating policy realizability comparisons. & \Cref{ass:policy-realizability} \\
$\tQ^\expert$ & Witness for relaxed $\qexpert$-realizability. & \Cref{ass:relaxed} \\
$Q^{\Pi_{\cQ}}$-realizability & Closure assumption requiring $Q^\pi \in \cQ$ for learner-generated policies $\pi \in \Pi_{\cQ}$. & \Cref{ass:q-picq-realizability} \\
\addlinespace[0.5em]
\multicolumn{3}{@{}l}{\textbf{Objectives, empirical estimates, and complexity}} \\
\addlinespace[0.25em]
$\cL_h^d \spr{p, Q}$ & Expected advantage objective at stage $h$ against the expert action distribution, with states weighted by $d_h$. & \Cref{sec:algo,lem:moulin-fh-concentration} \\
$\cL_h^\pi$ & Shorthand for $\cL_h^{d^\pi}$, used when the state distribution is induced by policy $\pi$. & \Cref{sec:proofs_ub} \\
$\hcL_h \spr{p, Q}$ & Empirical estimate of $\cL_h^{\piout}\spr{p, Q}$ built from queried expert actions. & \Cref{sec:proofs_ub} \\
$\Delta \spr{\pi}$ & Uniform estimation error between $\hcL_h$ and $\cL_h^{\piout}$ for policy $\pi$. & \Cref{lem:pifirst} \\
$\regret^\pi_h \spr{x}$ & Stage-state regret of the $\pi$-player against the expert decision rule. & \Cref{lem:pifirst} \\
$\cN_\varepsilon \spr{S, \mathrm{d}},\ \cC_\varepsilon \spr{S, \mathrm{d}}$ & Covering number of $S$ under metric $\mathrm{d}$, and a corresponding $\varepsilon$-cover. & \Cref{def:covering} \\
$\maxcovering{\varepsilon}$ & Largest covering number over $\cQ$, $\maxcovering{\varepsilon} \ldef \max_h \cN_\varepsilon \spr{\cQ_h, \norm{\cdot}_\infty}$. & \Cref{def:covering} \\
$\norm{\cdot}_{\infty, 1}$ & Sup-over-states, $\ell_1$-over-actions metric for decision rules. & \Cref{eq:l-inf-one-norm} \\
$\rho$ & Scaled product metric on value functions and decision rules. & \Cref{eq:moulin-stage-product-metric} \\
$\Pi_{\cQ},\ \Pi_{\cQ, h}$ & Policies generated by softmax combinations of functions in $\cQ$. & \Cref{lem:moulin-nonconvex-product-cover} \\
\addlinespace[0.5em]
\multicolumn{3}{@{}l}{\textbf{Algorithmic and appendix-specific notation}} \\
\addlinespace[0.25em]
$K,\ \eta,\ \pi_h^k,\ Q_h^k$ & Number of iterations, learning rate, and the policy/value iterates at stage $h$. & \Cref{alg:interactive_finite-H} \\
$ Q^{1:K},\ \mathrm{LC},\ f$ & Value functions, linear combination, and link function that defines a VI policy. & \Cref{def:valueIL} \\
$\Pi_{\mathrm{Alg}},\ C_\infty,\ C_1$ & Output-policy class and coverage coefficients used when offline learning is possible under expert coverage. & \Cref{asp:linf-coverage,asp:l1-coverage} \\
$d = \spr{d_h}_{h=1}^H,\ \hcL_h^d,\ \Delta_d$ & Arbitrary state-sampling distributions and the corresponding empirical objective and estimation error. & \Cref{alg:arbitrary_nu,thm:arbitrary_nu} \\
$\wh{\ell}_h^{\piout, k}$ & Sampled loss used by the $Q$-\ISPIL exponential-weights update. & \Cref{alg:exp_weights_finite_H} \\
$w_h^k,\ \regret_h^w,\ \hQ_h^\expert$ & Cover weights, their regret, and the closest covering element to $Q_h^\expert$ in the $Q$-\ISPIL analysis. & \Cref{lem:Qfirst} \\
\end{longtable}
\endgroup

\clearpage
\section{Additional Related Work}
\label{sec:additional_related_work}
We provide an overview of closely related work. In \Cref{sec:theory_related}, we discuss theoretical results available for value-based IL methods, and in \Cref{sec:applied_related} we cover value-based IL algorithms that focus on empirical performance. We use ``value-based IL'' broadly to include reward-learning, inverse reinforcement learning, apprenticeship learning, moment matching, and Q-function-based methods, since all use learned reward/value information to derive a policy rather than directly fitting the expert action distribution.\loose

\subsection{Theoretical Guarantees for Imitation Learning}
\label{sec:theory_related}

As mentioned in the main text, our work aims to establish expert-sample complexity bounds when expert policy realizability need not hold. While we are the first to study this problem under $Q^\expert$-realizability alone, prior work has examined several alternative assumptions that we discuss below.

After behavior cloning was proposed in the seminal work of \citet{pomerleau1988alvinn,Pomerleau:1991}, a parallel line of work formalized \emph{inverse reinforcement learning} (IRL; \citealp{Russell:1998, Ng:2000}), whose goal is to infer a reward function that explains the expert's behavior, typically among many rewards for which the expert is optimal. Closely related work on \emph{apprenticeship learning} \citep{Abbeel:2004,Abbeel:2008,Syed:2007,Syed:2008,Ziebart:2008} uses this reward-based perspective to learn a policy that matches the expert's performance.

\citet{ross2014reinforcement} introduced \AGGREVATE, an interactive no-regret reduction that augments expert action queries with value information. That is, when the learner visits a state-action pair $\spr{x, a}$, $\qexpert \spr{x, a}$ is observed. This oracle is referred to as \emph{value feedback} and is stronger than our setting, where the learner can query expert actions but does not observe rewards or expert values. Their framework also yields practical algorithms compatible with neural-network-based function approximation; see \citet{sun2017deeply,cheng2018fast}. Value-based feedback, together with access to an interactive expert, is known to allow for better horizon dependence in error propagation \citep{ross2014reinforcement,sun2017deeply}, which can lead to representational benefits in policy-based IL \citep{foster2024behavior}.

The closest conceptual predecessor to \ISPIL is the moment-matching framework of \citet{swamy2021moments}, especially their ``On-Q'' template and the associated \DAEQUIL algorithm. This framework also includes reward and ``Off-Q'' moment-matching templates, and analyzes imitation gaps through \emph{moment recoverability} $\mu$: matching action-value moments to precision $\veps$ yields a suboptimality bound of order $\mu H \veps$, improving on the classical $H^2$ error-propagation factor in \BC-style analyses. Empirically, \citet{swamy2021moments} show that \DAEQUIL can outperform \BC and \DAGGER in a forest-navigation task, which they attribute to the mode-seeking behavior induced by action-value moment matching. However, despite the better error-propagation properties of interactive algorithms, \citet{foster2024behavior} showed that no interactive algorithm can uniformly improve upon an offline one across all possible policy classes. Therefore, the error-propagation-style analysis does not seem sufficient to capture the benefits of interaction. Our contribution is complementary: we instantiate the ``On-Q'' perspective with a computationally oracle-efficient algorithm, provide a sample-complexity analysis under $\qexpert$-realizability, and show that this representational assumption alone is insufficient in the offline setting.\loose

Next, we review value-based IL further, distinguishing (i) offline algorithms that operate on a fixed dataset of expert demonstrations from (ii) variants that can additionally access the environment and collect trajectories without observing rewards. Then, we discuss the main policy-based IL methods, followed by influential empirical work on value-based IL.\loose

\para{Value-based imitation learning with online MDP access or known dynamics}
It is useful to distinguish three closely related access models. First, classical apprenticeship-learning and IRL methods \citep{Abbeel:2004,Syed:2007} assumed knowledge of a linear reward class containing the true unknown reward, as well as perfect knowledge of the MDP dynamics. In tabular problems with unknown dynamics, \citet{Syed:2007} instead estimated the transition kernel only on state-action pairs frequently visited by the expert and sent all other transitions to a pessimistic dead state. They showed that this restricted model suffices to retain a guarantee, at the cost of a slightly worse sample-complexity bound. Also in the tabular setting, \citet{rajaraman2020toward} characterize worst-case offline imitation rates with quadratic horizon dependence $H^2$ and show that, for deterministic experts, access to the transition model can improve the horizon dependence by at least a $\sqrt{H}$ factor. \citet{rajaraman2021provablybreaking} later proved a matching lower bound in that setting.

A second line of work assumes reward-free online access to the MDP: the learner may roll out policies in the environment, but it does not observe the reward and typically obtains expert information through precollected demonstrations, expert occupancies, or expert states or features rather than through value feedback. \citet{xu2023provably} removed the known-transition assumption discussed above in tabular adversarial IL and proposed a method achieving the minimax optimal expert-sample complexity by combining expert demonstrations with reward-free environment interaction. \citet{Shani:2021} presented an approach based on alternating updates between sequences of rewards and policies. \citet{viano2022proximal} develop minimax/proximal-point methods for infinite-horizon IL in MDPs with linear, unknown rewards and dynamics under a strong exploratory assumption on the learner's policies, which was later removed by \citet{viano2024imitation}. \citet{moulin2025optimistically} further develop optimistic exploration tools for infinite-horizon linear MDPs and apply them to IL. In state- or feature-matching variants, such methods can sometimes use expert states or features rather than expert actions, connecting them to imitation from observation alone \citep{sun2019provably,torabi2018behavioral,torabi2019recent,kidambi2021mobile,viel2025soar}. Beyond linear function approximation, \citet{xu2024provably} extended the guarantees to the setting in which general function approximation is required to approximate the reward and the dynamics of the environment. Finally, recent work also studies instance-dependent and second-order guarantees (\ie, bounds that scale with the variance of the expert and learner value functions): \citet{wu2024diffusing} give first- and second-order bounds in a score-matching framework, while \citet{li2025near} prove near-optimal second-order guarantees for model-based adversarial IL.

Finally, another related but distinct line gives the learner online reward observations in addition to expert information, making the problem closer to RL with expert advice than to reward-free IL. For example, \citet{tiapkin2023regularized} studied the setting in which an offline record of state-action pairs collected from an optimal expert can be used to initialize the learner policy in RL. They show that in tabular and linear MDPs, this can achieve better sample complexity than RL without expert data. On the representation side, \citet{amortila2022few} study RL with ``one-step'' resets, access to an interactive optimal expert, and a class that realizes the optimal state-value function. They provide a computationally efficient algorithm to approximate the optimal policy, showing that expert advice can make otherwise difficult value-realizable settings tractable. This contrasts with planning or RL under value-function realizability alone, where query or computational barriers are known in related settings \citep{weisz2021query,liu2023exponential}.

These works are complementary to our setting: they exploit reward observations, known dynamics, reward-free environment interaction without learner-state expert action queries, or expert advice for RL, whereas \ISPIL uses only reward-free MDP rollouts together with expert action queries on learner-visited states.

\para{Value-based imitation learning with logged expert data}
The closest offline counterpart to our setting is \citet{moulin2025inverse}. Their algorithm, \SPOIL, shows that expert-policy realizability can be avoided in offline IL by imposing a closure-type value realizability assumption: the class $\cQ$ must realize $Q^\pi$ for every policy $\pi$ in the implicit policy class $\Pi_{\cQ}$ that the algorithm may optimize over or output. This assumption is qualitatively different from, and substantially stronger than, requiring only $\qexpert \in \cQ$, since it asks the value class to be closed under many learner-generated policies rather than only to contain the expert's value function. This distinction is important for interpreting our offline lower bound: the lower bound shows that $\qexpert$-realizability alone is insufficient for offline value-based IL in general, while the offline guarantees of \SPOIL rely on additional closure structure that is absent from our main setting.

A different logged-data setting was studied by \citet{joshi2025learning}, who model learning from correct demonstrations as offline IL in a contextual bandit, equivalently an MDP with horizon $H = 1$. They assume that the correctness, or reward, model belongs to a finite class, rather than assuming expert-policy realizability, and they design statistically efficient algorithms from offline demonstrations alone. Moreover, they improve the algorithm of \citet{Syed:2007} when instantiated with $H = 1$, thereby obtaining faster rates when the expert is optimal. Because $H = 1$, there is no sequential distribution shift and no nontrivial transition dynamics to learn or exploit.

A recent value-based IL method also investigates how value structure can mitigate compounding errors. \citet{xu2026non} show that \IQLEARN-style objectives can reduce to behavior cloning and still suffer compounding errors, and they introduce Bellman constraints as a mechanism for propagating value information from demonstrated states to states not covered by the demonstrations. In our setting, a variant of this idea could be used to filter a given realizable class $\cQ$ by removing functions with ``temporal'' variations greater than one; this is guaranteed to reduce the complexity of $\cQ$ while still providing realizability of $\qexpert$. In practice, this could be implemented by adding a regularization term to the \ISPIL objective that penalizes large temporal variations.

\para{Policy-based imitation learning}
Historically, the most common paradigm for tackling IL is to maximize the likelihood of the observed expert actions over a given policy class. In its offline form, \BC fits the expert action distribution by supervised learning on expert trajectories, with early \BC-style systems including \texttt{ALVINN} \citep{pomerleau1988alvinn,Pomerleau:1991}. Interactive variants such as \DAGGER---very popular in practice---and related no-regret reductions instead collect expert labels on states visited by the learner, thereby addressing the covariate shift that appears when an offline cloned policy is deployed on its own trajectory distribution \citep{Ross:2010,Ross:2011}. Theoretical guarantees are typically expressed in terms of the complexity of a policy class $\Pi$ that contains the expert policy $\expert$ (see, for example, \citealp{rajaraman2021value,swamy2022minimax}). \citet{li2022efficient} study this classification-based online IL viewpoint more generally, including nonrealizable settings. More recently, \citet{foster2024behavior} refined the classical picture by showing that, under suitable policy realizability and log-loss learnability assumptions, offline \BC can have much better horizon dependence than suggested by the standard compounding-error analysis.

Follow-up work investigates what happens when the expert policy is not exactly realizable by the learner's policy class. \citet{rohatgi2025computational} analyze autoregressive learning and IL under policy misspecification, showing that error amplification reappears in misspecified settings and that next-token or \BC-style objectives face computational-statistical tradeoffs. They also propose robustified \BC variants whose guarantees quantify the dependence on horizon and misspecification. \citet{espinosa2025efficient} introduce \texttt{GUITAR}, an efficient interactive IL algorithm under misspecification that uses a structural completeness condition and environment access initialized from states sampled from the expert occupancy measure, with guarantees controlled by an advantage-class notion of misspecification rather than by statistical divergences.\loose

Most recently, \citet{sriraman2026behavior} studied a setting in which expert samples may be corrupted: with some probability, the action shown to the learner is drawn from a different policy rather than the expert's. They show that any offline policy-based IL algorithm suffers exponential sample complexity, whereas their interactive algorithm, \texttt{NAIL}, avoids this exponential blow-up by additionally querying the expert.

To conclude our review of the theoretical literature in IL, we report a comparison of our results with the bounds obtained and the representational conditions required by other algorithms in \Cref{tab:related}.
\renewcommand{\arraystretch}{1.8}
\begin{table}[t]
    \centering
    \vspace{.4em}
    \caption{\label{tab:related} Comparison with related algorithms, assumptions, and expert-interaction requirements. To state the guarantees of \BC, we define a policy class $\PiE$ such that $\expert \in \PiE$. \citet{joshi2025learning} study finite reward classes and one-step decision problems. $\maxcovering{\varepsilon'}$ denotes the largest covering number of the stagewise classes $\cQ_h$ in $\norm{\cdot}_{\infty}$ with spacing $\varepsilon' = \cO \spr{\varepsilon^2}$. We define the class $\Pi_{\cQ} \ldef \scbr{\pi : \exists m \leq K, \forall h, \exists Q_h^1, \ldots, Q_h^m \in \cQ_h, \pi_h \spr{\cdot \given x} = \softmax \spr{\eta \sum_{k = 1}^m Q_h^k \spr{x, \cdot}}}$ for some $K = \mathrm{poly} \spr{d, H, \varepsilon^{-1}}$. $Q^{\Pi_{\cQ}}$-realizability is the assumption that $Q^\pi \in \cQ$ for any $\pi \in \Pi_{\cQ}$.}
    \vspace{.4em}
    \resizebox{\textwidth}{!}{%
    \begin{tabular}{ccccc}
        \toprule
        \textbf{Algorithm} & \textbf{Representational assumptions} & \textbf{Comp. efficient} & \textbf{Interactive expert} & \textbf{Expert traj./queries $(\tauE)$} \\ \midrule
        \texttt{LogLossBC} & \multirow{2}{*}[\verticaloffset]{$\expert$-realizability} & \multirow{2}{*}[\verticaloffset]{\goodcell} & \multirow{2}{*}[\verticaloffset]{\badcell} & \multirow{2}{*}[\verticaloffset]{\large $\cO \spr[\Big]{\frac{H^2 \log \abs{\PiE}}{\varepsilon^2}}$} \\[\spacingfirstcol]
        \citep{foster2024behavior} & & & & \\ \midrule
        \texttt{MWAL} & Reward linear in $d$ features & \multirow{2}{*}[\verticaloffset]{\goodcell} & \multirow{2}{*}[\verticaloffset]{\badcell} & \multirow{2}{*}[\verticaloffset]{\large $\tcO \spr[\Big]{\frac{H^2 \log \spr{d}}{ \varepsilon^2}}$} \\[\spacingfirstcol]
        \citep{Syed:2007} & Known transitions & & & \\ \midrule
        \texttt{MUWU} & Reward realizability with class $\cR$ & \multirow{2}{*}[\verticaloffset]{\badcell} & \multirow{2}{*}[\verticaloffset]{\badcell} & \multirow{2}{*}[\verticaloffset]{\large $\tcO \spr[\Big]{\frac{\log \spr{\abs{\cR}}}{ \varepsilon^2}}$} \\[\spacingfirstcol]
        \citep{joshi2025learning} & Contextual bandits ($H=1$) & & & \\ \midrule
        \SPOIL & \multirow{2}{*}[\verticaloffset]{$Q^{\Pi_{\cQ}}$-realizability} & \multirow{2}{*}[\verticaloffset]{\goodcell} & \multirow{2}{*}[\verticaloffset]{\badcell} & \multirow{2}{*}[\verticaloffset]{\large $\tcO \spr[\Big]{\frac{H^5 \QMAX^4 \log \spr{A \maxcovering{\varepsilon'}}}{ \varepsilon^4}}$} \\[\spacingfirstcol]
        \citep{moulin2025inverse} & & & & \\ \midrule
        $Q$-\ISPIL & \multirow{2}{*}[\verticaloffset]{$Q^\expert$-realizability} & \multirow{2}{*}[\verticaloffset]{\badcell} & \multirow{2}{*}[\verticaloffset]{\goodcell} & \multirow{2}{*}[\verticaloffset]{\large $\tcO \spr[\Big]{\frac{H^3 \QMAX^2 \log \maxcovering{\varepsilon'}}{\varepsilon^2}}$} \\[\spacingfirstcol]
        (\textbf{Ours}, \Cref{thm:Q_first_main}) & & & & \\ \midrule
        \ISPIL & \multirow{2}{*}[\verticaloffset]{$Q^\expert$-realizability} & \multirow{2}{*}[\verticaloffset]{\goodcell} & \multirow{2}{*}[\verticaloffset]{\goodcell} & \multirow{2}{*}[\verticaloffset]{\large $\tcO \spr[\Big]{\frac{H^5 \QMAX^4 \log \spr{A \maxcovering{\varepsilon'}}}{\varepsilon^4}}^{\circ}$} \\[\spacingfirstcol]
        (\textbf{Ours}, \Cref{thm:ovi-main-covering}) & & & & \\ \midrule
        \SPOIL & $Q^\expert$-realizability & \multirow{2}{*}[\verticaloffset]{\goodcell} & \multirow{2}{*}[\verticaloffset]{\badcell} & \multirow{2}{*}[\verticaloffset]{\large $\tcO \spr[\Big]{\frac{C^4_{\infty} H^5 \QMAX^4 \log \spr{A \maxcovering{\varepsilon'}}}{\varepsilon^4}}^{\diamond}$} \\[\spacingfirstcol]
        (\textbf{Ours}, \Cref{thm:SPOIL}) & Bounded $C_{\infty}$, optimal $\expert$ & & & \\ \midrule
        \texttt{Lower Bound} & \multirow{2}{*}[\verticaloffset]{Linear $Q^\expert$-realizability} & \multirow{2}{*}[\verticaloffset]{$-$} & \multirow{2}{*}[\verticaloffset]{\badcell} & \multirow{2}{*}[\verticaloffset]{\large $ \Omega \spr[\Big]{\frac{\abs{\cX}}{\varepsilon}}^{\smalltriangle}$} \\[\spacingfirstcol]
        (\textbf{Ours}, \Cref{thm:lower_any_algo}) & & & &\\ \bottomrule
    \end{tabular}
    }
    \vspace{.3em}
    \begin{minipage}{\textwidth}
        \footnotesize
        $^\circ$ If $\cQ$ is convex, the bound improves to $\tcO \spr{H^3 \QMAX^2 \log \spr{\maxcovering{\varepsilon'}} \varepsilon^{-2}}$.\\
        $^\diamond$ If $\cQ$ is convex, the bound improves to $\tcO \spr{C^2_\infty H^3 \QMAX^2 \log \spr{\maxcovering{\varepsilon'}} \varepsilon^{-2}}$.\\
        $^{\smalltriangle}$ The lower bound holds even if the expert is deterministic and the learner is allowed to replay the expert actions at the states logged in the dataset.
    \end{minipage}
\end{table}
\renewcommand{\arraystretch}{1.0}

\subsection{Empirical Value-Based Imitation Learning}
\label{sec:applied_related}

Although early empirical IL successes include behavior-cloning systems such as \texttt{ALVINN} \citep{pomerleau1988alvinn,Pomerleau:1991}, empirical value-based IL largely grew out of IRL and apprenticeship learning. Early reward-learning methods include maximum-margin planning and maximum-entropy or relative-entropy IRL \citep{Ratliff:2006,Ziebart:2008,boularias2011relative}. These methods learn a reward or cost function that explains the expert's behavior and then derive a policy by planning or policy optimization under the learned objective. Deep variants, such as guided cost learning and maximum-entropy deep IRL, replace linear rewards with neural cost or reward functions \citep{finn2016guided,wulfmeier2015maximum}. However, these methods typically inherit the ``RL-in-the-loop'' structure of \citet{Ziebart:2008}, requiring repeated planning or policy-optimization steps under a changing learned reward, which can be expensive in high-dimensional environments.

Other methods replace explicit planning with adversarial or occupancy-matching objectives. \citet{Ho:2016} developed model-free apprenticeship-learning methods using policy-gradient optimization, while \citet{Ho:2016b} introduced generative adversarial IL (\texttt{GAIL}), which alternates between training a discriminator/cost signal and updating the policy, using Trust Region Policy Optimization (\texttt{TRPO}; \citealp{Schulman:2015}) in the original implementation. \texttt{GAIL} and related adversarial IL methods inspired follow-up work on learned rewards, discriminator-actor-critic methods, off-policy distribution matching, and stationary-distribution correction \citep{Fu:2018,Kostrikov:2019,Kostrikov:2020,Nachum:2019a}. These methods can be effective, but adversarial IL is often sample-hungry and can be difficult to stabilize because the policy optimizer must continually adapt to a nonstationary learned reward or discriminator.

More recent non-adversarial value- and Q-based methods aim to avoid some of this instability. For example, \texttt{SQIL} converts demonstrations into a sparse-reward RL problem \citep{Reddy:2020}, while the \IQLEARN method of \citet{Garg:2021} learns a single soft $Q$-function that implicitly represents both the reward and the policy. Such methods are closely aligned with the broad class of value-based IL algorithms considered in our lower bound: they extract a policy by applying a softmax, greedy, or related activation to a learned value function. \IQLEARN has been evaluated in simulated robotics tasks \citep{Garg:2021}; related Q-learning, IRL, and moment-matching ideas have also been adapted to autoregressive sequence modeling and LM distillation, including \texttt{SequenceMatch} \citep{cundy2023sequencematch}, scalable inverse soft-Q learning for language imitation \citep{wulfmeier2024imitating}, and adversarial moment-matching distillation \citep{jia2024adversarial}. Demonstration-based reward-learning and IRL-style approaches have also been applied to LM supervised fine-tuning (SFT) and reasoning without preference labels or verifiers \citep{li2024getting,cai2026escaping}. We refer to \citet{sun2025inverse} for a broader overview of IRL and IL in LM post-training, and to \citet{sikchi2023dual} for a unifying perspective on practical dual RL and IL methods.

\para{Policy-based imitation learning for language model fine-tuning}
Policy-based, \DAGGER-like on-policy distillation methods have also been applied to LM fine-tuning. \citet{agarwal2024policy} train student models on their own generated outputs using teacher feedback, and \citet{lu2025onpolicydistillation} explicitly frames on-policy distillation as querying or evaluating a teacher on states visited by the student. These methods provide a policy-based analogue of the interactive protocols studied in this paper. Their strong empirical performance motivates the broader use of interaction in distillation, but our results suggest a complementary hypothesis: when the student is much less expressive than the teacher, interactive value-based distillation may be especially useful because it can target value-equivalent behavior rather than the teacher's full action distribution. The Gym experiments in \Cref{fig:size_scaling} provide preliminary evidence for this. Testing the same mechanism in LM distillation remains an important empirical direction.


\clearpage
\section{Value-Based Imitation Learning versus Policy-Based: Motivating Examples}
\label{sec:qexpert_is_weaker}
In this section, we argue that $\qexpert$-realizability can be weaker than $\expert$-realizability in two important cases.
\begin{enumerate}
    \item In \Cref{sec:Qisweaker1}, we show that when the expert is optimal, any class $\PiE$ that realizes the expert policy $\expert$ can be preprocessed into a class $\cQ$ with $\abs{\cQ} \leq \abs{\PiE}$ that realizes $\qexpert$ in a relaxed sense.
    \item In \Cref{app:hard-decoder-exbmdp}, we give an example where the environment makes the state-action value functions of the constructed expert policies simple, regardless of how complex the policy class is. In particular, $\cQ$ can be easier to specify than $\PiE$ when the expert behavior may depend on uncontrollable exogenous noise, while $\qexpert$ does not: the reward and transition dynamics are unaffected by this noise, whereas the expert's control policy is.
\end{enumerate}

\subsection{\texorpdfstring{Constructing $\cQ$ from $\PiE$ Reduces Complexity}{Constructing Q from Pi Reduces Complexity}}
\label{sec:Qisweaker1}

We first show that, given a policy class $\PiE$ realizing an optimal expert policy $\expert$, one can construct a function class $\cQ$ that realizes $\qexpert$ in a relaxed sense. We then show that \ISPIL guarantees continue to hold under this relaxed realizability condition. These guarantees improve the representational-complexity term over those of \BC, \DAGGER, and other policy-based methods that aim to match the expert trajectory distribution in total variation: indeed, such methods incur a $\log \abs{\PiE}$ dependence, while \ISPIL pays a factor $\log \abs{\cQ}$, where $\cQ$ is constructed from $\PiE$ and has no larger cardinality, \ie, $\abs{\cQ} \leq \abs{\PiE}$.\loose

\para{Constructing $\cQ$ from a policy class that realizes the expert}
We preprocess $\PiE$ by keeping only the action supports induced by its policies. Specifically, for each $\pi \in \PiE$, let $\tQ^\pi = \spr{\tQ^\pi_h}_{h = 1}^H$ with $\tQ^\pi_h \colon \cX \times \cA \rightarrow \sbr{0, \QMAX}$ be defined by
\begin{equation*}
    \tQ^\pi_h \spr*{x, a}
    =
    \QMAX \mathds{1}_{\scbr{a \in \texttt{supp} \spr*{\pi_h \spr*{\cdot \given x}}}}\,,
\end{equation*}
for every $\spr{h, x, a} \in \sbr{H} \times \cX \times \cA$. We then set
\begin{equation*}
    \cQ \spr*{\PiE}
    =
    \scbr*{\tQ^\pi \colon \pi \in \PiE}.
\end{equation*}
Since infinitely many stochastic policies in $\PiE$ can share the same support, $\PiE$ may be infinite even when $\cQ \spr{\PiE}$ is finite under the construction above.

\para{Exact $\qexpert$-realizability may fail}
When the expert is (pointwise) optimal, $\expert$ is greedy with respect to $\qexpert$. The construction also includes the function corresponding to $\pi = \expert$,
\begin{equation*}
    \tQ^{\expert}_h \spr*{x, a}
    =
    \QMAX \mathds{1}_{\scbr{a \in \texttt{supp} \spr*{\experth \spr*{\cdot \given x}}}},
\end{equation*}
with respect to which $\expert$ is greedy. However, this does not imply that $\tQ^{\expert} = \qexpert$; in general, $\qexpert$ may not belong to $\cQ$. The key point is that \ISPIL only needs the weaker relationship between $\tQ^\expert$, $\qexpert$, and the expert's action support formalized as relaxed $\qexpert$-realizability in \Cref{ass:relaxed}. The support-indicator witness above satisfies this condition because it assigns $\QMAX$ to every expert-supported action and zero elsewhere, while $\qexpert$ takes values in $\sbr{0, \QMAX}$.

\subsubsection{\texorpdfstring{Guarantees for \ISPIL under relaxed $\qexpert$-realizability}{Guarantees for ISPIL under relaxed Q-expert-realizability}}
\label{sec:relaxedQrea}

In this section, we present guarantees for \ISPIL under a relaxed notion of $\qexpert$-realizability. At a high level, the condition asks $\cQ$ to contain a surrogate for $\qexpert$ that preserves the action comparisons used by \ISPIL. We show that this condition is sufficient when the expert is statewise optimal, and that expert optimality is necessary.
\begin{assumption}[Relaxed $\qexpert$-realizability] \label{ass:relaxed}
    We say that $\cQ$ satisfies relaxed $\qexpert$-realizability if there exists a sequence of functions $\tQ^\expert = \spr{\tQ^\expert_h}^H_{h = 1} \in \cQ$ such that, for every $\spr{h, x} \in \sbr{H} \times \cX$,
    \begin{equation*}
        \texttt{supp} \spr*{\experth \spr*{\cdot \given x}}
        \subseteq
        \argmax_{b \in \cA} \tQ^\expert_h \spr*{x, b},
    \end{equation*}
    and, for every $\spr{h, x, a} \in \sbr{H} \times \cX \times \cA$,
    \begin{align*}
        a \in \texttt{supp} \spr*{\experth \spr*{\cdot \given x}}
        &\Longrightarrow
        \tQ^{\expert}_h \spr*{x, a} \geq \qexperth \spr*{x, a}, \\
        a \notin \texttt{supp} \spr*{\experth \spr*{\cdot \given x}}
        &\Longrightarrow
        \tQ^{\expert}_h \spr*{x, a} \leq \qexperth \spr*{x, a}.
    \end{align*}
\end{assumption}

Under the relaxed $\qexpert$-realizability assumption above, we can prove the same decomposition as in \Cref{lem:pifirst} for the perfectly realizable case, \ie, $\qexpert \in \cQ$.

\begin{restatable}{Lem}{lempifirstrelaxed} \label{lem:pifirstrelaxed}
    Let \Cref{ass:relaxed} hold, and assume that the expert policy $\expert$ is statewise optimal. Let policies $\pi^1, \ldots, \pi^K$ and functions $Q^k_h \in \cQ_h$ be computed as in \cref{eq:updates-pi-first}. Define the $\pi$-regret at stage $h$ and state $x \in \cX$ as
    \begin{equation*}
        \regret^\pi_h \spr*{x}
        =
        \sumkK \inp*{Q^k_h \spr*{x, \cdot}, \experth \spr*{\cdot \given x} - \pi^k_h \spr*{\cdot \given x}},
    \end{equation*}
    define the output policy as $\piout = \frac1K \sumkK \pi^k$, and, for any policy $\pi$, define the estimation error as
    \begin{equation*}
        \Delta \spr*{\pi}
        =
        \max_{h \in \sbr*{H}}\sup_{Q_h \in \cQ_h} \abs*{\hcL_h \spr*{\pi_h, Q_h} - \cL_h^{\piout} \spr*{\pi_h, Q_h}}.
    \end{equation*}
    Then, it holds that
    \begin{equation*}
        J^{\expert} - J^{\piout}
        \leq
        \frac1K \sumhH \sum_{x \in \cX} d^{\piout}_h \spr*{x} \regret^\pi_h \spr*{x} + \frac{2 H}{K} \sumkK \Delta \spr*{\pi^k}.
    \end{equation*}
\end{restatable}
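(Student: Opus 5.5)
We start from the performance difference lemma as written in \cref{sec:algo}:
\begin{equation*}
    J^{\expert} - J^{\piout} = \sumhH \cL_h^{\piout}\spr*{\piouth, \qexperth}.
\end{equation*}
Here $\cL_h^{\piout} = \cL_h^{d^{\piout}}$. Under exact realizability one would bound $\cL_h^{\piout}(\piouth, \qexperth)$ by $\sup_{Q_h \in \cQ_h}$. In the relaxed setting, the plan is to replace $\qexperth$ by the witness $\tQ^\expert_h \in \cQ_h$ through a pointwise comparison:
\begin{equation}\label{eq:relaxed-compare}
    \inp*{\qexperth\spr*{x,\cdot}, \experth\spr*{\cdot\given x} - p} \le \inp*{\tQ^\expert_h\spr*{x,\cdot}, \experth\spr*{\cdot\given x} - p}
\end{equation}
This should hold for every state $x$ and every distribution $p \in \simplex(\cA)$, in particular for $p = \pi^k_h(\cdot\given x)$. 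Once \eqref{eq:relaxed-compare} is available, the rest is the same as the exact-realizability argument of \Cref{lem:pifirst}.

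\emph{Proving the comparison.} Write $S = \texttt{supp}(\experth(\cdot\given x))$ and $g = \tQ^\expert_h(x,\cdot) - \qexperth(x,\cdot)$. Then \eqref{eq:relaxed-compare} is equivalent to $\inp{g, \experth - p} \ge 0$.

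Statewise optimality of the expert means every $a \in S$ maximizes $\qexperth(x,\cdot)$, so $\qexperth(x,a) = M$ on $S$. By \Cref{ass:relaxed}, $\tQ^\expert_h(x,\cdot)$ attains its maximum $\tilde M$ on all of $S$. The assumption also gives $\tilde M \ge M$, i.e.\ $g \ge 0$ on $S$, and $g \le 0$ off $S$. So $g \equiv \tilde M - M =: c \ge 0$ on $S$. Since $\experth$ is supported on $S$,
\begin{equation*}
    \inp{g, \experth} = c \ge c\, p(S) + \sum_{a \notin S} p(a)\, g(a) = \inp{g, p}.
\end{equation*}
This proves the comparison. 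The same computation shows why optimality is needed: without it, $\qexperth$ need not be constant on $S$, and the inequality can fail.

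\emph{Assembling the bound.} Linearity of $\cL_h$ in the policy gives $\cL_h^{\piout}(\piouth, Q) = \frac1K\sumkK \cL_h^{\piout}(\pi^k_h, Q)$. Combining this with \eqref{eq:relaxed-compare} and summing over $h$,
\begin{equation*}
    J^{\expert} - J^{\piout} \le \frac1K \sumhH \sumkK \cL_h^{\piout}\spr*{\pi^k_h, \tQ^\expert_h}.
\end{equation*}
For each term, swap to the empirical objective at cost $\Delta(\pi^k)$:
\begin{equation*}
    \cL_h^{\piout}\spr*{\pi^k_h, \tQ^\expert_h} \le \hcL_h\spr*{\pi^k_h, \tQ^\expert_h} + \Delta(\pi^k).
\end{equation*}
Next, use that $Q^k_h$ maximizes $\hcL_h(\pi^k_h,\cdot)$ over $\cQ_h$, which is \cref{line:test}, so $\hcL_h(\pi^k_h, \tQ^\expert_h) \le \hcL_h(\pi^k_h, Q^k_h)$. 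Then swap back to the population objective at cost another $\Delta(\pi^k)$:
\begin{equation*}
    \hcL_h\spr*{\pi^k_h, Q^k_h} \le \cL_h^{\piout}\spr*{\pi^k_h, Q^k_h} + \Delta(\pi^k).
\end{equation*}
Finally, expand $\cL_h^{\piout}(\pi^k_h, Q^k_h) = \sum_x d^{\piout}_h(x)\inp{Q^k_h(x,\cdot), \experth - \pi^k_h}$. Summing over $k$ produces $\sum_x d^{\piout}_h(x)\,\regret^\pi_h(x)$. The two swaps contribute $\frac{2}{K}\sumkK \Delta(\pi^k)$ per stage, hence $\frac{2H}{K}\sumkK\Delta(\pi^k)$ in total, which is the claimed bound.

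The only non-routine step is the pointwise comparison \eqref{eq:relaxed-compare}. The assumption must be used exactly there: the argmax condition forces $\tQ^\expert_h$ to be constant on $S$, and optimality forces $\qexperth$ to be constant on $S$. The remaining steps reproduce the exact-realizability proof of \Cref{lem:pifirst}.
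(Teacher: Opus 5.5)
Your proof is correct and follows the paper's argument. The paper also first proves the pointwise comparison $\cL_h^{\piout}(\piouth, \qexperth) \le \cL_h^{\piout}(\piouth, \tQ^\expert_h)$ from statewise optimality and \Cref{ass:relaxed}, then runs the same empirical-swap, best-response, swap-back chain as in \Cref{lem:pifirst}; the only difference is cosmetic, since you prove the comparison for an arbitrary distribution $p$ by noting that $\tQ^\expert_h - \qexperth$ is a nonnegative constant on the expert support and nonpositive off it, a tidier packaging of the paper's steps (a)--(c), which are applied directly to $\piouth$.
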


At this point, one can establish the same complexity guarantee as \ISPIL in the exactly realizable setting by bounding $\frac{2 H}{K} \sumkK \Delta \spr{\pi^k}$ with high probability, as in the proof of \Cref{thm:ovi-main-covering}. To conclude the section, we prove \Cref{lem:pifirstrelaxed}.

\begin{proof}[\pfref{lem:pifirstrelaxed}]
    The proof follows the proof of \Cref{lem:pifirst}, except that we first compare $\qexperth$ with the relaxed witness $\tQ^\expert_h$. For any policy $\pi$, we use the shorthand $\cL_h^{\pi} := \cL_h^{d^\pi}$ and we claim that, for every $h \in \sbr{H}$,
    \begin{equation} \label{eq:relaxed-Q-upper-bound}
        \cL_h^{\piout} \spr*{\piouth, \qexperth}
        \leq
        \cL_h^{\piout} \spr*{\piouth, \tQ^\expert_h}.
    \end{equation}
    To prove this claim, fix $h \in \sbr{H}$ and $x \in \cX$, write $\expertsup \ldef \texttt{supp} \spr{\experth \spr{\cdot \given x}}$, and set $\delta_h^x \spr{a} \ldef \experth \spr{a \given x} - \piouth \spr{a \given x}$. By definition of $\cL_h^{\piout}$,
    \begin{equation*}
        \cL_h^{\piout} \spr*{\piouth, \qexperth}
        =
        \sum_{x \in \cX} d^{\piout}_h \spr*{x} \sum_{a \in \cA} \delta_h^x \spr*{a} \qexperth \spr*{x, a}.
    \end{equation*}
    Fix a state $x$. Since $\expert$ is statewise optimal, every action in $\expertsup$ is greedy with respect to $\qexperth \spr{x, \cdot}$, that is, $\expertsup \subseteq \argmax_{b \in \cA} \qexperth \spr{x, b}$. Thus, by definition of $\delta_h^x$,
    \begin{align*}
        \sum_{a \in \cA} \delta_h^x \spr*{a} \qexperth \spr*{x, a}
        &=
        \sum_{a \in \expertsup} \delta_h^x \spr*{a} \qexperth \spr*{x, a} + \sum_{a \notin \expertsup} \delta_h^x \spr*{a} \qexperth \spr*{x, a} \\
        &=
        \max_{b \in \cA} \qexperth \spr*{x, b} \Bigl( 1 - \sum_{a \in \expertsup} \piouth \spr*{a \given x} \Bigr) - \sum_{a \notin \expertsup} \piouth \spr*{a \given x} \qexperth \spr*{x, a},
    \end{align*}
    where we used that $\experth \spr{a \given x} = 0$ for $a \notin \expertsup$, that $\sum_{a \in \expertsup} \experth \spr{a \given x} = 1$, and that all actions in $\expertsup$ maximize $\qexperth \spr{x, \cdot}$. By the relaxed realizability condition (\Cref{ass:relaxed}), we can replace $\qexperth$ with $\tQ^\expert_h$ as follows
    \begin{align*}
        \sum_{a \in \cA} \delta_h^x \spr*{a} \qexperth \spr*{x, a}
        &\overset{\text{(a)}}{\leq}
        \max_{b \in \cA} \tQ^\expert_h \spr*{x, b} \Bigl( 1 - \sum_{a \in \expertsup} \piouth \spr*{a \given x} \Bigr) - \sum_{a \notin \expertsup} \piouth \spr*{a \given x} \tQ^\expert_h \spr*{x, a} \\
        &\overset{\text{(b)}}{=}
        \sum_{a \in \expertsup} \delta_h^x \spr*{a} \tQ^\expert_h \spr*{x, a} - \sum_{a \notin \expertsup} \piouth \spr*{a \given x} \tQ^\expert_h \spr*{x, a} \\
        &\overset{\text{(c)}}{=}
        \sum_{a \in \cA} \delta_h^x \spr*{a} \tQ^\expert_h \spr*{x, a}.
    \end{align*}
    Step (a) uses that on the support $\expertsup$, $\tQ^\expert_h$ is at least $\qexperth$, so $\max_{b \in \cA} \tQ^\expert_h \spr{x, b} \geq \max_{b \in \cA} \qexperth \spr{x, b}$, that $1 - \sum_{a \in \expertsup} \piouth \spr{a \given x} = \sum_{a \notin \expertsup} \piouth \spr{a \given x} \ge 0$, and that outside the support $\expertsup$, $\tQ^\expert_h \spr{x, a} \leq \qexperth \spr{x, a}$ holds, so the inequality is reversed when multiplied by $- \piouth \spr{a \given x}$. Step (b) uses $\expertsup \subseteq \argmax_{b \in \cA} \tQ^\expert_h \spr{x, b}$ and $\sum_{a \in \expertsup} \experth \spr{a \given x} = 1$. Finally, (c) uses the definition of $\delta_h^x$ and the fact that $\delta_h^x \spr{a} = - \piouth \spr{a \given x}$ for $a \notin \expertsup$. Averaging the last display over $x \sim d^{\piout}_h$ gives \cref{eq:relaxed-Q-upper-bound}.

    Since $\tQ^\expert \in \cQ$ and hence $\tQ^\expert_h \in \cQ_h$ for each $h \in \sbr{H}$, we can now upper bound the suboptimality as in \Cref{lem:pifirst}. By the performance difference lemma (\Cref{lem:performance-difference}), and the inequality we just proved (\cref{eq:relaxed-Q-upper-bound}), we have
    \begin{align*}
        K \spr*{J^{\expert} - J^{\piout}}
        &=
        K \sumhH \cL_h^{\piout} \spr*{\piouth, \qexperth} \\
        &\leq
        K \sumhH \cL_h^{\piout} \spr*{\piouth, \tQ^\expert_h}.
    \end{align*}
    Furthermore, by the definitions of the policy $\piout$ and the estimation error $\Delta$, we have
    \begin{align*}
        K \spr*{J^{\expert} - J^{\piout}}
        &\leq
        \sumhH \sumkK \cL_h^{\piout} \spr*{\pi^k_h, \tQ^\expert_h} \\
        &\leq
        \sumhH \sumkK \hcL_h \spr*{\pi^k_h, \tQ^\expert_h} + H \sumkK \Delta \spr*{\pi^k} \\
        &\leq
        \sumhH \sumkK \hcL_h \spr*{\pi^k_h, Q^k_h} + H \sumkK \Delta \spr*{\pi^k} \\
        &\leq
        \sumhH \sumkK \cL_h^{\piout} \spr*{\pi^k_h, Q^k_h} + 2 H \sumkK \Delta \spr*{\pi^k}.
    \end{align*}
    Here, the first and third inequalities use the definition of $\Delta \spr{\pi^k}$, respectively with $\tQ^\expert_h \in \cQ_h$ and $Q^k_h \in \cQ_h$. The second inequality uses $\tQ^\expert_h \in \cQ_h$ (\cref{ass:relaxed}) and the fact that $Q^k_h$ is a best response to $\hcL_h \spr{\pi^k_h, \cdot}$ over $\cQ_h$. Finally, for each $h \in \sbr{H}$, expanding $\cL_h^{\piout}$ gives
    \begin{equation*}
        \sumkK \cL_h^{\piout} \spr*{\pi^k_h, Q^k_h}
        =
        \sum_{x \in \cX} d^{\piout}_h \spr*{x}
        \sumkK \inp*{Q^k_h \spr{x, \cdot}, \experth \spr{\cdot \given x} - \pi^k_h \spr{\cdot \given x}}
        =
        \sum_{x \in \cX} d^{\piout}_h \spr*{x} \regret^\pi_h \spr*{x}.
    \end{equation*}
    Dividing by $K$ concludes the proof.
\end{proof}

\subsubsection{Expert optimality is necessary for learning under relaxed $\qexpert$-realizability}

We conclude by showing that expert optimality is necessary. The failure mode for algorithms that output a VI policy under relaxed $\qexpert$-realizability with a suboptimal expert is that the relaxed condition forces the algorithm to treat all actions in the expert support \emph{equally}. For stochastic suboptimal experts, this can increase the probability of actions rarely played by the expert.\loose
\begin{theorem} \label{thm:suboptimalrelaxed}
    For any algorithm $\mathrm{Alg}$ outputting a VI policy (see \Cref{def:valueIL}, even if allowed expert interaction, there exists a $2$-action MDP $\cM$ with a suboptimal expert policy and a class $\cQ$ satisfying \Cref{ass:relaxed} such that, even if $\mathrm{Alg}$ performs infinitely many expert queries, $\bbE_{\piout \sim \mathrm{Alg}} \sbr{J^{\expert}_{\cM} - J^{\piout}_{\cM}} = \Omega \spr{H}$.
\end{theorem}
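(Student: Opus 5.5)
The plan is an explicit construction in which relaxed realizability forces any VI policy to treat the expert's actions symmetrically, while the expert itself is heavily skewed toward the better action.

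\textbf{Construction.} Take a single state $x$ with two actions $\cA = \{a_1, a_2\}$ that self-loops for all $h \in [H]$. Set $r_h(x, a_1) = 1$ and $r_h(x, a_2) = 0$. Fix a small $p \in (0, 1/2)$ and let the expert be stationary with $\experth(a_1 \given x) = 1 - p$ and $\experth(a_2 \given x) = p$. Then $J^{\expert} = (1-p)H$, and since $r_h \in [0,1]$ the normalization $\QMAX = H$ is admissible. The expert's values are
\begin{equation*}
    \qexperth(x, a_1) = 1 + (1-p)(H - h), \qquad \qexperth(x, a_2) = (1-p)(H-h),
\end{equation*}
so both lie in $[0, H]$. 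Define $\tQ^\expert_h(x, \cdot) \equiv \QMAX = H$ on both actions and set $\cQ = \{\tQ^\expert\}$, a singleton.

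\textbf{Checking \Cref{ass:relaxed}.} The expert's support $\{a_1, a_2\}$ equals the argmax set of the constant function $\tQ^\expert_h(x,\cdot)$. Every action lies in the support, and for each of them $\tQ^\expert_h = H \ge \qexperth$. The condition on unsupported actions is vacuous. The expert is suboptimal, since the always-$a_1$ policy earns $H > J^{\expert}$.

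\textbf{Forcing the VI output.} Any VI policy (\Cref{def:valueIL}) over this $\cQ$ satisfies
\begin{equation*}
    \piouth(a \given x) \propto f\bigl(w^\top (Q^{1:K}_h)(x,a)\bigr) = f\Bigl(H \textstyle\sum_k w_k\Bigr),
\end{equation*}
and the right-hand side does not depend on $a$. Hence $\piouth$ is uniform over $\{a_1, a_2\}$, whatever data, queries, or internal randomness $\mathrm{Alg}$ uses. The same conclusion holds for greedy limits, since pointwise limits of uniform distributions remain uniform. Therefore $J^{\piout} = H/2$, and
\begin{equation*}
    J^{\expert} - J^{\piout} = \Bigl(\tfrac12 - p\Bigr) H = \Omega(H)
\end{equation*}
for any fixed $p < 1/2$, for example $p = 1/4$. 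The expectation over $\piout \sim \mathrm{Alg}$ is immaterial because the output is deterministic.

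\textbf{Main obstacle.} The delicate step is ensuring that the VI definition cannot escape the symmetry, for instance through a normalization convention in which $f$ vanishes at the common value. If $f(\cdot) = 0$, the policy is undefined, and I would adopt the convention, as elsewhere in the paper, that such outputs are inadmissible or default to uniform. If one insists that $\cQ$ be non-singleton, say so that the learner must not know $\tQ^\expert$, the same argument applies to any class of functions that are constant in the action. I would also comment that this construction is where statewise optimality enters \Cref{lem:pifirstrelaxed}: with an optimal expert, $a_2$ would not lie in its support, so the symmetry would be harmless.
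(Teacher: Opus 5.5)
Your proof is correct and follows the paper's construction. Both use a single absorbing state with two actions, a stochastic expert skewed toward the rewarding action, and the singleton class $\cQ = \{\tQ\}$ with $\tQ_h \equiv H$, which forces every VI output to be uniform. The only difference is that the paper averages over two mirrored MDPs with $0.9/0.1$ experts to get $0.4H$; since the uniform output does not depend on the data, your single-MDP version with gap $(\tfrac12 - p)H$ already suffices.
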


\begin{proof}[\pfref{thm:suboptimalrelaxed}]
    Consider a class of environments $\scbr{\cM_1, \cM_2}$ with action space $\cA = \scbr{a_1, a_2}$ and a single absorbing state $x$, \ie, both actions from $x$ transition back to $x$ with probability $1$. For $i \in \sbr{2}$, let $r \spr{x, a_i} = 1$ in $\cM_i$ and $r \spr{x, a} = 0$ for the other action $a \neq a_i$. In $\cM_i$, let the expert policy $\experti$ satisfy $\experti \spr{a_i \given x} = 0.9$ and $\experti \spr{a \given x} = 0.1$ for $a \neq a_i$. \Cref{ass:relaxed} is satisfied by the singleton class $\cQ = \scbr{\tQ}$, where $\tQ_h \spr{x, a} = H$ for all $\spr{h, a} \in \sbr{H} \times \cA$. Indeed, for both $\expertone$ and $\experttwo$, the expert support is the full action space $\cA$. Under the additional value-derived output restriction that the learner can only output a policy obtained by applying a scalar link function to a linear combination of functions in $\cQ$, the output policy is uniform over the action space because $\tQ_h$ assigns the same value to all actions, regardless of which environment the learner faces. Therefore,
    \begin{align*}
        \max_{i \in \sbr*{2}} \bbE_{\piout \sim \mathrm{Alg}} \sbr*{J^{\experti}_{\cM_i} - J^{\piout}_{\cM_i}}
        &\geq
        \frac12 \sum^2_{i = 1} \bbE_{\piout \sim \mathrm{Alg}} \sbr*{J^{\experti}_{\cM_i} - J^{\piout}_{\cM_i}}
        =
        0.9 H - 0.5 H
        =
        0.4 H.
    \end{align*}
    This concludes the proof.
\end{proof}

\subsection{\texorpdfstring{$\qexpert$-Realizability Can Be Strictly Weaker Than $\expert$-Realizability under Exogenous Noise}{Q-expert-Realizability Can Be Strictly Weaker Than Expert-Realizability under Exogenous Noise}}
\label{app:hard-decoder-exbmdp}

In this section, we give an example in which expert-policy realizability is more demanding than $\qexpert$-realizability. The example is an exogenous block MDP (ExBMDP), a standard rich-observation model in which the endogenous control problem is low-dimensional, but the observation-based expert policy must decode a high-dimensional observation. A typical example is a robot with high-dimensional camera observations that contain the (small) endogenous state and a background that evolves independently of the robot's actions (\eg, a television). We formalize the setting below, and the results are stated and proved in subsequent subsections.

One limitation of this example is that a single policy that always chooses action $0$ is optimal in every MDP in the family, so the construction concerns exact representational complexity rather than control hardness. We view this as acceptable because the purpose of the construction is to show that exact expert-policy realizability can be much more demanding than $\qexpert$-realizability.

The motivation is directly tied to the complexity terms that appear in sample-complexity guarantees. Policy-based methods such as \BC and \DAGGER aim to imitate the expert's action distribution. Under an expert-policy realizability assumption, their guarantees typically depend on the complexity of a policy class $\Pi$ satisfying $\expert \in \Pi$, through a term such as $\log \abs{\Pi}$ (or a log-covering number). In contrast, our value-based guarantees depend on the complexity of a value-function class $\cQ$ satisfying $\qexpert \in \cQ$, through a term such as $\log \abs{\cQ}$.

The construction below shows that these two quantities can be exponentially different. We construct a family of ExBMDPs $\scbr{\cM_b : b \in \cB_n}$ and deterministic, suboptimal experts $\scbr{\pi^b : b \in \cB_n}$ indexed by balanced Boolean decoders $b\colon \scbr{0, 1}^n \rightarrow \scbr{0, 1}$, where $n$ is the number of bits in the observation space at stage 1. Any expert-agnostic policy class that contains all experts in the family must have exponentially large metric entropy: for every $\veps \in \spr{0, 1 / 2}$,
\begin{equation*}
    \inf_{\Pi : \forall b \in \cB_n, \pi^b \in \Pi}
    \log_2 \cN_\veps \spr*{\Pi, d_\Pi}
    \ge
    2^n - n - 1\,,
\end{equation*}
where $d_\Pi$ is the sup-total variation metric. By contrast, a two-dimensional class $\cQ$ realizes the expert action-value function $Q^{\pi^b}_{\cM_b}$ for every $b \in \cB_n$, and
\begin{equation*}
    \log \cN_\veps \spr*{\cQ, \norm*{\cdot}_\infty}
    \le
    2 \log \spr*{3 / \veps}\,.
\end{equation*}
Informally, the construction is as follows. We consider an instance where the endogenous state is a single bit $s \in \scbr{0, 1}$, and the expert's endogenous policy simply plays action $a = s$ among the actions $\cA = \scbr{0, 1, 2}$. However, the learner does not observe $s$ directly. Instead, the learner observes a high-dimensional binary string $w \in \scbr{0, 1}^n$, and recovering $s$ from $w$ requires computing a decoder $b \spr{w}$. Thus, an observation-based policy that exactly imitates the expert must compute $\pi_{\experttag, 1} \spr{w} = b \spr{w}$, and the complexity of a policy class realizing the expert must scale with the complexity of the decoder. On the other hand, the $Q$-function does not need this decoder. At the first layer, actions $0$ and $1$ are both safe and value-equivalent, while action $2$ is unsafe. Hence $\qexpert$ only needs to know the value-relevant distinction of $\scbr*{0, 1}$ \vs $\scbr*{2}$. The hard decoder determines only the expert's tie-breaking between the two safe actions, which is what creates this separation. Every expert obtains return $1 / 2$, while the policy that always chooses action $0$ obtains the optimal return $1$. Thus the separation holds even though every expert is strictly suboptimal.

\subsubsection{Preliminaries}
\label{sec:preliminaries-hard-decoder}

\para{Exogenous block MDPs}
A finite-horizon exogenous block MDP (ExBMDP) consists of a tuple
\begin{equation*}
    \spr*{
        H,
        \spr*{\cS_h}_{h = 1}^H,
        \spr*{\Xi_h}_{h = 1}^H,
        \spr*{\cX_h}_{h = 1}^H,
        \cA,
        \spr*{T^{\texttt{endo}}_h}_{h = 0}^{H - 1},
        \spr*{T^{\texttt{exo}}_h}_{h = 0}^{H - 1},
        \spr*{g^{\texttt{obs}}_h}_{h = 1}^H,
        \spr*{r_h}_{h = 1}^H
    },
\end{equation*}
where $H$ is the horizon, and at each layer $h$, $\cS_h$ is the endogenous state space, $\Xi_h$ is the exogenous state space, $\cX_h$ is the observation space, $\cA$ is the action space, $T^{\texttt{endo}}_0$ and $T^{\texttt{exo}}_0$ are initial distributions, for $h \in \sbr{H - 1}$ the map $T^{\texttt{endo}}_h\colon \cS_h \times \cA \to \Delta \spr{\cS_{h+1}}$ is the endogenous transition kernel and $T^{\texttt{exo}}_h\colon \Xi_h \to \Delta \spr{\Xi_{h+1}}$ is the exogenous transition kernel, $g^{\texttt{obs}}_h\colon \cS_h \times \Xi_h \to \cX_h$ is the observation map, and $r_h\colon \cX_h \times \cA \to \sbr{0, 1}$ is the reward function.

The initial latent state is generated by $\mb{s}_1 \sim T^{\texttt{endo}}_0 \spr{\cdot \given \emptyset}$, $\mb{\xi}_1 \sim T^{\texttt{exo}}_0 \spr{\cdot \given \emptyset}$ independently, with the convention that $\cS_0 = \Xi_0 = \emptyset$. For $h \in \sbr{H - 1}$, after taking action $\mb{a}_h \in \cA$, the latent state evolves according to $\mb{s}_{h + 1} \sim T^{\texttt{endo}}_h \spr{\cdot \given \mb{s}_h, \mb{a}_h}$, and $\mb{\xi}_{h + 1} \sim T^{\texttt{exo}}_h \spr{\cdot \given \mb{\xi}_h}$. Thus only the endogenous state is causally affected by the action; the exogenous state evolves independently of the action.

The observation at layer $h$ is generated by a deterministic emission map $\mb{x}_h = g^{\texttt{obs}}_h \spr{\mb{s}_h, \mb{\xi}_h} \in \cX_h$. The model is decodable if for any $h$, there exists a decoder $\phi^\star_h\colon \cX_h \to \cS_h$ such that for any $s \in \cS_h$ and $\xi \in \Xi_h$, $\phi^\star_h \spr{g^{\texttt{obs}}_h \spr{s, \xi}} = s$. Finally, the rewards are endogenous if there exist functions $\bar r_h\colon \cS_h \times \cA \to \sbr{0, 1}$ such that $r_h \spr{x, a} = \bar r_h \spr{\phi^\star_h \spr{x}, a}$ for all $h \in \sbr{H}$, $x \in \cX_h$, and $a \in \cA$.

Because we work in a finite-horizon nonstationary MDP, policies and value
functions are also indexed by the layer $h$. Thus a policy is a sequence $\pi = \spr{\pi_h\colon \cX_h \to \Delta \spr{\cA}}_{h = 1}^H$. In the construction below, we use separate observation spaces $\cX_1$ and $\cX_2$. Equivalently, one can combine them into a single disjoint union $\cX = \cX_1 \sqcup \cX_2$.

\para{Balanced decoders}
For the construction below, we will consider the family of balanced Boolean functions as decoders, defined for any $n \ge 1$ as
\begin{equation*}
    \cB_n = \scbr*{b\colon \scbr*{0, 1}^n \to \scbr*{0, 1} : \abs*{b^{-1} \spr*{0}} = \abs*{b^{-1} \spr*{1}} = 2^{n - 1}}.
\end{equation*}
The number of balanced Boolean functions is $\abs*{\cB_n} = \binom{2^n}{2^{n - 1}}$. By unimodality of the binomial coefficient, and since the largest binomial coefficient is at least the average binomial coefficient, we have
\begin{equation*}
    \binom{2^n}{2^{n - 1}} = \max_{k \in \sbr*{2^n}} \binom{2^n}{k} \ge \frac{1}{2^n + 1} \sum_{k=0}^{2^n} \binom{2^n}{k} = \frac{2^{2^n}}{2^n + 1}.
\end{equation*}
Therefore $\log_2 \abs*{\cB_n} \ge 2^n - \log_2 \spr{2^n + 1} \ge 2^n - n - 1$.

\subsubsection{The construction}
\label{sec:construction-hard-exbmdp}

Fix an integer $n \ge 1$ and a decoder $b \in \cB_n$.

\para{Exogenous block MDP}
We define an ExBMDP $\cM_b$ with horizon $H = 2$ and action space $\cA = \scbr*{0, 1, 2}$. Actions $0$ and $1$ are safe first-layer actions, while action $2$ is unsafe.

Let $\cS = \cS_1 \sqcup \cS_2$ be a layered endogenous state space with $7$ states, where the first layer is $\cS_1 = \scbr*{0, 1}$ and the second layer is $\cS_2 = \cG \cup \scbr*{B}$, with $\cG = \scbr*{G_{s, p}: s \in \scbr*{0, 1}, \ p \in \scbr*{0, 1}}$. The state $G_{s, p}$ is a good state reached from first-layer endogenous state $s$ after taking safe action $p$. The state $B$ is a bad state reached after taking action $2$.

Let the exogenous state space be time-homogeneous and given by $\Xi_1 = \Xi_2 = \scbr*{0, 1}^{n - 1}$. Since $b$ is balanced, we have $\abs*{b^{-1} \spr{0}} = \abs*{b^{-1} \spr{1}} = \abs*{\Xi_1} = 2^{n - 1}$. Therefore, for each $s \in \scbr{0, 1}$, we can fix an arbitrary bijection $\eta^b_s\colon \Xi_1 \to b^{-1} \spr{s}$ that maps an $\spr{n - 1}$-bit exogenous variable to an $n$-bit observation string.

The observation spaces are $\cX_1 = \scbr*{0, 1}^n$ and $\cX_2 = \cS_2 \times \Xi_2$.

The endogenous dynamics are defined as follows. The initial endogenous state is sampled uniformly at random, $\mb{s}_1 \sim \mathrm{Unif} \spr{\scbr*{0, 1}}$, and the transition from layer $1$ to layer $2$ is deterministic:
\begin{equation*}
    \mb{s}_2 =
    \begin{cases}
        G_{\mb{s}_1, \mb{a}_1}, & \mb{a}_1 \in \scbr*{0, 1},\\
        B, & \mb{a}_1 = 2.
    \end{cases}
\end{equation*}
Thus, the first action genuinely affects the endogenous state: actions $0$ and $1$ lead to different good branches, while action $2$ leads to the bad state. This specifies a valid deterministic transition kernel from $\cS_1 \times \cA$ to $\cS_2$.

For the exogenous dynamics, the initial exogenous state is sampled independently of $\mb{s}_1$, also uniformly at random, $\mb{\xi}_1 \sim \mathrm{Unif} \spr{\Xi_1}$, and the exogenous process is persistent, \ie, $\mb{\xi}_2 = \mb{\xi}_1$. Equivalently, $T^{\texttt{exo}}_1 \spr{\xi' \given \xi} = \II{\xi' = \xi}$. The exogenous transition is independent of the action.

We define the observation map at stage $1$ for any $\spr{s, \xi} \in \cS_1 \times \Xi_1$ as $g^{\texttt{obs}}_{1, b} \spr{s, \xi} = \eta^b_s \spr{\xi}$. Hence, conditional on $s = 0$, the observed string is uniformly distributed over $b^{-1} \spr{0}$, and conditional on $s = 1$, it is uniformly distributed over $b^{-1} \spr{1}$. Since $\mb{s}_1$ and $\mb{\xi}_1$ are both uniform, every observation $w \in \scbr{0, 1}^n$ is reachable and equally likely at layer $1$: if $b \spr{w} = s$, then $\bbP \sbr{\mb{x}_1 = w} = \bbP \sbr{\mb{s}_1 = s} \bbP \sbr{\eta^b_s \spr{\mb{\xi}_1} = w} = \frac12 \cdot 2^{-(n - 1)} = 2^{-n}$. We can think of the observation map as hiding the endogenous state within an $n$-bit observation string via the mapping $\eta^b_s$. The observation map at stage $2$ is transparent: for any $\spr{y, \xi} \in \cS_2 \times \Xi_2$, $g^{\texttt{obs}}_{2, b} \spr{y, \xi} = \spr{y, \xi}$.

Define the decoder $\phi^\star_b$ by $\phi^\star_{1, b} \spr{w} = b \spr{w}$ for any $w \in \scbr*{0, 1}^n$, and $\phi^\star_{2, b} \spr{y, \xi} = y$ for any $y \in \cS_2, \ \xi \in \Xi_2$. Then, for every latent state $\spr{s_h, \xi_h}$, $\phi^\star_{h, b} \spr{g^{\texttt{obs}}_{h, b} \spr{s_h, \xi_h}} = s_h$. Indeed, at layer $1$, we have
\begin{equation*}
    \phi^\star_{1, b} \spr*{g^{\texttt{obs}}_{1, b} \spr*{s, \xi}} = \phi^\star_{1, b} \spr*{\eta^b_s \spr*{\xi}} = b \spr*{\eta^b_s \spr*{\xi}} = s,
\end{equation*}
because $\eta^b_s \spr{\xi} \in b^{-1} \spr{s}$. At layer $2$, the claim follows from the transparent observation map.

Finally, we define the reward function. At stage 1, the reward is zero everywhere: for any observation $x \in \cX_1$ and action $a \in \cA$, $r_1 \spr{x, a} = 0$. At stage $2$, for any observation $\spr{y, \xi} \in \cX_2$ and $a \in \cA$, we define $r_2 \spr{\spr{y, \xi}, a} = \II{y \in \cG} \spr*{\II{a = 0} + \frac12 \II{a = 1}}$. Thus action $0$ gives reward $1$ and action $1$ gives reward $1 / 2$ in good second-layer states, while all actions give reward $0$ in the bad state $B$. The reward is endogenous: taking $\bar r_1 \equiv 0$ and $\bar r_2 \spr{y, a} = \II{y \in \cG} \spr*{\II{a = 0} + \frac12 \II{a = 1}}$ gives $r_h \spr{x, a} = \bar r_h \spr{\phi^\star_{h, b} \spr{x}, a}$ for both layers. Together with the decoder identity above and the action-independent exogenous transition, this defines a valid decodable ExBMDP with endogenous rewards.

\para{Expert policy}
Define the endogenous expert policy $\bar \pi^b\colon \cS \to \cA$ by
\begin{equation*}
    \bar\pi^b_1 \spr*{s} = s, \quad s \in \scbr*{0, 1}, \quad\text{and}\quad \bar\pi^b_2 \spr*{y} = 1, \quad y \in \cS_2.
\end{equation*}
The corresponding observation-based expert policy $\pi^b\colon \cX \to \cA$ is
\begin{equation*}
    \pi^b_1 \spr*{w} = b \spr*{w}, \quad w \in \scbr*{0, 1}^n, \quad\text{and}\quad \pi^b_2 \spr*{y, \xi} = 1, \quad y \in \cS_2, \ \xi \in \Xi_2.
\end{equation*}
Thus $\pi^b$ is endogenous, since for any layer $h$ and observation $x \in \cX_h$, $\pi^b_h \spr{x} = \bar\pi^b_h \spr{\phi^\star_{h, b} \spr{x}}$. In endogenous coordinates, the expert is simple. In raw observation coordinates, the first-layer expert action is exactly the decoder $b$.

The expert always reaches a good state and then plays action $1$, so $J_{\cM_b}^{\pi^b} = 1 / 2$. By contrast, the policy that always plays action $0$ reaches a good state and receives reward $1$, which is the maximum possible return. Hence every expert is strictly suboptimal with suboptimality gap $1 / 2$.

\subsubsection{A two-dimensional class realizes the expert $Q$-function}
\label{sec:q-realizability-hard-decoder}

Consider the two-parameter time-indexed function class $\cQ = \scbr{Q^{\alpha, \beta}: \alpha, \beta \in \sbr{0, 1}}$, where, for observations $x \in \cX_1$, $\spr{y, \xi} \in \cX_2$, and action $a \in \cA$, the functions are defined as follows.
\begin{equation*}
    Q^{\alpha, \beta}_1 \spr*{x, a} = \frac{\alpha}{2}\,\II{a \in \scbr*{0, 1}}, \qquad\text{and}\qquad Q^{\alpha, \beta}_2 \spr*{\spr*{y, \xi}, a} = \beta\,\II{y \in \cG} \spr*{\II{a = 0} + \frac12 \II{a = 1}}.
\end{equation*}
Every function in $\cQ$ is bounded by $\QMAX = 1$. Importantly, the class $\cQ$ does not depend on $b$. At layer $1$, it does not distinguish the strings $w \in \scbr{0, 1}^n$ at all; it only distinguishes the safe actions $0, 1$ from the unsafe action $2$. For any function $Q \in \bbR^{\cX \times \cA}$, we will use the norm $\norm{Q}_\infty = \max_{h \in \scbr{1, 2}} \max_{x \in \cX_h} \max_{a \in \cA} \abs{Q_h \spr{x, a}}$. We start by showing that every expert in the family has the same Q-function, which is realizable by $\cQ$.

\begin{lemma}[$Q_{\cM_b}^{\pi^b}$-realizability] \label{lem:q-realizability-hard-decoder}
    For every $b \in \cB_n$, $Q_{\cM_b}^{\pi^b} = Q^{1, 1} \in \cQ$.
\end{lemma}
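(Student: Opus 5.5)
The plan is to compute $Q_{\cM_b}^{\pi^b}$ directly by backward induction through the Bellman equations \cref{eq:bellman}. We start at the last layer $h = 2$ and then move to $h = 1$. At each layer we compare the result with the formula for $Q^{1,1}$.

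\textbf{Layer $2$.} Since $V_3 = 0$, the Bellman equation gives $Q^{\pi^b}_2 \spr{\spr{y, \xi}, a} = r_2 \spr{\spr{y, \xi}, a}$. By the reward definition this equals $\II{y \in \cG} \spr*{\II{a = 0} + \frac12 \II{a = 1}}$, which is exactly $Q^{1,1}_2$ with $\beta = 1$. The expert plays action $1$ at layer $2$, so $V^{\pi^b}_2 \spr{y, \xi} = \frac12 \II{y \in \cG}$. This value is independent of $\xi$ and of $b$.

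\textbf{Layer $1$.} Fix an observation $w \in \scbr{0,1}^n$ and an action $a$. The stage-1 reward is $0$, so
\begin{equation*}
    Q^{\pi^b}_1 \spr{w, a} = \bbE \sbr{V^{\pi^b}_2 \spr{\mb{s}_2, \mb{\xi}_2} \given \mb{x}_1 = w, \mb{a}_1 = a}.
\end{equation*}
Because $b$ is a valid decoder, conditioning on $\mb{x}_1 = w$ fixes $\mb{s}_1 = b \spr{w}$. The transition is then deterministic:
\begin{itemize}
    \item for $a \in \scbr{0,1}$, $\mb{s}_2 = G_{b(w), a} \in \cG$, so the value is $\frac12$;
    \item for $a = 2$, $\mb{s}_2 = B$, so the value is $0$.
\end{itemize}
The exogenous component $\mb{\xi}_2$ is irrelevant because $V_2$ does not depend on it. Hence $Q^{\pi^b}_1 \spr{w, a} = \frac12 \II{a \in \scbr{0,1}} = Q^{1,1}_1 \spr{w, a}$ for every $w$ and $b$.

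Finally, every $w$ is reachable, as already computed: it has probability $2^{-n}$. So the $Q$-function is pinned down at all layer-$1$ observations. Layer-$2$ observations were handled for all $\spr{y, \xi}$ by the direct formula.

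The only delicate point is the conditioning on the observation at layer $1$. We have to argue that the conditional law of $\mb{s}_1$ given $\mb{x}_1 = w$ is a point mass at $b(w)$. This follows from $\eta^b_s$ mapping into $b^{-1}(s)$. Nothing else depends on $b$, which gives the claimed $b$-independence. Since $\spr{\alpha, \beta} = \spr{1, 1} \in \sbr{0,1}^2$, we get $Q^{1,1} \in \cQ$.
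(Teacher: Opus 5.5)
Your proposal is correct and matches the paper's proof: both compute $Q^{\pi^b}_{\cM_b}$ by backward induction, with $Q_2 = r_2 = Q^{1,1}_2$, and then decode $s = b(w)$ at layer $1$ to get value $\tfrac12$ for the safe actions and $0$ for action $2$. Your extra remarks, on why $\mb{x}_1 = w$ pins down $\mb{s}_1 = b(w)$ and on $V_2$ not depending on $\xi$, only make explicit what the paper's proof uses implicitly.
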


\begin{proof}[\pfref{lem:q-realizability-hard-decoder}]
    At layer $2$, there is no future reward. Thus, for every
    $\spr{y, \xi} \in \cX_2$ and $a \in \cA$,
    \begin{equation*}
        Q_{\cM_b, 2}^{\pi^b} \spr*{\spr*{y, \xi}, a} = r_2 \spr*{\spr*{y, \xi}, a} = \II{y \in \cG} \spr*{\II{a = 0} + \frac12 \II{a = 1}} = Q^{1, 1}_2 \spr*{\spr*{y, \xi}, a}.
    \end{equation*}

    At layer $1$, the immediate reward is zero. Fix any observation
    $w \in \scbr{0, 1}^n$. Its decoded endogenous state is $s = \phi^\star_{1, b} \spr{w} = b \spr{w}$. If $a \in \scbr{0, 1}$, then the next endogenous state is $G_{s, a} \in \cG$. At layer $2$, the expert plays action $1$, receiving reward $1 / 2$. Thus, for $a \in \scbr{0, 1}$, $Q_{\cM_b, 1}^{\pi^b} \spr{w, a} = 1 / 2$. If $a = 2$, then the next endogenous state is $B$. From $B$, every action receives reward $0$, so $Q_{\cM_b, 1}^{\pi^b} \spr{w, 2} = 0$. Hence
    \begin{equation*}
        Q_{\cM_b, 1}^{\pi^b} \spr*{w, a} = \frac12 \II{a \in \scbr*{0, 1}} = Q^{1, 1}_1 \spr*{w, a}.
    \end{equation*}
    Combining the two layers gives $Q_{\cM_b}^{\pi^b} = Q^{1, 1} \in \cQ$.
\end{proof}

We can then bound the covering number of the class $\cQ$ with the following lemma.
\begin{lemma}[Covering number of the $Q$-class] \label{lem:q-cover-hard-decoder}
    For every $\veps \in (0, 1]$, $\cN_\veps \spr{\cQ, \norm*{\cdot}_\infty} \le \spr{\frac{3}{\veps}}^2$.
\end{lemma}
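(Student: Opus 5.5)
The plan is to exploit that $\cQ$ is the image of the square $\sbr{0,1}^2$ under the map $\spr{\alpha,\beta} \mapsto Q^{\alpha,\beta}$, and that this map is Lipschitz from the $\ell_\infty$ metric on parameters to $\norm{\cdot}_\infty$ on functions. A grid on the parameter square then pushes forward to a cover of $\cQ$.

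\textbf{Step 1: Lipschitz bound.} For any two parameter pairs $\spr{\alpha,\beta}$ and $\spr{\alpha',\beta'}$, compare layers separately:
\begin{equation*}
    \abs*{Q^{\alpha,\beta}_1 \spr*{x,a} - Q^{\alpha',\beta'}_1 \spr*{x,a}} \le \tfrac{1}{2}\abs*{\alpha - \alpha'},
    \qquad
    \abs*{Q^{\alpha,\beta}_2 \spr*{\spr*{y,\xi},a} - Q^{\alpha',\beta'}_2 \spr*{\spr*{y,\xi},a}} \le \abs*{\beta - \beta'}.
\end{equation*}
The second bound holds because the factor multiplying $\beta$ lies in $\sbr{0,1}$. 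Hence
\begin{equation*}
    \norm*{Q^{\alpha,\beta} - Q^{\alpha',\beta'}}_\infty \le \max\scbr*{\abs*{\alpha-\alpha'},\, \abs*{\beta-\beta'}}.
\end{equation*}

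\textbf{Step 2: Parameter grid.} Let $G_\veps = \scbr{k\veps : k = 0,1,\ldots,\lfloor 1/\veps\rfloor} \subset \sbr{0,1}$. Every $t \in \sbr{0,1}$ lies within $\veps$ of a point of $G_\veps$, in fact within $\veps/2$ after adding the endpoint $1$ if needed. Moreover, $\abs{G_\veps} \le 1/\veps + 2 \le 3/\veps$ for $\veps \in (0,1]$.

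Applying Step 1, the set $\scbr{Q^{\alpha,\beta} : \alpha,\beta \in G_\veps}$ is an $\veps$-cover of $\cQ$ in $\norm{\cdot}_\infty$. Its cardinality is at most $\abs{G_\veps}^2 \le \spr{3/\veps}^2$, which is the claimed bound. The cover elements lie in $\cQ$ itself, so it does not matter whether the covering-number definition requires internal covers.

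\textbf{Main obstacle.} Only the counting in Step 2 needs care, and it is routine. One must ensure $\lfloor 1/\veps\rfloor + 2 \le 3/\veps$, which holds because $\veps \le 1$ gives $2 \le 2/\veps$. Alternatively, one can take $\lceil 1/\veps\rceil + 1$ grid points spaced by at most $\veps$, which is also at most $3/\veps$.
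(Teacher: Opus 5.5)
Your proof is correct and follows the paper's argument: the same Lipschitz bound $\norm{Q^{\alpha,\beta} - Q^{\alpha',\beta'}}_\infty \le \max\{\lvert\alpha-\alpha'\rvert, \lvert\beta-\beta'\rvert\}$, then pushing an $\veps$-grid of $[0,1]^2$ forward to a cover of $\cQ$. The only difference is that you build the grid explicitly inside $[0,1]^2$, whereas the paper cites the $\ell_\infty$-ball covering lemma for the count $(\lceil 1/\veps\rceil+1)^2 \le (3/\veps)^2$. Your explicit grid also guarantees that the cover elements belong to $\cQ$.
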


\begin{proof}[\pfref{lem:q-cover-hard-decoder}]
    For any $\alpha, \beta, \alpha', \beta' \in \sbr{0, 1}$,
    \begin{equation*}
        \norm*{Q^{\alpha, \beta} - Q^{\alpha', \beta'}}_\infty \le \max \scbr*{\abs*{\alpha - \alpha'}, \abs*{\beta - \beta'}}.
    \end{equation*}
    Thus any $\veps$-grid of $\sbr{0, 1}^2$ in the $\ell_\infty$ norm induces an $\veps$-cover of $\cQ$. By \Cref{lem:linfty-ball-cover}, such a grid has cardinality at most $\spr{\ceil*{\frac{1}{\veps}} + 1}^2$. For $\veps \in (0, 1]$, $\ceil*{\frac{1}{\veps}} + 1 \le \frac{3}{\veps}$. This proves the claim.
\end{proof}

\subsubsection{Policy realizability requires an exponentially large class}

We show that any policy class $\Pi$ that contains all experts in the family must have an exponentially large covering number under the sup-total-variation metric $d_\Pi$. In particular, if $\Pi$ is finite, then it must have exponentially large cardinality. This shows that a policy realizability assumption is much stronger than a value-function realizability assumption, since the same two-dimensional class $\cQ$ realizes the expert $Q$-function for every expert in the family, while any single policy class realizing all experts must have exponential complexity.

For any policies $\pi$, $\pi'$, define the sup-total-variation metric
\begin{equation*}
    d_\Pi \spr*{\pi, \pi'} = \frac12 \max_{h \in \scbr*{1, 2}} \norm*{\pi_h - \pi'_h}_{\infty, 1},
\end{equation*}
where $\norm{\cdot}_{\infty, 1}$ is defined in \cref{eq:l-inf-one-norm}. For deterministic policies, this distance equals $1$ whenever the two policies choose different actions at some layer and observation. Let
\begin{equation*}
    \Pi_n
    =
    \scbr*{\pi^b : b \in \cB_n}
\end{equation*}
be the natural expert-agnostic policy class containing all possible experts in the hard-decoder family. We have the following result.
\begin{lemma}[Family-level policy lower bound] \label{lem:family-policy-lower-bound-hard-decoder}
    Let $\Pi \subset \simplex \spr{\cA}^\cX$ be any policy class. If $\Pi_n \subseteq \Pi$, then, for every $\veps \in \spr{0, 1 / 2}$, $\cN_\veps \spr{\Pi, d_\Pi} \ge \abs{\cB_n}$. In particular,
    \begin{equation*}
        \log_2 \cN_\veps \spr*{\Pi, d_\Pi} \ge 2^n - n - 1.
    \end{equation*}
    If $\Pi$ is finite, then also $\abs{\Pi} \ge \abs{\cB_n}$.
\end{lemma}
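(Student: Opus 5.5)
The plan is a packing argument: I will exhibit $\abs{\cB_n}$ policies in $\Pi$ that are pairwise at $d_\Pi$-distance $1$, and then note that an $\veps$-cover with $\veps < 1/2$ cannot assign two of them to the same cover element.

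\textbf{Step 1: the experts are pairwise far apart.} Since $\Pi_n \subseteq \Pi$, every expert $\pi^b$ with $b \in \cB_n$ lies in $\Pi$. Take distinct $b \neq b'$ in $\cB_n$. Then there is a string $w \in \scbr{0,1}^n$ with $b(w) \neq b'(w)$. At this $w$ the first-layer decision rules $\pi^b_1(\cdot \given w)$ and $\pi^{b'}_1(\cdot \given w)$ are point masses on different actions. Their $\ell_1$ distance is therefore $2$, and by the definition of $d_\Pi$ (through \cref{eq:l-inf-one-norm}) we get $d_\Pi(\pi^b, \pi^{b'}) = 1$. The map $b \mapsto \pi^b$ is injective for the same reason, so $\abs{\Pi_n} = \abs{\cB_n}$.

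\textbf{Step 2: covering implies injectivity.} Let $\cC$ be any $\veps$-cover of $\Pi$ under $d_\Pi$, with $\veps < 1/2$. Whether the cover elements must lie inside $\Pi$ or not, the following argument goes through. For each $b$, choose some $c(b) \in \cC$ with $d_\Pi(\pi^b, c(b)) \le \veps$. Suppose $c(b) = c(b')$ for some $b \neq b'$. Since $d_\Pi$ is a pseudometric, the triangle inequality gives $1 = d_\Pi(\pi^b, \pi^{b'}) \le 2\veps < 1$, a contradiction. So $c$ is injective, and $\cN_\veps(\Pi, d_\Pi) \ge \abs{\cB_n}$.

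\textbf{Step 3: conclude.} Plugging in the bound $\log_2 \abs{\cB_n} \ge 2^n - n - 1$ from the preliminaries (\Cref{sec:preliminaries-hard-decoder}) gives the displayed inequality. For finite $\Pi$, the statement $\abs{\Pi} \ge \abs{\Pi_n} = \abs{\cB_n}$ follows directly from injectivity. Alternatively, $\Pi$ is a $0$-cover of itself, hence an $\veps$-cover, so $\abs{\Pi} \ge \cN_\veps(\Pi, d_\Pi)$.

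The only delicate point is checking that $d_\Pi$ is a genuine pseudometric satisfying the triangle inequality, and that the factor $\tfrac12$ in its definition makes the distance exactly $1$. Both follow immediately from the definition, so I do not expect any real obstacle.
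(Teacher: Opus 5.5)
Your proposal is correct and follows essentially the same route as the paper. Both arguments use injectivity of $b \mapsto \pi^b$ and the fact that distinct experts are at $d_\Pi$-distance exactly $1$ (a $1$-packing). Both then note that no $\veps$-ball with $\veps < 1/2$ can contain two packing points, and finish with the bound on $\log_2 \abs{\cB_n}$ from the preliminaries. Your explicit triangle-inequality step, which works whether or not cover centers must lie in $\Pi$, is a slightly more careful version of the paper's one-line ball argument.
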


\begin{proof}[\pfref{lem:family-policy-lower-bound-hard-decoder}]
    If $b \neq b'$ are two different decoders in $\cB_n$, then there exists an observation $w \in \scbr{0, 1}^n$ such that $b \spr{w} \neq b' \spr{w}$. As a result, $\pi^b_1 \spr{w} = b \spr{w} \neq b' \spr{w} = \pi^{b'}_1 \spr{w}$. Therefore, $\pi^b \neq \pi^{b'}$. Hence the map $b \mapsto \pi^b$ is injective. Any finite policy class containing all experts $\scbr{\pi^b : b \in \cB_n}$ must therefore have cardinality at least $\abs{\cB_n}$.

    Moreover, if $b \neq b'$, then the deterministic policies $\pi^b$ and $\pi^{b'}$ choose different actions at some observation $w$ in stage $1$. Hence $d_\Pi \spr{\pi^b, \pi^{b'}} = 1$. Thus $\scbr{\pi^b : b \in \cB_n}$ is a $1$-packing under $d_\Pi$. A ball of radius $\veps < 1 / 2$ cannot contain two distinct elements of this packing. Therefore every $\veps$-cover of $\Pi$ must contain at least $\abs{\cB_n}$ elements. Finally, the lower bound on the log-covering number follows from the lower bound on $\abs{\cB_n}$ shown in \Cref{sec:preliminaries-hard-decoder}.
\end{proof}

\subsubsection{Separation theorem}

We now summarize the separation. The result compares the complexity terms that enter policy-based and value-based guarantees. It shows that on the hard-decoder family, (i) any expert-agnostic class satisfying expert-policy realizability uniformly over all decoders $b$ has exponential metric entropy, while (ii) a value-based guarantee under $\qexpert$-realizability depends on the complexity of a two-dimensional class realizing the expert action-value function.
\begin{theorem}[Hard-decoder ExBMDP separation] \label{thm:combined-hard-decoder-exbmdp}
    The ExBMDP family $\scbr*{\cM_b: b \in \cB_n}$ defined in \Cref{sec:construction-hard-exbmdp} satisfies the following properties.
    \begin{enumerate}
        \item \textbf{Uniform $\qexpert$-realizability.} For every balanced decoder $b \in \cB_n$, the expert policy $\pi^b$ is deterministic and endogenous in $\cM_b$. It is strictly suboptimal, with $J_{\cM_b}^{\pi^b} = 1 / 2$ and $\sup_{\pi} J_{\cM_b}^{\pi} = 1$. Moreover, the two-dimensional class $\cQ$ defined in \cref{sec:q-realizability-hard-decoder} realizes every expert action-value function in the family: for every $b \in \cB_n$, $Q_{\cM_b}^{\pi^b} = Q^{1, 1} \in \cQ$. In addition, for every $\veps \in (0, 1]$,
        \begin{equation*}
            \log \cN_\veps \spr*{\cQ, \norm*{\cdot}_\infty} \le 2 \log \spr*{\frac{3}{\veps}}.
        \end{equation*}

        \item \textbf{Family-level policy lower bound.} Let $\Pi \subset \simplex \spr{\cA}^\cX$ be any expert-agnostic policy class. If $\scbr{\pi^b : b \in \cB_n} \subseteq \Pi$, then, for every $\veps \in \spr{0, 1 / 2}$,
        \begin{equation*}
            \log_2 \cN_\veps \spr*{\Pi, d_\Pi} \ge 2^n - n - 1.
        \end{equation*}
    \end{enumerate}
\end{theorem}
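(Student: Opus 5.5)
This theorem is a summary of results already established for the hard-decoder family, so the plan is to assemble them, item by item, with short checks of the return claims.

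For item 1, begin with the expert's properties. By construction, $\pi^b_1(w) = b(w)$ and $\pi^b_2 \equiv 1$, so $\pi^b$ is deterministic. It is endogenous because $\pi^b_h = \bar\pi^b_h \circ \phi^\star_{h,b}$ at both layers, using $\phi^\star_{1,b} = b$ and $\phi^\star_{2,b}(y,\xi) = y$.

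Next compute the returns. The expert plays a safe action at stage $1$ from every observation, so it always reaches some state $G_{s,p} \in \cG$. It then plays action $1$ and collects reward $1/2$, giving $J^{\pi^b}_{\cM_b} = 1/2$.

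For the optimal return, every trajectory collects at most $\max_a r_2 \le 1$, since $r_1 \equiv 0$. The policy that always plays action $0$ reaches $\cG$ and then earns reward $1$, so $\sup_\pi J^\pi_{\cM_b} = 1$. The realizability claim $Q^{\pi^b}_{\cM_b} = Q^{1,1} \in \cQ$ is exactly \Cref{lem:q-realizability-hard-decoder}.

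The covering bound follows from \Cref{lem:q-cover-hard-decoder}: taking logarithms of $\cN_\veps \le (3/\veps)^2$ gives $2\log(3/\veps)$. The only detail to state is that $\cQ$ does not depend on $b$. This is visible from its definition: layer $1$ depends only on whether $a \in \{0,1\}$, and layer $2$ depends only on $y$ and $a$.

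For item 2, apply \Cref{lem:family-policy-lower-bound-hard-decoder} directly with $\Pi_n = \{\pi^b : b \in \cB_n\} \subseteq \Pi$. That lemma gives $\cN_\veps(\Pi, d_\Pi) \ge \abs{\cB_n}$ for every $\veps < 1/2$. Combining this with the binomial estimate $\log_2 \abs{\cB_n} \ge 2^n - n - 1$ from \Cref{sec:preliminaries-hard-decoder} completes the bound.

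There is no real obstacle here. The only points needing care are the value of the optimal return and making explicit that the class $\cQ$ is the same for every $b$, which is what makes the realizability statement uniform across the family.
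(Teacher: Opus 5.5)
Your proposal is correct and follows the paper's proof: both assemble the construction facts (determinism, endogeneity, and the $1/2$ versus $1$ return gap) with \Cref{lem:q-realizability-hard-decoder}, \Cref{lem:q-cover-hard-decoder}, and \Cref{lem:family-policy-lower-bound-hard-decoder}. Your explicit return computation and your remark that $\cQ$ does not depend on $b$ only spell out what the paper cites from the construction section.
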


\begin{proof}[\pfref{thm:combined-hard-decoder-exbmdp}]
    The validity of the ExBMDP construction and the fact that each $\pi^b$ is deterministic and endogenous follow from \Cref{sec:construction-hard-exbmdp}. The return gap is also established in that section. The identity $Q_{\cM_b}^{\pi^b} = Q^{1, 1} \in \cQ$ for every $b \in \cB_n$ follows from \Cref{lem:q-realizability-hard-decoder}. The covering-number bound for $\cQ$ follows from \Cref{lem:q-cover-hard-decoder}. The family-level policy lower bound follows from \Cref{lem:family-policy-lower-bound-hard-decoder}.
\end{proof}

\para{Why the policy needs the decoder but the $Q$-function does not}
At layer $1$, for any observation $w$, the expert's action is $\pi^b_1 \spr{w} = b \spr{w}$. Thus exact behavioral cloning must recover the decoder $b$. However, the expert $Q$-function at the same observation is $Q^{\pi^b}_{\cM_b, 1} \spr{w, 0} = Q^{\pi^b}_{\cM_b, 1} \spr{w, 1} = 1 / 2$, and $Q^{\pi^b}_{\cM_b, 1} \spr{w, 2} = 0$, which is independent of $b \spr{w}$. The decoder determines which safe action the expert chooses, but both safe actions have the same value. This is why it is easier to represent the expert's $Q$-function than to represent the expert's policy.

Here the expert is strictly suboptimal because it chooses action $1$ rather than action $0$ at the second layer. This does not affect the tie-breaking mechanism at the first layer: actions $0$ and $1$ have the same expert $Q$-value, and recovering the exact expert policy still requires the decoder-dependent rule $w \mapsto b \spr{w}$. Thus a simple $Q$-function can correspond to an enormous set of expert policies even when those experts are suboptimal. Making the expert's first-layer action uniquely optimal in this construction would force $Q^{\pi^b}$ to encode the decoder, eliminating the present decoder-independent witness class.

\para{Why the exogenous structure matters}
If the endogenous state $s$ were observed directly, the expert would be the constant-complexity policy $\bar\pi^b_1 \spr{s} = s$ and $\bar\pi^b_2 \spr{y} = 1$ on an endogenous state space of size $7$. The controlled dynamics and rewards are also simple in endogenous coordinates.

The complexity appears only because the observation map hides the endogenous bit inside exogenous variation, $w = \eta^b_s \spr{\xi} \in \scbr{0, 1}^n$. Recovering the endogenous bit from the raw observation requires computing $s = b \spr{w}$. Thus the observation-based expert policy factors as $\pi^b_1 \spr{w} = \bar\pi^b_1 \spr{\phi^\star_{1, b} \spr{w}} = b \spr{w}$. Expert-policy realizability therefore requires representing the hard decoder. By contrast, $Q^{\pi^b}$ is identical across decoders. This is why the same two-dimensional $Q$-class works uniformly for all decoders $b$, whereas any policy class that realizes all corresponding experts must be exponentially large.


\clearpage
\section{Experiment Details}
\label{sec:experiments_details}
This section includes additional experimental results and details on the experiments presented in the main text. To start with, we include some experiments in the instance we use for our lower bound to showcase the failure of offline value-based IL under $\qexpert$-realizability and no additional assumptions. As a representative offline algorithm that outputs a VI policy, we use \SPOIL \citep{moulin2025inverse}. These results are described in \Cref{subsec:lower_bounds_exp}. Next, in \Cref{subsec:gym} we provide additional details and comments about the Gymnasium experiments.

\subsection{Experiments on the Lower Bound Instance}
\label{subsec:lower_bounds_exp}

\begin{wrapfigure}[16]{r}{0.5\textwidth}
\vspace{-15pt}
    \begin{tcolorbox}[
        colback=blue!6!white,
        colframe=blue!50!black,
        boxrule=0.6pt,
        arc=2pt,
        width=\linewidth,
        before skip=0pt,
        after skip=0pt,
        left=2pt, right=2pt,
        top=2pt, bottom=2pt
    ]
    \centering
    \includegraphics[width=.9\linewidth]{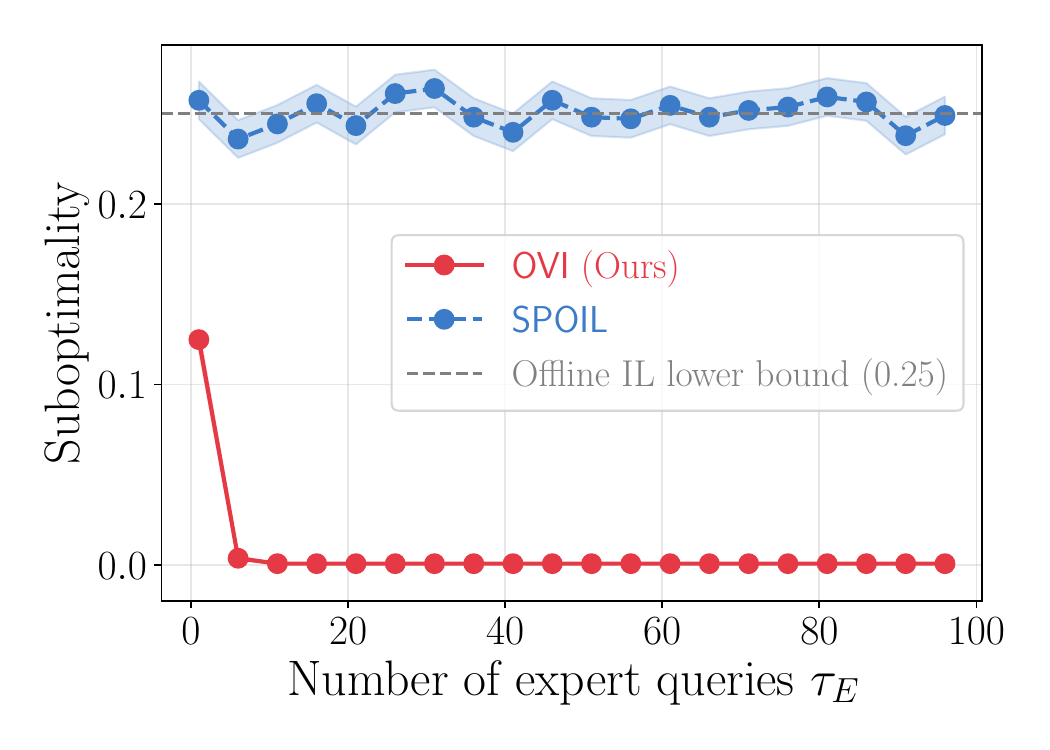}
    \caption{Experiment on the environment class used to prove \Cref{thm:lower}. \label{fig:exp1}}
    \end{tcolorbox}
\end{wrapfigure}

We start by investigating the performance of an offline VI-policies-based algorithm, such as \SPOIL. Our negative result in \Cref{thm:lower} predicts a constant suboptimality in at least one environment. To observe this prediction in practice, we repeat the experiment $500$ times, each time letting the adversary choose uniformly between the two environments $\cM_1$ and $\cM_2$ in the hard family used in the proof of \Cref{thm:lower}. We report their average suboptimality in \Cref{fig:exp1}, which is a lower bound on the worst-case suboptimality. Indeed, at least one of $\cM_1$ and $\cM_2$ has suboptimality no smaller than the average. As \Cref{fig:exp1} shows, \ISPIL reaches zero suboptimality, whereas \SPOIL remains at the constant suboptimality predicted by our construction.

\subsection{Implementation with Neural Networks}
\label{subsec:gym}

For the Gymnasium experiments, we consider a stationary approximation of \Cref{alg:interactive_finite-H}. At the beginning of each epoch,  we label one trajectory collected with the learner policy output by the previous epoch, $\pi^{\ell}$, rather than from $d^{\piout}_h$ as our theory for finite horizon would prescribe. We label such trajectories with expert actions. At this point, we compute the empirical approximation of the objective used by \ISPIL and compute an approximate saddle point by $K$ iterations of the online learning algorithms. The resulting pseudocode is given in \Cref{alg:approximateOVI}. For the experiments, we set $K=50$ while we vary the maximum number of queries along the learner trajectories as shown in \Cref{fig:gym}. The exponential weights update for the policy is approximated via \SOFTDQN \citep{haarnoja2017reinforcement} as in \citet{moulin2025inverse}. Moreover, we have observed that \ISPIL's performance improves when the state dataset is formed by aggregating both expert and learner trajectories. Such data mixing is common in on-policy distillation \citep{agarwal2024policy} and we prove that it extends the representational benefits of interaction; see \Cref{sec:adaptive}.\loose

In the easiest environments like \CARTPOLE and \ACROBOT, we find that 10 trajectories collected offline (for \SPOIL and \BC) or online and then labeled by the expert (for \ISPIL and \DAGGER) usually suffice to match the expert performance. In \Cref{fig:gym}, in more complicated environments such as \LUNARLANDER and \PENDULUM, we increased the number of expert samples, but we did not find it necessary to increase the value of $K$.

\begin{algorithm}[!h]
    \caption{Stationary approximation of \ISPIL for neural networks. \label{alg:approximateOVI}}
    \begin{algorithmic}[1]
        \STATE Initialize policy weights $\psi^K_0$ and value network weights $\theta^K_0$.
        \FOR{$\ell= 1, \dots, \tauE$}
        \STATE Collect a state trajectory $(\mb{x}_{h,\ell})^H_{h=1}$ with the policy $\pi_{\psi^K_{\ell-1}}$.
        \STATE Label the states with expert actions $\mb{a}^{\experttag}_{h,\ell} \sim \experth(\cdot \given \mb{x}_{h, \ell})$ for each $h \in [H]$.
        \STATE \algcomment{Warm-start initialization of policy and Q network parameters.}
        \STATE $\theta^1_\ell = \theta^{K}_{\ell-1}$, $\psi^1_\ell = \psi^{K}_{\ell-1}$ 
        \FOR{$k=1, \dots, K-1$}
        \STATE \algcomment{Update the policy weights.} 
        \STATE Update $\psi^k_\ell$ via SoftDQN \citep{haarnoja2017reinforcement} using $Q_{\theta^{k}_\ell}$ as the $Q$-network.
        \STATE \algcomment{The argmax is approximated with several Adam iterations initialized at $\theta^k_{\ell}$.}
        \STATE $\theta^{k+1}_\ell \approxeq \argmax_{\theta \in \bbR^d} \sum^\ell_{s=1} \sumhH \br{ Q_\theta(\mb{x}_{h,s}, \mb{a}^{\experttag}_{h,s}) - Q_\theta(\mb{x}_{h,s}, \pi_{\psi^k_\ell})} $.
        \ENDFOR
        \ENDFOR
        \STATE Output the policy $\pi_{\psi^K_{\tauE}}$.
    \end{algorithmic}
\end{algorithm}

As baselines, we use \DAGGER \citep{Ross:2011}, an interactive method requiring policy realizability. We also use standard \BC \citep{Pomerleau:1991}, an offline method requiring expert-policy realizability, and \SPOIL \citep{moulin2025inverse}, an offline method that does not require interactive expert access but assumes the stronger $Q^{\Pi_{\cQ}}$-realizability condition, namely, $Q^\pi \in \cQ$ for every $\pi \in \Pi_{\cQ}$.

\begin{figure*}
    \includegraphics[width=0.95\linewidth]{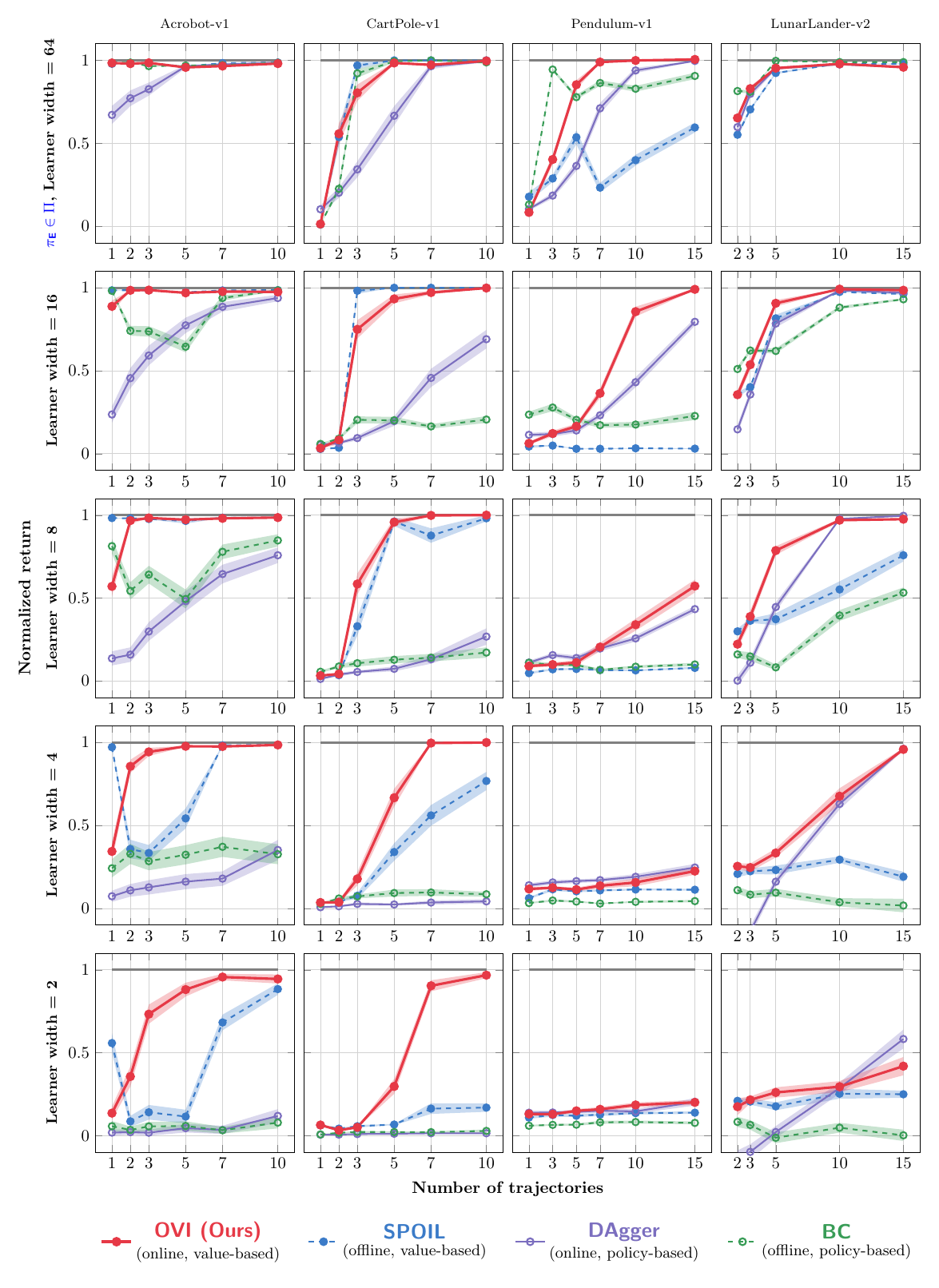}
    \caption{\label{fig:gym} Experiments in the \texttt{Gymnasium} library. The top row reports the setting in which the expert and learner have the same network architecture. In the second and subsequent rows, the learner's network width decreases, taking the values $\bc{16, 8, 4, 2}$ neurons. The $x$-axis reports the number of expert trajectories for offline methods (\BC and \SPOIL) and the number of learner trajectories labeled with expert actions for \DAGGER and \ISPIL.}
\end{figure*}

\subsection{Additional Details and Comments on the Gym Experiments}
\label{sec:more-gym}

We ran the above algorithms in some Gymnasium environments and report the results next. In particular, we first trained an expert network with 2 hidden layers with 64 neurons each via \DQN \citep{Mnih:2015} in \PENDULUM and \PPO \citep{Schulman:2017} in \CARTPOLE and \ACROBOT and then considered the following two settings. First, we considered a learner parameterized via exactly the same architecture (2 hidden layers with 64 neurons). In this case, expert realizability clearly holds. Therefore, \BC and \DAGGER are expected to work well. Second, we considered smaller learner networks (2 hidden layers with 16, 8, 4, or 2 neurons per layer). Expert realizability is a poorly motivated assumption in this second case, and we therefore expected methods like \SPOIL and \ISPIL to be more efficient as the expert--learner size gap increases.

The aforementioned hypotheses are indeed confirmed by our experiments in \CARTPOLE and \ACROBOT. When expert realizability holds (top row of \Cref{fig:gym}), we see that all four methods are essentially equivalent and all eventually manage to match the performance of the expert policy. By contrast, when the learner network becomes smaller, we see that the methods that avoid the expert-realizability assumption achieve better performance. Moreover, the benefits of interaction become evident for the largest size gap we tested (in the last row of \Cref{fig:gym}) in which \ISPIL outperforms \SPOIL. This suggests, therefore, that the network parametrization considered here is not powerful enough to represent the $Q$-functions of the policy sequence generated by the algorithm. An interesting additional observation is that in \CARTPOLE and \ACROBOT, value-based methods clearly outperform policy-based ones. In contrast, in \PENDULUM and \LUNARLANDER, the interactive methods, \ie, \ISPIL and \DAGGER, are the best ones. Therefore, for some environments it seems that interaction is key for performance, while in others value-based representation is more important. We think that such \emph{instance-dependent} phenomena raise interesting open questions.

The \PENDULUM environment naturally comes with continuous actions, but we adapted it to our setting via action discretization, a technique used in practice \citep{dadashi2021continuous} and recently analyzed by \citet{cao2026understanding}. Notably, in this case, we notice that \ISPIL outperforms the standard \SPOIL, potentially because \ISPIL imposes a weaker requirement on the representational power of the $\cQ$-class. The experts achieved the following average returns: $500$ in \CARTPOLE, $-77.1$ in \ACROBOT, $270$ in \LUNARLANDER and $-176.6$ in \PENDULUM. The zero point on the $y$-axis is set to the following returns achieved by very suboptimal policies: $-500$ in \ACROBOT, $8$ in \CARTPOLE, $-1650.0$ in \PENDULUM and $-600$ in \LUNARLANDER. For each environment, we ran each IL algorithm $50$ times with different random seeds from $0$ to $49$. As is common practice in Gymnasium environments \citep{Garg:2021}, we subsampled the expert trajectories for offline methods and retained expert-action labels along learner trajectories only at a fixed frequency for interactive methods. For sample accounting, each retained label counts as one expert sample for offline methods and one expert query for interactive methods. Thus, a trajectory of length $T$ contributes $\ceil{T/f}$ samples or queries at subsampling frequency $f$.

For all algorithms, we tuned their hyperparameters to optimize the performance with no expert--learner gap and then kept the same hyperparameters for the experiments with smaller learner network sizes. The detailed list of hyperparameters is in the \texttt{README} file accompanying our code. For \SPOIL and \ISPIL, we approximate the policy update via \SOFTDQN learning over the $Q$-values, not from rewards, but with the loss motivated by our theoretical analysis. The experiments can be run on CPUs. We ran them on a personal laptop. Note, however, that running all the experiments can take several hours.\loose


\clearpage
\section{Technical Tools}
\label{sec:technical_tools}
\subsection{Concentration}
\label{sec:technical-tools-concentration}

\begin{Lem}[Azuma--Hoeffding inequality; \citealp{Boucheron:2013}, Section~1.1] \label{lemma:azumahoeffding}
    Let $X_1, \ldots, X_n$ be a martingale difference sequence with respect to a filtration $\spr{\cF_i}^n_{i=1}$, \ie, $\bbE \sbr{X_i \given \cF_{i-1}} = 0$ and $\abs{X_i} \leq M$ almost surely for all $i \in \sbr{n}$.  Then, for any $t > 0$,
    \begin{equation*}
        \bbP \sbr*{\abs*{\frac1n \sumin {X_i}} > t}
        \leq
        2 e^{- \frac{n t^2}{2 M^2}}\,.
    \end{equation*}
    Equivalently, for any $\delta \in \spr{0, 1}$, with probability at least $1 - \delta$,
    \begin{equation*}
        \abs*{\frac1n \sumin {X_i}}
        \leq
        M \sqrt{\frac{2 \log \spr*{2 / \delta}}{n}}\,.
    \end{equation*}
\end{Lem}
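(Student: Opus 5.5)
This is the standard Azuma--Hoeffding bound, proved by the Chernoff method together with Hoeffding's lemma applied conditionally and the tower property. Write $S_n = \sum_{i=1}^n X_i$.

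\textbf{Conditional Hoeffding's lemma.} For $\lambda \in \bbR$, we first show $\bbE\sbr{e^{\lambda X_i} \given \cF_{i-1}} \leq e^{\lambda^2 M^2/2}$. Conditionally on $\cF_{i-1}$, the variable $X_i$ has mean zero and lies in $\sbr{-M, M}$. Use convexity of $u \mapsto e^{\lambda u}$ on $\sbr{-M,M}$ to get
\begin{equation*}
e^{\lambda u} \leq \frac{M-u}{2M}e^{-\lambda M} + \frac{M+u}{2M}e^{\lambda M}.
\end{equation*}
Taking conditional expectations and using the zero mean gives $\bbE\sbr{e^{\lambda X_i} \given \cF_{i-1}} \leq \cosh(\lambda M)$. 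Finally, $\cosh(y) \leq e^{y^2/2}$ follows by comparing Taylor series termwise, since $(2k)! \geq 2^k k!$.

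\textbf{Bounding the moment generating function of $S_n$.} We peel off one term at a time by the tower property. Since $S_{n-1}$ is $\cF_{n-1}$-measurable,
\begin{equation*}
\bbE\sbr{e^{\lambda S_n}} = \bbE\sbr{e^{\lambda S_{n-1}}\,\bbE\sbr{e^{\lambda X_n}\given \cF_{n-1}}} \leq e^{\lambda^2M^2/2}\,\bbE\sbr{e^{\lambda S_{n-1}}}.
\end{equation*}
Iterating this step gives $\bbE\sbr{e^{\lambda S_n}} \leq e^{n\lambda^2M^2/2}$. Here we implicitly take $X_i$ to be $\cF_i$-measurable, i.e., adapted, as is standard.

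\textbf{Chernoff bound.} By Markov's inequality, for $\lambda>0$,
\begin{equation*}
\bbP\sbr{S_n > nt} \leq e^{-\lambda nt + n\lambda^2M^2/2}.
\end{equation*}
Choosing $\lambda = t/M^2$ gives the bound $e^{-nt^2/(2M^2)}$. Applying the same argument to $-X_i$, which is also a bounded martingale difference sequence, bounds the lower tail. A union bound over the two tails yields the stated bound with the factor $2$.

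\textbf{Equivalent form.} Set $2e^{-nt^2/(2M^2)} = \delta$ and solve for $t$, which gives $t = M\sqrt{2\log(2/\delta)/n}$. On the complement event, of probability at least $1-\delta$, we have $\abs{\frac1n S_n} \leq M\sqrt{2\log(2/\delta)/n}$.

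The only step needing care is Hoeffding's lemma. The convexity argument above already gives the constant $M^2/2$, which matches the stated bound, so no step is a serious obstacle.
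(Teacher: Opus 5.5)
Your proof is correct: it is the standard Chernoff-bound argument, using a conditional Hoeffding lemma (convexity gives $\cosh(\lambda M) \le e^{\lambda^2 M^2/2}$) peeled through the tower property and then a union bound over the two tails. The paper states this lemma without proof and simply cites \citet{Boucheron:2013}, whose textbook argument is exactly this one. You are also right that the peeling step needs $X_i$ to be $\mathcal{F}_i$-measurable, which is part of the usual definition of a martingale difference sequence.
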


We recap the definition of covering numbers.

\begin{definition}[Covering number] \label{def:covering}
    Let $\spr{M, \mathrm{d}}$ be a metric space, $S$ be a subset of $M$, and $\epsilon > 0$. A set $\cC_\epsilon \spr{S, \mathrm{d}}$ is an $\epsilon$-covering of $S$ if for any $x \in S$, there exists $y \in \cC_\epsilon \spr{S, \mathrm{d}}$ such that $\mathrm{d} \spr{x, y} \leq \epsilon$. The covering number of $S$, $\cN_\epsilon \spr{S, \mathrm{d}}$, is the minimum cardinality of any such covering of $S$. Moreover, we denote $\cC_\epsilon \spr{S} \ldef \cC_\epsilon \spr{S, \norm{\cdot}_\infty}$.
\end{definition}

We state the finite-horizon concentration lemma that will be used for the uniform covering argument. For decision rules $p, p'\colon \cX \to \simplex \spr{\cA}$, define the policy metric
\begin{equation} \label{eq:l-inf-one-norm}
    \norm*{p - p'}_{\infty,1}
    \ldef
    \max_{x \in \cX} \sum_{a \in \cA} \abs*{p \spr*{a \given x} - p' \spr*{a \given x}}.
\end{equation}
Given two functions $Q, Q' \in \scbr{f\colon \cX \times \cA \to \sbr{0, \QMAX}}$, and two decision rules $p, p' \in \simplex \spr{\cA}^\cX$, define the scaled product metric
\begin{equation} \label{eq:moulin-stage-product-metric}
    \rho \spr*{\spr*{Q, p}, \spr*{Q', p'}}
    \ldef
    \norm*{Q - Q'}_\infty + \QMAX \norm*{p - p'}_{\infty,1}.
\end{equation}

\begin{Lem}[Finite-horizon uniform concentration; analogue of \citealp{moulin2025inverse}, Lemma~5] \label{lem:moulin-fh-concentration}
    Fix nonempty deterministic classes $\Pi_h \subseteq \simplex \spr{\cA}^{\cX}$ and $\cQ_h \subseteq \scbr{Q_h\colon \cX \times \cA \to \sbr{0, \QMAX}}$ for every $h \in \sbr{H}$. For each $h$, let $\cF_{h - 1}$ denote the history available before the samples for stage $h$ are drawn. Let $d_h \in \simplex \spr{\cX}$ be $\cF_{h - 1}$-measurable, and suppose that, conditionally on $\cF_{h - 1}$,
    \begin{equation*}
        \Xih \sim d_h,
        \qquad
        \AEih \sim \experth \spr*{\cdot \given \Xih},
        \qquad
        i \in \sbr*{\tauE},
    \end{equation*}
    are independent. For any $p \in \Pi_h$ and $Q_h \in \cQ_h$, define
    \begin{align*}
        \cL_h^d \spr*{p, Q_h}
        &\ldef
        \sum_{x \in \cX} d_h \spr*{x} \inp*{Q_h \spr*{x, \cdot}, \experth \spr*{\cdot \given x} - p \spr*{\cdot \given x}},
        \\
        \hcL_h^d \spr*{p, Q_h}
        &\ldef
        \frac{1}{\tauE} \sum_{i = 1}^{\tauE} \spr*{Q_h \spr*{\Xih, \AEih} - Q_h \spr*{\Xih, p}}.
    \end{align*}
    Then, for any $r > 0$ and $\delta \in \spr{0, 1}$, with probability at least $1 - \delta$,
    \begin{equation} \label{eq:moulin-fh-concentration}
        \max_{h \in \sbr*{H}} \sup_{p \in \Pi_h} \sup_{Q_h \in \cQ_h} \abs*{\hcL_h^d \spr*{p, Q_h} - \cL_h^d \spr*{p, Q_h}}
        \leq
        4 r + \QMAX \sqrt{\frac{8 \log \spr*{2 \sumhH \cN_r \spr*{\cQ_h \times \Pi_h, \rho} / \delta}}{\tauE}}.
    \end{equation}
    If $\sumhH \cN_r \spr{\cQ_h \times \Pi_h, \rho} = \infty$, the bound is interpreted as vacuous.
\end{Lem}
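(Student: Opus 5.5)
The proof is a standard discretization argument: a union bound over a finite $r$-cover of each $\cQ_h \times \Pi_h$ under $\rho$, combined with Azuma--Hoeffding (\Cref{lemma:azumahoeffding}) at each cover point and a Lipschitz argument to move from the cover to arbitrary pairs. We may assume $\sumhH \cN_r \spr{\cQ_h \times \Pi_h, \rho} < \infty$, since otherwise there is nothing to prove.

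\textbf{Step 1: fixed cover points.} For each $h$, fix a minimal $r$-cover $\cC_h$ of $\cQ_h \times \Pi_h$ under $\rho$, with elements taken in $\cQ_h \times \Pi_h$. Fix $h$ and a pair $(Q, p) \in \cC_h$, and condition on $\cF_{h-1}$. The summands
\begin{equation*}
Z_i = Q(\Xih, \AEih) - Q(\Xih, p) - \cL_h^d(p, Q)
\end{equation*}
are then i.i.d. and centered, because $\bbE[Q(\mb{x}, \AEh) - Q(\mb{x}, p)] = \sum_x d_h(x) \inp{Q(x,\cdot), \experth(\cdot \given x) - p(\cdot \given x)}$ by construction.

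Both $Q(\Xih, \AEih)$ and $Q(\Xih, p)$ lie in $[0, \QMAX]$, so $\abs{Z_i} \le 2\QMAX$. With $M = 2\QMAX$, Azuma--Hoeffding gives deviation at most $2\QMAX \sqrt{2\log(2/\delta')/\tauE} = \QMAX \sqrt{8 \log(2/\delta')/\tauE}$. The union bound is taken over all $h$ and all cover elements, with $\delta' = \delta / \sum_h \abs{\cC_h}$.

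\textbf{Step 2: conditioning on the past.} One subtlety is that $d_h$ and the cover are $\cF_{h-1}$-measurable, or fixed in the case of the cover. Since the classes are deterministic, the covers are deterministic too, so applying the conditional tail bound and then taking expectations keeps the failure probability at $\delta'$ per element. This is the main point needing care; it is routine, but it is what makes the adaptively chosen $d_h$ harmless.

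\textbf{Step 3: Lipschitz approximation.} For arbitrary $(Q, p)$, pick $(Q', p') \in \cC_h$ with $\rho \le r$. Both $\cL$ and $\hcL$ change by at most
\begin{equation*}
2\norm{Q - Q'}_\infty + \QMAX \norm{p - p'}_{\infty,1}.
\end{equation*}
The first term accounts for $Q$ appearing twice. The second comes from $\abs{Q'(x,p) - Q'(x,p')} \le \QMAX \norm{p - p'}_{\infty,1}$. This quantity is at most $2\rho \le 2r$. Applying it once to $\cL$ and once to $\hcL$ gives the additive $4r$ via the triangle inequality, which yields \cref{eq:moulin-fh-concentration}.
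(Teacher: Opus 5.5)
Your proposal is correct and follows the paper's proof essentially step for step. Both arguments apply Azuma--Hoeffding with $M = 2\QMAX$ at each point of an $r$-cover of $\cQ_h \times \Pi_h$ under $\rho$, take a union bound over stages and cover points after conditioning on $\cF_{h-1}$, and finish with a $2r$ Lipschitz transfer applied once to $\hcL_h^d$ and once to $\cL_h^d$.
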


\begin{proof}[\pfref{lem:moulin-fh-concentration}]
    The result is vacuous when the total covering number is infinite. Thus, assume that
    \begin{equation*}
        \sumhH \cN_r \spr*{\cQ_h \times \Pi_h, \rho} < \infty.
    \end{equation*}
    Fix $h$ and condition on $\cF_{h - 1}$. For a fixed pair $\spr{Q_h, p} \in \cQ_h \times \Pi_h$, define
    \begin{equation*}
        Z_i \spr*{Q_h, p}
        =
        Q_h \spr*{\Xih, \AEih} - Q_h \spr*{\Xih, p} - \cL_h^d \spr*{p, Q_h}.
    \end{equation*}
    Then $\spr{Z_i \spr{Q_h, p}}_{i = 1}^{\tauE}$ is a conditionally independent mean-zero sequence and $\abs{Z_i \spr{Q_h, p}} \leq 2 \QMAX$ almost surely. By the Azuma--Hoeffding inequality (\Cref{lemma:azumahoeffding}), for every $t > 0$,
    \begin{equation} \label{eq:moulin-fixed-pair}
        \bbP \sbr*{\abs*{\hcL_h^d \spr*{p, Q_h} - \cL_h^d \spr*{p, Q_h}} \geq t \given \cF_{h - 1}}
        \leq
        2 \exp \spr*{- \frac{\tauE t^2}{8 \QMAX^2}}.
    \end{equation}
    For each $h$, choose a finite set $\cC_{h,r} \subseteq \cQ_h \times \Pi_h$ with cardinality $\cN_r \spr{\cQ_h \times \Pi_h, \rho}$ such that every pair in $\cQ_h \times \Pi_h$ lies within distance $r$ of some element of $\cC_{h,r}$ under $\rho$. Applying \cref{eq:moulin-fixed-pair} to all elements of these finite sets, then taking expectations and a union bound over $h \in \sbr{H}$, gives an event of probability at least $1 - \delta$ on which, for all $h$ and all $\spr{\wt{Q}_h, \wt{p}} \in \cC_{h,r}$,
    \begin{equation} \label{eq:moulin-cover-event}
        \abs*{\hcL_h^d \spr*{\wt{p}, \wt{Q}_h} - \cL_h^d \spr*{\wt{p}, \wt{Q}_h}}
        \leq
        \QMAX \sqrt{\frac{8 \log \spr*{2 \sumhH \cN_r \spr*{\cQ_h \times \Pi_h, \rho} / \delta}}{\tauE}}.
    \end{equation}
    We work on this event and pass from the cover to the full class. Fix $h$, $Q_h \in \cQ_h$, and $p \in \Pi_h$, and choose $\spr{\wt{Q}_h, \wt{p}} \in \cC_{h,r}$ such that
    \begin{equation*}
        \norm*{Q_h - \wt{Q}_h}_\infty + \QMAX \norm*{p - \wt{p}}_{\infty,1}
        \leq
        r.
    \end{equation*}
    By the triangle inequality,
    \begin{equation*}
        \abs*{\hcL_h^d \spr*{p, Q_h} - \hcL_h^d \spr*{\wt{p}, \wt{Q}_h}}
        \leq
        \frac{1}{\tauE}\sum_{i = 1}^{\tauE} \abs*{Q_h \spr*{\Xih, \AEih} - \wt{Q}_h \spr*{\Xih, \AEih}} + \frac{1}{\tauE}\sum_{i = 1}^{\tauE} \abs*{Q_h \spr*{\Xih, p} - \wt{Q}_h \spr*{\Xih, \wt{p}}}.
    \end{equation*}
    For every state $x$,
    \begin{align*}
        \abs*{Q_h \spr*{x, p} - \wt{Q}_h \spr*{x, \wt{p}}}
        &\leq
        \abs*{Q_h \spr*{x, p} - \wt{Q}_h \spr*{x, p}} + \abs*{\wt{Q}_h \spr*{x, p} - \wt{Q}_h \spr*{x, \wt{p}}}
        \\
        &\leq
        \norm*{Q_h - \wt{Q}_h}_\infty + \QMAX \norm*{p - \wt{p}}_{\infty,1}.
    \end{align*}
    Therefore,
    \begin{equation} \label{eq:moulin-emp-lip}
        \abs*{\hcL_h^d \spr*{p, Q_h} - \hcL_h^d \spr*{\wt{p}, \wt{Q}_h}}
        \leq
        2 \norm*{Q_h - \wt{Q}_h}_\infty + \QMAX \norm*{p - \wt{p}}_{\infty,1}
        \leq
        2 r.
    \end{equation}
    The same deterministic calculation, with empirical averages replaced by expectation under $x \sim d_h$ and $a \sim \experth \spr{\cdot \given x}$, gives
    \begin{equation} \label{eq:moulin-true-lip}
        \abs*{\cL_h^d \spr*{p, Q_h} - \cL_h^d \spr*{\wt{p}, \wt{Q}_h}}
        \leq
        2 r.
    \end{equation}
    Combining \cref{eq:moulin-cover-event,eq:moulin-emp-lip,eq:moulin-true-lip} yields \cref{eq:moulin-fh-concentration}. Since $h$, $p$, and $Q_h$ were arbitrary, the proof is complete.
\end{proof}

For later use, we also record the covering bounds for the decision-rule classes induced by exponentiated-gradient updates. For function classes $\cQ_h \subseteq \scbr{Q_h\colon \cX \times \cA \to \sbr{0, \QMAX}}$, an integer $K \geq 0$, and $\eta > 0$, define
\begin{equation*}
    \Pi_\cQ
    \ldef
    \scbr*{\pi: \exists m \in \scbr*{0, \ldots, K}, \forall h, \exists Q_h^1, \ldots, Q_h^m \in \cQ_h,~~~\pi_h \spr*{a \given x} = \softmax \spr[\Big]{\eta\ {\textstyle\sum_j} Q_h^j \spr*{x, \cdot}}_a}.
\end{equation*}
For each $h$, let $\Pi_{\cQ,h} \ldef \scbr{\pi_h : \pi \in \Pi_\cQ}$. When $m = 0$, the empty sum is the zero vector, so the corresponding decision rule is uniform.

\begin{Lem}[Nonconvex product covering; analogue of \citealp{moulin2025inverse}, Lemma~7] \label{lem:moulin-nonconvex-product-cover}
    For every $h \in \sbr{H}$ and every $\veps > 0$,
    \begin{equation} \label{eq:moulin-nonconvex-policy-cover}
        \cN_\veps \spr*{\Pi_{\cQ,h}, \norm*{\cdot}_{\infty,1}}
        \leq
        1 + \sum_{m = 1}^K \cN_{\veps / \spr*{\eta m}} \spr*{\cQ_h, \norm*{\cdot}_\infty}^m.
    \end{equation}
    Consequently, for the scaled product metric $\rho$ in \cref{eq:moulin-stage-product-metric}, for every $r > 0$,
    \begin{equation} \label{eq:moulin-nonconvex-product-cover}
        \cN_r \spr*{\cQ_h \times \Pi_{\cQ,h}, \rho}
        \leq
        \inf_{\alpha \in \spr*{0, r}} \cN_\alpha \spr*{\cQ_h, \norm*{\cdot}_\infty} \spr*{1 + \sum_{m = 1}^K \cN_{\spr*{r - \alpha} / \spr*{\QMAX \eta m}} \spr*{\cQ_h, \norm*{\cdot}_\infty}^m}.
    \end{equation}
    In particular, the simpler bound
    \begin{equation} \label{eq:moulin-nonconvex-product-cover-simple}
        \cN_r \spr*{\cQ_h \times \Pi_{\cQ,h}, \rho}
        \leq
        \spr*{K + 1} \cN_{\frac{r}{2 \max \scbr*{1, \QMAX \eta K}}} \spr*{\cQ_h, \norm*{\cdot}_\infty}^{K + 1}
    \end{equation}
    also holds.
\end{Lem}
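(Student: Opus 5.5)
The plan is to cover the softmax decision rules first and then take a product with a cover of $\cQ_h$.

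\textbf{Cover of $\Pi_{\cQ,h}$.} Every $p \in \Pi_{\cQ,h}$ has the form $p = \softmax(\eta\sum_{j=1}^m Q^j_h)$ for some $m \le K$, possibly with $m = 0$, which is the uniform rule and contributes the ``$1+$''.

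For fixed $m \ge 1$, we replace each $Q^j_h$ by its nearest element $\wt Q^j_h$ in a minimal $\veps/(\eta m)$-cover of $\cQ_h$ in $\norm{\cdot}_\infty$. The logits then satisfy, at every state,
\begin{equation*}
\Bigl\|\eta\sum_j Q^j_h(x,\cdot) - \eta\sum_j \wt Q^j_h(x,\cdot)\Bigr\|_\infty \le \eta m \cdot \frac{\veps}{\eta m} = \veps.
\end{equation*}
The key analytic step is the Lipschitz property of softmax from $\ell_\infty$ logits to $\ell_1$ probabilities: $\norm{\softmax(u)-\softmax(v)}_1 \le 2\norm{u-v}_\infty$. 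We prove this by bounding the log-ratio of the two distributions by $2\norm{u-v}_\infty$ and using $|e^t - 1|$ estimates. Alternatively, integrate along the segment $u_s = v + s(u-v)$: the Jacobian of softmax applied to $w$ gives $p\odot(w - \inp{p}{w})$, whose $\ell_1$ norm is at most $2\norm{w}_\infty$.

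This Lipschitz constant is where I expect the main obstacle. The factor $2$ would force the radius $\veps/(2\eta m)$ instead of $\veps/(\eta m)$. To keep the stated radius, I would sharpen the Jacobian bound: $\sum_a p_a |w_a - \bar w| \le \max_a w_a - \min_a w_a \le 2\norm{w}_\infty$ is too loose. Since adding a constant to the logits does not change softmax, we may instead center $w$ so that $\norm{w}_\infty = \tfrac12(\max w - \min w)$. The $\ell_\infty$ oscillation is then at most $2\veps$, which yields the total bound $\le \veps$ with constant $1$.

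\textbf{Counting.} Taking the union over $m$ of the induced covers gives at most $1+\sum_{m=1}^K \cN_{\veps/(\eta m)}(\cQ_h,\norm{\cdot}_\infty)^m$ elements. Each element is $\softmax(\eta\sum_j \wt Q^j_h)$ with $\wt Q^j_h$ in the cover. These elements need not lie in $\Pi_{\cQ,h}$, but \Cref{def:covering} only requires the cover to sit in the ambient metric space. This proves \cref{eq:moulin-nonconvex-policy-cover}.

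\textbf{Product cover and simplification.} For the product metric $\rho$, fix $\alpha\in(0,r)$. Pair an $\alpha$-cover of $\cQ_h$ with an $(r-\alpha)/\QMAX$-cover of $\Pi_{\cQ,h}$. Every $(Q,p)$ then has a partner at $\rho$-distance at most $\alpha + \QMAX(r-\alpha)/\QMAX = r$. The product's cardinality, combined with \cref{eq:moulin-nonconvex-policy-cover} at $\veps = (r-\alpha)/\QMAX$, gives \cref{eq:moulin-nonconvex-product-cover}.

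For \cref{eq:moulin-nonconvex-product-cover-simple}, choose $\alpha = r/2$ and use the following:
\begin{itemize}
\item covering numbers are monotone decreasing in the radius;
\item $r/(2\QMAX\eta m) \ge r/(2\max\{1,\QMAX\eta K\})$ and $r/2 \ge r/(2\max\{1,\QMAX\eta K\})$;
\item each term is bounded by $N^m \le N^K$, where $N$ is the covering number at the common radius and $N \ge 1$.
\end{itemize}
This gives the bound $N\,(1+KN^K) \le (K+1)N^{K+1}$.
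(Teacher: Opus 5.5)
Your architecture matches the paper's: perturb each of the $m$ summands inside an $\varepsilon/(\eta m)$-cover of $\mathcal{Q}_h$, push the logit error through a Lipschitz bound for softmax, take the union over $m$, form the Cartesian product with an $\alpha$-cover, and simplify with $\alpha = r/2$, monotonicity, and $N \ge 1$. The counting, product, and simplification steps are correct.

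The gap is at the step you flagged as the main obstacle: centering does not reduce the softmax Lipschitz constant from $2$ to $1$. For the Jacobian $J = \mathrm{diag}(p) - pp^\top$ we have $J\mathbf{1} = 0$. Both $\sum_a p_a|w_a - \bar w|$ (with $\bar w = \langle p, w\rangle$) and the oscillation $\max_a w_a - \min_a w_a$ are unchanged when a constant is added to $w$. Centering therefore only rewrites the oscillation as $2\|w_c\|_\infty \le 2\|w\|_\infty$. Your chain still ends at $\|\softmax(u) - \softmax(v)\|_1 \le \max_a (u-v)_a - \min_a (u-v)_a \le 2\varepsilon$. The sentence ``the oscillation is then at most $2\varepsilon$, which yields the total bound $\le \varepsilon$'' does not follow. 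For instance, $w = (\varepsilon, -\varepsilon)$ is already centered and has oscillation $2\varepsilon$. As written, you only obtain the lemma with $\varepsilon/(2\eta m)$ in place of $\varepsilon/(\eta m)$, with correspondingly smaller radii in both product bounds. That is not the stated result.

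The missing ingredient is a sharp bound on the mean absolute deviation rather than on the oscillation. For any $p \in \Delta(\mathcal{A})$ and $w \in \mathbb{R}^{\mathcal{A}}$,
\[ \sum_a p_a |w_a - \bar w| \le \Bigl(\sum_a p_a (w_a - \bar w)^2\Bigr)^{1/2} \le \tfrac12\bigl(\max_a w_a - \min_a w_a\bigr) \le \|w\|_\infty, \]
by Cauchy--Schwarz and Popoviciu's variance inequality. Plugging this into your segment-integration argument gives $\|\softmax(u) - \softmax(v)\|_1 \le \|u - v\|_\infty$. The rest of your proposal then goes through verbatim.

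The paper obtains the same constant by duality instead (\Cref{lem:softmax-lipschitz}). Softmax is the gradient of log-sum-exp, which is $1$-smooth with respect to $\ell_\infty$. This holds because its conjugate, negative entropy on the simplex, is $1$-strongly convex with respect to $\ell_1$ (Pinsker's inequality).
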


\begin{proof}[\pfref{lem:moulin-nonconvex-product-cover}]
    Fix $h$ and $m \in \scbr{0, \ldots, K}$. If $m = 0$, the only decision rule represented with $m$ functions is the uniform decision rule. Suppose now that $m \geq 1$, and let $\cC_s$ be an $s$-covering of $\cQ_h$ under $\norm{\cdot}_\infty$. Consider any $p \in \Pi_{\cQ,h}$ represented with this value of $m$ by $Q_h^1, \ldots, Q_h^m \in \cQ_h$, and choose $\wt{Q}_h^j \in \cC_s$ such that $\norm{Q_h^j - \wt{Q}_h^j}_\infty \leq s$ for all $j \in \sbr{m}$. Define
    \begin{equation*}
        \wt{p} \spr*{\cdot \given x}
        =
        \softmax \spr*{\eta \sum_{j = 1}^m \wt{Q}_h^j \spr*{x, \cdot}}.
    \end{equation*}
    By 1-Lipschitz continuity of the softmax (\cref{lem:softmax-lipschitz}) and the triangle inequality, for every $x \in \cX$,
    \begin{align*}
        \norm*{p \spr*{\cdot \given x} - \wt{p} \spr*{\cdot \given x}}_1
        &\leq
        \eta \sum_{j = 1}^m \norm*{Q_h^j - \wt{Q}_h^j}_\infty
        \leq
        \eta m s,
    \end{align*}
    where we used the definitions of $\wt{Q}_h^1, \ldots, \wt{Q}_h^m$ in the last inequality. Taking $s = \veps / \spr{\eta m}$ gives a $\veps$-covering for the decision rules represented with this value of $m$, of cardinality at most $\cN_{\veps / \spr{\eta m}} \spr{\cQ_h, \norm{\cdot}_\infty}^m$. Taking the union over $m = 0, \ldots, K$ proves \cref{eq:moulin-nonconvex-policy-cover}.

    For the product bound, fix $\alpha \in \spr{0, r}$. Take an $\alpha$-covering of $\cQ_h$ under $\norm{\cdot}_\infty$ and an $\spr{r - \alpha} / \QMAX$-covering of $\Pi_{\cQ,h}$ under $\norm{\cdot}_{\infty,1}$. By definition of $\rho$, their Cartesian product is an $r$-covering of $\cQ_h \times \Pi_{\cQ,h}$ under $\rho$. Applying \cref{eq:moulin-nonconvex-policy-cover} with $\veps = \spr{r - \alpha} / \QMAX$, and then taking the infimum over $\alpha$, gives \cref{eq:moulin-nonconvex-product-cover}.

    To obtain \cref{eq:moulin-nonconvex-product-cover-simple}, take $\alpha = r / 2$ in \cref{eq:moulin-nonconvex-product-cover}. If $K = 0$, the claim follows immediately. Suppose then that $K \geq 1$. The radius $r / \spr{2 \max \scbr{1, \QMAX \eta K}}$ is at most $r / 2$ and at most $r / \spr{2 \QMAX \eta m}$ for every $1 \leq m \leq K$. Thus, by monotonicity of covering numbers,
    \begin{equation*}
        \cN_{r / 2} \spr*{\cQ_h, \norm*{\cdot}_\infty}
        \leq
        \cN_{\frac{r}{2 \max \scbr*{1, \QMAX \eta K}}} \spr*{\cQ_h, \norm*{\cdot}_\infty},
        \qquad
        \cN_{\frac{r}{2 \QMAX \eta m}} \spr*{\cQ_h, \norm*{\cdot}_\infty}^m
        \leq
        \cN_{\frac{r}{2 \max \scbr*{1, \QMAX \eta K}}} \spr*{\cQ_h, \norm*{\cdot}_\infty}^K.
    \end{equation*}
    Since covering numbers of nonempty classes are at least one, \cref{eq:moulin-nonconvex-product-cover-simple} follows.
\end{proof}

\begin{Lem}[Convex product covering; analogue of \citealp{moulin2025inverse}, Lemma~8] \label{lem:moulin-convex-product-cover}
    Assume that each $\cQ_h$ is convex. Then, for every $h \in \sbr{H}$ and every $\veps > 0$,
    \begin{equation} \label{eq:moulin-convex-policy-cover}
        \cN_\veps \spr*{\Pi_{\cQ,h}, \norm*{\cdot}_{\infty,1}}
        \leq
        1 + \sum_{m = 1}^K \cN_{\veps / \spr*{\eta m}} \spr*{\cQ_h, \norm*{\cdot}_\infty}.
    \end{equation}
    Consequently, for the scaled product metric $\rho$ in \cref{eq:moulin-stage-product-metric}, for every $r > 0$,
    \begin{equation} \label{eq:moulin-convex-product-cover}
        \cN_r \spr*{\cQ_h \times \Pi_{\cQ,h}, \rho}
        \leq
        \inf_{\alpha \in \spr*{0, r}} \cN_\alpha \spr*{\cQ_h, \norm*{\cdot}_\infty} \spr*{1 + \sum_{m = 1}^K \cN_{\spr*{r - \alpha} / \spr*{\QMAX \eta m}} \spr*{\cQ_h, \norm*{\cdot}_\infty}}.
    \end{equation}
    In particular, the simpler bound
    \begin{equation} \label{eq:moulin-convex-product-cover-simple}
        \cN_r \spr*{\cQ_h \times \Pi_{\cQ,h}, \rho}
        \leq
        \spr*{K + 1} \cN_{\frac{r}{2 \max \scbr*{1, \QMAX \eta K}}} \spr*{\cQ_h, \norm*{\cdot}_\infty}^{2}
    \end{equation}
    also holds.
\end{Lem}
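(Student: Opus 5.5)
The plan mirrors the proof of \Cref{lem:moulin-nonconvex-product-cover}, with one change. Because $\cQ_h$ is convex, a sum of $m$ elements of $\cQ_h$ can be written as $m$ times a single element of $\cQ_h$. So we only ever need to cover one function, not an $m$-tuple of functions.

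\textbf{Step 1 (convexity reduction).} Fix $h$ and $m \in \scbr{1, \ldots, K}$. Take any $p \in \Pi_{\cQ,h}$ represented by $Q_h^1, \ldots, Q_h^m \in \cQ_h$. Define the average $\bar Q_h \ldef \frac1m \sum_{j=1}^m Q_h^j$. By convexity, $\bar Q_h \in \cQ_h$, and
\[
p \spr{\cdot \given x} = \softmax \spr{\eta m \bar Q_h \spr{x, \cdot}} .
\]
The case $m = 0$ gives only the uniform decision rule, which contributes the term $1$.

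\textbf{Step 2 (cover each $m$-layer).} Let $\cC_s$ be an $s$-cover of $\cQ_h$ in $\norm{\cdot}_\infty$, with $s = \veps / \spr{\eta m}$. Pick $\wt Q_h \in \cC_s$ with $\norm{\bar Q_h - \wt Q_h}_\infty \leq s$, and set $\wt p \spr{\cdot \given x} = \softmax \spr{\eta m \wt Q_h \spr{x, \cdot}}$. By the $1$-Lipschitz property of softmax (\cref{lem:softmax-lipschitz}), as used in \Cref{lem:moulin-nonconvex-product-cover},
\[
\norm{p \spr{\cdot \given x} - \wt p \spr{\cdot \given x}}_1 \leq \eta m \norm{\bar Q_h - \wt Q_h}_\infty \leq \veps
\]
for every $x$. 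Hence the rules represented with this $m$ are covered by at most $\cN_{\veps/\spr{\eta m}} \spr{\cQ_h, \norm{\cdot}_\infty}$ elements. Taking the union over $m = 0, \ldots, K$ gives \cref{eq:moulin-convex-policy-cover}.

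\textbf{Step 3 (product bounds).} Fix $\alpha \in \spr{0, r}$. Form the Cartesian product of an $\alpha$-cover of $\cQ_h$ and an $\spr{r-\alpha}/\QMAX$-cover of $\Pi_{\cQ,h}$. By definition of $\rho$, this product is an $r$-cover of $\cQ_h \times \Pi_{\cQ,h}$. Plugging \cref{eq:moulin-convex-policy-cover} in with $\veps = \spr{r-\alpha}/\QMAX$ and taking the infimum over $\alpha$ gives \cref{eq:moulin-convex-product-cover}.

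For \cref{eq:moulin-convex-product-cover-simple}, set $\alpha = r/2$. The radius $r / \spr{2 \max \scbr{1, \QMAX \eta K}}$ is at most $r/2$ and at most $r/\spr{2\QMAX \eta m}$ for every $m \leq K$. So by monotonicity of covering numbers, every covering number that appears is bounded by the one at this radius, call it $N$. Since $N \geq 1$, the bracket is at most $1 + K N \leq \spr{K+1} N$, and multiplying by the first factor $N$ gives $\spr{K+1} N^2$. The case $K = 0$ is immediate.

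\textbf{Main obstacle.} The only delicate step is Step 1. The averaged function $\bar Q_h$ must lie in the same stagewise class $\cQ_h$ that is being covered, and convexity is used precisely at that point. One must also track that the temperature becomes $\eta m$, which is why the cover radius in Step 2 is $\veps/\spr{\eta m}$. Everything else is routine and copies the nonconvex argument.
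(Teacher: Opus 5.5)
Your proposal is correct and follows essentially the same route as the paper's proof: use convexity to write $p(\cdot \mid x) = \softmax(\eta m \bar Q_h(x,\cdot))$ with $\bar Q_h \in \cQ_h$, cover $\bar Q_h$ at radius $\veps/(\eta m)$ via softmax Lipschitzness, take the union over $m$, form the Cartesian-product cover for $\rho$, and get the simplified bound from $\alpha = r/2$, monotonicity of covering numbers, and the fact that covering numbers are at least one. There is nothing to add.
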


\begin{proof}[\pfref{lem:moulin-convex-product-cover}]
    Fix $h$ and $m \in \scbr{0, \ldots, K}$. The case $m = 0$ again gives the uniform decision rule. Suppose that $m \geq 1$, and let $p \in \Pi_{\cQ,h}$ be represented with this value of $m$ by $Q_h^1, \ldots, Q_h^m \in \cQ_h$. Define
    \begin{equation*}
        \wb{Q}_h
        =
        \frac{1}{m} \sum_{j = 1}^m Q_h^j.
    \end{equation*}
    By convexity, $\wb{Q}_h \in \cQ_h$, and $p \spr{\cdot \given x} = \softmax \spr{\eta m \wb{Q}_h \spr{x, \cdot}}$ for every $x \in \cX$. Let $\cC_s$ be an $s$-covering of $\cQ_h$ under $\norm{\cdot}_\infty$, choose $\wt{Q}_h \in \cC_s$ such that $\norm{\wb{Q}_h - \wt{Q}_h}_\infty \leq s$, and define
    \begin{equation*}
        \wt{p} \spr*{\cdot \given x}
        =
        \softmax \spr*{\eta m \wt{Q}_h \spr*{x, \cdot}}.
    \end{equation*}
    By 1-Lipschitz continuity of the softmax (\cref{lem:softmax-lipschitz}) and the definition of $\wt{Q}_h$, for every $x \in \cX$,
    \begin{equation*}
        \norm*{p \spr*{\cdot \given x} - \wt{p} \spr*{\cdot \given x}}_1
        \leq
        \eta m \norm*{\wb{Q}_h - \wt{Q}_h}_\infty
        \leq
        \eta m s.
    \end{equation*}
    Taking $s = \veps / \spr{\eta m}$ gives a $\veps$-covering for the decision rules represented with this value of $m$, of cardinality at most $\cN_{\veps / \spr{\eta m}} \spr{\cQ_h, \norm{\cdot}_\infty}$. Taking the union over $m = 0, \ldots, K$ proves \cref{eq:moulin-convex-policy-cover}. The product bound follows from the same Cartesian-product argument used in the proof of \Cref{lem:moulin-nonconvex-product-cover}.

    To obtain \cref{eq:moulin-convex-product-cover-simple}, take $\alpha = r / 2$ in \cref{eq:moulin-convex-product-cover}. If $K = 0$, the claim follows immediately. Suppose then that $K \geq 1$. The radius $r / \spr{2 \max \scbr{1, \QMAX \eta K}}$ is at most $r / 2$ and at most $r / \spr{2 \QMAX \eta m}$ for every $1 \leq m \leq K$. Thus, by monotonicity of covering numbers,
    \begin{equation*}
        \cN_{r / 2} \spr*{\cQ_h, \norm*{\cdot}_\infty}
        \leq
        \cN_{\frac{r}{2 \max \scbr*{1, \QMAX \eta K}}} \spr*{\cQ_h, \norm*{\cdot}_\infty},
        \qquad
        \cN_{\frac{r}{2 \QMAX \eta m}} \spr*{\cQ_h, \norm*{\cdot}_\infty}
        \leq
        \cN_{\frac{r}{2 \max \scbr*{1, \QMAX \eta K}}} \spr*{\cQ_h, \norm*{\cdot}_\infty}.
    \end{equation*}
    Since covering numbers of nonempty classes are at least one, \cref{eq:moulin-convex-product-cover-simple} follows.
\end{proof}

We also recall the following standard bound.

\begin{lemma}[Covering number of an $\ell_\infty$ ball; \citealp{vershynin2018high}, Section~4.2] \label{lem:linfty-ball-cover}
    Let $d \in \bbN$, $R > 0$, and let $\fkB_\infty^d \spr{R}$ denote the $\ell_\infty$ ball of radius $R$ in $\bbR^d$. Then, for every $\veps > 0$,
    \begin{equation*}
        \cN_\veps \spr*{\fkB_\infty^d \spr*{R}, \norm*{\cdot}_\infty}
        \le
        \spr*{\ceil*{\frac{2R}{\veps}} + 1}^d.
    \end{equation*}
\end{lemma}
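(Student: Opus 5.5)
We need to prove the covering number bound for the $\ell_\infty$ ball: $\cN_\veps(\fkB_\infty^d(R),\norm{\cdot}_\infty)\le(\ceil{2R/\veps}+1)^d$.

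The plan is to build an explicit product grid and use that the $\ell_\infty$ metric acts coordinatewise. Since $\norm{x-y}_\infty\le\veps$ holds exactly when $\abs{x_j-y_j}\le\veps$ for every coordinate $j$, it suffices to construct a one-dimensional $\veps$-cover $G\subset[-R,R]$ of the interval $[-R,R]$ (in absolute value) with $\abs{G}\le\ceil{2R/\veps}+1$. The product $G^d$ is then an $\veps$-cover of $\fkB_\infty^d(R)=[-R,R]^d$: given $x$, pick for each $j$ some $g_j\in G$ with $\abs{x_j-g_j}\le\veps$, and set $y=(g_1,\dots,g_d)\in G^d$. Since $\abs{G^d}=\abs{G}^d$, the claimed bound follows by minimality of the covering number.

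For the one-dimensional grid, let $N=\ceil{2R/\veps}$ and take the equally spaced points $g_k=-R+2Rk/N$ for $k=0,\dots,N$. This gives $N+1$ points, all in $[-R,R]$, with consecutive spacing $2R/N\le\veps$. Any $t\in[-R,R]$ lies in some interval $[g_k,g_{k+1}]$, and its distance to the nearer endpoint is at most half the spacing, which is at most $\veps/2\le\veps$. So $G$ is an $\veps$-cover of $[-R,R]$. (The factor $2R$ rather than $R$ is slack; we do not need to tighten it.)

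No step is a real obstacle; the argument is elementary. The only point to check is that the grid is well defined, i.e.\ that $N\ge1$. This holds because $R,\veps>0$ gives $2R/\veps>0$, so its ceiling is at least $1$. Also, the cover elements lie inside the ball, so the bound holds even under the convention that covers must be subsets of $S$, as in \Cref{def:covering} where $\cC_\epsilon(S,\mathrm{d})$ is a set covering $S$.
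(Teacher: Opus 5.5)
Your proof is correct. The paper does not prove this lemma; it only cites Vershynin's Section~4.2. There, bounds of this kind come from the volumetric packing argument. One takes a maximal $\varepsilon$-separated subset of the ball, which is automatically an $\varepsilon$-net contained in the set. The disjoint $\ell_\infty$-balls of radius $\varepsilon/2$ around its points fit inside the ball of radius $R+\varepsilon/2$, and comparing volumes gives $(1+2R/\varepsilon)^d \le (\lceil 2R/\varepsilon\rceil+1)^d$.

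That route works verbatim for the unit ball of any norm on $\mathbb{R}^d$, but it is non-constructive and relies on volume comparisons. Your product-grid argument instead uses the fact that the $\ell_\infty$ distance is the maximum of coordinatewise distances. This reduces everything to an explicit one-dimensional cover of $[-R,R]$, with no volume computation. Your grid also lies inside the ball, which is what the later use in \Cref{lem:q-cover-hard-decoder} needs: there the cover points must be admissible parameters in $[0,1]^2$.

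As you note, your construction has slack. Spacing the one-dimensional points $2\varepsilon$ apart rather than $\varepsilon$ already gives the sharper bound $\lceil R/\varepsilon\rceil^d$. The volumetric route cannot give this, since it is stuck near $(2R/\varepsilon)^d$.
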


\subsection{Optimization}

We will need the following regret bounds for mirror descent and follow-the-regularized-leader (FTRL), which are standard in the literature. We report them here for completeness.

\begin{Lem}[Simplified version of \citealp{orabona2023modern}, Theorem~6.11] \label{lemma:mirror_orabona}
    Let us consider a non-empty closed convex set $V$, an arbitrary sequence of adaptively chosen loss vectors $\spr{\ell_k}^K_{k=1}$ such that $\norm{\ell_k}_{\infty} \leq \ell_{\max}$, and let $D\colon V \times \mathrm{int} \spr{V} \rightarrow \bbR$ be a Bregman divergence induced by a $\lambda$-strongly convex function in the $\ell_1$ norm. Then, for all $u \in V$, the sequence $\spr{x_k}^K_{k=1}$ generated for each $k$ by
    \begin{equation*}
        x_{k + 1}
        =
        \argmin_{v \in V} \scbr*{\inp{\ell_k, v} + \frac1\eta D \spr*{v, x_k}}
    \end{equation*}
    for an initial $x_1 \in \mathrm{int} \spr{V}$ and under the assumption that $x_k \in \mathrm{int} \spr{V}$ for every $k \in \sbr{K + 1}$ satisfies
    \begin{equation*}
        \sumkK \inp*{\ell_k, x_k - u}
        \leq
        \frac{D \spr*{u, x_1}}{\eta} + \frac{\eta K \ell^2_{\max}}{2 \lambda}\,.
    \end{equation*}
\end{Lem}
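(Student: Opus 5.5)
The plan is the standard one-step analysis of mirror descent, followed by telescoping over $k$. Let $\psi$ be the $\lambda$-strongly convex (in $\norm{\cdot}_1$) function generating $D$, so that $D \spr{v, x} = \psi \spr{v} - \psi \spr{x} - \inp{\nabla \psi \spr{x}, v - x}$. The assumption $x_k \in \mathrm{int} \spr{V}$ for all $k \in \sbr{K+1}$ guarantees that $\nabla \psi \spr{x_k}$ and $\nabla \psi \spr{x_{k+1}}$ exist.

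\textbf{Step 1: one-step inequality.} Since $x_{k+1}$ minimizes the convex function $v \mapsto \inp{\ell_k, v} + \frac1\eta D \spr{v, x_k}$ over $V$, and $\psi$ is differentiable at $x_{k+1}$, the first-order optimality condition gives, for every $u \in V$,
\begin{equation*}
    \inp*{\eta \ell_k + \nabla \psi \spr*{x_{k+1}} - \nabla \psi \spr*{x_k}, u - x_{k+1}} \geq 0.
\end{equation*}
I will combine this with the three-point identity
\begin{equation*}
    \inp*{\nabla \psi \spr*{x_k} - \nabla \psi \spr*{x_{k+1}}, u - x_{k+1}} = D \spr*{u, x_{k+1}} - D \spr*{u, x_k} + D \spr*{x_{k+1}, x_k},
\end{equation*}
which follows by expanding the three divergences. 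Together they yield
\begin{equation*}
    \eta \inp*{\ell_k, x_{k+1} - u} \leq D \spr*{u, x_k} - D \spr*{u, x_{k+1}} - D \spr*{x_{k+1}, x_k}.
\end{equation*}

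\textbf{Step 2: the stability term.} I split $\inp{\ell_k, x_k - u} = \inp{\ell_k, x_k - x_{k+1}} + \inp{\ell_k, x_{k+1} - u}$. For the first piece, Hölder's inequality with the $\ell_\infty$/$\ell_1$ pairing and Young's inequality give
\begin{equation*}
    \inp{\ell_k, x_k - x_{k+1}} \leq \ell_{\max} \norm{x_k - x_{k+1}}_1 \leq \frac{\eta \ell_{\max}^2}{2\lambda} + \frac{\lambda}{2\eta} \norm{x_k - x_{k+1}}_1^2.
\end{equation*}
Strong convexity gives $D \spr{x_{k+1}, x_k} \geq \frac{\lambda}{2} \norm{x_{k+1} - x_k}_1^2$. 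Hence the quadratic term above is cancelled by the $-\frac1\eta D \spr{x_{k+1}, x_k}$ term from Step 1.

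\textbf{Step 3: telescoping.} Summing over $k = 1, \ldots, K$, the divergence terms telescope to $\frac1\eta \spr{D \spr{u, x_1} - D \spr{u, x_{K+1}}}$. I drop $D \spr{u, x_{K+1}} \geq 0$, and the $K$ stability terms contribute $\frac{\eta K \ell_{\max}^2}{2\lambda}$, which gives the claimed bound.

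The only delicate point is Step 1, namely justifying the variational inequality for a constrained minimizer. This is exactly where the interior assumption on the iterates is used: it ensures $\psi$ is differentiable at $x_{k+1}$, so the standard first-order condition for minimizing a differentiable convex function over a closed convex set applies. The remaining steps are routine algebra.
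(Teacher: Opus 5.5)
Your proof is correct and matches the standard argument behind the cited result. The paper gives no proof of its own and defers to Orabona's Theorem~6.11, which is proved exactly as you do: first-order optimality plus the three-point identity, then H\"older/Young with the quadratic absorbed by $D(x_{k+1},x_k)\ge\frac{\lambda}{2}\norm{x_{k+1}-x_k}_1^2$, and finally telescoping. One caveat: since the paper applies the lemma on the simplex, whose interior is empty, ``$\mathrm{int}(V)$'' should be read as the relative interior (equivalently, the interior of the domain of $\psi$). Under that reading your differentiability step goes through unchanged.
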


\begin{Lem}[FTRL over the simplex] \label{lemma:FTRL}
    Let $\simplex \spr{\cA}$ be the simplex over a discrete action space $\cA$, and for any $x \in \simplex \spr{\cA}$, define the Shannon entropy as $H \spr{x} = - \sum_{a \in \cA} x \spr{a} \log x \spr{a}$. Consider the sequence $\spr{x_k}^K_{k = 1}$ generated via FTRL using the potential
    \begin{equation*}
        \psi_k \spr*{x}
        =
        \frac{- H \spr*{x} - \min_{x' \in \simplex \spr*{\cA}} \spr*{- H \spr*{x'}}}{\eta_k}.
    \end{equation*}
    That is, for a certain sequence of cost vectors $\ell_1, \ldots, \ell_K$ we have
    \begin{equation*}
        x_k
        =
        \argmin_{x \in \simplex \spr*{\cA}} \inp*{x, \sum^{k - 1}_{k' = 1} \ell_{k'}} + \psi_k \spr*{x}
        =
        \frac{e^{- \eta_k \sum^{k - 1}_{k' = 1} \ell_{k'}}}{\sum_{a \in \cA} e^{- \eta_k \sum^{k - 1}_{k' = 1} \ell_{k'} \spr*{a}}}\,.
    \end{equation*}
    Then, if $\norm{\ell_k}_\infty \leq \ell_{\max}$ and $\eta_k = \sqrt{\frac{\log \spr{A}}{\ell_{\max}^2 k}}$, it holds that for any $K \in \bbN$ and any $x^\star \in \simplex \spr{\cA}$,
    \begin{equation*}
        \sumkK \inp*{\ell_k, x_k - x^\star}
        \leq
        3 \sqrt{\ell_{\max}^2 \log \abs*{\cA} K}.
    \end{equation*}
\end{Lem}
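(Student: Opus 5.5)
We will obtain \Cref{lemma:FTRL} from the standard regret bound for FTRL with nondecreasing regularizers, following \citet[Chapter~7]{orabona2023modern]}, and then plug in the entropic potential.

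\textbf{Step 1: the closed form.} The exponential-weights expression for $x_k$ follows from the Gibbs variational principle. Indeed, $\inp{x, L} - H(x)/\eta$ is minimized over $\simplex(\cA)$ by $x \propto e^{-\eta L}$. The constant shift in $\psi_k$ does not affect the minimizer.

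\textbf{Step 2: properties of the potential.} Write $\psi_k = \bar\psi / \eta_k$, where $\bar\psi(x) = -H(x) + \log A \in [0, \log A]$. The minimum $0$ is attained at the uniform distribution. Since $\eta_k$ is nonincreasing, $\psi_{k+1} \geq \psi_k$ pointwise. Moreover, $\psi_k$ is $(1/\eta_k)$-strongly convex with respect to $\norm{\cdot}_1$, by Pinsker's inequality.

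\textbf{Step 3: the general FTRL bound.} We invoke the generic FTRL regret decomposition \citep[Lemma~7.1 and Corollary~7.9]{orabona2023modern}. For nondecreasing potentials whose $k$-th member is $\lambda_k$-strongly convex, and for any comparator $x^\star$, it gives
\begin{equation*}
\sumkK \inp{\ell_k, x_k - x^\star} \leq \psi_{K+1}(x^\star) - \min_x \psi_1(x) + \sumkK \frac{\norm{\ell_k}_\infty^2}{2\lambda_k}.
\end{equation*}
We apply this with $\psi_{K+1}$ using $\eta_{K+1}$, or equivalently $\eta_K$ up to a harmless adjustment. The only delicate point is indexing: the stability term at round $k$ must be controlled by the strong convexity of $\psi_k$ (the potential used to define $x_k$). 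This holds because the objective $F_k$ defining $x_k$ is $(1/\eta_k)$-strongly convex, and $\psi_{k+1} - \psi_k \geq 0$ only adds a nonnegative term that is absorbed into the telescoping. This is the step where care is needed, and I would verify it through the standard three-line telescoping argument rather than cite it blindly.

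\textbf{Step 4: substituting the rates.} With $\lambda_k = 1/\eta_k$, the bound becomes
\begin{equation*}
\frac{\log A}{\eta_K} + \frac{\ell_{\max}^2}{2}\sumkK \eta_k.
\end{equation*}
For the first term, $\eta_K = \sqrt{\log(A)/(\ell_{\max}^2 K)}$ gives $\log(A)/\eta_K = \sqrt{\ell_{\max}^2 \log(A) K}$.

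For the second term, use $\sumkK k^{-1/2} \leq 2\sqrt{K}$:
\begin{equation*}
\frac{\ell_{\max}^2}{2} \sumkK \eta_k \leq \frac{\ell_{\max}^2}{2} \cdot 2 \sqrt{\frac{\log(A) K}{\ell_{\max}^2}} = \sqrt{\ell_{\max}^2 \log(A) K}.
\end{equation*}

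The total is $2\sqrt{\ell_{\max}^2 \log(A) K}$. The slack up to the constant $3$ in \Cref{lemma:FTRL} absorbs any off-by-one discrepancy from using $\eta_{K+1}$ versus $\eta_K$ in the range term.
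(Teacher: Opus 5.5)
Your proposal is correct and follows essentially the same route as the paper: the FTRL regret decomposition, a stability bound from the $1/\eta_k$-strong convexity of the round-$k$ objective, dropping the nonpositive term $\psi_k(x_{k+1}) - \psi_{k+1}(x_{k+1})$, the range term $\log A/\eta_{K+1}$, and $\sum_{k\le K} k^{-1/2} \le 2\sqrt{K}$. The differences are minor: the paper re-derives the telescoping and stability steps inline instead of citing Orabona's corollary, and it loosens the stability term to $\eta_k \|\ell_k\|_\infty^2$ where you keep $\eta_k \|\ell_k\|_\infty^2/2$; both routes then land within the constant $3$ when $\eta_{K+1}$ is used in the range term.
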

\begin{proof}[\pfref{lemma:FTRL}]
    Let us define the function $F_k \spr{x} = \sum^{k - 1}_{k' = 1} \inp{\ell_{k'}, x} + \psi_k \spr{x}$. Then, by definition, we have
    \begin{equation*}
        - \sumkK \inp*{\ell_k, x^\star}
        =
        \psi_{K + 1} \spr*{x^\star} - F_{K + 1} \spr*{x^\star}\,.
    \end{equation*}
    Adding and subtracting $F_{K + 1} \spr{x_{K + 1}}$ and $F_1 \spr{x_1}$ and noticing that $x_1$ minimizes $F_1 = \psi_1$ over the simplex, we obtain
    \begin{equation*}
        - \sumkK \inp*{\ell_k, x^\star}
        =
        \psi_{K + 1} \spr*{x^\star} - F_{K + 1} \spr*{x^\star} - F_{K + 1} \spr*{x_{K + 1}} + F_{K + 1} \spr*{x_{K + 1}} - \min_{x \in \simplex \spr*{\cA}} \psi_1 \spr*{x} + F_1 \spr*{x_1}.
    \end{equation*}
    Now, writing the difference $F_1 \spr{x_1} - F_{K + 1} \spr{x_{K + 1}}$ as a telescoping sum, we have
    \begin{equation*}
        - \sumkK \inp*{\ell_k, x^\star}
        =
        \psi_{K + 1} \spr*{x^\star} - F_{K + 1} \spr*{x^\star} + \sumkK \spr*{F_k \spr*{x_k} - F_{k + 1} \spr*{x_{k + 1}}} + F_{K + 1} \spr*{x_{K + 1}} - \min_{x \in \simplex \spr*{\cA}} \psi_1 \spr*{x}\,,
    \end{equation*}
    Finally, adding $\sumkK \inp{\ell_k, x_k}$ to both sides, we obtain
    \begin{align*}
        \sumkK \inp*{\ell_k, x_k - x^\star}
        &=
        \psi_{K + 1} \spr*{x^\star} - F_{K + 1} \spr*{x^\star} + \sumkK \spr*{F_k \spr*{x_k} - F_{k + 1} \spr*{x_{k + 1}} + \inp*{\ell_k, x_k}} \\
        &\quad+ F_{K + 1} \spr*{x_{K + 1}}
        - \min_{x \in \simplex \spr*{\cA}} \psi_1 \spr*{x}.
    \end{align*}
    Using that $\psi_k \spr{x} \geq 0$ for all $k$ and $x \in \simplex \spr{\cA}$, and $F_{K + 1} \spr{x^\star} \geq F_{K + 1} \spr{x_{K + 1}}$ (by definition of $x_{K + 1}$), we obtain
    \begin{align*}
        \sumkK \inp*{\ell_k, x_k - x^\star}
        &\leq
        \psi_{K + 1} \spr*{x^\star} + \sumkK \spr*{F_k \spr*{x_k} - F_{k + 1} \spr*{x_{k + 1}} + \inp*{\ell_k, x_k}}.
    \end{align*}
    At this point, note
    \begin{align*}
        F_k &\spr*{x_k} - F_{k + 1} \spr*{x_{k + 1}} + \inp*{\ell_k, x_k} \\
        &\overset{\scriptscriptstyle\mathrm{(a)}}{=}
        F_k \spr*{x_k} + \inp*{\ell_k, x_k} - \spr*{F_{k + 1} \spr*{x_{k + 1}} + \inp*{\ell_k, x_{k + 1}}} + \inp*{\ell_k, x_{k + 1}} \\
        &\overset{\scriptscriptstyle\mathrm{(b)}}{=}
        F_k \spr*{x_k} + \inp*{\ell_k, x_k} - \spr*{F_k \spr*{x_{k + 1}} + \inp*{\ell_k, x_{k + 1}}} + \inp*{\ell_k, x_{k + 1}} + F_k \spr*{x_{k + 1}} - F_{k + 1} \spr*{x_{k + 1}} \\
        &\overset{\scriptscriptstyle\mathrm{(c)}}{=}
        F_k \spr*{x_k} + \inp*{\ell_k, x_k} - \spr*{F_k \spr*{x_{k + 1}} + \inp*{\ell_k, x_{k + 1}}} \\
        &\quad+
        F_{k + 1} \spr*{x_{k + 1}} - \psi_{k + 1} \spr*{x_{k + 1}} + \psi_k \spr*{x_{k + 1}} - F_{k + 1} \spr*{x_{k + 1}} \\
        &\overset{\scriptscriptstyle\mathrm{(d)}}{=}
        F_k \spr*{x_k} + \inp*{\ell_k, x_k} - \spr*{F_k \spr*{x_{k + 1}} + \inp*{\ell_k, x_{k + 1}}} - \psi_{k + 1} \spr*{x_{k + 1}} + \psi_k \spr*{x_{k + 1}}\,,
    \end{align*}
    where we (a) added and removed a term $\inp{\ell_k, x_{k + 1}}$, (b) added and removed a term $F_k \spr{x_{k + 1}}$, (c) used the fact that $\inp{\ell_k, x_{k + 1}} + F_k \spr{x_{k + 1}} = F_{k + 1} \spr{x_{k + 1}} - \psi_{k + 1} \spr{x_{k + 1}} + \psi_k \spr{x_{k + 1}}$ by definition of $F_k$, and (d) rearranged terms.
    
    For any $x \in \bbR^A$, we define $\iota_{\simplex \spr{\cA}} \spr{x} = 0$ if $x \in \simplex \spr{\cA}$ and $\iota_{\simplex \spr{\cA}} \spr{x} = \infty$ otherwise. Then, since both $x_{k + 1}$ and $x_k$ are in $\simplex \spr{\cA}$, it holds that
    \begin{align*}
        F_k \spr*{x_k}
        &- F_{k + 1} \spr*{x_{k + 1}} + \inp*{\ell_k, x_k} \\
        &=
        F_k \spr*{x_k} + \inp*{\ell_k, x_k} + \iota_{\simplex \spr*{\cA}} \spr*{x_k} - \spr*{F_k \spr*{x_{k + 1}} + \inp*{\ell_k, x_{k + 1}} + \iota_{\simplex \spr*{\cA}} \spr*{x_{k + 1}}} \\
        &\quad- \psi_{k + 1} \spr*{x_{k + 1}} + \psi_k \spr*{x_{k + 1}}\,.
    \end{align*}
    Since $\psi_k$ is $\eta_k^{-1}$-strongly convex, the function $F_k + \inp{\ell_k, \cdot} + \iota_{\simplex \spr{\cA}}$ is also $\eta_k^{-1}$-strongly convex. Hence, writing $\partial$ for the subdifferential, for any subgradient $g_k \in \scbr{\ell_k} + \partial \spr{F_k + \iota_{\simplex \spr{\cA}}} \spr{x_k}$, we have
    \begin{align*}
        F_k \spr*{x_k} &+ \inp*{\ell_k, x_k} + \iota_{\simplex \spr*{\cA}} \spr*{x_k} - \spr*{F_k \spr*{x_{k + 1}} + \inp*{\ell_k, x_{k + 1}} + \iota_{\simplex \spr*{\cA}} \spr*{x_{k + 1}}} \\
        &\leq
        - \inp*{g_k, x_{k + 1} - x_k} - \frac{1}{2 \eta_k} \norm*{x_k - x_{k + 1}}_1^2\,.
    \end{align*}
    Since $x_k = \argmin_{x \in \simplex \spr{\cA}} \spr{F_k + \iota_{\simplex \spr{\cA}}} \spr{x}$, we obtain that $0 \in \partial \spr{F_k + \iota_{\simplex \spr{\cA}}} \spr{x_k}$, which implies that we can choose $g_k = \ell_k$.
    Plugging this back into the previous display, we obtain that
    \begin{align*}
        F_k \spr*{x_k}
        &- F_{k + 1} \spr*{x_{k + 1}} + \inp*{\ell_k, x_k} \\
        &\leq
        \inp*{\ell_k, x_k - x_{k + 1}} - \frac{1}{2 \eta_k} \norm*{x_k - x_{k + 1}}_1^2 - \psi_{k + 1} \spr*{x_{k + 1}} + \psi_k \spr*{x_{k + 1}} \\
        &\overset{\scriptscriptstyle\mathrm{(a)}}{\leq}
        \norm*{\ell_k}_\infty \norm*{x_k - x_{k + 1}}_1 - \frac{1}{2 \eta_k} \norm*{x_k - x_{k + 1}}_1^2 - \psi_{k + 1} \spr*{x_{k + 1}} + \psi_k \spr*{x_{k + 1}} \\
        &\overset{\scriptscriptstyle\mathrm{(b)}}{\leq}
        \frac{\eta_k}{2} \norm*{\ell_k}^2_{\infty} - \psi_{k + 1} \spr*{x_{k + 1}} + \psi_k \spr*{x_{k + 1}} \\
        &\overset{\scriptscriptstyle\mathrm{(c)}}{\leq}
        \eta_k \norm*{\ell_k}^2_{\infty}\,,
    \end{align*}
    where we also used (a) Hölder's inequality, (b) Young's inequality, and (c) the fact that the potential is nondecreasing. Therefore, the regret bound becomes
    \begin{align*}
        \sumkK \inp*{\ell_k, x_k - x^\star}
        &\leq
        \psi_{K + 1} \spr*{x^\star} + \sumkK \eta_k \norm*{\ell_k}^2_{\infty}.
    \end{align*}
    By definition of the potential $\psi_{K + 1}$ and the facts that $- H \spr{x} \leq 0$ and $H \spr{x} \leq \log \spr{A}$ for all $x \in \simplex \spr{\cA}$, we obtain
    \begin{equation*}
        \psi_{K + 1} \spr*{x}
        =
        \frac{- H \spr*{x} - \min_{x \in \simplex \spr*{\cA}} \spr*{- H \spr*{x}}}{\eta_{K + 1}}
        \leq
        \frac{\max_{x \in \simplex \spr*{\cA}} H \spr*{x}}{\eta_{K + 1}}
        \leq
        \frac{\log \spr*{A}}{\eta_{K + 1}}\,.
    \end{equation*}
    Therefore, we obtain
    \begin{align*}
        \sumkK \inp*{\ell_k, x_k - x^\star}
        &\leq
        \frac{\log \spr*{A}}{\eta_{K + 1}} + \sumkK \eta_k \norm*{\ell_k}^2_{\infty}.
    \end{align*}
    Then, choosing $\eta_k = \sqrt{\frac{\log \spr{A}}{\ell_{\max}^2 k}}$ and using the fact that $\norm{\ell_k}^2_{\infty} \leq \ell_{\max}^2$, we obtain that
    \begin{equation*}
        \sumkK \inp*{\ell_k, x_k - x^\star}
        \leq
        \sqrt{\ell_{\max}^2 \log \spr*{A} \spr*{K + 1}} + \sumkK \sqrt{\frac{\ell_{\max}^2 \log \spr*{A}}{k}}
        \leq
        3 \sqrt{\ell_{\max}^2 \log \spr*{A} K}.
    \end{equation*}
    Here, the last inequality follows by factoring out $\sqrt{\ell_{\max}^2 \log \spr{A}}$ and using $\sumkK \frac{1}{\sqrt{k}} \leq 1 + \int_1^K \frac{1}{\sqrt{x}} \diff x = 2 \sqrt{K} - 1$, and $\sqrt{K + 1} \leq \sqrt{K} + 1$. This concludes the proof.
\end{proof}

\begin{Lem}[Softmax Lipschitzness] \label{lem:softmax-lipschitz}
    For any finite action set $\cA$ and any $u, v \in \bbR^{\cA}$,
    \begin{equation*}
        \norm*{\softmax \spr*{u} - \softmax \spr*{v}}_1
        \leq
        \norm*{u - v}_\infty.
    \end{equation*}
\end{Lem}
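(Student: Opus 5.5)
The plan is to establish the bound coordinatewise through a one-dimensional integral representation combined with an explicit computation of the softmax Jacobian. Put $w_t = v + t(u - v)$ for $t \in [0,1]$ and $p(t) = \softmax(w_t)$. Since softmax is smooth, the fundamental theorem of calculus gives
\begin{equation*}
    \softmax(u) - \softmax(v) = \int_0^1 J(w_t)\,(u - v) \diff t,
\end{equation*}
where $J(w) = \mathrm{diag}(p) - p p\transpose$ and $p = \softmax(w)$. The triangle inequality for integrals then reduces the claim to a pointwise estimate: for every probability vector $p$ and every $z \in \bbR^{\cA}$, show $\norm{(\mathrm{diag}(p) - pp\transpose) z}_1 \leq \norm{z}_\infty$.

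To prove this estimate, write the $a$-th coordinate as $p(a)\bigl(z(a) - \bar z\bigr)$, where $\bar z = \sum_b p(b) z(b)$ is the mean of $z$ under $p$. Its $\ell_1$ norm is then $\sum_a p(a) \abs{z(a) - \bar z}$, i.e.\ the mean absolute deviation of $z$ under $p$. Since $\bar z$ lies between $\min z$ and $\max z$, every deviation is bounded by $\max z - \min z \leq 2\norm{z}_\infty$, which on its own only gives a constant of $2$.

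This constant is the step requiring care. To obtain $1$, I will bound the mean absolute deviation sharply: it equals $2\sum_a p(a)(z(a) - \bar z)_+$, which is maximized when $z$ takes only two values $m < M$ with masses $1-q$ and $q$. In that case it equals $2q(1-q)(M - m) \leq \tfrac12 (M - m) \leq \norm{z}_\infty$. Combining this with the integral representation gives $\norm{\softmax(u) - \softmax(v)}_1 \leq \norm{u - v}_\infty$.

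The softmax map is invariant under adding constants, so $z$ may be re-centered freely. This justifies restricting to the oscillation $M - m$ and does not change the argument. If the two-point extremal reduction proves awkward, the fallback is an extreme-point argument: the mean absolute deviation is convex in $z$ on the box $[-1,1]^{\cA}$, so its maximum is attained at sign vectors, where the two-point computation applies directly.
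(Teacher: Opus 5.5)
Your proof is correct, but it takes a different route from the paper. The paper argues by convex duality. Since $\softmax$ is the gradient of log-sum-exp, the claim is exactly $1$-smoothness of log-sum-exp with respect to $\norm{\cdot}_\infty$. A standard duality theorem reduces this to $1$-strong convexity of its conjugate (negative entropy on the simplex) with respect to the dual norm $\norm{\cdot}_1$, which is Pinsker's inequality.

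You instead integrate the Jacobian $\mathrm{diag}(p) - pp^\top$ along the segment from $v$ to $u$. You then bound its $\ell_\infty \to \ell_1$ operator norm by hand, identifying $\norm{(\mathrm{diag}(p) - pp^\top)z}_1$ with the mean absolute deviation of $z$ under $p$.

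The paper's argument is two lines once the duality theorem and Pinsker are granted. Yours uses only calculus and is self-contained. It also directly exhibits the shift-invariant refinement $\norm{\softmax(u) - \softmax(v)}_1 \le \tfrac12\bigl(\max_a (u-v)(a) - \min_a (u-v)(a)\bigr)$, and it shows the constant $1$ is sharp (take $p$ splitting its mass evenly between two groups and $z$ a sign vector). The refinement can also be recovered from the paper's statement by replacing $v$ with $v + c\mathbf{1}$ and optimizing over $c$.

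The only step that needs justification is that the mean absolute deviation is maximized at two-valued $z$. Your convexity argument settles this completely: the quantity is convex and positively homogeneous in $z$, so it is maximized over $[-1,1]^{\cA}$ at sign vectors, where it equals $4q(1-q) \le 1$. Present it as the main argument rather than a fallback.

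Alternatively, you can argue directly. Write $M = \max_a z(a)$, $m = \min_a z(a)$, $S = \{a : z(a) > \bar z\}$ and $q = p(S)$. The identity $\sum_a p(a)(z(a) - \bar z)_+ = \sum_a p(a)(\bar z - z(a))_+$ then bounds this quantity by $\min\{q(M - \bar z), (1-q)(\bar z - m)\} \le q(1-q)(M - m)$.
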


\begin{proof}[\pfref{lem:softmax-lipschitz}]
    Since the gradient of the log-sum-exp function is the softmax function, the desired inequality is exactly the $1$-smoothness of log-sum-exp with respect to the $\ell_\infty$ norm, whose dual norm is $\ell_1$. By \citet[Theorem 6.26]{orabona2023modern}, this smoothness follows from the $1$-strong convexity of the convex conjugate of log-sum-exp with respect to the dual $\ell_1$ norm. This conjugate is the negative entropy on the simplex, and its $1$-strong convexity in the $\ell_1$ norm follows from \citet[Lemma 6.33]{orabona2023modern}.
\end{proof}

\subsection{Performance Difference Lemma}

We use the following version of the performance difference lemma~\citep{howard1960dynamic,kakade2002approximately}: it expresses the performance gap between two policies as a sum of advantages of one policy under the occupancy of the other.\loose

\begin{restatable}[Performance difference lemma]{Lem}{pdl} \label{lem:performance-difference}
    Let $\pi$ and $\pi'$ be arbitrary policies. Then,
    \begin{equation} \label{eq:online-pdl}
        J^{\pi'} - J^\pi
        =
        \sumhH \sum_{x \in \cX} d_h^\pi \spr*{x} \sum_{a \in \cA} \spr*{\pi_h' \spr*{a \given x} - \pi_h \spr*{a \given x}} Q_h^{\pi'} \spr*{x, a}.
    \end{equation}
\end{restatable}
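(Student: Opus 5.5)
We prove \cref{eq:online-pdl} by induction on the stage, using a telescoping identity over stages driven by the Bellman equations (\cref{eq:bellman}) for $\pi'$. Since $x_1 \sim \initial$ under both policies, $J^{\pi'} - J^\pi = \bbE^\pi\sbr{V^{\pi'}_1(\mb{x}_1)} - \bbE^\pi\sbr{\sumhH \mb{r}_h}$. We will therefore express $V^{\pi'}_1$ evaluated along trajectories of $\pi$ as a telescoping sum.

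For each $h$, add and subtract $\bbE^\pi\sbr{V^{\pi'}_{h+1}(\mb{x}_{h+1})}$, with $V^{\pi'}_{H+1} = 0$. This gives
\begin{equation*}
    J^{\pi'} - J^\pi = \sumhH \bbE^\pi\sbr*{V^{\pi'}_h(\mb{x}_h) - \mb{r}_h - V^{\pi'}_{h+1}(\mb{x}_{h+1})}.
\end{equation*}
Now condition on $(\mb{x}_h, \mb{a}_h)$. By the Markov property of $\bbP^\pi$, $\mb{x}_{h+1} \sim P_h(\cdot \given \mb{x}_h, \mb{a}_h)$. The Bellman equation then gives $\bbE\sbr{\mb{r}_h + V^{\pi'}_{h+1}(\mb{x}_{h+1}) \given \mb{x}_h, \mb{a}_h} = Q^{\pi'}_h(\mb{x}_h, \mb{a}_h)$.

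Each summand is therefore $\bbE^\pi\sbr{V^{\pi'}_h(\mb{x}_h) - Q^{\pi'}_h(\mb{x}_h,\mb{a}_h)}$. Next, condition on $\mb{x}_h$, with $\mb{a}_h \sim \pi_h(\cdot \given \mb{x}_h)$, and use the second Bellman equation, $V^{\pi'}_h(x) = \sum_a \pi'_h(a\given x) Q^{\pi'}_h(x,a)$. The summand becomes
\begin{equation*}
    \sum_x d^\pi_h(x) \sum_a \spr*{\pi'_h(a\given x) - \pi_h(a\given x)} Q^{\pi'}_h(x,a),
\end{equation*}
and summing over $h$ yields \cref{eq:online-pdl}.

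The only delicate point is justifying the tower-property step, namely that the conditional law of $\mb{x}_{h+1}$ given $(\mb{x}_h, \mb{a}_h)$ under $\bbP^\pi$ is $P_h$. This follows directly from the interaction protocol defining $\bbP^\pi$, so no real obstacle is expected. Since state and action spaces are finite, all expectations are finite sums and no integrability issues arise.
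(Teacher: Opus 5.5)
Your proof is correct and is essentially the paper's argument. Both telescope the Bellman equation for $\pi'$ under the state distribution of $\pi$; your tower-property step plays the role of the paper's flow condition $d^\pi_{h+1}(x) = \sum_{x',a'} P_h(x \mid x',a')\, d^\pi_h(x',a')$, and what you call an induction on the stage is really just that telescoping sum.
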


\begin{proof}[\pfref{lem:performance-difference}]
    By considering the Bellman equations for the policy $\pi'$, for any state-action pair $\spr{x, a}$ and any $h \in \sbr{H}$ we have that
    \begin{equation*}
        Q^{\pi'}_h \spr*{x, a}
        =
        r_h \spr*{x, a} + \sum_{x' \in \cX} P_h \spr*{x' \given x, a} V^{\pi'}_{h + 1} \spr*{x'}.
    \end{equation*}
    Then, taking the expectation with respect to the state-action occupancy measure of the policy $\pi$, we have
    \begin{align*}
        \sum_{x, a} d^\pi_h \spr*{x, a} Q^{\pi'}_h \spr*{x, a}
        &=
        \sum_{x, a} d^\pi_h \spr*{x, a} r_h \spr*{x, a} + \sum_{x, a} d^\pi_h \spr*{x, a} \sum_{x'} P_h \spr*{x' \given x, a} V^{\pi'}_{h + 1} \spr*{x'}.
    \end{align*}
    Then, by swapping the sums and using the flow conditions satisfied by the occupancy measure of $\pi$, \ie, $d^\pi_{h + 1} \spr{x} = \sum_{x', a'} P_h \spr{x \given x', a'} d^\pi_h \spr{x', a'}$, observe that
    \begin{align*}
        \sum_{x, a} d^\pi_h \spr*{x, a} \sum_{x'} P_h \spr*{x' \given x, a} V^{\pi'}_{h + 1} \spr*{x'}
        &=
        \sum_{x'} \spr*{\sum_{x, a} P_h \spr*{x' \given x, a} d^\pi_h \spr*{x, a}} V^{\pi'}_{h + 1} \spr*{x'} \\
        &=
        \sum_{x'} d^\pi_{h + 1} \spr*{x'} V^{\pi'}_{h + 1} \spr*{x'}.
    \end{align*}
    This implies that
    \begin{align*}
        \inp*{d^\pi_h, Q^{\pi'}_h} - \inp*{d^\pi_h, V^{\pi'}_h}
        &=
        \inp*{d^\pi_h, r_h} + \inp*{d^\pi_{h + 1}, V^{\pi'}_{h + 1}} - \inp*{d^\pi_h, V^{\pi'}_h}.
    \end{align*}
    Then, summing over $h \in \sbr{H}$ and recalling that $V^{\pi'}_{H + 1} = 0$ and that $\initial = d^\pi_1$, we obtain that
    \begin{align*}
        \sumhH \spr*{\inp*{d^\pi_h, Q^{\pi'}_h} - \inp*{d^\pi_h, V^{\pi'}_h}}
        =
        \sumhH \inp*{d^\pi_h, r_h} - \inp*{\initial, V^{\pi'}_1}
        =
        J^\pi - J^{\pi'}.
    \end{align*}
    Finally, rearranging, we have
    \begin{align*}
        J^\pi - J^{\pi'}
        &=
        \sumhH \spr*{\inp*{d^\pi_h, Q^{\pi'}_h} - \inp*{d^\pi_h, V^{\pi'}_h}} \\
        &=
        \sumhH \sum_x d^\pi_h \spr*{x} \spr*{\sum_a \pi_h \spr*{a \given x} Q^{\pi'}_h \spr*{x, a} - V^{\pi'}_h \spr*{x}} \\
        &=
        \sumhH \sum_x d^\pi_h \spr*{x} \sum_a \spr*{\pi_h \spr*{a \given x} - \pi'_h \spr*{a \given x}} Q^{\pi'}_h \spr*{x, a},
    \end{align*}
    where we used $d_h^\pi \spr{x, a} = d^\pi_h \spr{x} \pi_h \spr{a \given x}$ and the definition of $V_h^{\pi'}$. Multiplying by $-1$ concludes the proof.\loose
\end{proof}


\newpage
\part{Proofs of Main Results}

In this section, we include the proofs omitted from the main text. \Cref{sec:technical_tools} provides technical results in concentration and optimization that will be used to prove the sample complexity guarantees of \Cref{alg:interactive_finite-H} in \Cref{sec:proofs_ub}, and to prove the lower bound in \Cref{sec:proofs_lb}.

\section{Proofs from \texorpdfstring{\cref{sec:ub}}{Section}}
\label{sec:proofs_ub}
As discussed in \Cref{sec:algo}, \ISPIL{} reduces the suboptimality gap to a sequence of saddle-point problems involving the objective $\cL_h^d$ and learns the output policy in a layer-wise manner. We recall here only the additional notation needed for the proofs. For any (partial) policy $\pi = \spr{\pi_1, \ldots, \pi_{h-1}}$, we denote $\cL_h^\pi = \cL_h^{d^\pi}$ for convenience. When learning layer $h$, the previously computed decision rules $\piout_1, \ldots, \piout_{h - 1}$ allow us to sample from $d_h^{\piout}$. Thus, for each $i \in \sbr{\tauE}$, \ISPIL{} independently samples
\begin{equation*}
    \Xih \sim d^{\piout}_h
    \quad\text{and}\quad
    \AEih \sim \experth \spr*{\cdot \given \Xih}.
\end{equation*}
For $h = 1$, we can sample from $d_1^{\piout} = \initial$. For subsequent stages $h$, we can sample from $d_h^{\piout}$ by rolling in with the previously computed policies $\piout_1, \ldots, \piout_{h - 1}$. Given these samples, for any decision rule $p$ and function $f \in \bbR^{\cX \times \cA}$, we define the empirical estimate of $\cL_h^{\piout}$ as
\begin{equation*}
    \hcL_h \spr*{p, f}
    =
    \frac{1}{\tauE} \sum^{\tauE}_{i = 1} \spr*{f \spr*{\Xih, \AEih} - f \spr*{\Xih, p}}.
\end{equation*}
For the version of \ISPIL analyzed in \Cref{thm:ovi-main-covering}, the empirical game is solved by having the $Q$-player best respond to $\hcL_h \spr{\pi_h^k, \cdot}$, while the $\pi$-player uses online mirror ascent \citep{Beck:2003} with negative entropy as a regularizer. Fixing a stage $h$, for each $k$,
\begin{equation}\label{eq:updates-pi-first}
    Q^k_h \in \argmax_{Q_h \in \cQ_h} \hcL_h \spr*{\pi^k_h, Q_h},
    \qquad\text{and}\qquad
    \pi^{k + 1}_h \propto \pi_h^k \odot \exp \spr*{\eta Q_h^k},
\end{equation}
where $\eta > 0$ is the learning-rate parameter and $\odot$ denotes the element-wise product.

Having recalled the algorithm structure, we can now move towards proving our main result: interaction enables computationally and statistically efficient value-based imitation learning algorithms such as our \ISPIL.

Before proving \Cref{thm:ovi-main-covering}, we derive \Cref{lem:pifirst}, which motivates the $\pi$-player update that will be used in the proof of \Cref{thm:ovi-main-covering}. It shows that if the $Q$-player best responds to the $\pi$-player, then the suboptimality gap can be bounded by the regret of the $\pi$-player plus an estimation error term that can be controlled with a large enough number of expert queries $\tauE$.
\begin{restatable}{Lem}{lempifirst} \label{lem:pifirst}
    Let \cref{asp:q-expert-realizability} hold. Let policies $\pi_h^1, \ldots, \pi_h^K$ and functions $Q_h^1, \ldots, Q_h^K \in \cQ$ be computed as in \cref{eq:updates-pi-first}. We define the $\pi$-regret at stage $h$ and state $x$ as $\regret^\pi_h \spr{x} = \sumkK \inp{Q^k_h \spr{x, \cdot}, \experth \spr{\cdot \given x} - \pi^k_h \spr{\cdot \given x}}$. Furthermore, for any policy $\pi$, we define the estimation error as $\Delta \spr{\pi} = \max_{h \in \sbr{H}} \sup_{Q_h \in \cQ_h} \abs{\hcL_h \spr{\pi_h, Q_h} - \cL_h^{\piout} \spr{\pi_h, Q_h}}$, where $\piouth = \frac1K \sumkK \pi^k_h$. Then,\loose
    \begin{equation*}
        J^{\expert} - J^{\piout}
        \leq
        \frac1K \sumhH \sum_{x \in \cX} d^{\piout}_h \spr*{x} \regret^\pi_h \spr*{x} + \frac{2 H}{K} \sumkK \Delta \spr*{\pi^k}.
    \end{equation*}
\end{restatable}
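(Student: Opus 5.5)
The proof follows the same route as the relaxed version (\Cref{lem:pifirstrelaxed}), with the comparison step removed because $\qexperth$ itself lies in $\cQ_h$. We use the performance difference lemma (\Cref{lem:performance-difference}) with $\pi' = \expert$ and $\pi = \piout$. Recalling the shorthand $\cL_h^{\piout} = \cL_h^{d^{\piout}}$, this gives
\begin{equation*}
    J^{\expert} - J^{\piout} = \sumhH \cL_h^{\piout}\spr*{\piouth, \qexperth}.
\end{equation*}
The sampling distribution $d^{\piout}_h$ depends only on $\piout_1,\dots,\piout_{h-1}$. It is therefore fixed while layer $h$ is optimized, and the map $p \mapsto \cL_h^{\piout}(p, Q)$ is affine in the decision rule $p$. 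Since $\piouth = \frac1K\sumkK \pi^k_h$, we get
\begin{equation*}
    K\,\cL_h^{\piout}\spr*{\piouth, \qexperth} = \sumkK \cL_h^{\piout}\spr*{\pi^k_h, \qexperth}.
\end{equation*}

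Next, for each $k$ we pass to the empirical objective and then to the best response. By \cref{asp:q-expert-realizability}, $\qexperth \in \cQ_h$, so the definition of $\Delta(\pi^k)$ gives $\cL_h^{\piout}(\pi^k_h, \qexperth) \le \hcL_h(\pi^k_h, \qexperth) + \Delta(\pi^k)$. The iterate $Q^k_h$ maximizes $\hcL_h(\pi^k_h,\cdot)$ over $\cQ_h$ by \cref{eq:updates-pi-first}, so $\hcL_h(\pi^k_h, \qexperth) \le \hcL_h(\pi^k_h, Q^k_h)$. Since $Q^k_h \in \cQ_h$, applying the definition of $\Delta(\pi^k)$ once more gives $\hcL_h(\pi^k_h, Q^k_h) \le \cL_h^{\piout}(\pi^k_h, Q^k_h) + \Delta(\pi^k)$. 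Chaining these three inequalities and summing over $k$ and $h$ yields
\begin{equation*}
    K\spr*{J^{\expert} - J^{\piout}} \le \sumhH\sumkK \cL_h^{\piout}\spr*{\pi^k_h, Q^k_h} + 2H\sumkK \Delta\spr*{\pi^k}.
\end{equation*}

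Finally, expanding $\cL_h^{\piout}$ by its definition and exchanging the sums over $k$ and $x$ turns $\sumkK \cL_h^{\piout}(\pi^k_h, Q^k_h)$ into $\sum_x d^{\piout}_h(x)\regret^\pi_h(x)$. Dividing by $K$ then gives the claim.

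The only delicate point is the use of $\Delta$ and of the linearity in $p$. Both are legitimate because $\hcL_h$ and $\cL_h^{\piout}$ are evaluated under the same distribution $d^{\piout}_h$, which the layer-wise structure fixes before the iterates $\pi^k_h$ are produced. The remaining issue, that the $\pi^k$ are data-dependent, does not arise at this step: $\Delta$ is defined as a supremum over $Q_h$ for each fixed $\pi$, so no uniform concentration is needed here. That is deferred to the proof of \Cref{thm:ovi-main-covering}.
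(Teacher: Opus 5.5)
Your proof is correct and follows the paper's argument essentially line by line. You use the performance difference lemma plus affinity in the decision rule to pass to the iterates, then chain the estimation error (using $\qexperth \in \cQ_h$), the empirical best response, and the estimation error again (using $Q^k_h \in \cQ_h$), and finally exchange sums to recover $\regret^\pi_h$; your remark that the lemma is a deterministic decomposition, with uniform concentration deferred to the theorem, also matches how the paper organizes it.
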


\begin{proof}[\pfref{lem:pifirst}]
    By the performance difference lemma (\cref{lem:performance-difference}) with $\pi = \piout$ and $\pi' = \expert$, and by the definition of $\cL_h^{\piout}$, we have $J^{\expert} - J^{\piout} = \sumhH \cL_h^{\piout} \spr{\piouth, \qexperth}$. Since $\cL_h^{\piout} \spr{p, Q}$ is affine in the decision rule $p$ and $\piouth = \frac1K \sumkK \pi^k_h$, this gives
    \begin{equation*}
        J^{\expert} - J^{\piout}
        =
        \sumhH \cL_h^{\piout} \spr*{\piouth, \qexperth}
        =
        \frac1K \sumhH \sumkK \cL_h^{\piout} \spr*{\pi^k_h, \qexperth}.
    \end{equation*}
    Then, we can upper bound the suboptimality as follows
    \begin{align*}
        K \spr*{J^{\expert} - J^{\piout}}
        &=
        \sum_{h, k} \cL_h^{\piout} \spr*{\pi^k_h, \qexperth} \overset{\text{(a)}}{\leq} \sum_{h, k} \hcL_h \spr*{\pi^k_h, \qexperth} + H \sum_k \Delta \spr*{\pi^k} \\
        &\overset{\text{(b)}}{\leq}
        \sum_{h, k} \hcL_h \spr*{\pi^k_h, Q^k_h} + H \sum_k \Delta \spr*{\pi^k}
        \overset{\text{(a)}}{\leq}
        \sum_{h, k} \cL_h^{\piout} \spr*{\pi^k_h, Q^k_h} + 2 H \sum_k \Delta \spr*{\pi^k},
    \end{align*}
    where step (a) uses the definition of $\Delta \spr{\pi^k}$ together with $\qexperth \in \cQ_h$ in the first application and $Q^k_h \in \cQ_h$ in the second, and step (b) uses that $Q_h^k$ is a best response to $\hcL_h \spr{\pi_h^k, \cdot}$ over $\cQ_h$. Finally, by expanding the definition of $\cL_h^{\piout}$ and exchanging the finite sums over $k$, $x$, and $a$,
    \begin{equation*}
        \sumkK \cL_h^{\piout} \spr{\pi^k_h, Q^k_h}
        =
        \sum_{x \in \cX} d^{\piout}_h \spr{x} \sumkK \inp{Q^k_h \spr{x, \cdot}, \experth \spr{\cdot \given x} - \pi^k_h \spr{\cdot \given x}}
        =
        \sum_{x \in \cX} d^{\piout}_h \spr{x} \regret^\pi_h \spr{x}.
    \end{equation*}
    This concludes the proof.
\end{proof}

Next, we use \Cref{lem:pifirst} to derive a sample complexity guarantee for \ISPIL. At a high level, the proof intuition is to notice that \Cref{lem:pifirst} reduces the control of the suboptimality gap to two terms: the average regret of the policy player and the average estimation error of the empirical saddle-point objective. Thus, it suffices to ensure that $\regret^\pi_h \spr{x}$ grows sublinearly in $K$ for each stage $h$ and state $x \in \cX$, which is achieved by the no-regret update in \Cref{alg:interactive_finite-H}. The estimation error term is controlled by a uniform concentration argument over the $Q$-class and over the policies that can be played by our algorithm. In particular, each iterate belongs to the policy class $\Pi_{\cQ} \ldef \scbr{\pi : \exists m \leq K, \forall h, \exists Q^1_h, \ldots, Q^m_h \in \cQ_h, ~~~\pi_h \spr{a \given x} = \softmax \spr{\sum_k Q^k_h \spr{x, \cdot}}_a}$ for some $K = \mathrm{poly} \spr{H, \log A, \QMAX, \varepsilon^{-1}}$, and concentration uniformly over $\Pi_{\cQ} \times \cQ$ controls $\frac1K \sumkK \Delta \spr{\pi^k}$ with a sufficiently large number of expert queries $\tauE$. We make the proof formal in the next subsection.\loose

\subsection{Proof of \texorpdfstring{\Cref{thm:pi_first_main}}{Theorem}
(Sample Complexity Guarantee for \ISPIL)}
\label{sec:proofs-ub-main}

In this section, we present the proof of our main upper bound.

\pifirstmain*

\begin{proof}[\pfref{thm:pi_first_main}]
    The proof is a direct consequence of \Cref{thm:ovi-main-covering} below, where the covering number argument simplifies to a union bound over the function class $\cQ$.
\end{proof}

Below, we present our general result that extends the sample complexity analysis to more general function classes. In particular, \Cref{thm:ovi-main-covering} shows that the sample complexity of \ISPIL is controlled by the largest $\ell_\infty$-covering number of $\cQ_h$ over $h \in \sbr{H}$, and that when $\cQ$ is convex, the sample complexity improves to $\tcO \spr{H^3 \QMAX^2 \log \spr{\maxcovering{r_\varepsilon} / \delta} \varepsilon^{-2}}$ for some $r_\varepsilon = \cO \spr{\varepsilon^2}$.
When the classes are infinite, the best responses in the $Q$-player update need not exist automatically. Throughout this result, we consider classes for which the maximizers in \Cref{alg:interactive_finite-H} are well defined. This holds, for instance, when each $\cQ_h$ is compact and the empirical objective optimized by the $Q$-player is continuous on $\cQ_h$. 
\begin{theorem}[Sample complexity of \ISPIL] \label{thm:ovi-main-covering}
    Let \Cref{asp:q-expert-realizability} hold. For every $r > 0$, let $\maxcovering{r}$ be the largest $\ell_\infty$-covering number (see \Cref{def:covering}) of $\cQ_h$ over $h \in \sbr{H}$, and choose $r_{\varepsilon} = \cO \spr{\varepsilon^2 / \spr{H^2 \QMAX \log A}}$. Then, for any $\varepsilon, \delta \in (0, 1)$, \ISPIL (\Cref{alg:interactive_finite-H}) with parameters
    \begin{equation*}
        \eta = \sqrt{\frac{\log \spr*{A}}{K \QMAX^2}}, K = \cO \spr*{\frac{H^2 \QMAX^2 \log \spr*{A}}{\varepsilon^2}}, \quad\text{and}\quad \tauE = \tcO \spr*{\frac{H^4 \QMAX^4 \log \spr*{A} \log \spr*{\maxcovering{r_\varepsilon} / \delta}}{\varepsilon^4}},
    \end{equation*}
    outputs a policy $\piout$ such that, with probability at least $1 - \delta$, $J^\expert - J^{\piout} \leq \varepsilon$ after 
    \begin{equation*}
        \tcO \spr*{ \frac{H^5 \QMAX^4 \log \spr*{A} \log \spr*{\maxcovering{r_\varepsilon} / \delta}}{\varepsilon^4}} \qquad \text{expert queries}.
    \end{equation*}
    Moreover, when $\cQ$ is convex, the same guarantee holds with $\tauE = \tcO \spr{H^2 \QMAX^2 \log \spr{\maxcovering{r_\varepsilon} / \delta} \varepsilon^{-2}}$, and the sample complexity improves to $\tcO \spr{H^3 \QMAX^2 \log \spr{\maxcovering{r_\varepsilon} / \delta} \varepsilon^{-2}}$.
\end{theorem}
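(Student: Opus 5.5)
The argument combines \Cref{lem:pifirst} with a regret bound for the $\pi$-player and a uniform concentration bound for the estimation error. \Cref{lem:pifirst} gives
\begin{equation*}
    J^{\expert} - J^{\piout} \leq \frac1K \sumhH \sum_{x} d^{\piout}_h \spr*{x} \regret^\pi_h \spr*{x} + \frac{2H}{K} \sumkK \Delta \spr*{\pi^k},
\end{equation*}
so each of the two terms must be made at most $\varepsilon/2$.

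\textbf{Regret term.} Fix $h$ and $x$. The update $\pi^{k+1}_h \propto \pi^k_h \odot e^{\eta Q^k_h}$ is online mirror ascent on the simplex with the negative-entropy regularizer. That regularizer is $1$-strongly convex in $\ell_1$ by Pinsker's inequality. Apply \Cref{lemma:mirror_orabona} with losses $-Q^k_h \spr{x, \cdot}$, bounded by $\ell_{\max} = \QMAX$, comparator $u = \experth \spr{\cdot \given x}$, and $D \spr{u, \mathrm{Unif}} \leq \log A$. This gives $\regret^\pi_h \spr{x} \leq \log A / \eta + \eta K \QMAX^2 / 2 = \cO \spr{\QMAX \sqrt{K \log A}}$ for the stated $\eta$, uniformly in $x$. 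The first term is then at most $H \QMAX \sqrt{\log A / K}$, which is $\leq \varepsilon/2$ once $K \gtrsim H^2 \QMAX^2 \log A / \varepsilon^2$.

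\textbf{Estimation term.} Every iterate $\pi^k_h$ is $\softmax \spr{\eta \sum_{j<k} Q^j_h}$, so it lies in $\Pi_{\cQ,h}$ with at most $K$ summands. The stage-$h$ sampling distribution $d^{\piout}_h$ is fixed by the earlier layers, i.e.\ it is $\cF_{h-1}$-measurable. This is exactly where the layerwise design is used: it lets us invoke \Cref{lem:moulin-fh-concentration} with $\Pi_h = \Pi_{\cQ,h}$, even though the $\pi^k_h$ themselves depend adaptively on the stage-$h$ data. The lemma gives, with probability $1-\delta$,
\begin{equation*}
    \max_k \Delta \spr*{\pi^k} \leq 4r + \QMAX \sqrt{8 \log \spr*{2H \cN_r \spr*{\cQ_h \times \Pi_{\cQ,h}, \rho} / \delta} / \tauE}.
\end{equation*}

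\textbf{Covering numbers.} In the nonconvex case, \cref{eq:moulin-nonconvex-product-cover-simple} gives
\begin{equation*}
    \log \cN_r \lesssim K \log \maxcovering{r / \spr{2 \QMAX \eta K}}.
\end{equation*}
Take $r \asymp \varepsilon/H$ so that $4r \cdot 2H \leq \varepsilon/4$. Since $\eta K = \sqrt{K \log A}/\QMAX$, the covering radius becomes $r/\sqrt{K \log A} = \cO \spr{\varepsilon^2 / \spr{H^2 \QMAX \log A}}$, which is the stated $r_\varepsilon$. Requiring $2H \QMAX \sqrt{K \log \spr{\maxcovering{r_\varepsilon}/\delta} / \tauE} \leq \varepsilon/4$ then forces $\tauE \gtrsim H^2 \QMAX^2 K \log \spr{\maxcovering{r_\varepsilon}/\delta} / \varepsilon^2$. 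Substituting $K$ gives the $H^4 \QMAX^4 \log A / \varepsilon^4$ rate, up to logarithms.

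In the convex case, \cref{eq:moulin-convex-product-cover-simple} gives $\log \cN_r \lesssim \log K + 2 \log \maxcovering{r_\varepsilon}$. The factor $K$ is therefore lost and $\tauE = \tcO \spr{H^2 \QMAX^2 \log \spr{\maxcovering{r_\varepsilon}/\delta} \varepsilon^{-2}}$. In both cases the total number of expert queries is $H \tauE$, and a union bound over stages is already built into the lemma.

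The main obstacle is the estimation step. The hypotheses of \Cref{lem:moulin-fh-concentration} must be checked: the lemma needs deterministic classes and a predictable $d_h$, whereas the iterates are data-dependent, so concentration has to be taken uniformly over $\Pi_{\cQ,h}$ rather than pointwise. The bookkeeping of $r$ against the covering radius also has to be done carefully so that it produces the stated $r_\varepsilon$.
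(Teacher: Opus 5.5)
Your proposal is correct and follows the paper's proof essentially step for step. The shared steps are: the decomposition of \Cref{lem:pifirst}; the negative-entropy mirror-ascent regret bound from \Cref{lemma:mirror_orabona}; uniform concentration over $\cQ_h \times \Pi_{\cQ,h}$ via \Cref{lem:moulin-fh-concentration}, using that $d^{\piout}_h$ is fixed by earlier layers; and the convex and nonconvex product-cover lemmas, which produce the radius $r_\varepsilon$. The differences are only in constants (you split $\varepsilon$ into halves and quarters rather than thirds, and use $\lambda = 1$), plus one bookkeeping detail the paper makes explicit: replacing $\max\{1, \QMAX \eta K\}$ by $\QMAX \eta K$ in the covering radius requires $\QMAX \eta K > 1$, which holds once the trivial case $\varepsilon \geq \QMAX$ is dismissed.
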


\begin{proof}[\pfref{thm:ovi-main-covering}]
    To prove the sample complexity guarantees of \Cref{alg:interactive_finite-H}, we start from the decomposition given in \Cref{lem:pifirst}.
    \begin{equation*}
        J^{\expert} - J^{\piout}
        \leq
        \frac1K \sumhH \sum_{x \in \cX} d^{\piout}_h \spr*{x} \regret^\pi_h \spr*{x} + \frac{2 H}{K} \sumkK \Delta \spr*{\pi^k}.
    \end{equation*}
    We need to control the regret of the policy player $\regret^\pi_h$, defined for any state $x$ as
    \begin{equation*}
        \regret^\pi_h \spr*{x}
        =
        \sumkK \inp*{Q^k_h \spr*{x, \cdot}, \experth \spr*{\cdot \given x} - \pi^k_h \spr*{\cdot \given x}},
    \end{equation*}
    and we need to control the average estimation error $\frac1K \sumkK \Delta \spr{\pi^k}$, where, for any policy $\pi$, we define
    \begin{equation*}
        \Delta \spr*{\pi}
        =
        \max_{h \in \sbr*{H}}\sup_{Q_h \in \cQ_h} \abs*{\hcL_h \spr*{\pi_h, Q_h} - \cL_h^{\piout} \spr*{\pi_h, Q_h}}.
    \end{equation*}

    \para{Controlling the regret $\regret^\pi_h$}
    In order to control $\regret^\pi_h$, we use online mirror ascent with negative-entropy regularizer, which corresponds to the following update rule for each stage $h \in \sbr{H}$, iteration $k \in \sbr{K}$, state $x \in \cX$ and action $a \in \cA$,
    \begin{equation*}
        \pi^{k + 1}_h \spr*{a \given x}
        \propto
        \pi^k_h \spr*{a \given x} e^{\eta Q^k_h \spr*{x, a}}.
    \end{equation*}
    Fix $h$ and $x$. Applying \Cref{lemma:mirror_orabona} to the simplex with losses $\ell_k = - Q^k_h \spr{x, \cdot}$ and comparator $u = \experth \spr{\cdot \given x}$ gives
    \begin{equation*}
        \regret^\pi_h \spr*{x}
        \leq
        \frac{\log \abs*{\cA}}{\eta} + \eta K \QMAX^2\,,
    \end{equation*}
    where we used that the initial decision rule is uniform, so the negative-entropy Bregman divergence to any comparator is at most $\log \abs{\cA}$, and that each $Q_h \in \cQ_h$ satisfies $\norm{Q_h}_\infty \leq \QMAX$, as assumed in \Cref{asp:q-expert-realizability}. Plugging the value of $\eta$ from the statement of \Cref{thm:ovi-main-covering}, $\eta = \spr{\log \spr{A} / K}^{1/2} \QMAX^{-1}$, we get that $\regret^\pi_h \spr{x} \leq 2 \sqrt{K \QMAX^2 \log \abs{\cA}}$. Since $d_h^{\piout}$ is a probability distribution for every $h$, setting $K = 36 H^2 \QMAX^2 \log \spr{A} \varepsilon^{-2}$ implies
    \begin{equation*}
        \frac1K \sumhH \sum_{x \in \cX} d^{\piout}_h \spr*{x} \regret^\pi_h \spr*{x}
        \leq
        2 \sqrt{\frac{H^2 \QMAX^2 \log \abs*{\cA}}{K}}
        =
        \frac{\varepsilon}{3}.
    \end{equation*}

    \para{Controlling the estimation error}
    It remains to control the average estimation error $\frac1K \sumkK \Delta \spr{\pi^k}$. To this end, recall that, in \Cref{alg:interactive_finite-H}, we build the empirical estimator $\hcL_h$ by sampling $\tauE$ states from $d^{\piout}_h$. This sampling is possible because $d^{\piout}_h$ depends only on $\piout_1$, $\ldots$, $\piout_{h - 1}$, which have already been computed. We then query the expert for an action at each sampled state. For each $i \in \sbr{\tauE}$, we sample
    \begin{equation*}
        \Xih \sim d^{\piout}_h
        \quad\text{and}\quad
        \AEih \sim \experth \spr*{\cdot \given \Xih}.
    \end{equation*}
    For every $k \in \sbr{K}$, the policy $\pi^k$ belongs to the class $\Pi_{\cQ} \subset \Delta \spr{\cA}^\cX$ defined below.
    \begin{equation*}
        \Pi_\cQ
        \ldef
        \scbr*{\pi: \exists m \in \scbr*{0, \ldots, K}, \forall h, \exists Q_h^1, \ldots, Q_h^m \in \cQ_h,~~~\pi_h \spr*{a \given x} = \softmax \spr[\Big]{\eta\ {\textstyle\sum_j} Q_h^j \spr*{x, \cdot}}_a}.
    \end{equation*}
    For each $h$, write $\Pi_{\cQ,h} \ldef \scbr{\pi_h : \pi \in \Pi_{\cQ}}$ for the corresponding class of decision rules. We upper bound the average estimation error by
    \begin{equation*}
        \frac1K \sumkK \Delta \spr*{\pi^k}
        \leq
        \sup_{\pi \in \Pi_{\cQ}} \Delta \spr*{\pi}
        \leq
        \max_{h \in \sbr*{H}} \sup_{p \in \Pi_{\cQ, h}} \sup_{Q_h \in \cQ_h} \abs*{\hcL_h \spr*{p, Q_h} - \cL_h^{\piout} \spr*{p, Q_h}}.
    \end{equation*}
    To control the upper bound, we verify that we can apply the concentration argument from \Cref{lem:moulin-fh-concentration}. Fix $h$, $p \in \Pi_{\cQ,h}$, and $Q_h \in \cQ_h$, and define
    \begin{equation*}
        Y_i \spr*{Q_h, p}
        \ldef
        Q_h \spr*{\Xih, \AEih} - Q_h \spr*{\Xih, p} - \cL_h^{\piout} \spr*{p, Q_h}.
    \end{equation*}
    Conditionally on the history before the samples for stage $h$ are drawn, $d_h^{\piout}$ is fixed and $\spr{Y_i \spr{Q_h, p}}_{i = 1}^{\tauE}$ is a martingale difference sequence. Moreover, since $Q_h \in \sbr{0, \QMAX}^{\cX \times \cA}$, both $Q_h \spr{\Xih, \AEih}$ and $Q_h \spr{\Xih, p}$ belong to $\sbr{0, \QMAX}$, while $\cL_h^{\piout} \spr{p, Q_h}$ belongs to $\sbr{-\QMAX, \QMAX}$. Hence
    \begin{equation*}
        \abs*{Y_i \spr*{Q_h, p}}
        \leq
        2 \QMAX \qquad\text{almost surely.}
    \end{equation*}
    Applying \Cref{lem:moulin-fh-concentration} with $d_h = d_h^{\piout}$ and $\Pi_h = \Pi_{\cQ,h}$ therefore gives, for any radius $r > 0$, with probability at least $1 - \delta$,
    \begin{equation} \label{eq:estimation-error-bound-proof-main}
        \sup_{\pi \in \Pi_{\cQ}} \Delta \spr*{\pi}
        \leq
        4 r + \QMAX \sqrt{\frac{8 \log \spr*{\frac{2}{\delta} \sum_{h = 1}^H \cN_r \spr*{\cQ_h \times \Pi_{\cQ,h}, \rho}}}{\tauE}}.
    \end{equation}
    Taking $r = \varepsilon / \spr{24 H}$ and
    \begin{equation*}
        \tauE
        \geq
        \frac{288 H^2 \QMAX^2}{\varepsilon^2} \log \spr*{\frac{2}{\delta} \sum_{h = 1}^H \cN_{\varepsilon / \spr*{24 H}} \spr*{\cQ_h \times \Pi_{\cQ,h}, \rho}}
    \end{equation*}
    yields $H \sup_{\pi \in \Pi_{\cQ}} \Delta \spr{\pi} \leq \frac{\varepsilon}{3}$ with probability at least $1 - \delta$.

    \para{Putting everything together}
    Plugging the two previous bounds in the decomposition shown in \Cref{lem:pifirst}, and using $r = \varepsilon / \spr{24 H}$ in \cref{eq:estimation-error-bound-proof-main}, with probability at least $1 - \delta$, we have
    \begin{align*}
        J^{\expert} - J^{\piout}
        &\leq
        \frac1K \sumhH \sum_{x \in \cX} d^{\piout}_h \spr*{x} \regret^\pi_h \spr*{x} + \frac{2 H}{K} \sumkK \Delta \spr*{\pi^k} \\
        &\leq
        2 \sqrt{\frac{H^2 \QMAX^2 \log \abs*{\cA}}{K}}
        +
        \frac{\varepsilon}{3}
        +
        2 H \QMAX
        \sqrt{
            \frac{
                8 \log \spr*{
                    \frac{2}{\delta} \sum_{h = 1}^H
                    \cN_{\varepsilon / \spr*{24 H}}
                    \spr*{\cQ_h \times \Pi_{\cQ,h}, \rho}
                }
            }{\tauE}
        }
        \\
        &\leq
        \frac{\varepsilon}{3} + \frac{\varepsilon}{3} + \frac{\varepsilon}{3},
    \end{align*}
    where the last line uses $K = 36 H^2 \QMAX^2 \log \spr{A} \varepsilon^{-2}$ and
    \begin{equation*}
        \tauE
        \geq
        \frac{288 H^2 \QMAX^2}{\varepsilon^2}
        \log \spr*{
            \frac{
                2 \sum_{h = 1}^H
                \cN_{\varepsilon / \spr*{24 H}}
                \spr*{\cQ_h \times \Pi_{\cQ,h}, \rho}
            }{\delta}
        }.
    \end{equation*}
    Next, we consider separately the convex and nonconvex cases for the classes $\cQ_h$, and show that the sample complexity improves in the convex case.

    \para{Convex case}
    Assume first that each $\cQ_h$ is convex. Fix $h \in \sbr{H}$. By \Cref{lem:moulin-convex-product-cover}, more precisely by \cref{eq:moulin-convex-product-cover-simple}, for every radius $r > 0$,
    \begin{equation*}
        \cN_r \spr*{\cQ_h \times \Pi_{\cQ,h}, \rho}
        \leq
        \spr*{K + 1} \cN_{\frac{r}{2 \max \scbr*{1, \QMAX \eta K}}} \spr*{\cQ_h, \norm*{\cdot}_\infty}^{2}.
    \end{equation*}
    If $\varepsilon \geq \QMAX$, reward normalization gives $J^\expert - J^{\piout} \leq \QMAX \leq \varepsilon$, so the performance claim is immediate. We may therefore assume that $\varepsilon < \QMAX$. For the parameter choice in the theorem,
    \begin{equation*}
        \QMAX \eta K
        =
        \sqrt{K \log \abs*{\cA}}
        =
        \frac{6 H \QMAX \log \abs*{\cA}}{\varepsilon}.
    \end{equation*}
    Since $H \geq 1$ and $A \geq 2$, we then have $\QMAX \eta K > 6 H \log 2 > 1$. Hence $\max \scbr{1, \QMAX \eta K} = \QMAX \eta K$ in the remaining covering estimates. We now plug this stagewise bound into the estimation guarantee. With the radius $r = \varepsilon / \spr{24 H}$ used in \cref{eq:estimation-error-bound-proof-main}, the product bound above implies
    \begin{align*}
        \sum_{h = 1}^H
        \cN_{\varepsilon / \spr*{24 H}}
        \spr*{\cQ_h \times \Pi_{\cQ,h}, \rho}
        &\leq
        H \spr*{K + 1}
        \max_{h \in \sbr*{H}}
        \cN_{\frac{\varepsilon}{48 H \QMAX \eta K}}
        \spr*{\cQ_h, \norm*{\cdot}_\infty}^{2}.
    \end{align*}
    Combining the bound above with the sufficient condition on $\tauE$ in the preceding paragraph, it is enough to take
    \begin{equation*}
        \tauE
        \geq
        \frac{288 H^2 \QMAX^2}{\varepsilon^2}
        \log \spr*{
            \frac{2 H \spr*{K + 1}}{\delta}
            \max_{h \in \sbr*{H}}
            \cN_{\frac{\varepsilon}{48 H \QMAX \eta K}}
            \spr*{\cQ_h, \norm*{\cdot}_\infty}^{2}
        }.
    \end{equation*}
    Substituting the values of $K$ and $\QMAX \eta K$, the following order of expert queries per stage suffices:
    \begin{align*}
        \tauE
        &=
        \cO \spr*{
            \frac{H^2 \QMAX^2}{\varepsilon^2}
            \log \spr*{
                \frac{H \spr*{1 + 36 H^2 \QMAX^2 \log \abs*{\cA} / \varepsilon^2}}{\delta}
                \max_{h \in \sbr*{H}}
                \cN_{\frac{\varepsilon^2}{288 H^2 \QMAX \log \abs*{\cA}}}
                \spr*{\cQ_h, \norm*{\cdot}_\infty}^{2}
            }
        }
        \\
        &=
        \tcO \spr*{
            \frac{H^2 \QMAX^2}{\varepsilon^2}
            \log \spr*{
                \frac{1}{\delta}
                \max_{h \in \sbr*{H}}
                \cN_{\frac{\varepsilon^2}{288 H^2 \QMAX \log \abs*{\cA}}}
                \spr*{\cQ_h, \norm*{\cdot}_\infty}
            }
        }.
    \end{align*}
    Since the algorithm queries the expert at $H$ stages, the total number of expert queries is
    \begin{equation*}
        H \tauE
        =
        \tcO \spr*{
            \frac{H^3 \QMAX^2}{\varepsilon^2}
            \log \spr*{
                \frac{1}{\delta}
                \max_{h \in \sbr*{H}}
                \cN_{\frac{\varepsilon^2}{288 H^2 \QMAX \log \abs*{\cA}}}
                \spr*{\cQ_h, \norm*{\cdot}_\infty}
            }
        }.
    \end{equation*}

    \para{Nonconvex classes}
    We now turn to the general case. Fix $h \in \sbr{H}$. By \Cref{lem:moulin-nonconvex-product-cover}, more precisely by \cref{eq:moulin-nonconvex-product-cover-simple}, and since the case split above gives $\QMAX \eta K > 1$, for every radius $r > 0$,
    \begin{equation*}
        \cN_r \spr*{\cQ_h \times \Pi_{\cQ,h}, \rho}
        \leq
        \spr*{K + 1}
        \cN_{\frac{r}{2 \QMAX \eta K}}
        \spr*{\cQ_h, \norm*{\cdot}_\infty}^{K + 1}.
    \end{equation*}
    With the radius $r = \varepsilon / \spr{24 H}$ used in \cref{eq:estimation-error-bound-proof-main}, the product bound above implies
    \begin{align*}
        \sum_{h = 1}^H
        \cN_{\varepsilon / \spr*{24 H}}
        \spr*{\cQ_h \times \Pi_{\cQ,h}, \rho}
        &\leq
        H \spr*{K + 1}
        \max_{h \in \sbr*{H}}
        \cN_{\frac{\varepsilon}{48 H \QMAX \eta K}}
        \spr*{\cQ_h, \norm*{\cdot}_\infty}^{K + 1}.
    \end{align*}
    Combining the bound above with the sufficient condition on $\tauE$ in the preceding paragraph, it is enough to take
    \begin{equation*}
        \tauE
        \geq
        \frac{288 H^2 \QMAX^2}{\varepsilon^2}
        \log \spr*{
            \frac{2 H \spr*{K + 1}}{\delta}
            \max_{h \in \sbr*{H}}
            \cN_{\frac{\varepsilon}{48 H \QMAX \eta K}}
            \spr*{\cQ_h, \norm*{\cdot}_\infty}^{K + 1}
        }.
    \end{equation*}
    Substituting the values of $K$ and $\QMAX \eta K$, the following order of expert queries per stage suffices:
    \begin{align*}
        \tauE
        &=
        \cO \spr*{
            \frac{H^2 \QMAX^2}{\varepsilon^2}
            \log \spr*{
                \frac{H \spr*{1 + 36 H^2 \QMAX^2 \log \abs*{\cA} / \varepsilon^2}}{\delta}
                \max_{h \in \sbr*{H}}
                \cN_{\frac{\varepsilon^2}{288 H^2 \QMAX \log \abs*{\cA}}}
                \spr*{\cQ_h, \norm*{\cdot}_\infty}^{1 + 36 H^2 \QMAX^2 \log \abs*{\cA} / \varepsilon^2}
            }
        }
        \\
        &=
        \tcO \spr*{
            \frac{H^4 \QMAX^4 \log \abs*{\cA}}{\varepsilon^4}
            \log \spr*{
                \frac{1}{\delta}
                \max_{h \in \sbr*{H}}
                \cN_{\frac{\varepsilon^2}{288 H^2 \QMAX \log \abs*{\cA}}}
                \spr*{\cQ_h, \norm*{\cdot}_\infty}
            }
        }.
    \end{align*}
    Since the algorithm queries the expert at $H$ stages, the total number of expert queries is
    \begin{equation*}
        H \tauE
        =
        \tcO \spr*{
            \frac{H^5 \QMAX^4 \log \abs*{\cA}}{\varepsilon^4}
            \log \spr*{
                \frac{1}{\delta}
                \max_{h \in \sbr*{H}}
                \cN_{\frac{\varepsilon^2}{288 H^2 \QMAX \log \abs*{\cA}}}
                \spr*{\cQ_h, \norm*{\cdot}_\infty}
            }
        }.
    \end{equation*}
    This concludes the proof.
\end{proof}


\subsection{Proof of \texorpdfstring{\cref{thm:reward-realizability-lb}}{Theorem} (Reward Realizability Is Insufficient)}
\label{sec:reward_realizability_lb}
So far, we showed that, under $\qexpert$-realizability, interactive IL is possible but offline IL is not without additional coverage. It is natural to ask whether one can weaken the representation requirement even further. A tempting candidate is reward realizability: when $H = 1$, $\qexpert$-realizability reduces to reward realizability, and \citet{joshi2025learning} show that reward realizability is enough for statistical learnability in contextual bandits, although their algorithm is computationally inefficient.\loose

The theorem below shows that, in general MDPs, reward realizability alone cannot yield a representation-dependent guarantee even with interaction: the reward class is a singleton, but the learner still needs coverage of the rare decision states. This motivates the stronger assumption used in the main text, $\qexpert$-realizability, which stores additional information about the transitions that a reward function alone does not provide.

\rewardrealizabilitylb*

\para{Comparison with $\qexpert$-realizability}
This lower bound is specific to reward realizability and should not be read as a contradiction to the guarantees under \Cref{asp:q-expert-realizability}. In the construction below, where each MDP is characterized by a bit vector $b \in \scbr{0, 1}^X$, the common reward only identifies $x^i_+$ as rewarding and $x^i_-$ as non-rewarding but it does not reveal which action from $x^i_{\start}$ reaches $x^i_+$. By contrast, for each instance $\cM_b$, the expert value function satisfies $Q_{\cM_b, 1}^{\pi_\experttag^b} \spr{x^i_{\start}, a} = \mathbbm{1}\scbr{a = b_i}$. Thus $\qexpert$ stores the information about which action to take at the starting state itself. Equivalently, a value class that $\qexpert$-realizes all MDPs in this family must contain one distinct function for each $b \in \scbr{0, 1}^X$, so its complexity already scales with $X$. Reward realizability avoids this complexity entirely by using a singleton class, but then the learner receives no representation of the control information needed at unobserved states.

\begin{proof}[\pfref{thm:reward-realizability-lb}]
    Set $q = 4 \varepsilon / X$, which satisfies $q \leq 1 / X$ because $\varepsilon \leq 1 / 4$. We first define the hard family $\cH = \cH_{X, q}$. Let the action space be $\cA = \scbr{0, 1}$. For each $i \in \scbr{0, \ldots, X}$, the state space contains a start state $x^i_{\start}$ and two terminal states $x^i_+$ and $x^i_-$. The initial distribution is
    \begin{equation*}
        \initial \spr*{x^0_{\start}} = 1 - X q,
        \qquad\text{and}\qquad
        \initial \spr*{x^i_{\start}} = q \quad \text{for } i \in \sbr*{X}.
    \end{equation*}
    For $b = \spr{b_1, \ldots, b_X} \in \scbr{0, 1}^X$, set $b_0 = 0$. In the MDP $\cM_b$, the first transition is deterministic and given by
    \begin{equation*}
        P^b_1 \spr*{x^i_+ \given x^i_{\start}, a} = \mathbbm{1} \scbr*{a = b_i},
        \qquad\text{and}\qquad
        P^b_1 \spr*{x^i_- \given x^i_{\start}, a} = \mathbbm{1} \scbr*{a \neq b_i}.
    \end{equation*}
    All stage-two behavior is fixed independently of $b$. For concreteness, let the expert's terminal action be $0$, and take any fixed terminal transitions. The reward is common to all instances. For any index $i$ and action $a$,
    \begin{equation*}
        r^\star_1 \spr{x^i_{\start}, a} = 0,
        \qquad
        r^\star_2 \spr{x^i_+, a} = 1,
        \qquad
        r^\star_2 \spr{x^i_-, a} = 0.
    \end{equation*}
    Thus the singleton class $\cR = \scbr{r^\star}$ realizes every instance in the family. Let the expert in $\cM_b$ be the deterministic policy $\pi_{\experttag, 1}^b \spr{x^i_{\start}} = b_i$, with the fixed terminal behavior described above. The expert reaches $x^i_+$ from every $x^i_{\start}$, so it is optimal and $J_{\cM_b}^{\pi_\experttag^b} = 1$. The family is illustrated in \Cref{fig:reward-realizability-hard-family}.

    \begin{figure}[t]
        \centering
        \resizebox{\textwidth}{!}{%
        \begin{tikzpicture}[
            x=0.95cm, y=0.95cm,
            state/.style={circle, draw=black!80, thick, minimum size=1.0cm, inner sep=1pt, align=center, font=\footnotesize, fill=gray!5},
            plus/.style={state, draw=green!50!black, fill=green!8},
            minus/.style={state, draw=red!70!black, fill=red!7},
            root/.style={state, draw=black!80, fill=blue!4},
            arr/.style={->, >={Stealth[round]}, line width=0.95pt, black!80, shorten >=1pt, shorten <=1pt},
            lbl/.style={font=\scriptsize, align=center, fill=white, inner sep=1.5pt},
            prob/.style={font=\footnotesize, align=center}
        ]
            \begin{scope}
                \node[root] (s0) at (0,0) {$x^0_{\start}$};
                \node[plus] (g0) at (2.65,0.9) {$x^0_+$};
                \node[minus] (z0) at (2.65,-0.9) {$x^0_-$};
                \node[prob] at (1.33,1.75) {$\initial \spr*{x^0_{\start}} = 1 - Xq$};
                \draw[arr] (s0) -- node[lbl, above, pos=0.32, yshift=8pt] {$a = b_0 = 0$} (g0);
                \draw[arr] (s0) -- node[lbl, below, pos=0.32, yshift=-5pt] {$a \neq b_0$} (z0);
            \end{scope}
            \begin{scope}[xshift=5.35cm]
                \node[root] (s1) at (0,0) {$x^1_{\start}$};
                \node[plus] (g1) at (2.65,0.9) {$x^1_+$};
                \node[minus] (z1) at (2.65,-0.9) {$x^1_-$};
                \node[prob] at (1.33,1.75) {$\initial \spr*{x^1_{\start}} = q$};
                \draw[arr] (s1) -- node[lbl, above, pos=0.32, yshift=5pt] {$a = b_1$} (g1);
                \draw[arr] (s1) -- node[lbl, below, pos=0.32, yshift=-5pt] {$a \neq b_1$} (z1);
            \end{scope}
            \begin{scope}[xshift=9.85cm]
                \foreach \dx in {-0.18,0,0.18} {\fill[black!60] (\dx,0) circle[radius=1.2pt];}
            \end{scope}
            \begin{scope}[xshift=12.25cm]
                \node[root] (si) at (0,0) {$x^i_{\start}$};
                \node[plus] (gi) at (2.65,0.9) {$x^i_+$};
                \node[minus] (zi) at (2.65,-0.9) {$x^i_-$};
                \node[prob] at (1.33,1.75) {$\initial \spr*{x^i_{\start}} = q$};
                \draw[arr] (si) -- node[lbl, above, pos=0.32, yshift=5pt] {$a = b_i$} (gi);
                \draw[arr] (si) -- node[lbl, below, pos=0.32, yshift=-5pt] {$a \neq b_i$} (zi);
            \end{scope}
        \end{tikzpicture}}
        \caption{Instance from the family used in \Cref{thm:reward-realizability-lb}. Every MDP in the family uses the same reward, and the hidden bit $b_i \in \scbr{0, 1}$ selects which action reaches $x^i_+$ from $x^i_{\start}$. It remains hidden unless $x^i_{\start}$ is observed during training.}
        \label{fig:reward-realizability-hard-family}
    \end{figure}

    For any policy $\pi$, starting from $x^i_{\start}$ yields terminal reward one if and only if the first action equals $b_i$. Hence
    \begin{equation} \label{eq:reward-realizability-gap}
        J_{\cM_b}^{\pi_\experttag^b} - J_{\cM_b}^{\pi}
        =
        1 - \sum_{i = 0}^{X} \initial \spr*{x^i_{\start}} \pi_1 \spr*{b_i \given x^i_{\start}}
        =
        \sum_{i = 0}^{X} \initial \spr*{x^i_{\start}} \pi_1 \spr*{\cA \setminus \scbr*{b_i} \given x^i_{\start}}.
    \end{equation}
    We lower bound the average suboptimality gap over a random instance. Let $B = \spr{B_1, \ldots, B_X} \sim \mathrm{Unif} \spr{\scbr{0, 1}^X}$, and run $\texttt{Alg}$ on $\cM_B$. Let $\xi$ be an independent random seed encoding all of the algorithm's internal randomness, and let $\cD$ denote the complete interaction transcript available to the learner after the fixed $\tauE$ episodes. For each $i \in \sbr{X}$, let $V_i$ be the event that $x^i_{\start}$ appears as the initial state in one of the fixed $\tauE$ interaction rounds.

    Although $B_i$ is sampled uniformly initially, we later condition on the dataset used to choose $\piout$. This conditioning can matter because $\cD$ is generated in the MDP whose expert policy and transitions depend on $B_i$. If $V_i$ occurs, then $\cD$ contains a transition out of $x^i_{\start}$. No matter whether an expert action is observed or not, the transition out of $x^i_{\start}$ reveals which action reaches $x^i_+$. On $V_i^c$, however, $\cD$ contains no observation from the $i$-th component of the construction: the state $x^i_{\start}$ can only appear as an initial state, and no transition from any other component reaches it. Thus, conditional on $V_i^c$, $\cD$ may depend on the coordinates $B_j$ for $j \neq i$, but the conditional law of $(\cD,\xi)$ does not depend on $B_i$. Since $B_i$ is uniform and independent of $B_j$ for $j \neq i$, even after seeing the dataset and knowing $V_i^c$, we still have no information about $B_i$. Whenever $\bbP \sbr{V_i^c} > 0$, this gives
    \begin{equation*}
        \bbP \sbr*{B_i = 0 \given \cD, V_i^c, \xi}
        =
        \bbP \sbr*{B_i = 1 \given \cD, V_i^c, \xi}
        =
        \frac12.
    \end{equation*}
    Conditional on $\cD$, $V_i^c$, and $\xi$, the output policy is fixed. We define $p_i \ldef \piout_1 \spr{1 \given x^i_{\start}}$. The quantity that appears in the suboptimality gap for the $i$-th component is $\piout_1 \spr{\cA \setminus \scbr{B_i} \given x^i_{\start}}$, which is the probability that the output policy chooses the wrong first action at $x^i_{\start}$. If $B_i = 0$, action $1$ is wrong, and this quantity equals $p_i$. If $B_i = 1$, action $0$ is wrong, and this quantity equals $1 - p_i$.

    When $\bbP \sbr{V_i^c} > 0$, averaging over $\cD$ and $\xi$ and multiplying by $\bbP \sbr{V_i^c}$ gives the identity below; when $\bbP \sbr{V_i^c} = 0$, the identity is trivial. Hence
    \begin{equation} \label{eq:reward-realizability-half-error}
        \bbE \sbr*{
            \piout_1 \spr{\cA \setminus \scbr{B_i} \given x^i_{\start}}
            \mathbbm{1}\scbr{V_i^c}
        }
        =
        \frac12 \bbP \sbr*{V_i^c}.
    \end{equation}
    Because $\tauE$ is a fixed deterministic budget and the initial state in each interaction round is sampled independently from $\initial$, the event $V_i^c$ has probability
    \begin{equation*}
        \bbP \sbr*{V_i^c}
        =
        \spr{1 - q}^{\tauE}.
    \end{equation*}
    Combining \cref{eq:reward-realizability-gap,eq:reward-realizability-half-error} with this identity and discarding the nonnegative contribution of $x^0_{\start}$ gives
    \begin{align}
        \bbE \sbr*{J_{\cM_B}^{\pi_\experttag^B} - J_{\cM_B}^{\piout}}
        &\geq
        \sum_{i = 1}^{X}
        q \bbE \sbr*{
            \piout_1 \spr{\cA \setminus \scbr{B_i} \given x^i_{\start}}
            \mathbbm{1}\scbr{V_i^c}
        }
        =
        \frac{X q}{2} \spr{1 - q}^{\tauE}.
        \label{eq:reward-realizability-average-risk}
    \end{align}
    Since the left-hand side averages uniformly over $B$, there exists a fixed $b \in \scbr{0, 1}^X$ such that
    \begin{equation*}
        \bbE \sbr*{J_{\cM_b}^{\pi_\experttag^b} - J_{\cM_b}^{\piout}}
        \geq
        \frac{X q}{2} \spr{1 - q}^{\tauE}.
    \end{equation*}
    If $\tauE < X / \spr{8 \varepsilon}$, Bernoulli's inequality gives
    \begin{equation*}
        \spr{1 - q}^{\tauE}
        =
        \spr{1 - \frac{4 \varepsilon}{X}}^{\tauE}
        \geq
        1 - \frac{4 \varepsilon \tauE}{X}
        >
        \frac12.
    \end{equation*}
    Therefore, $\cM_b$ satisfies
    \begin{equation*}
        \bbE \sbr*{J_{\cM_b}^{\pi_\experttag^b} - J_{\cM_b}^{\piout}}
        >
        \frac{X}{2} \cdot \frac{4 \varepsilon}{X} \cdot \frac12
        =
        \varepsilon.
    \end{equation*}
    Thus, achieving expected suboptimality at most $\varepsilon$ on every instance requires $\tauE \geq X / \spr{8 \varepsilon}$. Since $\abs{\cX} = 3 X + 3$, the sample-complexity statement follows.

\end{proof}


\clearpage
\section{Proofs from \texorpdfstring{\cref{sec:lb}}{Section} (\texorpdfstring{$\qexpert$}{Q-expert}-Realizability Is Insufficient to Learn Offline)}
\label{sec:proofs_lb}
Here, we show that an offline precollected expert dataset is insufficient for efficient IL (without paying for the expert policy hypothesis class) in MDPs where only $\qexpert$ is realizable. Our lower bound holds against all IL algorithms that output a value-induced policy, as defined in \Cref{def:valueIL}. We recall the definition of such algorithms next.

\para{Offline Value-Based Imitation Learning Algorithms} An algorithm $\mathrm{Alg}$ is said to be a value-based IL algorithm if the following conditions hold.
\begin{enumerate}
    \item $\mathrm{Alg}$ receives an expert dataset $\cDE$ and a state-action value function class $\cQ$.
    
    \item For some $K \in \bbN \cup \scbr{\infty}$, $\mathrm{Alg}$ chooses a sequence $Q_1, Q_2, \ldots, Q_K \in \cQ$ and coefficients $\spr{w_k}_{k = 1}^K$ defining the linear combination $\mathrm{LC} \spr{\spr{Q_k}_{k = 1}^K} = \sum_{k = 1}^K w_k Q_k$. When $K = \infty$, this series must converge absolutely at every stage-state-action tuple.
    
    \item $\mathrm{Alg}$ outputs the policy
    \begin{equation*}
        \piout \spr*{a \given x}
        \propto
        f \spr*{\mathrm{LC} \spr*{\spr*{Q_k}^K_{k = 1}} \spr*{x, a}}\,,
    \end{equation*}
    for any state-action pair $\spr{x, a}$, with the requirement that the normalizing denominator is strictly positive and finite at every stage-state pair.
\end{enumerate}
The pointwise-limit convention in \Cref{def:valueIL} also includes valid pointwise limits of policies of this form.

Next, we show that the class of algorithms that output a VI policy is general. It includes, for instance, \SPOIL \citep{moulin2025inverse}, while the pointwise-limit convention also represents greedy policies under suitable conditions.

\para{\SPOIL uses VI policies}
Choosing $f \spr{\cdot} = \exp \spr{\cdot}$ and taking $\mathrm{LC}$ to be the sum multiplied by a scalar $\eta \geq 0$ gives
\begin{equation*}
    \piout \spr*{a \given x}
    =
    \softmax \spr*{\eta {\textstyle\sumkK} Q_k \spr*{x, \cdot}}_a,
\end{equation*}
which captures the output strategy of \SPOIL and \Cref{alg:interactive_finite-H}.

\para{Greedy policies as limits of VI policies}
In the contextual-bandit setting of \citet{joshi2025learning}, we have $H = 1$, so we can omit the subscript $h$ and consider a reward class $\cR$ rather than a value function class $\cQ$. Their learner forms a reward $r \spr{x, a} = \sum_{\tilde{r} \in \cR} w \spr{\tilde{r}} \tilde{r} \spr{x, a}$ using learned weights $w \in \Delta \spr{\cR}$ and uses a predictor that is greedy with respect to $r$. Such a predictor is not a finite-temperature VI policy: for any finite $\eta$, choosing $f_\eta \spr{z} = \exp \spr{\eta z}$ gives the softmax VI policy $\pi_\eta \spr{\cdot \given x} = \softmax \spr{\eta r \spr{x, \cdot}}$, which generally assigns positive mass to nongreedy actions. Under our standing assumptions that $\cA$ is finite and $r$ is finite-valued, however, the softmax normalizer is strictly positive and finite, and these policies converge pointwise as $\eta \to \infty$.
\begin{equation*}
    \lim_{\eta \to \infty}
    \pi_\eta \spr*{a \given x}
    =
    \frac{
        \mathds{1}_{\scbr*{a \in \argmax_{b \in \cA} r \spr*{x, b}}}
    }{
        \abs*{\argmax_{b \in \cA} r \spr*{x, b}}
    }.
\end{equation*}
Consequently, a uniformly tie-broken greedy policy belongs to the VI class through its pointwise-limit convention. A deterministic greedy policy does as well when the maximizing action is unique. Other tie-breaking rules are covered only if they can be realized by an admissible sequence of VI scores. This gives a limiting connection to the roundwise greedy extraction step of \citet{joshi2025learning}; it does not assert that their final mixed output policy is itself a VI policy.

\para{Algorithms used in practice output VI policies}
Beyond prior work in IL theory \citep{moulin2025optimistically}, algorithms used in practice in IL, such as \texttt{IQ-LEARN} \citep{Garg:2021} and \texttt{CSIL} \citep{watson2023coherent}, are also value-based imitation algorithms. Indeed, these algorithms choose a single $Q$-function from the class $\cQ$ and output $\piout = \softmax \spr{\alpha Q \spr{x, a}}$. This is recovered by the definition of algorithms outputting VI policies, choosing $\alpha$ to be the weight of the linear combination and taking $f \spr{\cdot} = \exp \spr{\cdot}$.

\subsection{Proof of \texorpdfstring{\Cref{thm:lower}}{Theorem} (Lower Bound Against Offline Algorithms that Output VI Policies)}
\label{sec:proof_lower_bound}

We prove \Cref{thm:lower}, which shows nonidentifiability within the  class of algorithms outputting VI policies: even with infinite offline data, no such algorithm can guarantee error below $1/4$.

\Lower*

\begin{proof}[\pfref{thm:lower}]
    For clarity, we carry out the proof using centered rewards $r_h \in \sbr{-1, 0}$. Adding $1$ to every reward yields an equivalent MDP with rewards in $\sbr{0, 1}$. This shift preserves optimal policies, suboptimality gaps, and all action comparisons. Shifting the value class accordingly yields a class of the same cardinality that lies in $\sbr{0, 2}$ and satisfies $\qexpert$-realizability with $\QMAX = 2$. Although an arbitrary link function need not be translation invariant, every function in the shifted class still assigns equal values to the two actions at $x_{\start}$, and the proof treats the action distribution at $x_-$ as arbitrary. Hence the argument below applies unchanged. Formally, $a_1 \equiv +$ and $a_2 \equiv -$ are the two global actions; for every unspecified state-action-stage tuple, we assign zero centered reward and a transition to the absorbing state $x_{\texttt{end}}$, whose features are zero.

    \begin{figure}[htbp]
        \centering
        \begin{subfigure}[b]{0.45\textwidth}
            \centering
            \begin{tikzpicture}[
                x=0.90cm, y=0.90cm,
                state_node/.style={circle, draw=black!80, thick, minimum size=1.0cm, inner sep=1pt, align=center, font=\footnotesize, fill=gray!5},
                expert_arrow/.style={->, >={Stealth[round, bend]}, line width=0.95pt, blue!80!black, shorten >=1pt, shorten <=1pt},
                std_arrow/.style={->, >={Stealth[round, bend]}, thick, black!80, shorten >=1pt, shorten <=1pt},
                edge_label/.style={font=\scriptsize, align=center, fill=white, inner xsep=1.5pt, inner ysep=0.7pt},
                plus_label/.style={edge_label, pos=0.5, yshift=-5.2pt, fill opacity=0, text opacity=1, inner sep=0pt},
                minus_label/.style={edge_label, pos=0.53, xshift=3.2pt, yshift=3.2pt, fill opacity=0, text opacity=1, inner sep=0pt},
                upper_action_label/.style={edge_label, pos=0.48, above=2.3pt, xshift=-2.2pt},
                lower_action_label/.style={edge_label, pos=0.72, below=12pt, xshift=4.0pt, inner xsep=2.8pt, inner ysep=1.1pt}
            ]
                \node[state_node] (start) at (0,0) {$x_{\start}$};
                \node[state_node] (xminus) at (1.50,-1.35) {$x_-$};
                \node[state_node] (xend) at (3.55,0) {$x_{\texttt{end}}$};

                \draw[expert_arrow] (start) to[bend left=32] node[plus_label, text=blue!80!black] {$+$} (xend);
                \draw[std_arrow] (start) -- node[minus_label] {$-$} (xminus);
                \draw[std_arrow] (xminus) to[out=36, in=-145] node[upper_action_label] {$a_1$} (xend);
                \draw[std_arrow] (xminus) to[out=-4, in=-115] node[lower_action_label] {$a_2,\ r=-1$} (xend);
            \end{tikzpicture}
            \caption{MDP $\cM_1$: Action $a_1$ is optimal.}
            \label{fig:scenario1}
        \end{subfigure}
        \hfill 
        \begin{subfigure}[b]{0.45\textwidth}
            \centering
            \begin{tikzpicture}[
                x=0.90cm, y=0.90cm,
                state_node/.style={circle, draw=black!80, thick, minimum size=1.0cm, inner sep=1pt, align=center, font=\footnotesize, fill=gray!5},
                expert_arrow/.style={->, >={Stealth[round, bend]}, line width=0.95pt, blue!80!black, shorten >=1pt, shorten <=1pt},
                std_arrow/.style={->, >={Stealth[round, bend]}, thick, black!80, shorten >=1pt, shorten <=1pt},
                edge_label/.style={font=\scriptsize, align=center, fill=white, inner xsep=1.5pt, inner ysep=0.7pt},
                plus_label/.style={edge_label, pos=0.5, yshift=-5.2pt, fill opacity=0, text opacity=1, inner sep=0pt},
                minus_label/.style={edge_label, pos=0.53, xshift=3.2pt, yshift=3.2pt, fill opacity=0, text opacity=1, inner sep=0pt},
                upper_action_label/.style={edge_label, pos=0.48, above=2.3pt, xshift=-2.2pt},
                lower_action_label/.style={edge_label, pos=0.72, below=12pt, xshift=4.0pt, inner xsep=2.8pt, inner ysep=1.1pt}
            ]
                \node[state_node] (start) at (0,0) {$x_{\start}$};
                \node[state_node] (xminus) at (1.50,-1.35) {$x_-$};
                \node[state_node] (xend) at (3.55,0) {$x_{\texttt{end}}$};

                \draw[expert_arrow] (start) to[bend left=32] node[plus_label, text=blue!80!black] {$+$} (xend);
                \draw[std_arrow] (start) -- node[minus_label] {$-$} (xminus);
                \draw[std_arrow] (xminus) to[out=36, in=-145] node[upper_action_label] {$a_2$} (xend);
                \draw[std_arrow] (xminus) to[out=-4, in=-115] node[lower_action_label] {$a_1,\ r=-1$} (xend);
            \end{tikzpicture}
            \caption{MDP $\cM_2$: Action $a_2$ is optimal.}
            \label{fig:scenario2}
        \end{subfigure}
        \caption{\label{fig:hardMDPenv} Family of hard tasks. Unless otherwise specified, the reward is $0$. The blue path is the one taken by the deterministic expert. $x_{\texttt{end}}$ is an absorbing state.}
    \end{figure}

    We consider the family of hard MDPs $\cF = \scbr{\cM_1, \cM_2}$ depicted in \Cref{fig:hardMDPenv}. We consider the following $2$-dimensional features that satisfy the $Q^\expert$-realizability assumption.
    \begin{itemize}
        \item $\phi \spr{x_{\start}, +} = \phi \spr{x_{\start}, -} = \sbr{0, 0}\transpose$,
        \item $\phi \spr{x_-, a_1} = \sbr{0, 1}\transpose$,
        \item $\phi \spr{x_-, a_2} = \sbr{1, 0}\transpose$.
    \end{itemize}
    If the expert policy satisfies $\expertone \spr{+ \given x_{\start}} = 1$ and $\expertone \spr{a_1 \given x_-} = 1$, then we have
    \begin{itemize}
        \item $Q^\expertone_{\cM_1} \spr{x_{\start}, +} = Q^\expertone_{\cM_1} \spr{x_{\start}, -} = 0$,
        \item $Q^\expertone_{\cM_1} \spr{x_-, a_1} = 0$,
        \item $Q^\expertone_{\cM_1} \spr{x_-, a_2} = -1$.
    \end{itemize}
    Thus, by choosing $\theta_1 = \sbr{-1, 0}$, we can realize the expert action-value function as $Q^\expertone_{\cM_1} \spr{x, a} = \phi \spr{x, a}\transpose \theta_1$ for all $\spr{x, a} \in \cX \times \cA$. For the second MDP in $\cF$, $\cM_2$, the optimal expert policy satisfies $\experttwo \spr{+ \given x_{\start}} = 1$ and $\experttwo \spr{a_2 \given x_-} = 1$. Taking $\theta_2 = \sbr{0, -1}$, we can similarly write $Q^\experttwo_{\cM_2} \spr{x, a} = \phi \spr{x, a}\transpose \theta_2$, so $\cM_2$ is also $\qexpert$-realizable with the same features. Therefore, the two MDPs differ only in which action is good at $x_-$: $a_1$ in $\cM_1$ and $a_2$ in $\cM_2$.

    Fix any dataset size $\tauE \in \bbN^\star$. We now start proving our result. For the moment, we consider the case of deterministic value-based offline IL algorithms that choose a sequence of state-action value functions without randomization. Moreover, we consider the class
    \begin{equation*}
        \cQ
        =
        \scbr*{Q^\expertone_{\cM_1}, Q^\experttwo_{\cM_2}},
    \end{equation*}
    where $Q^\expertone_{\cM_1} \spr{x, a} = \phi \spr{x, a}\transpose \theta_1$ and $Q^\experttwo_{\cM_2} \spr{x, a} = \phi \spr{x, a}\transpose \theta_2$ for all $x, a$. Notice that in both $\cM_1$ and $\cM_2$, we have that  $Q^\expertone_{\cM_1} \spr{x_{\start}, a} = Q^\experttwo_{\cM_2} \spr{x_{\start}, a} = 0$ for all $a \in \scbr{+, -}$.

    Under this setting, no matter how $\mathrm{Alg}$ selects the sequence $Q_1, \dots, Q_K$, the output policy at the starting state is uniform: $\piout \spr{\cdot \given x_{\start}} = \mathrm{Unif} \spr{\scbr{+, -}}$. Indeed, every $Q \in \cQ$ assigns the same value to the two actions $+$ and $-$ at $x_{\start}$. Hence any well-defined linear combination $\mathrm{LC} \spr{Q^{1:K}}$ also assigns the same scalar to the two start actions, including when $K = \infty$. Since $f$ is a function and the normalizing denominator is positive by definition of a valid output policy, the two numerator terms are equal and nonzero. Therefore, the policy $\piout$ induced by any linear combination $\mathrm{LC}$ and function $f$ in \Cref{def:valueIL} guarantees $\piout \spr{+ \given x_{\start}} = \piout \spr{- \given x_{\start}} = \frac12$. Moreover, taking pointwise limits or averaging policies obtained from different choices of sequences in $\cQ$ still gives a uniform policy.
    
    It follows that, with probability $1/2$, the next state is $x_-$. Suppose that the sequence $Q_1, \dots, Q_K$ chosen by $\mathrm{Alg}$ induces probabilities $\piout \spr{a_1 \given x_-} = p$ and $\piout \spr{a_2 \given x_-} = 1 - p$. The choice of $Q_1, \dots, Q_K$, and hence of $p$, cannot depend on whether the underlying MDP is $\cM_1$ or $\cM_2$, because the expert dataset is identical in the two cases, even with infinite data. Therefore, the suboptimality at $x_-$, averaged over the MDP class $\cF$, equals $1/2$ as shown next.
    \begin{align*}
        \frac12 \sum^2_{i = 1} \spr*{Q^\experti_{\cM_i} \spr*{x_-, a_i} - Q^\experti_{\cM_i} \spr*{x_-, \piout}}
        &=
        \frac12 \sum^2_{i = 1} \spr*{- Q^\experti_{\cM_i} \spr*{x_-, \piout}} \\
        &=
        \frac{p}{2} \sum^2_{i = 1} \spr*{- r_{\cM_i} \spr*{x_-, a_1}} + \frac{1 - p}{2} \sum^2_{i = 1} \spr*{- r_{\cM_i} \spr*{x_-, a_2}} \\
        &=
        \frac{p}{2} + \frac{1 - p}{2} \\
        &=
        \frac12.
    \end{align*}
    Fix $i \in \scbr*{1, 2}$. Applying the performance difference lemma (\cref{lem:performance-difference}) with $\pi = \piout$ and $\pi' = \experti$ to the MDP whose initial distribution is a point mass at $x_{\start}$ gives the following identity. Suppressing the stage index as above, the only state-occupancy terms that can contribute are $d^{\piout}_1 \spr{x_{\start}} = 1$ and $d^{\piout}_2 \spr{x_-} = \piout \spr{- \given x_{\start}} = 1 / 2$. All terms at $x_{\texttt{end}}$ vanish because the expert $Q$-values there are zero. Therefore,
    \begin{align*}
        V^{\experti}_{\cM_i} \spr*{x_{\start}} - V^{\piout}_{\cM_i} \spr*{x_{\start}}
        &=
        \underbrace{1}_{d^{\piout}_1 \spr{x_{\start}}} \sum_{a \in \scbr*{+, -}} \spr*{\experti \spr*{a \given x_{\start}} - \piout \spr*{a \given x_{\start}}} Q^\experti_{\cM_i} \spr*{x_{\start}, a} \\
        &\quad
        + \underbrace{\frac12}_{d^{\piout}_2 \spr{x_-}} \sum_{a \in \scbr*{a_1, a_2}} \spr*{\experti \spr*{a \given x_-} - \piout \spr*{a \given x_-}} Q^\experti_{\cM_i} \spr*{x_-, a} \\
        &=
        0 + \frac12 \spr*{Q^\experti_{\cM_i} \spr*{x_-, a_i} - \sum_{a \in \scbr*{a_1, a_2}} \piout \spr*{a \given x_-} Q^\experti_{\cM_i} \spr*{x_-, a}} \\
        &=
        \frac12 \spr*{Q^\experti_{\cM_i} \spr*{x_-, a_i} - Q^\experti_{\cM_i} \spr*{x_-, \piout}}.
    \end{align*}
    Separately, the worst-case suboptimality gap is at least the average of the two suboptimality gaps over the two instances, so combining the two derivations above gives
    \begin{align*}
        \max_{\cM \in \scbr*{\cM_1, \cM_2}} V^\expert_\cM \spr*{x_{\start}} - V^{\piout}_\cM \spr*{x_{\start}}
        &\geq
        \frac12 \sum^2_{i = 1} \spr*{V^{\experti}_{\cM_i} \spr*{x_{\start}} - V^{\piout}_{\cM_i} \spr*{x_{\start}}} \\
        &=
        \frac12 \sum^2_{i = 1} \frac12 \spr*{Q^\experti_{\cM_i} \spr*{x_-, a_i} - Q^\experti_{\cM_i} \spr*{x_-, \piout}} \\
        &=
        \frac12 \cdot \underbrace{\frac12 \sum^2_{i = 1} \spr*{Q^\experti_{\cM_i} \spr*{x_-, a_i} - Q^\experti_{\cM_i} \spr*{x_-, \piout}}}_{= 1 / 2 \text{ by the preceding display}}
        =
        \frac{1}{4}.
    \end{align*}
    The preceding display holds for every deterministic offline algorithm that outputs a VI policy under the uniform distribution over the two instances in $\cF$. Therefore, by Yao's minimax principle (see, \eg, \citealp{bubeck2012regret}), the same lower bound holds against possibly randomized algorithms:
    \begin{equation*}
        \max_{\cM \in \scbr*{\cM_1, \cM_2}} \bbE_{\piout \sim \mathrm{Alg}} \sbr*{V^\star_\cM \spr*{x_{\start}} - V^{\piout}_\cM \spr*{x_{\start}}}
        \geq
        \frac{1}{4},
    \end{equation*}
    where we used the equalities $V^\expertone_{\cM_1} = V^\star_{\cM_1}$ and $V^\experttwo_{\cM_2} = V^\star_{\cM_2}$, which follow by optimality of the experts.
\end{proof}

\subsection{Proof of \texorpdfstring{\Cref{thm:lower_any_algo}}{Theorem} (Lower Bound Against Any Offline IL Algorithm)}

In the above construction, we could learn by memorizing one expert action in $x_{\start}$ and playing it at deployment time. However, we can build a harder extended MDP by repeating the hard instance from \Cref{fig:hardMDPenv} several times, as shown in \Cref{fig:extended_hard}. This lets us derive an information-theoretic lower bound against any offline IL algorithm. For algorithms outside the class of algorithms that output a VI policy, the problem is no longer nonidentifiable. However, the required dataset size scales linearly with the number of states, while the class $\cQ$ realizing $\qexpert$ has cardinality independent of $\abs{\cX}$.\loose

\LowerAny*

\begin{proof}[\pfref{thm:lower_any_algo}]
    As in the preceding proof of \Cref{thm:lower}, we use centered rewards $r_h \in \sbr{-1, 0}$ for convenience. Applying the same shifts and conventions yields the required formal construction without affecting the argument.\loose

    We consider a family $\cF = \bc{\cM^{\mathrm{ext}}_j}^{2^{X+1}}_{j=1}$ in which each of the $2^{X+1}$ extended MDPs has $3 X + 3$ states and is obtained by considering $X + 1$ copies of the structure in \Cref{fig:hardMDPenv}, with starting states labeled as $x^i_{\start}$ for $i \in \scbr{0, \dots, X}$. For any $\varepsilon \leq \frac18$, denote $q = \frac{8 \varepsilon}{X}$ and consider the following initial distribution
    \begin{equation*}
        \initial \spr*{x^0_{\start}} = 1 - X q,
        \qquad \text{and} \qquad
        \initial \spr*{x^i_{\start}} = q
        \quad \text{for all } i \neq 0.
    \end{equation*}
    To construct the MDP family, we consider that each of the $X$ copies indexed by $i \neq 0$ can be in two modes:
    \begin{itemize}
        \item In mode $+$, we consider that the action $+$ goes straight from $x^i_{\start}$ to $x^i_{\mathsf{end}}$, without passing through $x^i_{-}$.
        \item In mode $-$, instead the action $-$ goes straight from $x^i_{\start}$ to $x^i_{\mathsf{end}}$, without passing through $x^i_{-}$.
    \end{itemize}
    Considering all possible combinations of modes for each copy except the first, \ie, for $i=1, \dots, X$, we obtain $2^X$ MDPs. Moreover, for each of these $2^X$ transition configurations, we can choose, as in the proof of \Cref{thm:lower}, the reward parameters to be $\theta_1 = [-1, 0]$ or $\theta_2 = [0, -1]$. Therefore, we obtain a family of $2^{X+1}$ MDPs. A visual representation of an extended MDP with parameters $\theta_1$ is given in \Cref{fig:extended_hard}.
    
    The family of $2^{X+1}$ optimal expert policies is defined so that the expert policy in the copy with root state $x^i_{\start}$ in the MDP $\cM^{\mathrm{ext}}_j$ takes the optimal action that does not pass through $x^i_{-}$. That is, the expert chooses action $+$ if the $i^{\mathrm{th}}$ copy is in mode $+$ in the MDP $\cM^{\mathrm{ext}}_j$ and action $-$ if the copy is in mode $-$. At each $x^i_-$, the expert takes an action that is optimal for the chosen reward parameter, while at every absorbing state all expert policies take the same fixed action.
    
    Since all the experts are optimal, we denote the expert action-value function in the MDP $\cM^{\mathrm{ext}}_j$ by $Q^\star_{\cM^{\mathrm{ext}}_j}$ and the state value function by $V^\star_{\cM^{\mathrm{ext}}_j}$. At this point, note that for each $j \in [2^{X+1}]$, we have that $Q^\star_{\cM^{\mathrm{ext}}_j}$ is affine in the features
    \begin{equation*}
        \phi \spr{x^i_{\start}, +} = \phi \spr{x^i_{\start}, -} = \sbr{0, 0}\transpose,
        \quad
        \phi \spr{x^i_-, a_1} = \sbr{0, 1}\transpose,
        \quad\text{and}\quad
        \phi \spr{x^i_-, a_2} = \sbr{1, 0}\transpose,
    \end{equation*}
    with parameters either $\theta_1$ or $\theta_2$. Denoting $Q_\theta \spr{x, a} = \phi \spr{x, a}\transpose \theta$, the class $\cQ = \bc{Q_{\theta_1}, Q_{\theta_2}}$ satisfies $\qexpert$-realizability in $\cM^{\mathrm{ext}}_j$ for every $j \in [2^{X+1}]$ and has cardinality independent of $X$. Formally, we have $Q^\star_{\cM^{\mathrm{ext}}_j} \in \cQ$ for any $j\in[2^{X+1}]$.
    
    In each MDP, the expert from the root states $x^0_{\start}, \dots, x^X_{\start}$ only takes the action that avoids the states $x^0_{-}, \dots, x^X_{-}$, so it does not reveal which action (between $a_1$ and $a_2$) is optimal in those states and the learner cannot determine whether the reward parameter vector is $\theta_1$ or $\theta_2$.
    \begin{figure}[!t]
        \centering
        \begin{tikzpicture}[
            x=0.90cm, y=0.90cm,
            state/.style={circle, draw=black!80, thick, minimum size=1.0cm, inner sep=1pt, align=center, font=\footnotesize, fill=gray!5},
            seen_state/.style={state, draw=green!50!black, fill=green!8},
            unseen_state/.style={state, draw=red!70!black, fill=red!6},
            expert_arrow/.style={->, >={Stealth[round, bend]}, line width=0.95pt, green!50!black, shorten >=1pt, shorten <=1pt},
            learner_arrow/.style={->, >={Stealth[round, bend]}, line width=0.95pt, red!70!black, dashed, shorten >=1pt, shorten <=1pt},
            std_arrow/.style={->, >={Stealth[round, bend]}, thick, black!80, shorten >=1pt, shorten <=1pt},
            edge_label/.style={font=\scriptsize, align=center, fill=white, inner xsep=1.5pt, inner ysep=0.7pt},
            plus_label/.style={edge_label, pos=0.5, yshift=-5.2pt, fill opacity=0, text opacity=1, inner sep=0pt},
            minus_label/.style={edge_label, pos=0.53, xshift=3.2pt, yshift=3.2pt, fill opacity=0, text opacity=1, inner sep=0pt},
            upper_action_label/.style={edge_label, pos=0.48, above=2.3pt, xshift=-2.2pt},
            lower_action_label/.style={edge_label, pos=0.72, below=12pt, xshift=4.0pt, inner xsep=2.8pt, inner ysep=1.1pt},
            prob_label/.style={font=\footnotesize, align=center, text=black, inner sep=1pt}
        ]

            \begin{scope}
                \node[seen_state] (s0) at (0,0) {$x^0_{\start}$};
                \node[state] (xm0) at (1.50,-1.35) {$x^0_-$};
                \node[state] (xe0) at (3.55,0) {$x^0_{\texttt{end}}$};

                \node[prob_label] at (1.77,1.25) {$\initial(x^0_{\start}) = 1 - X q$};
                \draw[expert_arrow] (s0) to[bend left=32] node[plus_label, text=green!50!black] {$+$} (xe0);
                \draw[std_arrow] (s0) -- node[minus_label] {$-$} (xm0);
                \draw[std_arrow] (xm0) to[out=36, in=-145] node[upper_action_label] {$a_1$} (xe0);
                \draw[std_arrow] (xm0) to[out=-4, in=-115] node[lower_action_label] {$a_2,\ r=-1$} (xe0);
            \end{scope}

            \begin{scope}[xshift=5.00cm]
                \node[seen_state] (s1) at (0,0) {$x^1_{\start}$};
                \node[state] (xm1) at (1.50,-1.35) {$x^1_-$};
                \node[state] (xe1) at (3.55,0) {$x^1_{\texttt{end}}$};

                \node[prob_label] at (1.77,1.25) {$\initial(x^1_{\start}) = q$};
                \draw[expert_arrow] (s1) to[bend left=32] node[plus_label, text=green!50!black] {$+$} (xe1);
                \draw[std_arrow] (s1) -- node[minus_label] {$-$} (xm1);
                \draw[std_arrow] (xm1) to[out=36, in=-145] node[upper_action_label] {$a_1$} (xe1);
                \draw[std_arrow] (xm1) to[out=-4, in=-115] node[lower_action_label] {$a_2,\ r=-1$} (xe1);
            \end{scope}

            \begin{scope}[xshift=9.75cm]
                \foreach \dotx in {-0.16,0,0.16} {
                    \fill[black!55] (\dotx,0) circle[radius=1.1pt];
                }
            \end{scope}

            \begin{scope}[xshift=11.35cm]
                \node[unseen_state] (si) at (0,0) {$x^i_{\start}$};
                \node[state] (xmi) at (1.50,-1.35) {$x^i_-$};
                \node[state] (xei) at (3.55,0) {$x^i_{\texttt{end}}$};

                \node[prob_label] at (1.77,1.25) {$\initial(x^i_{\start}) = q$};
                \draw[learner_arrow] (si) to[bend left=32] node[plus_label, text=red!70!black] {$+$} (xei);
                \draw[learner_arrow] (si) -- node[minus_label, text=red!70!black] {$-$} (xmi);
                \draw[std_arrow] (xmi) to[out=36, in=-145] node[upper_action_label] {$a_1$} (xei);
                \draw[std_arrow] (xmi) to[out=-4, in=-115] node[lower_action_label] {$a_2,\ r=-1$} (xei);
            \end{scope}
        \end{tikzpicture}
        \caption{Representation of a MDP in the class $\cF$. The states highlighted in green appear in $\cDE$, so the learner can deterministically replay the expert action $+$. The red states are absent from $\cDE$. Thus, no expert guidance is available there, and the learner reaches $x^i_{-}$ with probability at least $\nicefrac{1}{2}$ in some environment in $\cF$, where it plays an action that is suboptimal under one of the two reward parameter choices $\theta_1$ and $\theta_2$. Here, $q = 8 \varepsilon / X$, so $\initial$ assigns mass $1 - X q$ to $x^0_{\start}$ and mass $q$ to every other root state. This initial distribution follows the lower-bound construction for tabular offline IL from \citet{rajaraman2020toward} and ensures that the expected suboptimality can be below $\varepsilon$ only if $\tauE = \Omega \spr{\abs{\cX} \varepsilon^{-1}}$.\loose \label{fig:extended_hard}}
    \end{figure}

    For the averaging argument, draw an environment $\cM$ uniformly from $\cF$, equivalently by drawing its modes $\sigma_1, \dots, \sigma_X$ independently and uniformly from $\bc{+, -}$ and its reward parameter $\Theta$ independently and uniformly from $\bc{\theta_1, \theta_2}$. Here, $\sigma_i$ specifies the mode of copy $i$, so action $\sigma_i$ bypasses $x^i_-$, while $\Theta$ is the reward parameter shared by all copies and determines which action is optimal at each $x^i_-$. Let $\xi$ be a random seed, independent of the environment and expert data, that encodes all of the learner's internal randomness, and define $\mathsf{T} \ldef \spr{\cDE, \xi}$. Thus, $\mathsf{T}$ consists exactly of the expert dataset and the learner's random seed. Fix $i \in \sbr{X}$ and consider the event $\cE_i \ldef \scbr{x^i_{\start} \notin \cDE}$. On $\cE_i$, the dataset contains no observation from copy $i$, so it does not depend on $\sigma_i$. Moreover, it does not depend on $\Theta$: the expert never visits any of the states $x^0_-, \dots, x^X_-$, rewards are unobserved, and all expert policies take the same fixed action at the absorbing states. Since $\xi$ is independent of the environment and expert data, $\mathsf{T}$ is therefore independent of $\spr{\sigma_i, \Theta}$ conditional on $\cE_i$, even though it may reveal the modes of other sampled copies. Consequently, conditional on any realization $\mathsf{T} = t$ and $\cE_i$, the variables $\sigma_i$ and $\Theta$ remain independent and uniform, while the learner's output policy may depend arbitrarily on $t$.
    Fix such a transcript $t$, write $\piout_t$ for the resulting output policy, and let $p_t \ldef \piout_{t,1} \spr{+ \given x^i_{\start}}$ and $u_t \ldef \piout_{t,2} \spr{a_1 \given x^i_-}$. Averaging over $\sigma_i$, the probability of reaching $x^i_-$ is $\spr{1 - p_t} / 2 + p_t / 2 = 1 / 2$. Independently, averaging over $\Theta$, the suboptimality at $x^i_-$ is $\spr{1 - u_t} / 2 + u_t / 2 = 1 / 2$. Therefore, for every such $t$,
    \begin{equation*}
        \bbE
        \sbr*{
            V^\star_{\cM} \spr*{x^i_{\start}}
            -
            V^{\piout_t}_{\cM} \spr*{x^i_{\start}}
            \given
            \mathsf{T} = t, \cE_i
        }
        =
        \spr*{\frac{1 - p_t}{2} + \frac{p_t}{2}}
        \spr*{\frac{1 - u_t}{2} + \frac{u_t}{2}}
        =
        \frac14.
    \end{equation*}
    For each $j$, let $\bbE_j$ denote expectation under the transcript distribution in $\cM^{\mathrm{ext}}_j$ and the learner's internal randomness, and write $\Delta_{j,i} \ldef V^\star_{\cM^{\mathrm{ext}}_j} \spr{x^i_{\start}} - V^{\piout}_{\cM^{\mathrm{ext}}_j} \spr{x^i_{\start}}$. Since $\cE_i$ depends only on the initial-state draws, it has probability $\spr{1 - q}^{\tauE}$ in every environment. Averaging the preceding identity over $\mathsf{T}$ therefore yields the unconditional bound
    \begin{equation*}
        \frac{1}{2^{X+1}} \sum_{j = 1}^{2^{X+1}}
        \bbE_j
        \sbr*{
            \Delta_{j,i}
            \mathds{1}_{\cE_i}
        }
        \geq
        \frac14 \spr*{1 - q}^{\tauE}.
    \end{equation*}
    Optimality implies $\Delta_{j,i} \geq 0$ at every root state. Thus, dropping the $i = 0$ term and retaining only the terms on $\cE_i$, for any $j \in \sbr{2^{X+1}}$ we have
    \begin{align*}
        \inp*{\initial, V_{\cM^{\mathrm{ext}}_j}^{\star} - V_{\cM^{\mathrm{ext}}_j}^{\piout}}
        &=
        \sum^{X}_{i = 0} \initial \spr*{x^i_{\start}} \spr*{V_{\cM^{\mathrm{ext}}_j}^{\star} \spr*{x^i_{\start}} - V_{\cM^{\mathrm{ext}}_j}^{\piout} \spr*{x^i_{\start}}} \\
        &\geq
        \sum^{X}_{i = 1} \initial \spr*{x^i_{\start}} \spr*{V_{\cM^{\mathrm{ext}}_j}^{\star} \spr*{x^i_{\start}} - V_{\cM^{\mathrm{ext}}_j}^{\piout} \spr*{x^i_{\start}}} \\
        &=
        \sum^{X}_{i = 1} q \spr*{V_{\cM^{\mathrm{ext}}_j}^{\star} \spr*{x^i_{\start}} - V_{\cM^{\mathrm{ext}}_j}^{\piout} \spr*{x^i_{\start}}} \\
        &\geq
        \sum^{X}_{i = 1} q \spr*{V_{\cM^{\mathrm{ext}}_j}^{\star} \spr*{x^i_{\start}} - V_{\cM^{\mathrm{ext}}_j}^{\piout} \spr*{x^i_{\start}}} \mathds{1}_{\cE_i}.
    \end{align*}
    Taking $\bbE_j$ in the preceding inequality, averaging over the environments, and applying the unconditional bound above gives
    \begin{align*}
        \max_{j \in \sbr{2^{X+1}}}
        &\bbE_j
        \sbr*{\inp*{\initial, V_{\cM^{\mathrm{ext}}_j}^{\star} - V_{\cM^{\mathrm{ext}}_j}^{\piout}}} \\
        &\geq
        \frac{1}{2^{X+1}} \sum_{j = 1}^{2^{X+1}}
        \bbE_j
        \sbr*{\inp*{\initial, V_{\cM^{\mathrm{ext}}_j}^{\star} - V_{\cM^{\mathrm{ext}}_j}^{\piout}}} \\
        &\geq
        \sum^{X}_{i = 1} q
        \frac{1}{2^{X+1}} \sum_{j = 1}^{2^{X+1}}
        \bbE_j \sbr*{\Delta_{j,i} \mathds{1}_{\cE_i}} \\
        &\geq
        \frac{q}{4}\sum^{X}_{i = 1} \spr*{1 - q}^{\tauE}
        \\
        &=
        \frac{X}{4} q \spr*{1 - q}^{\tauE}.
    \end{align*}
    Then, if $\tauE \leq X / \spr{32 \varepsilon}$, Bernoulli's inequality gives $\spr{1 - q}^{\tauE} = \spr{1 - \frac{8 \varepsilon}{X}}^{\tauE} \geq 1 - \frac{8 \varepsilon \tauE}{X} \geq \frac34$. Therefore,
    \begin{equation*}
        \max_{\cM \in \cF} \bbE \sbr*{\inp*{\initial, V_{\cM}^{\star} - V_{\cM}^{\piout}}} \geq \frac{X}{4} \cdot \frac{8 \varepsilon}{X} \cdot \frac34
        = \frac{3 \varepsilon}{2}
        > \varepsilon,
    \end{equation*}
    Thus, recalling $3 \spr{X + 1} = \abs{\cX}$, achieving $\varepsilon$-suboptimality requires $\tauE = \Omega \spr{\nicefrac{\abs{\cX}}{\varepsilon}}$ for any offline IL algorithm.\loose
\end{proof}


\newpage
\part{Additional Results}

We collect here additional results omitted from the main text. We begin by completing \cref{sec:ub} with a variant of \ISPIL. In \Cref{sec:q_first_ispil}, we show that reversing the order in which the players update in \ISPIL yields an algorithm that is less computationally efficient, but provides statistical benefits for nonconvex $\cQ$ classes and allows us to learn a stationary solution in discounted MDPs.\loose

In addition to the lower bound from \cref{sec:lb}, we show in \cref{sec:break_lb_with_coverage} that it is possible to learn offline if one assumes the (optimal) expert covers the learner's distribution.

Moreover, in \Cref{sec:adaptive_algo}, we show that interaction enables efficient learning even when the class $\cQ$ realizes either $\qexpert$ or the $Q$-value of any policy that the algorithm might produce, without the learner knowing which case holds. The key algorithmic idea is to sample states from a mixture of the expert and learner occupancy measures, a technique proven successful in the context of language models \citep{agarwal2024policy,li2026revisiting}. This adaptive variant of \ISPIL therefore applies to a strictly larger set of classes $\cQ$ than that covered by the offline algorithm of \citet{moulin2025inverse}, revealing a previously unknown benefit of interaction.

\section{\texorpdfstring{$Q$-\ISPIL: An Inefficient Algorithm with Improved Statistical Guarantees}{Q-ISPIL: An Inefficient Algorithm with Improved Statistical Guarantees}}
\label{sec:q_first_ispil}
In this section, we present an alternative algorithmic scheme that first updates the $Q$-variables via a no-regret algorithm. The resulting method, $Q$-\ISPIL, is computationally inefficient when $\cQ$ is continuous because it requires discretizing $\cQ$, but it enjoys better statistical rates when $\cQ$ is nonconvex. When the $Q$-player moves first, the policy player $\pi^k_h$ can be chosen greedily with respect to $Q^k_h$. For each $\spr{k, h} \in \sbr{K} \times \sbr{H}$, define
\begin{equation*}
    \Xkh
    \sim
    d^{\piout}_h,
    \qquad
    \AEkh
    \sim
    \experth \spr*{\cdot \given \Xkh},
    \qquad
    \wh{\ell}_h^{\piout, k} \spr*{Q}
    =
    Q \spr*{\Xkh, \pi^k_h} - Q \spr*{\Xkh, \AEkh}.
\end{equation*}
To complete the design of $Q$-\ISPIL, we use \Cref{lem:Qfirst} below and note that the sequence $\spr{Q^k_h}^K_{k = 1}$ can be chosen to minimize a regret with respect to losses $\wh{\ell}_h^{\piout, 1}, \ldots, \wh{\ell}_h^{\piout, K}$. In the following, for any $h \in \sbr{H}$, $\cC_{\epsilon} \spr{\cQ_h}$ denotes an $\epsilon$-cover of $\cQ_h$, and we abbreviate $\maxcovering{\epsilon} \ldef \max_{h \in \sbr{H}} \cN_{\epsilon} \spr{\cQ_h}$. Then, for a weight distribution $w^k_h \in \simplex \spr{\cC_{\varepsilon / \spr{4 H}} \spr{\cQ_h}}$ over a covering set $\cC_{\varepsilon / \spr{4 H}} \spr{\cQ_h}$ of $\cQ_h$, we define $Q^k_h$ as the corresponding weighted average. For any state-action pair $\spr{x, a} \in \cX \times \cA$,
\begin{equation*}
    Q^k_h \spr*{x, a}
    =
    \sum_{Q \in \cC_{\varepsilon / \spr*{4 H}} \spr*{\cQ_h}} w^k_h \spr*{Q} Q \spr*{x, a}.
\end{equation*}
The next lemma specifies how to choose the weights.

\begin{algorithm}[!ht]
    \caption{\texttt{Q-}\ISPIL: $Q$-First On-Policy Value-Based Imitation Learning \label{alg:exp_weights_finite_H}}
    \begin{algorithmic}[1]
        \STATE \textbf{input:} Learning rate $\eta$, iterations $K$, finite covering sets $\cC_{\varepsilon / \spr{4 H}} \spr{\cQ_h} \subseteq \cQ_h$ for $h \in \sbr{H}$.
        \STATE For any $h \in \sbr{H}$ and $Q \in \cC_{\varepsilon / \spr{4 H}} \spr{\cQ_h}$, set $w^1_h \spr{Q} = 1 / \cN_{\varepsilon / \spr{4 H}} \spr{\cQ_h}$.
        \STATE For any $h \in \sbr{H}$ and $\spr{x, a} \in \cX \times \cA$, set $Q^1_h \spr{x, a} = \sum_{Q \in \cC_{\varepsilon / \spr{4 H}} \spr{\cQ_h}} w^1_h \spr{Q} Q \spr{x, a}$.
        \STATE For any $h \in \sbr{H}$ and $x \in \cX$, set $\pi^1_h \spr{\cdot \given x} \in \argmax_{p \in \simplex \spr{\cA}} \inp*{p, Q^1_h \spr{x, \cdot}}$.
        \FOR{$h = 1, \dots, H$}
        \FOR{$k = 1, \dots, K$}
        \STATE Sample $\Xkh \sim d^{\piout}_h$, $\AEkh \sim \experth \spr{\cdot \given \Xkh}$.
        \STATE For any $Q \in \cC_{\varepsilon / \spr{4 H}} \spr{\cQ_h}$, set $w^{k+1}_h \spr{Q} \propto w^k_h \spr{Q} e^{\eta \spr{Q \spr{\Xkh, \AEkh} - Q \spr{\Xkh, \pi^k_h}}}$.
        \STATE For any $\spr{x, a} \in \cX \times \cA$, set $Q^{k+1}_h \spr{x, a} = \sum_{Q \in \cC_{\varepsilon / \spr{4 H}} \spr{\cQ_h}} w^{k+1}_h \spr{Q} Q \spr{x, a}$.
        \STATE For any $x \in \cX$, set $\pi^{k+1}_h \spr{\cdot \given x} \in \argmax_{p \in \simplex \spr{\cA}} \inp*{p, Q^{k+1}_h \spr{x, \cdot}} $.
        \ENDFOR
        \STATE Set $\piouth \spr{a \given x} = \frac1K \sumkK \pi^k_h \spr{a \given x}$ for any $\spr{x, a} \in \cX \times \cA$.
        \ENDFOR
    \end{algorithmic}
\end{algorithm}

\begin{restatable}{Lem}{Qfirst} \label{lem:Qfirst}
    Let \Cref{asp:q-expert-realizability} hold, and fix an accuracy parameter $\varepsilon > 0$. For each $\spr{k, h} \in \sbr{K} \times \sbr{H}$, suppose that $w^k_h \in \simplex \spr{\cC_{\varepsilon / \spr{4 H}} \spr{\cQ_h}}$ and that $Q^k_h$ is the corresponding weighted average. For each $h \in \sbr{H}$, define $\pi^k_h \spr{\cdot \given x} \in \argmax_{p \in \simplex \spr{\cA}} \inp{p, Q^k_h \spr{x, \cdot}}$, and set $\piouth \spr{a \given x} = \frac1K \sum^K_{k = 1} \pi^k_h \spr{a \given x}$ for any state-action pair $\spr{x, a} \in \cX \times \cA$. Let $\hQ^\expert_h \in \cC_{\varepsilon / \spr{4 H}} \spr{\cQ_h}$ be a closest element to $Q^\expert_h$ for each $h \in \sbr{H}$. Then, with probability at least $1 - \delta$, we have
    \begin{equation*}
        J^\expert - J^{\piout}
        \leq
        \frac{H}{K} \max_{h \in \sbr*{H}} \regret^w_h + 8 H \QMAX \sqrt{\frac{2 \log \spr*{2 H / \delta}}{K}} + \frac{\varepsilon}{2},
    \end{equation*}
    where the regret of the sequence $\spr{w^k_h}^K_{k = 1}$ is defined by
    \begin{equation*}
        \regret^w_h
        =
        \sumkK \spr*{\sum_{Q \in \cC_{\varepsilon / \spr*{4 H}} \spr*{\cQ_h}} w^k_h \spr*{Q} \wh{\ell}_h^{\piout, k} \spr*{Q} - \wh{\ell}_h^{\piout, k} \spr*{\hQ^\expert_h}}.
    \end{equation*}
\end{restatable}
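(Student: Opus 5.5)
We follow the template of \Cref{lem:pifirst}, but with the roles of the players reversed: the $\pi$-player is greedy, and all statistical error is absorbed into a martingale term instead of a uniform concentration over $\Pi_\cQ$. The argument has four steps.

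\textbf{Step 1 (expert-value decomposition).} By the performance difference lemma (\Cref{lem:performance-difference}) and linearity of $\cL_h^{\piout}$ in its first argument,
\begin{equation*}
J^\expert - J^{\piout} = \frac1K \sumhH \sumkK \cL_h^{\piout}\spr{\pi^k_h, \qexperth}.
\end{equation*}
We replace $\qexperth$ by its cover element $\hQ^\expert_h$. Since $\norm{\qexperth - \hQ^\expert_h}_\infty \le \varepsilon/(4H)$, and $\cL_h^{\piout}(p,\cdot)$ is a difference of two expectations of $Q$, each term changes by at most $2\cdot \varepsilon/(4H)$. Summing over $h$ costs at most $\varepsilon/2$.

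\textbf{Step 2 (passing to sampled losses).} We relate $\cL_h^{\piout}\spr{\pi^k_h, \hQ^\expert_h}$ to $-\wh\ell_h^{\piout,k}(\hQ^\expert_h)$. Adding and subtracting $\sum_Q w^k_h(Q)\wh\ell_h^{\piout,k}(Q)$ gives
\begin{equation*}
\sumkK \cL_h^{\piout}(\pi^k_h,\hQ^\expert_h) = \regret^w_h - \sumkK \textstyle\sum_Q w^k_h(Q)\wh\ell_h^{\piout,k}(Q) + \sumkK\bigl(\cL_h^{\piout}(\pi^k_h,\hQ^\expert_h) + \wh\ell^{\piout,k}_h(\hQ^\expert_h)\bigr).
\end{equation*}
For the middle term, linearity of $Q\mapsto Q(x,\pi^k_h)-Q(x,a)$ gives $\sum_Q w^k_h(Q)\wh\ell_h^{\piout,k}(Q) = Q^k_h(\Xkh,\pi^k_h) - Q^k_h(\Xkh,\AEkh)$.

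\textbf{Step 3 (greediness).} Conditioned on the history before round $k$, this quantity has expectation $\sum_x d^{\piout}_h(x)\inp{Q^k_h(x,\cdot), \pi^k_h - \experth}$. That expectation is $\ge 0$ because $\pi^k_h$ is greedy with respect to $Q^k_h$. Hence the middle term is at most a martingale difference sum.

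\textbf{Step 4 (concentration).} The last sum is likewise a martingale difference sum. The key point is that $\pi^k_h$, $w^k_h$ and $\hQ^\expert_h$ are all predictable with respect to the filtration. The reason is that $d^{\piout}_h$ depends only on $\piout_{1:h-1}$, which the layer-wise structure has fixed before stage $h$ begins, and fresh samples $(\Xkh,\AEkh)$ are drawn each round. Each increment is bounded by $4\QMAX$, since each of the two pieces lies in $[-\QMAX,\QMAX]$ plus its conditional mean.

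We apply Azuma--Hoeffding (\Cref{lemma:azumahoeffding}) to both martingale sums at each $h$, and take a union bound over $h$ with failure probability $\delta/(2H)$ per $h$ and sum. This yields a deviation of order $8\QMAX\sqrt{2K\log(2H/\delta)}$ per stage; the exact constants follow from combining the two sums with $M=4\QMAX$. Summing over $h$ and dividing by $K$ gives the term $8H\QMAX\sqrt{2\log(2H/\delta)/K}$. Bounding $\sum_h \regret^w_h \le H\max_h\regret^w_h$ then completes the bound.

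\textbf{Main obstacle.} The delicate step is the measurability bookkeeping: we must confirm that $\pi^k_h$ and $Q^k_h$ are determined before $(\Xkh,\AEkh)$ is drawn, so that both sums are genuine martingales. Without this, uniform concentration over a policy class would be required, which is exactly what this scheme avoids.
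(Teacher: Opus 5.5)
Your proposal is correct and follows essentially the same route as the paper: the performance difference lemma, an $\varepsilon/(2H)$-per-stage cover replacement, greediness of $\pi^k_h$ to discard the $Q^k_h$ term, linearity to identify $\wh\ell_h^{\piout,k}(Q^k_h)$ with the $w^k_h$-weighted loss, and Azuma--Hoeffding with a union bound over $h$. The only difference is the order of the steps. The paper applies greediness directly to the population term $\cL_h^{\piout}(\pi^k_h, Q^k_h) \le 0$ and then works with a single martingale $M^k_h$. That martingale is exactly your two sums combined; its increments are bounded by $4\QMAX$, so the stated constant $8\QMAX$ holds with room to spare.
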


\Cref{lem:Qfirst} says that, with probability at least $1 - \delta$, the performance gap $J^\expert - J^{\piout}$ is upper bounded by $\frac{H}{K} \max_{h \in \sbr*{H}} \regret^w_h + 8 H \QMAX \sqrt{\frac{2 \log \spr*{2 H / \delta}}{K}} + \frac{\varepsilon}{2}$. $\regret^w_h$ can be easily controlled by updating $\spr{w^k_h}^K_{k = 1}$ via exponential weights. The resulting algorithm is in \Cref{alg:exp_weights_finite_H}. Therefore, using the decomposition in \Cref{lem:Qfirst} and showing that $\regret^w_h = \cO \spr{\sqrt{K}}$ via a standard online learning bound for the regret of the exponential weights algorithm (see, \eg, \citealp{Cesa-Bianchi:2006,orabona2023modern}), we obtain the following guarantees.

\begin{restatable}{theorem}{Qfirstmain}\label{thm:Q_first_main}
    Let \Cref{asp:q-expert-realizability} hold. For any $\varepsilon, \delta \in \spr{0, 1}$, recall that $\maxcovering{\varepsilon / \spr*{4 H}} \ldef \max_{h \in \sbr*{H}} \cN_{\varepsilon / \spr*{4 H}} \spr*{\cQ_h}$. Run $Q$-\ISPIL (\Cref{alg:exp_weights_finite_H}) with
    \begin{equation*}
        \eta
        =
        \sqrt{\frac{\log \maxcovering{\varepsilon / \spr*{4 H}}}{K \QMAX^2}},
        \qquad
        K
        =
        \cO \spr*{\frac{H^2 \QMAX^2 \log \spr*{\maxcovering{\varepsilon / \spr*{4 H}} H / \delta}}{\varepsilon^2}},
    \end{equation*}
    Then, the algorithm outputs a policy $\piout$ such that, with probability at least $1 - \delta$, we have $J^\expert - J^{\piout} \leq \varepsilon$ after
    \begin{equation*}
        \cO \spr*{\frac{H^3 \QMAX^2 \log \spr*{\maxcovering{\varepsilon / \spr*{4 H}} H / \delta}}{\varepsilon^{2}}} \qquad \text{expert queries.}
    \end{equation*}
\end{restatable}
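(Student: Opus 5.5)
The plan is to combine \Cref{lem:Qfirst} with the standard exponential-weights regret bound, then tune $K$. \Cref{lem:Qfirst} already gives, with probability at least $1-\delta$,
\begin{equation*}
    J^\expert - J^{\piout} \leq \frac{H}{K}\max_{h}\regret^w_h + 8H\QMAX\sqrt{\frac{2\log(2H/\delta)}{K}} + \frac{\varepsilon}{2}.
\end{equation*}
What remains is to bound $\regret^w_h$ for the update in \Cref{alg:exp_weights_finite_H} and to choose $K$ so that the first two terms sum to at most $\varepsilon/2$.

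\textbf{Regret of the $w$-player.} For each fixed $h$, the weights $w^{k+1}_h \propto w^k_h e^{-\eta \wh\ell^{\piout,k}_h}$ are exactly exponential weights on the finite set $\cC_{\varepsilon/(4H)}(\cQ_h)$. The update starts from the uniform distribution, and the losses satisfy $|\wh\ell^{\piout,k}_h(Q)| \leq \QMAX$ because both $Q(\Xkh,\pi^k_h)$ and $Q(\Xkh,\AEkh)$ lie in $[0,\QMAX]$.

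Two points need checking. First, the losses are chosen adaptively, since $\pi^k_h$ depends on $w^k_h$. The regret bound is deterministic against any adaptively chosen bounded loss sequence, so this is not a problem. Second, the comparator $\hQ^\expert_h$ is a vertex of the simplex over the cover.

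Applying \Cref{lemma:mirror_orabona} on this simplex gives
\begin{equation*}
    \regret^w_h \leq \frac{\log \maxcovering{\varepsilon/(4H)}}{\eta} + \frac{\eta K \QMAX^2}{2},
\end{equation*}
using the negative-entropy Bregman divergence, which is $1$-strongly convex in $\ell_1$, with $D(u,w^1_h)\le \log N$. With the stated $\eta$ this yields $\regret^w_h \leq 2\QMAX\sqrt{K\log\maxcovering{\varepsilon/(4H)}}$. Alternatively, \Cref{lemma:FTRL} gives the same bound up to constants.

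\textbf{Tuning $K$.} Substituting the regret bound, the first two terms are at most
\begin{equation*}
    2H\QMAX\sqrt{\log N/K} + 8H\QMAX\sqrt{2\log(2H/\delta)/K},
\end{equation*}
where $N = \maxcovering{\varepsilon/(4H)}$. Taking $K = c\,H^2\QMAX^2\log(NH/\delta)/\varepsilon^2$ with a large enough absolute constant $c$ makes this sum at most $\varepsilon/2$, so $J^\expert - J^{\piout} \le \varepsilon$. Each iteration of each stage uses exactly one expert query, so the total is $HK = \cO(H^3\QMAX^2\log(NH/\delta)/\varepsilon^2)$.

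\textbf{Main obstacle.} The real difficulty sits inside \Cref{lem:Qfirst}, which we take as given. In particular, its concentration term must hold even though $\Xkh \sim d^{\piout}_h$ while the iterates $\pi^k_h$ are chosen adaptively. This is handled by Azuma--Hoeffding on a martingale difference sequence, where each difference is bounded by $4\QMAX$. Beyond that lemma, the only care needed is to confirm that the loss-sign convention of \Cref{lemma:mirror_orabona} matches the update $w^{k+1}_h \propto w^k_h e^{\eta(Q(\Xkh,\AEkh)-Q(\Xkh,\pi^k_h))}$, which is descent on $\wh\ell$. It does.
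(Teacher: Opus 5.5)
Your proposal is correct and follows the paper's proof essentially step for step. You start from the decomposition in \Cref{lem:Qfirst} and bound $\regret^w_h$ by applying \Cref{lemma:mirror_orabona} to exponential weights on the cover, where the uniform initialization gives a divergence of at most $\log \maxcovering{\varepsilon/(4H)}$; you then tune $K$ and count $HK$ expert queries. Your constants are slightly sharper than the paper's ($\abs{\wh\ell^{\piout,k}_h} \leq \QMAX$ and $\lambda = 1$, versus the paper's $2\QMAX$ and $\lambda \geq 1/2$). The only extra care the paper takes is the degenerate case $\maxcovering{\varepsilon/(4H)} = 1$: there the prescribed $\eta = 0$ makes the term $\log \maxcovering{\varepsilon/(4H)}/\eta$ undefined, but the regret is trivially zero.
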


Despite having strong statistical guarantees, \Cref{alg:exp_weights_finite_H} is not computationally attractive when the class is large or continuous because the construction of the covering set is cumbersome. We compare $Q$-\ISPIL and \ISPIL below.

\para{Comparison of the computational complexity}
As mentioned, \Cref{alg:interactive_finite-H} is better suited for a practical implementation because it does not require discretizing the class $\cQ$. While we think that the linear case could be made efficient, this approach is clearly not scalable when $\cQ$ corresponds to the class of weights for modern architectures. $Q$-\ISPIL would learn a distribution over a set of exponentially many weight configurations. In contrast, \ISPIL is compatible with the common practice of specifying a loss function and backpropagating through it.

\para{Comparison of the statistical complexity}
When the class $\cQ$ is convex, both algorithms achieve a rate of order $\cO \spr{\varepsilon^{-2}}$. A rate difference emerges in the nonconvex case, where \Cref{alg:exp_weights_finite_H} has a sample complexity of order $\cO \spr{\varepsilon^{-2}}$, while \Cref{alg:interactive_finite-H} has a suboptimal $\cO \spr{\varepsilon^{-4}}$ rate.\loose

To summarize, $Q$-\ISPIL (\Cref{alg:exp_weights_finite_H}) has better statistical properties but is computationally inefficient compared to \ISPIL (\Cref{alg:interactive_finite-H}) because it requires discretizing the class $\cQ$.

\subsection{Proof of \texorpdfstring{\Cref{lem:Qfirst}}{Lemma} (Suboptimality Gap Decomposition for \texorpdfstring{$Q$-\ISPIL}{Q-ISPIL})}

\begin{proof}[\pfref{lem:Qfirst}]
    By the performance difference lemma (\Cref{lem:performance-difference}) and by the definition of $\piouth$, we have
    \begin{align*}
        J^\expert - J^{\piout}
        &=
        \sumhH \cL_h^{\piout} \spr*{\piouth, Q^\expert_h} \\
        &=
        \frac1K \sumhH \sumkK \cL_h^{\piout} \spr*{\pi^k_h, Q^\expert_h} \\
        &=
        \frac1K \sumhH \sumkK \cL_h^{\piout} \spr*{\pi^k_h, Q^k_h} + \frac1K \sumhH \sumkK \cL_h^{\piout} \spr*{\pi^k_h, Q^\expert_h - Q^k_h}.
    \end{align*}
    Since $\pi^k_h \spr{\cdot \given x} \in \argmax_{p \in \simplex \spr{\cA}} \inp{p, Q^k_h \spr{x, \cdot}}$, we have $\cL_h^{\piout} \spr{\pi^k_h, Q^k_h} \leq 0$. Hence,
    \begin{equation*}
        J^\expert - J^{\piout}
        \leq
        \frac1K \sumhH \sumkK \cL_h^{\piout} \spr*{\pi^k_h, Q^\expert_h - Q^k_h}.
    \end{equation*}
    Let $r > 0$ and let $\hQ^\expert_h \in \cC_r \spr{\cQ_h}$ be such that $\norm{\hQ^\expert_h - Q^\expert_h}_\infty \leq r$ for each $h \in \sbr{H}$. For any decision rule $p$ and functions $Q_h, Q'_h$,
    \begin{equation*}
        \abs*{\cL_h^{\piout} \spr*{p, Q_h - Q'_h}}
        \leq
        \norm*{Q_h - Q'_h}_\infty
        \sum_{x \in \cX} d_h^{\piout} \spr*{x} \norm*{\experth \spr*{\cdot \given x} - p \spr*{\cdot \given x}}_1
        \leq
        2 \norm*{Q_h - Q'_h}_\infty.
    \end{equation*}
    Thus, replacing $Q^\expert_h$ by $\hQ^\expert_h$ costs at most $2 r$ for each pair $\spr{k, h}$, hence at most $2 H r$ after averaging over $k$ and summing over $h$. Replacing $Q^\expert_h$ by $\hQ^\expert_h$ in the previous display gives
    \begin{equation} \label{eq:first}
        J^\expert - J^{\piout}
        \leq
        \frac1K \sumhH \sumkK \cL_h^{\piout} \spr*{\pi^k_h, \hQ^\expert_h - Q^k_h} + 2 H r.
    \end{equation}
    For each $\spr{k, h} \in \sbr{K} \times \sbr{H}$, define the sampled loss
    \begin{equation*}
        \wh{\ell}_h^{\piout, k} \spr*{Q}
        =
        Q \spr*{\Xkh, \pi^k_h} - Q \spr*{\Xkh, \AEkh},
        \qquad
        \Xkh \sim d^{\piout}_h,
        \qquad
        \AEkh \sim \experth \spr*{\cdot \given \Xkh}.
    \end{equation*}
    Then,
    \begin{equation*}
        \bbE \sbr*{\wh{\ell}_h^{\piout, k} \spr*{Q^k_h} - \wh{\ell}_h^{\piout, k} \spr*{\hQ^\expert_h} \given \cF_{k - 1, h}}
        =
        \cL_h^{\piout} \spr*{\pi^k_h, \hQ^\expert_h - Q^k_h},
    \end{equation*}
    where $\cF_{k - 1, h}$ contains all randomness from previous layers and from the first $k - 1$ samples at layer $h$. Conditionally on $\cF_{k - 1, h}$, the distribution $d_h^{\piout}$ is fixed because it only depends on the already computed policies $\piout_1, \ldots, \piout_{h - 1}$, and $Q^k_h$ and $\pi^k_h$ are also fixed because they are functions of the previous samples at layer $h$. Therefore, the increments
    \begin{equation*}
        M^k_h
        =
        \cL_h^{\piout} \spr*{\pi^k_h, \hQ^\expert_h - Q^k_h} - \spr*{\wh{\ell}_h^{\piout, k} \spr*{Q^k_h} - \wh{\ell}_h^{\piout, k} \spr*{\hQ^\expert_h}}
    \end{equation*}
    form a martingale difference sequence in $k$, for each fixed $h \in \sbr{H}$. Moreover, since $\hQ^\expert_h$ belongs to the cover and $Q^k_h$ is a convex combination of cover elements, both functions are bounded in sup-norm by $\QMAX$. Thus, $\norm{\hQ^\expert_h - Q^k_h}_\infty \leq 2 \QMAX$, and both $\abs{\cL_h^{\piout} \spr{\pi^k_h, \hQ^\expert_h - Q^k_h}}$ and $\abs{\wh{\ell}_h^{\piout, k} \spr{Q^k_h} - \wh{\ell}_h^{\piout, k} \spr{\hQ^\expert_h}}$ are at most $2 \norm{\hQ^\expert_h - Q^k_h}_\infty \leq 4 \QMAX$. Hence $\abs{M^k_h} \leq 8 \QMAX$ almost surely. By the Azuma-Hoeffding inequality and a union bound over $h \in \sbr{H}$, with probability at least $1 - \delta$, for all $h \in \sbr{H}$,
    \begin{equation*}
        \frac1K \sumkK M^k_h
        \leq
        8 \QMAX \sqrt{\frac{2 \log \spr*{2 H / \delta}}{K}}.
    \end{equation*}
    Summing over $h \in \sbr{H}$ incurs an extra $H$ factor in the upper bound. Finally, using $Q^k_h = \sum_{Q \in \cC_{r} \spr{\cQ_h}} w^k_h \spr{Q} Q$ and taking $r = \varepsilon / \spr{4 H}$, we obtain from \Cref{eq:first}
    \begin{align*}
        J^\expert - J^{\piout}
        &\leq
        \frac1K \sumhH \sumkK \spr*{\sum_{Q \in \cC_{r} \spr*{\cQ_h}} w^k_h \spr*{Q} \wh{\ell}_h^{\piout, k} \spr*{Q} - \wh{\ell}_h^{\piout, k} \spr*{\hQ^\expert_h}} + 8 H \QMAX \sqrt{\frac{2 \log \spr*{2 H / \delta}}{K}} + 2 H r \\
        &=
        \frac1K \sumhH \regret^w_h + 8 H \QMAX \sqrt{\frac{2 \log \spr*{2 H / \delta}}{K}} + \frac{\varepsilon}{2} \\
        &\leq
        \frac{H}{K} \max_{h \in \sbr*{H}} \regret^w_h + 8 H \QMAX \sqrt{\frac{2 \log \spr*{2 H / \delta}}{K}} + \frac{\varepsilon}{2},
    \end{align*}
    which gives the stated form.
\end{proof}

\subsection{Proof of \texorpdfstring{\Cref{thm:Q_first_main}}{Theorem} (Sample Complexity Guarantee for \texorpdfstring{$Q$-\ISPIL}{Q-ISPIL})}

\begin{proof}[\pfref{thm:Q_first_main}]
    From the decomposition proven in \Cref{lem:Qfirst}, with probability at least $1 - \delta$, we have
    \begin{equation*}
        J^\expert - J^{\piout}
        \leq
        \frac{H}{K} \max_{h \in \sbr*{H}} \regret^w_h + 8 H \QMAX \sqrt{\frac{2 \log \spr*{2 H / \delta}}{K}} + \frac{\varepsilon}{2}.
    \end{equation*}
    Thus, it remains to control the regret $\regret^w_h$ for each $h \in \sbr{H}$. The update in \Cref{alg:exp_weights_finite_H} can be written for any $Q \in \cC_{\varepsilon / \spr{4 H}}\spr{\cQ_h}$ as
    \begin{equation*}
        w^{k + 1}_h \spr*{Q}
        \propto
        w^k_h \spr*{Q} e^{\eta \spr*{Q \spr*{\Xkh, \AEkh} - Q \spr*{\Xkh, \pi^k_h}}}.
    \end{equation*}
    Equivalently, this is exponential weights with sampled losses $\wh{\ell}_h^{\piout, k}$. Moreover, by the definition of the function class, each $Q_h \in \cQ_h$ satisfies $\norm{Q_h}_\infty \leq \QMAX$, so $\abs{\wh{\ell}_h^{\piout, k} \spr{Q_h}} \leq 2 \QMAX$ for all $Q_h \in \cQ_h$. If $\maxcovering{\varepsilon / \spr{4 H}} = 1$, every stagewise cover is a singleton, so the prescribed learning rate is $\eta = 0$ but $w^k_h$ assigns unit mass to $\hQ^\expert_h$ for every $h$ and $k$. Hence, $\regret^w_h = 0$ for all $h$, and the regret bound below holds directly without invoking \Cref{lemma:mirror_orabona}. We may therefore suppose that $\maxcovering{\varepsilon / \spr{4 H}} \geq 2$, so $\eta > 0$. For any stage with a singleton cover, the same zero-regret conclusion holds. Fix a stage $h \in \sbr{H}$ with a non-singleton cover and apply \Cref{lemma:mirror_orabona} to the simplex $V = \simplex \spr{\cC_{\varepsilon / \spr{4 H}} \spr{\cQ_h}}$ with loss vectors $\ell_k \spr{Q} = \wh{\ell}_h^{\piout, k} \spr{Q}$. The update above is mirror descent with the negative entropy regularizer, whose Bregman divergence $D$ is the KL divergence. We use as comparator $\hQ^\expert_h \in \cC_{\varepsilon / \spr{4 H}} \spr{\cQ_h}$. Since $w^1_h$ is uniform over the cover, $D \spr{\mb{e}_{\hQ^\expert_h}, w^1_h} \leq \log \cN_{\varepsilon / \spr{4 H}} \spr{\cQ_h}$, where $\mb{e}_{\hQ^\expert_h}$ is the unit vector at $\hQ^\expert_h$. Thus, using the bounds $\norm{\ell_k}_\infty \leq 2 \QMAX$ and $\lambda \geq 1 / 2$, \Cref{lemma:mirror_orabona} gives
    \begin{equation*}
        \regret^w_h
        \leq
        \frac{\log \cN_{\varepsilon / \spr*{4 H}} \spr*{\cQ_h}}{\eta} + 4 \eta K \QMAX^2.
    \end{equation*}
    Using $\regret^w_h = 0$ for singleton stagewise covers and substituting $\eta = \sqrt{\log \maxcovering{\varepsilon / \spr{4 H}} / \spr{K \QMAX^2}}$ in the preceding bound for non-singleton covers, we obtain
    \begin{equation*}
        \max_{h \in \sbr*{H}} \regret^w_h
        \leq
        5 \sqrt{K \QMAX^2 \log \maxcovering{\varepsilon / \spr*{4 H}}}.
    \end{equation*}
    Combining the decomposition above with the regret bound gives
    \begin{align*}
        J^\expert - J^{\piout}
        &\leq
        5 H \sqrt{\frac{\QMAX^2 \log \maxcovering{\varepsilon / \spr*{4 H}}}{K}} + 8 H \QMAX \sqrt{\frac{2 \log \spr*{2 H / \delta}}{K}} + \frac{\varepsilon}{2} \\
        &\leq
        \spr*{5 + 8 \sqrt{2}} H \QMAX \sqrt{\frac{\log \spr*{\maxcovering{\varepsilon / \spr*{4 H}} 2 H / \delta}}{K}} + \frac{\varepsilon}{2}.
    \end{align*}
    Choosing $K$ so that
    \begin{equation*}
        K
        \geq
        4 \spr*{5 + 8 \sqrt{2}}^2
        \frac{H^2 \QMAX^2 \log \spr*{\maxcovering{\varepsilon / \spr*{4 H}} 2 H / \delta}}{\varepsilon^2}
    \end{equation*}
    makes the first term in the display above at most $\varepsilon / 2$, and therefore $J^\expert - J^{\piout} \leq \varepsilon$ with probability at least $1 - \delta$. Since \Cref{alg:exp_weights_finite_H} makes one expert query for each pair $\spr{k, h} \in \sbr{K} \times \sbr{H}$, the total number of expert queries is
    \begin{equation*}
        H K
        =
        \cO \spr*{H^3 \QMAX^2 \log \spr*{\maxcovering{\varepsilon / \spr*{4 H}} H / \delta} \varepsilon^{-2}}.
    \end{equation*}
\end{proof}

\subsection{Learning a Stationary Policy with \texorpdfstring{$Q$-\ISPIL}{Q-ISPIL}}
\label{sec:infinite_horizon}

In this section, we show that $Q$-\ISPIL can be used to learn a stationary policy that competes with the expert in the discounted infinite-horizon setting. We formalize the setting next.

\para{Infinite-horizon MDPs} We consider a discounted MDP $\cM = \spr{\cX, \cA, r, P, \gamma, \initial}$, where $\cX$ and $\cA$ are finite state and action spaces, $r\colon \cX \times \cA \to \sbr{0, 1}$ is the reward function, $P\colon \cX \times \cA \to \simplex \spr{\cX}$ is the transition kernel, $\gamma \in [0, 1)$ is the discount factor, and $\initial \in \simplex \spr{\cX}$ is the initial distribution. A policy is a mapping $\pi\colon \cX \to \simplex \spr{\cA}$. Rolling out $\pi$ generates a trajectory $\spr{\mb{x}_h, \mb{a}_h}^\infty_{h = 0}$ by drawing $\mb{x}_0 \sim \initial$, $\mb{a}_h \sim \pi \spr{\cdot \given \mb{x}_h}$, and $\mb{x}_{h + 1} \sim P \spr{\cdot \given \mb{x}_h, \mb{a}_h}$ for all $h \geq 0$. For any state-action pair $\spr{x, a} \in \cX \times \cA$, the discounted state-action value function is
\begin{equation*}
    Q^\pi_\gamma \spr*{x, a}
    =
    \bbE^\pi \sbr*{\sum^\infty_{h = 0} \gamma^h r \spr*{\mb{x}_h, \mb{a}_h} \given \mb{x}_0 = x, \mb{a}_0 = a}.
\end{equation*}
For any function $Q\colon \cX \times \cA \to \bbR$ and policy $\pi$, we use $Q \spr{x, \pi}$ to denote $\bbE_{\mb{a} \sim \pi \spr{\cdot \given x}} \sbr{Q \spr{x, \mb{a}}}$. The discounted state occupancy measure of $\pi$ is denoted by $d^\pi_\gamma \in \simplex \spr{\cX}$ and defined by
\begin{equation*}
    d^\pi_\gamma \spr*{x}
    =
    \spr*{1 - \gamma} \sum^\infty_{h = 0} \gamma^h \bbP^\pi \sbr*{\mb{x}_h = x}.
\end{equation*}
Finally, we denote the normalized expected return of $\pi$ by
\begin{equation*}
    J^\pi_\gamma
    =
    \spr*{1 - \gamma} \bbE^\pi \sbr*{\sum^\infty_{h = 0} \gamma^h r \spr*{\mb{x}_h, \mb{a}_h}}.
\end{equation*}

\para{Infinite-horizon $Q$-\ISPIL and guarantees}
For simplicity, we consider finite classes $\cQ$ in this section. The result can be extended to infinite classes via covering numbers, as in the finite-horizon case. We study the following infinite-horizon version of $Q$-\ISPIL.
\begin{algorithm}[!h]
    \caption{Infinite-horizon $Q$-\ISPIL \label{alg:exp_weights}}
    \begin{algorithmic}[1]
        \STATE \textbf{input:} Learning rate $\eta$, iterations $K$, finite class $\cQ$.
        \STATE Set $w_1 \spr{Q} = 1 / \abs{\cQ}$ for all $Q \in \cQ$.
        \STATE For any $\spr{x, a} \in \cX \times \cA$, set $Q_1 \spr{x, a} = \sum_{Q \in \cQ} w_1 \spr{Q} Q \spr{x, a}$.
        \STATE For any $x \in \cX$, set $\pi_1 \spr{\cdot \given x} \in \argmax_{p \in \simplex \spr{\cA}} \inp*{p, Q_1 \spr{x, \cdot}}$.
        \FOR{$k = 1, \dots, K$}
        \STATE Sample $\mb{x}_k \sim d^{\pi_k}_\gamma$, $\mb{a}^\experttag_k \sim \expert \spr{\cdot \given \mb{x}_k}$.
        \STATE For any $Q \in \cQ$, set $w_{k + 1} \spr{Q} \propto w_k \spr{Q} e^{\eta \spr{Q \spr{\mb{x}_k, \mb{a}^\experttag_k} - Q \spr{\mb{x}_k, \pi_k}}}$.
        \STATE For any $\spr{x, a} \in \cX \times \cA$, set $Q_{k + 1} \spr{x, a} = \sum_{Q \in \cQ} w_{k + 1} \spr{Q} Q \spr{x, a}$.
        \STATE For any $x \in \cX$, set $\pi_{k + 1} \spr{\cdot \given x} \in \argmax_{p \in \simplex \spr{\cA}} \inp*{p, Q_{k + 1} \spr{x, \cdot}}$.
        \ENDFOR
        \STATE Draw $I \sim \mathrm{Unif} \spr{\sbr{K}}$ and output $\piout = \pi_I$.
    \end{algorithmic}
\end{algorithm}
For simplicity, we assume direct access to samples from $d^{\pi_k}_\gamma$. Under ordinary trajectory access, each sample can instead be generated by a geometric rollout, yielding expected environment-interaction complexity $\cO \spr*{K / \spr{1 - \gamma}}$, an additional factor of $1 / \spr{1 - \gamma}$. The algorithm still makes one expert query per round.

The resulting guarantee is as follows.
\begin{theorem}\label{thm:Qfirst_infinite}
    Assume $Q^\expert_\gamma \in \cQ$, $\abs{\cQ}\geq2$ and $\norm{Q}_\infty \leq \spr{1 - \gamma}^{-1}$ for all $Q \in \cQ$. For any $\varepsilon \in \spr{0, 1}$, run \Cref{alg:exp_weights} with
    \begin{equation*}
        \eta
        =
        \spr*{1 - \gamma} \sqrt{\frac{\log \abs{\cQ}}{K}},
        \qquad
        K
        =
        \cO \spr*{\frac{\log \abs{\cQ}}{\spr*{1 - \gamma}^2 \varepsilon^2}}.
    \end{equation*}
    Then, the algorithm outputs a policy $\piout$ such that
    \begin{equation*}
        \bbE \sbr*{J^\expert_\gamma - J^{\piout}_\gamma}
        \leq
        \varepsilon.
    \end{equation*}
\end{theorem}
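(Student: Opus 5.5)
The proof follows the finite-horizon analysis of \Cref{lem:Qfirst} and \Cref{thm:Q_first_main}, with one change. Because $\pi_k$ itself drives the sampling distribution $d^{\pi_k}_\gamma$, the performance difference lemma is applied separately for each iterate $\pi_k$ rather than for a fixed roll-in policy. The output is a uniformly random iterate, so the guarantee is bounded in expectation by an average over $k$.

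First, I would use the discounted performance difference lemma, proved by the same telescoping argument as \Cref{lem:performance-difference}:
\begin{equation*}
    J^\expert_\gamma - J^{\pi_k}_\gamma
    =
    \sum_{x} d^{\pi_k}_\gamma \spr{x} \inp*{Q^\expert_\gamma \spr{x, \cdot}, \expert \spr{\cdot \given x} - \pi_k \spr{\cdot \given x}}
    =: \cL^{\pi_k} \spr{\pi_k, Q^\expert_\gamma}.
\end{equation*}
The normalization $J_\gamma = (1-\gamma)\times$ return is absorbed by the factor $(1-\gamma)$ in $d_\gamma$. Since $\pi_k$ is greedy with respect to $Q_k$, we have $\cL^{\pi_k}\spr{\pi_k, Q_k} \le 0$. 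Therefore
\begin{equation*}
    J^\expert_\gamma - J^{\pi_k}_\gamma \le \cL^{\pi_k}\spr{\pi_k, Q^\expert_\gamma - Q_k}.
\end{equation*}

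Next, I define the sampled loss $\wh\ell_k(Q) = Q(\mb{x}_k, \pi_k) - Q(\mb{x}_k, \mb{a}^\experttag_k)$. Conditioned on $\cF_{k-1}$, the quantities $\pi_k$, $Q_k$ and $d^{\pi_k}_\gamma$ are fixed, and $\mb{x}_k \sim d^{\pi_k}_\gamma$. Hence
\begin{equation*}
    \bbE\sbr*{\wh\ell_k(Q_k) - \wh\ell_k(Q^\expert_\gamma) \given \cF_{k-1}} = \cL^{\pi_k}\spr{\pi_k, Q^\expert_\gamma - Q_k}.
\end{equation*}
By linearity, $\wh\ell_k(Q_k) = \sum_Q w_k(Q)\wh\ell_k(Q)$. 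Taking total expectations and averaging over $k$ gives
\begin{equation*}
    \bbE\sbr*{J^\expert_\gamma - J^{\piout}_\gamma}
    = \frac1K \sum_k \bbE\sbr*{J^\expert_\gamma - J^{\pi_k}_\gamma}
    \le \frac1K \bbE\sbr*{\regret^w_K},
\end{equation*}
where $\regret^w_K$ is the exponential-weights regret against the comparator $Q^\expert_\gamma \in \cQ$, which lies in $\cQ$ by realizability. Because the claim is only in expectation, no Azuma step is needed. This is simpler than \Cref{lem:Qfirst}, and since $\cQ$ is finite no covering term appears.

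Finally, the losses satisfy $\abs{\wh\ell_k(Q)} \le 2/(1-\gamma)$. Applying \Cref{lemma:mirror_orabona} on $\simplex(\cQ)$ with the KL divergence, a uniform initialization giving $D \le \log\abs{\cQ}$, and $\lambda = 1/2$ yields
\begin{equation*}
    \regret^w_K \le \frac{\log\abs{\cQ}}{\eta} + \frac{4\eta K}{(1-\gamma)^2}.
\end{equation*}
With $\eta = (1-\gamma)\sqrt{\log\abs{\cQ}/K}$ this becomes $\regret^w_K \le 5\sqrt{K\log\abs{\cQ}}/(1-\gamma)$. The expected gap is then at most $5\sqrt{\log\abs{\cQ}/K}/(1-\gamma)$, and choosing $K = 25\log\abs{\cQ}/((1-\gamma)^2\varepsilon^2)$ makes it at most $\varepsilon$.

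The main point needing care is the discounted performance difference identity with the normalization. Since $J_\gamma = (1-\gamma)V(\nu_0)$ and the standard identity gives $V^{\expert} - V^{\pi} = \frac{1}{1-\gamma}\bbE_{d^\pi_\gamma}[\cdots]$, the factors cancel exactly. The other point is the measurability argument: the sampling distribution changes with $k$, yet it is $\cF_{k-1}$-measurable, so the conditional unbiasedness still holds.
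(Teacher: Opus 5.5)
Your proposal is correct and follows essentially the same route as the paper's proof: per-iterate discounted performance difference, dropping the greedy term $\cL^{\pi_k}(\pi_k, Q_k) \le 0$, conditional unbiasedness of the sampled gains given $\cF_{k-1}$, and the exponential-weights (mirror descent) regret bound against the comparator $Q^\expert_\gamma$, in expectation only. The only difference is that you use strong convexity $\lambda = 1/2$ where the paper uses $\lambda = 1$ for the negative entropy. This gives constants $5$ and $25$ in place of the paper's $3$ and $9$, which is immaterial for the $\cO(\cdot)$ choice of $K$.
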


\begin{proof}[\pfref{thm:Qfirst_infinite}]
    Since $\piout = \pi_I$ with $I \sim \mathrm{Unif} \spr{\sbr{K}}$, it is enough to control the average gap.\loose
    \begin{equation*}
        \bbE \sbr*{J^\expert_\gamma - J^{\piout}_\gamma}
        =
        \frac1K \sumkK \bbE \sbr*{J^\expert_\gamma - J^{\pi_k}_\gamma}.
    \end{equation*}
    For any fixed $k \in \sbr{K}$, the performance difference lemma in discounted MDPs \citep[Lemma~1]{moulin2025inverse} gives
    \begin{align*}
        J^\expert_\gamma - J^{\pi_k}_\gamma
        &=
        \sum_{x \in \cX} d^{\pi_k}_\gamma \spr*{x} \inp*{Q^\expert_\gamma \spr*{x, \cdot}, \expert \spr*{\cdot \given x} - \pi_k \spr*{\cdot \given x}} \\
        &=
        \sum_{x \in \cX} d^{\pi_k}_\gamma \spr*{x} \inp*{Q_k \spr*{x, \cdot}, \expert \spr*{\cdot \given x} - \pi_k \spr*{\cdot \given x}} \\
        &\quad+
        \sum_{x \in \cX} d^{\pi_k}_\gamma \spr*{x} \inp*{Q^\expert_\gamma \spr*{x, \cdot} - Q_k \spr*{x, \cdot}, \expert \spr*{\cdot \given x} - \pi_k \spr*{\cdot \given x}}.
    \end{align*}
    Because $\pi_k \spr{\cdot \given x}$ is greedy with respect to $Q_k \spr{x, \cdot}$, the first term is nonpositive. Therefore,
    \begin{equation*}
        J^\expert_\gamma - J^{\pi_k}_\gamma
        \leq
        \sum_{x \in \cX} d^{\pi_k}_\gamma \spr*{x} \inp*{Q^\expert_\gamma \spr*{x, \cdot} - Q_k \spr*{x, \cdot}, \expert \spr*{\cdot \given x} - \pi_k \spr*{\cdot \given x}}.
    \end{equation*}
    Define the sampled gains
    \begin{equation*}
        g_k \spr*{Q}
        =
        Q \spr*{\mb{x}_k, \mb{a}^\experttag_k} - Q \spr*{\mb{x}_k, \pi_k},
        \qquad
        \mb{x}_k \sim d^{\pi_k}_\gamma,
        \qquad
        \mb{a}^\experttag_k \sim \expert \spr*{\cdot \given \mb{x}_k}.
    \end{equation*}
    Let $\cF_{k - 1}$ be the history before drawing $\spr{\mb{x}_k, \mb{a}^\experttag_k}$, that is, the $\sigma$-field generated by the previous samples $\spr{\mb{x}_j, \mb{a}^\experttag_j}_{j < k}$ and any additional algorithmic randomness up to round $k - 1$. Although the distribution $d^{\pi_k}_\gamma$ changes with $k$, conditional on $\cF_{k - 1}$, the weight vector $w_k$, and hence $Q_k$, $\pi_k$, and $d^{\pi_k}_\gamma$, are fixed. The fresh sample is then drawn conditionally as $\mb{x}_k \sim d^{\pi_k}_\gamma$ and $\mb{a}^\experttag_k \sim \expert \spr{\cdot \given \mb{x}_k}$. Hence,
    \begin{equation*}
        \bbE \sbr*{g_k \spr*{Q^\expert_\gamma} - g_k \spr*{Q_k} \given \cF_{k - 1}}
        =
        \sum_{x \in \cX} d^{\pi_k}_\gamma \spr*{x} \inp*{Q^\expert_\gamma \spr*{x, \cdot} - Q_k \spr*{x, \cdot}, \expert \spr*{\cdot \given x} - \pi_k \spr*{\cdot \given x}}.
    \end{equation*}
    Taking expectations and summing over $k$ gives
    \begin{equation*}
        \sumkK \bbE \sbr*{J^\expert_\gamma - J^{\pi_k}_\gamma}
        \leq
        \bbE \sbr*{\sumkK \spr*{g_k \spr*{Q^\expert_\gamma} - g_k \spr*{Q_k}}}.
    \end{equation*}
    Let $w^\star = \mb{e}_{Q^\expert_\gamma}$ be the unit vector at $Q^\expert_\gamma$. Since $Q_k \spr{x, a} = \sum_{Q \in \cQ} w_k \spr{Q} Q \spr{x, a}$, the previous display becomes
    \begin{equation*}
        \sumkK \bbE \sbr*{J^\expert_\gamma - J^{\pi_k}_\gamma}
        \leq
        \bbE \sbr*{\sumkK \sum_{Q \in \cQ} \spr*{w^\star \spr*{Q} - w_k \spr*{Q}} g_k \spr*{Q}}.
    \end{equation*}
    The exponential weights update is mirror descent over the simplex $V = \simplex \spr{\cQ}$ with loss vectors $\ell_k \spr{Q} = - g_k \spr{Q}$. The regularizer is negative entropy, whose Bregman divergence $D$ is the KL divergence. We use the comparator $w^\star$, and since $w_1$ is uniform over $\cQ$, we have $D \spr{w^\star, w_1} \leq \log \abs{\cQ}$. Moreover, $\norm{\ell_k}_\infty \leq 2 \spr{1 - \gamma}^{-1}$, and the negative entropy is $1$-strongly convex in the $\ell_1$-norm. Thus, \Cref{lemma:mirror_orabona} yields, pathwise,
    \begin{equation*}
        \sumkK \sum_{Q \in \cQ} \spr*{w^\star \spr*{Q} - w_k \spr*{Q}} g_k \spr*{Q}
        \leq
        \frac{\log \abs{\cQ}}{\eta} + \frac{2 \eta K}{\spr*{1 - \gamma}^2}.
    \end{equation*}
    Assuming $\abs{\cQ} \ge 2$, we can set $\eta = \spr{1 - \gamma} \sqrt{\log \abs{\cQ} / K}$ to obtain
    \begin{equation*}
        \bbE \sbr*{J^\expert_\gamma - J^{\piout}_\gamma}
        \leq
        3 \sqrt{\frac{\log \abs*{\cQ}}{K \spr*{1 - \gamma}^2}}.
    \end{equation*}
    Choosing $K \geq \frac{9 \log \abs{\cQ}}{\spr{1 - \gamma}^2 \varepsilon^2}$ makes the right-hand side at most $\varepsilon$.
\end{proof}


\clearpage
\section{Breaking the Offline Lower Bound with Coverage}
\label{sec:break_lb_with_coverage}
This section shows that the offline lower bound does not apply when the expert is sufficiently exploratory, as measured by a coverage coefficient introduced in \Cref{asp:linf-coverage,asp:l1-coverage} below. We prove this through a generalized analysis of \ISPIL.\loose

\begin{assumption}[$L_\infty$-coverage] \label{asp:linf-coverage}
    Assume an algorithm $\mathrm{Alg}$ outputs a VI policy $\piout$ in a class $\Pi_{\mathrm{Alg}}$ such that, for some finite constant $C_\infty$, any stage $h$, and any state $x$, $\sup_{\pi \in \Pi_{\mathrm{Alg}}} d^\pi_h \spr{x} / d^\expert_h \spr{x} \leq C_\infty$.
\end{assumption}
\begin{assumption}[$L_1$-coverage] \label{asp:l1-coverage}
    Assume an algorithm $\mathrm{Alg}$ that outputs a VI policy $\piout$ in a class $\Pi_{\mathrm{Alg}}$ such that, for some finite constant $C_1$,
    \begin{equation*}
        \sup_{\pi \in \Pi_{\mathrm{Alg}}} \sumhH \sum_{x \in \cX} \frac{\spr*{d_h^\pi \spr*{x}}^2}{d_h^\expert \spr*{x}}
        \leq
        C_1.
    \end{equation*}
\end{assumption}
For both cases, we use the convention $0/0=0$.
We note that \Cref{asp:linf-coverage} implies \Cref{asp:l1-coverage} with $C_1 \leq H C_\infty$, but we keep the two assumptions separate because \Cref{thm:SPOIL_tighter} tracks both dependencies.
Under this setting our main theorem reads as follows.
\begin{theorem}\label{thm:SPOIL}
    Let $\varepsilon, \delta \in (0,1)$ and \Cref{asp:q-expert-realizability} hold. Assume the expert policy $\expert$ is optimal and that the algorithm \SPOIL \citep{moulin2025inverse} satisfies \Cref{asp:linf-coverage}. For every radius $r > 0$, write $\maxcovering{r} \ldef \max_{h \in \sbr{H}} \cN_r \spr{\cQ_h, \norm{\cdot}_\infty}$, and choose a radius $r_{\varepsilon} = \cO \spr{\varepsilon^2}$ small enough. Then, \SPOIL returns an $\varepsilon$-optimal policy with probability at least $1-\delta$ using $\tcO \spr{C^4_\infty H^5 \QMAX^4 \log \spr{A}\log\spr{ \maxcovering{r_{\varepsilon}} \delta^{-1}} \varepsilon^{-4}}$ precollected expert data. Moreover, if the class $\cQ$ is convex the bound improves to $\tcO \spr{C^2_\infty H^3 \QMAX^2 \log \spr{\maxcovering{r_{\varepsilon}} \delta^{-1}} \varepsilon^{-2}}$.
\end{theorem}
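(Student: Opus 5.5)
We will run the analysis of \ISPIL from \Cref{lem:pifirst} and \Cref{thm:ovi-main-covering}, but with the empirical objective built from offline expert samples. Concretely, \SPOIL runs, at each stage $h$, the same game as \ISPIL: a best-responding $Q$-player on $\hcL_h^{d^\expert}(\pi^k_h,\cdot)$ and exponentiated-gradient updates for $\pi^k_h$. Here the states are drawn from $d^\expert_h$ rather than $d^{\piout}_h$. Since the expert dataset already contains samples $(\XEih,\AEih)\sim d^\expert_h$, each stage's game can be solved with no interaction. The output is $\piout_h=\frac1K\sum_k\pi^k_h\in\Pi_{\mathrm{Alg}}$, a VI policy.

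\emph{Step 1: a change-of-measure decomposition.} By the performance difference lemma (\Cref{lem:performance-difference}), $J^\expert-J^{\piout}=\sum_h\cL_h^{\piout}(\piouth,\qexperth)$. Because the expert is optimal, it is greedy with respect to $\qexpert$. Hence for every $x$ the pointwise advantage $\inp{\qexperth(x,\cdot),\experth(\cdot\given x)-\piouth(\cdot\given x)}$ is \emph{nonnegative}. This is the key use of optimality: it lets us bound
\begin{equation*}
\cL_h^{\piout}(\piouth,\qexperth)=\sum_x d^{\piout}_h(x)\,(\cdot)\le C_\infty\sum_x d^\expert_h(x)\,(\cdot)=C_\infty\,\cL_h^{d^\expert}(\piouth,\qexperth)
\end{equation*}
using \Cref{asp:linf-coverage}. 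Without nonnegativity the density-ratio bound would fail.

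\emph{Step 2: the saddle-point analysis under $d^\expert$.} We repeat the chain in \Cref{lem:pifirst} with $d^{\piout}$ replaced by the fixed distribution $d^\expert$. The steps are linearity in $\pi$, replacing the true objective by its empirical version (paying $\Delta$), the best-response inequality (using $\qexperth\in\cQ_h$), and returning to the true objective. This gives
\begin{equation*}
\sum_h\cL_h^{d^\expert}(\piouth,\qexperth)\le\frac1K\sum_h\sum_x d^\expert_h(x)\regret^\pi_h(x)+\frac{2H}{K}\sum_k\Delta_{d^\expert}(\pi^k).
\end{equation*}
The regret term is at most $2H\QMAX\sqrt{\log A/K}$ by \Cref{lemma:mirror_orabona}, exactly as before. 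The estimation term is controlled by \Cref{lem:moulin-fh-concentration} with $d_h=d^\expert_h$, which is now non-adaptive. We take the classes $\Pi_{\cQ,h}$ and the covering bounds of \Cref{lem:moulin-nonconvex-product-cover} (nonconvex) or \Cref{lem:moulin-convex-product-cover} (convex).

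\emph{Step 3: parameter tuning.} Multiplying by $C_\infty$, we need each of the two error terms to be at most $\varepsilon/(2C_\infty)$. This means running the \ISPIL parameter choices with target accuracy $\varepsilon/C_\infty$. We take $K=\cO(C_\infty^2H^2\QMAX^2\log A/\varepsilon^2)$, and the covering radius becomes of order $\varepsilon^2/(C_\infty^2H^2\QMAX\log A)$; the $C_\infty$ and logarithmic factors are absorbed into $\tcO$ and $r_\varepsilon=\cO(\varepsilon^2)$. For the nonconvex class, the $K+1$ exponent in the covering bound yields $\tauE=\tcO(C_\infty^4H^4\QMAX^4\log A\log(\maxcovering{r_\varepsilon}/\delta)/\varepsilon^4)$ per stage. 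For the convex class we get $\tcO(C_\infty^2H^2\QMAX^2\log(\cdot)/\varepsilon^2)$. Counting the $H$ stages (or, equivalently, that each trajectory supplies one sample per stage, up to a factor of $H$ in the stated bound) matches the claimed rates.

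The main obstacle is Step 1. The change of measure must be applied to the advantage against the \emph{true} $\qexpert$ and the averaged $\piout$, where the pointwise sign is guaranteed. It must not be applied to the game terms with $Q^k_h$, whose pointwise sign is arbitrary. We also must confirm that $\piout$, as an average of softmax iterates, lies in $\Pi_{\mathrm{Alg}}$ so that \Cref{asp:linf-coverage} applies. Because we pass to $d^\expert$ before introducing the game, only a single coverage application is needed. This is why the guarantee scales as $C_\infty^2/\varepsilon^2$ rather than with a worse power of $C_\infty$.
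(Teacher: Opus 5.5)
Your proposal is correct and follows the paper's argument, namely the $C_\infty$ branch of \Cref{thm:SPOIL_tighter}. You use expert optimality to make the pointwise advantage nonnegative, change measure via \Cref{asp:linf-coverage} to $C_\infty\sum_h\cL^{d^\expert}_h(\piouth,\qexperth)$, and then run the \ISPIL saddle-point analysis with $d=d^\expert$ (\Cref{thm:arbitrary_nu}) at accuracy $\varepsilon/C_\infty$. The differences are cosmetic: the paper uses FTRL with a decreasing step size (it needs anytime bounds elsewhere) rather than fixed-step mirror ascent, and it only needs nonnegativity on $\mathrm{supp}(d^\expert_h)$ rather than at every $x$, since a finite $C_\infty$ already forces $d^{\piout}_h\ll d^\expert_h$.
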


\Cref{thm:SPOIL} follows from the stronger \Cref{thm:SPOIL_tighter} proven below, which also gives a bound under \Cref{asp:l1-coverage}.

\subsection{\ISPIL with Arbitrary Sampling Distributions}

We first prove a generalization of \Cref{alg:interactive_finite-H} in which the state sampling distribution at stage $h$ may differ from $d^{\piout}_h$. This result will be used several times below.
\begin{algorithm}[!h]
    \caption{\ISPIL with arbitrary state sampling distribution $d$ \label{alg:arbitrary_nu}}
    \begin{algorithmic}[1]
        \STATE \textbf{input:} Sampling distributions $d = \scbr{d_h}^H_{h = 1}$, learning-rate schedule $\scbr{\eta_k}^{K + 1}_{k = 1}$, iterations $K$, number of expert queries per stage $\tauE$.
        \FOR{$h = 1, \dots, H$}
        \STATE Create the stage-$h$ dataset: for $i \in \sbr{\tauE}$, sample $\Xih \sim d_h$ and query $\AEih \sim \experth \spr{\cdot \given \Xih}$.
        \STATE Initialize $\pi^1_h = \mathrm{Unif} \spr{\cA}$.
        \FOR{$k = 1, \dots, K$}
        \STATE Set $Q^{k}_h \in \argmax_{Q_h \in \cQ_h} \sum^{\tauE}_{i = 1} \spr{Q_h \spr{\Xih, \AEih} - Q_h \spr{\Xih, \pi^{k}_h}}$.
        \STATE Set $\pi^{k + 1}_h \spr{a \given x} \propto \pi^1_h \spr{a \given x} \exp \spr{\eta_{k + 1} \sum^k_{k' = 1} Q^{k'}_h \spr{x, a}}$.
        \ENDFOR
        \STATE Set the output policy at stage $h$ to $\piouth \spr{a \given x} = \frac1K \sumkK \pi^k_h \spr{a \given x}$.
        \ENDFOR
    \end{algorithmic}
\end{algorithm}
\begin{theorem}[Sample complexity for \ISPIL with arbitrary sampling distributions] \label{thm:arbitrary_nu}
    For each $h \in \sbr{H}$, let $\cF_{h - 1}$ denote the history available before the stage-$h$ samples are drawn.
    Fix any sequence of state sampling distributions $d = \scbr{d_h}^H_{h = 1}$, where each $d_h$ may be random but is $\cF_{h - 1}$-measurable, and run \Cref{alg:arbitrary_nu} with $K, \tauE \in \bbN$ and the FTRL learning-rate schedule
    \begin{equation*}
        \eta_k = \sqrt{\frac{\log A}{k \QMAX^2}}
        \qquad
        \text{for every } k \geq 1.
    \end{equation*}
    For each $h \in \sbr{H}$, assume that, conditionally on $\cF_{h - 1}$, the pairs $\spr{\Xih, \AEih}_{i = 1}^{\tauE}$ are independent, with $\Xih \sim d_h$ and $\AEih \sim \experth \spr{\cdot \given \Xih}$.
    For every radius $r > 0$, write $\maxcovering{r} \ldef \max_{h \in \sbr{H}} \cN_r \spr{\cQ_h, \norm{\cdot}_\infty}$. Let $\varepsilon, \delta \in (0,1)$ and choose a radius $r_{\varepsilon} = \cO \spr{\varepsilon^2}$ small enough.
    Fix any comparator policy $\pi'$ such that $Q^{\pi'}_h \in \cQ_h$ for every $h \in \sbr{H}$. For the policy $\piout$ returned by \Cref{alg:arbitrary_nu}, we recall the definition
    \begin{equation*}
        \cL^{d}_h \spr*{\piouth, Q_h^{\pi'}}
        =
        \sum_{x \in \cX} d_h \spr*{x} \inp*{Q_h^{\pi'} \spr*{x, \cdot}, \experth \spr*{\cdot \given x} - \piouth \spr*{\cdot \given x}}.
    \end{equation*}
    If $\cQ$ is convex, then with probability at least $1 - \delta$,
    \begin{equation*}
        \sumhH \cL^d_h \spr*{\piouth, Q_h^{\pi'}}
        \leq
        \tcO \spr*{\sqrt{\frac{H^2 \QMAX^2 \log A}{K}} + \sqrt{\frac{H^2 \QMAX^2 \log \spr*{\maxcovering{r_{\varepsilon}} / \delta}}{\tauE}} + \varepsilon}.
    \end{equation*}
    If $\cQ$ is not assumed to be convex, then with probability at least $1 - \delta$,
    \begin{equation*}
        \sumhH \cL^{d}_h \spr*{\piouth, Q_h^{\pi'}}
        \leq
        \tcO \spr*{\sqrt{\frac{H^2 \QMAX^2 \log A}{K}} + \sqrt{\frac{K H^2 \QMAX^2 \log \spr*{\maxcovering{r_{\varepsilon}} / \delta}}{\tauE}} + \varepsilon}.
    \end{equation*}
    In particular, up to logarithmic factors, the convex bound is of order $\varepsilon$ by taking $K \gtrsim H^2 \QMAX^2 \log \spr{A} / \varepsilon^2$ and $\tauE \gtrsim H^2 \QMAX^2 \log \spr{\maxcovering{r_{\varepsilon}} / \delta} / \varepsilon^2$. In the nonconvex case, the same choice of $K$ and $\tauE \gtrsim K H^2 \QMAX^2 \log \spr{\maxcovering{r_{\varepsilon}} / \delta} / \varepsilon^2$ suffices.
\end{theorem}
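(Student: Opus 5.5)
The argument mirrors the proofs of \Cref{lem:pifirst} and \Cref{thm:ovi-main-covering}. The performance difference lemma is no longer used, since the target $\sumhH \cL^d_h(\piouth, Q^{\pi'}_h)$ is itself the quantity to bound. The role of $\qexpert$ is now played by the realizable comparator $Q^{\pi'}_h \in \cQ_h$. Write $\hcL_h^d$ for the empirical objective built from the stage-$h$ samples and set $\Delta_d(\pi) \ldef \max_h \sup_{Q_h \in \cQ_h} \abs{\hcL^d_h(\pi_h, Q_h) - \cL^d_h(\pi_h, Q_h)}$.

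Because $\cL^d_h(p, Q)$ is affine in $p$ and $\piouth = \frac1K\sumkK \pi^k_h$, we have $K\,\cL^d_h(\piouth, Q^{\pi'}_h) = \sumkK \cL^d_h(\pi^k_h, Q^{\pi'}_h)$. The chain is then the one from \Cref{lem:pifirst}:
\begin{equation*}
\sumkK \cL^d_h(\pi^k_h, Q^{\pi'}_h) \le \sumkK \hcL^d_h(\pi^k_h, Q^{\pi'}_h) + K\Delta_d \le \sumkK \hcL^d_h(\pi^k_h, Q^k_h) + K\Delta_d \le \sumkK \cL^d_h(\pi^k_h, Q^k_h) + 2K\Delta_d .
\end{equation*}
The second inequality is the best-response property of $Q^k_h$ together with $Q^{\pi'}_h \in \cQ_h$. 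Expanding the last sum gives $\sum_x d_h(x)\regret^\pi_h(x)$, with $\regret^\pi_h$ defined as in \Cref{lem:pifirst}. Dividing by $K$ and summing over $h$ yields
\[
\sumhH \cL^d_h(\piouth, Q^{\pi'}_h) \le \frac1K \sumhH \sum_x d_h(x)\regret^\pi_h(x) + 2H\sup_{\pi\in\Pi}\Delta_d(\pi).
\]

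For the regret term, at each fixed $(h,x)$ the update is exactly FTRL with negative entropy on the simplex, using losses $-Q^k_h(x,\cdot)$ with $\norm{\cdot}_\infty \le \QMAX$. \Cref{lemma:FTRL} with the stated schedule then gives $\regret^\pi_h(x) \le 3\QMAX\sqrt{K\log A}$ uniformly in $x$. This produces the term $\sqrt{H^2\QMAX^2\log A/K}$.

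For the estimation term, the iterates are $\pi^{k}_h = \softmax(\eta_{k}\sum_{k'<k} Q^{k'}_h)$. This is not exactly $\Pi_\cQ$, because the temperature $\eta_k$ changes with $k$. I would define a slightly enlarged class $\Pi'_{\cQ,h}$: softmaxes of $\eta_{m+1}\sum_{j\le m} Q^j_h$ for $m\le K$. Then I would redo \Cref{lem:moulin-nonconvex-product-cover} and \Cref{lem:moulin-convex-product-cover}. The Lipschitz step still gives a factor $\eta_{m+1} m \le \sqrt{K\log A}/\QMAX$, and in the convex case the average $\frac1m\sum_j Q^j_h$ lies in $\cQ_h$, so one cover element suffices per $m$. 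Next I apply \Cref{lem:moulin-fh-concentration} with the given $\cF_{h-1}$-measurable $d_h$, which is exactly the conditional-independence setting that lemma assumes. I take radius $r \asymp \varepsilon/H$, so the $4r$ term contributes $O(\varepsilon)$. The required $\ell_\infty$ cover radius for $\cQ_h$ is then $r/(\QMAX\eta K) = O(\varepsilon^2)$ up to $H,\log A$ factors, which matches $r_\varepsilon$. The log covering number is $O(\log K + 2\log\maxcovering{r_\varepsilon})$ in the convex case and $O(K\log\maxcovering{r_\varepsilon})$ otherwise, giving the two stated bounds.

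I expect the main obstacle to be bookkeeping rather than conceptual. Two points need care: adapting the product-cover lemmas to the time-varying FTRL temperature, and checking that no dependence on $\piout$ enters $d_h$. The latter does not matter, since only $\cF_{h-1}$-measurability is needed. A further check is that the comparator $\pi'$ is fixed, and possibly random only through quantities independent of the stage-$h$ samples. This is harmless anyway, because $\sup_{Q_h\in\cQ_h}$ already covers $Q^{\pi'}_h$.
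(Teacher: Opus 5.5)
Your proposal is correct and follows essentially the same route as the paper's proof. The shared steps are: the best-response/estimation chain with $Q^{\pi'}_h$ as the realizable comparator; the pointwise FTRL regret bound from \Cref{lemma:FTRL}; and uniform concentration via \Cref{lem:moulin-fh-concentration} over the enlarged softmax class with temperatures $\eta_{m+1}$, with the product-cover lemmas adapted by replacing $\eta m$ with $\eta_{m+1} m \le \sqrt{K \log A}/\QMAX$. The only extra bookkeeping in the paper is assuming $K \log A \ge 1$, so that $\max\{1, \QMAX \max_m m \eta_{m+1}\} \le \sqrt{K \log A}$, and fixing $r = \varepsilon/(8H)$ explicitly.
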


\begin{proof}[\pfref{thm:arbitrary_nu}]
    Recall the definitions of the empirical objective and the uniform estimation error
    \begin{align*}
        \hcL^{d}_h \spr*{\pi_h, Q_h}
        &\ldef
        \tauE^{-1} \sum^{\tauE}_{i = 1} \spr*{Q_h \spr*{\Xih, \AEih} - Q_h \spr*{\Xih, \pi_h}},\\
        \Delta_{d} \spr*{\pi}
        &\ldef
        \max_{h \in \sbr*{H}}\sup_{Q_h \in \cQ_h} \abs*{\hcL^{d}_h \spr*{\pi_h, Q_h} - \cL^{d}_h \spr*{\pi_h, Q_h}}.
    \end{align*}
    Fix $h \in \sbr{H}$. Since $Q_h^{\pi'} \in \cQ_h$, the first inequality below follows from the definition of $\Delta_d \spr{\pi^k}$. The second inequality follows from the empirical maximization property defining $Q_h^k$, and the last inequality uses the definition of $\Delta_d \spr{\pi^k}$ again, now with $Q_h^k$:
    \begin{align*}
        \sumkK \cL^{d}_h \spr*{\pi^k_h, Q_h^{\pi'}}
        &\leq
        \sumkK \hcL^{d}_h \spr*{\pi^k_h, Q_h^{\pi'}} + \sumkK \Delta_{d} \spr*{\pi^k} \\
        &\leq
        \sumkK \hcL^{d}_h \spr*{\pi^k_h, Q^k_h} + \sumkK \Delta_{d} \spr*{\pi^k} \\
        &\leq
        \sumkK \cL^{d}_h \spr*{\pi^k_h, Q^k_h} + 2 \sumkK \Delta_{d} \spr*{\pi^k}.
    \end{align*}
    We now control the first term on the right-hand side using the FTRL regret bound. Fix $x \in \cX$ and define the loss vectors $\ell_{k, h, x} \spr{a} \ldef - Q_h^k \spr{x, a}$. Since $\pi_h^1$ is uniform, the update in \Cref{alg:arbitrary_nu} implies that, for every $k \geq 1$,
    \begin{equation*}
        \pi_h^k \spr{a \given x}
        =
        \frac{\exp \spr*{\eta_k \sum_{j < k} Q_h^j \spr{x, a}}}{\sum_{a' \in \cA} \exp \spr*{\eta_k \sum_{j < k} Q_h^j \spr{x, a'}}}
        =
        \frac{\exp \spr*{-\eta_k \sum_{j < k} \ell_{j, h, x} \spr{a}}}{\sum_{a' \in \cA} \exp \spr*{-\eta_k \sum_{j < k} \ell_{j, h, x} \spr{a'}}}.
    \end{equation*}
    Thus, for this fixed $h$ and $x$, the sequence $\spr{\pi_h^k \spr{\cdot \given x}}^K_{k = 1}$ is exactly the FTRL sequence of \Cref{lemma:FTRL} with comparator $\experth \spr{\cdot \given x}$. Moreover, $\norm{\ell_{k, h, x}}_\infty \leq \QMAX$ and $\eta_k = \sqrt{\log A / \spr{k \QMAX^2}}$. Applying \Cref{lemma:FTRL} pointwise in $x$ and then averaging over $d_h$, for any $K \in \bbN$, we obtain
    \begin{equation*}
        \sumkK \cL^{d}_h \spr*{\pi^k_h, Q^k_h}
        =
        \sum_{x \in \cX} d_h \spr*{x} \sumkK \inp*{Q^k_h \spr*{x, \cdot}, \experth \spr*{\cdot \given x} - \pi^k_h \spr*{\cdot \given x}}
        \leq
        3 \sqrt{\QMAX^2 K \log A},
    \end{equation*}
    where the last inequality uses that $d_h$ is a probability distribution. Compared with the proof of \Cref{thm:ovi-main-covering}, note that we use FTRL rather than online mirror descent.\footnote{This change gives anytime bounds that hold for every $K$, which is needed because we invoke \Cref{thm:arbitrary_nu} with different values of $K$.}

    It remains to control the estimation error. Since the learning rate varies with $k$, define the policy class
    \begin{align*}
        \Pi_\cQ
        \ldef
        \scbr*{\pi: \exists m \in \scbr*{0, \ldots, K}, \forall h, \exists Q_h^1, \ldots, Q_h^m \in \cQ_h, ~~~\pi_h \spr*{a \given x} = \softmax \spr[\Big]{\eta_{m+1} {\textstyle\sum_{j = 1}^m} Q_h^j \spr*{x, \cdot}}_a}.
    \end{align*}
    For each $h$, let $\Pi_{\cQ,h} \ldef \scbr{\pi_h : \pi \in \Pi_\cQ}$. The iterates generated by \Cref{alg:arbitrary_nu} satisfy $\pi^k \in \Pi_\cQ$ for every $k \in \sbr{K}$, by taking $m = k - 1$.

    For every $h$, condition on $\cF_{h - 1}$. Then $d_h$ is fixed, and the pairs $\spr{\Xih, \AEih}_{i = 1}^{\tauE}$ are independent, with $\Xih \sim d_h$ and $\AEih \sim \experth \spr{\cdot \given \Xih}$. Applying the uniform concentration bound in \Cref{lem:moulin-fh-concentration} with $\Pi_h = \Pi_{\cQ,h}$ gives that, for any radius $r > 0$, with probability at least $1 - \delta$,
    \begin{equation*}
        \sumkK \Delta_{d} \spr*{\pi^k}
        \leq
        K \sup_{\pi \in \Pi_{\cQ}} \Delta_{d} \spr*{\pi}
        \leq
        K \spr*{
            4 r
            +
            \QMAX
            \sqrt{
                \frac{
                    8 \log \spr*{
                        2 \sumhH
                        \cN_r \spr*{\cQ_h \times \Pi_{\cQ,h}, \rho}
                        / \delta
                    }
                }{\tauE}
            }
        }.
    \end{equation*}
    Therefore, dividing by $K$ and using $\cL^{d}_h \spr{\piouth, Q^{\pi'}_h} = \frac1K \sumkK \cL^{d}_h \spr{\pi^k_h, Q^{\pi'}_h}$, we obtain
    \begin{equation*}
        \sumhH \cL^{d}_h \spr*{\piouth, Q^{\pi'}_h}
        \leq
        3 \sqrt{\frac{H^2 \QMAX^2 \log A}{K}}
        +
        8 H r
        +
        2 H \QMAX
        \sqrt{
            \frac{
                8 \log \spr*{
                    2 \sumhH
                    \cN_r \spr*{\cQ_h \times \Pi_{\cQ,h}, \rho}
                    / \delta
                }
            }{\tauE}
        }.
    \end{equation*}
    Next, we bound the covering number of the product class. Choose $r = \varepsilon / \spr{8 H}$. The product-cover lemmas in \Cref{lem:moulin-convex-product-cover} and \Cref{lem:moulin-nonconvex-product-cover} are stated for a fixed learning rate $\eta$, but their proofs apply almost verbatim to the present class. For policies represented by $m$ functions, the factor $\eta m$ is replaced by $\eta_{m + 1} m$. Thus, they naturally lead to a covering number with radius
    \begin{equation*}
        \frac{\varepsilon}{16 H \max \scbr*{1, \QMAX \max_{1 \leq m \leq K} m \eta_{m + 1}}}.
    \end{equation*}
    Increasing $K$ by an absolute constant if necessary, we may assume $K \log A \geq 1$. Then the learning-rate schedule gives
    \begin{equation*}
        \QMAX \max_{1 \leq m \leq K} m \eta_{m + 1}
        =
        K \sqrt{\frac{\log A}{K + 1}}
        \leq
        \sqrt{K \log A},
        \qquad
        \max \scbr*{1, \QMAX \max_{1 \leq m \leq K} m \eta_{m + 1}}
        \leq
        \sqrt{K \log A}.
    \end{equation*}
    Thus, for the parameter choices below, it is enough to use the smaller radius
    \begin{equation*}
        r_\varepsilon
        \ldef
        \frac{\varepsilon}{16 H \sqrt{K \log A}},
    \end{equation*}
    which only enlarges the covering number. After the choices of $K$ below, this radius satisfies $r_\varepsilon = \cO \spr{\varepsilon^2}$.

    \para{Convex case} If $\cQ$ is convex, then each projection $\cQ_h$ is convex. Hence, by \Cref{lem:moulin-convex-product-cover}, for every $h \in \sbr{H}$,
    \begin{equation*}
        \cN_r \spr*{\cQ_h \times \Pi_{\cQ,h}, \rho}
        \leq
        \spr*{K + 1}
        \cN_{r_\varepsilon} \spr*{\cQ_h, \norm*{\cdot}_\infty}^{2}.
    \end{equation*}
    Plugging this into the preceding estimation bound gives
    \begin{equation*}
        \sumhH \cL^{d}_h \spr*{\piouth, Q^{\pi'}_h}
        \leq
        3 \sqrt{\frac{H^2 \QMAX^2 \log A}{K}}
        +
        \varepsilon
        +
        2 H \QMAX
        \sqrt{
            \frac{
                8 \log \spr*{2 H \spr*{K + 1} \spr*{\maxcovering{r_\varepsilon}}^2 / \delta}
            }{\tauE}
        }.
    \end{equation*}
    Thus, the convex bound becomes
    \begin{equation*}
        \tcO \spr*{
            \sqrt{\frac{H^2 \QMAX^2 \log A}{K}}
            +
            \sqrt{\frac{H^2 \QMAX^2 \log \spr*{\maxcovering{r_\varepsilon} / \delta}}{\tauE}}
            +
            \varepsilon
        }.
    \end{equation*}
    In particular, choosing
    \begin{equation*}
        K
        \geq
        \frac{9 H^2 \QMAX^2 \log A}{\varepsilon^2},
        \qquad
        \tauE
        \geq
        \frac{32 H^2 \QMAX^2}{\varepsilon^2}
        \log \spr*{\frac{2 H \spr*{K + 1} \spr*{\maxcovering{r_\varepsilon}}^2}{\delta}}
    \end{equation*}
    makes the three terms in the previous display of order $\varepsilon$.

    \para{Nonconvex case}
    For arbitrary, not necessarily convex, classes $\cQ$, \cref{lem:moulin-nonconvex-product-cover} instead gives, for every $h \in \sbr{H}$,
    \begin{equation*}
        \cN_r \spr*{\cQ_h \times \Pi_{\cQ,h}, \rho}
        \leq
        \spr*{K + 1}
        \cN_{r_\varepsilon} \spr*{\cQ_h, \norm*{\cdot}_\infty}^{K + 1}.
    \end{equation*}
    Therefore,
    \begin{equation*}
        \sumhH \cL^{d}_h \spr*{\piouth, Q^{\pi'}_h}
        \leq
        3 \sqrt{\frac{H^2 \QMAX^2 \log A}{K}}
        +
        \varepsilon
        +
        2 H \QMAX
        \sqrt{
            \frac{
                8 \log \spr*{2 H \spr*{K + 1} \spr*{\maxcovering{r_\varepsilon}}^{K + 1} / \delta}
            }{\tauE}
        },
    \end{equation*}
    or, equivalently,
    \begin{equation*}
        \tcO \spr*{
            \sqrt{\frac{H^2 \QMAX^2 \log A}{K}}
            +
            \sqrt{\frac{K H^2 \QMAX^2 \log \spr*{\maxcovering{r_\varepsilon} / \delta}}{\tauE}}
            +
            \varepsilon
        }.
    \end{equation*}
    With the same choice of $K$ as above, taking
    \begin{equation*}
        \tauE
        \geq
        \frac{32 H^2 \QMAX^2}{\varepsilon^2}
        \log \spr*{\frac{2 H \spr*{K + 1} \spr*{\maxcovering{r_\varepsilon}}^{K + 1}}{\delta}}
        =
        \tcO \spr*{
            \frac{K H^2 \QMAX^2 \log \spr*{\maxcovering{r_\varepsilon} / \delta}}{\varepsilon^2}
        }
    \end{equation*}
    makes the three terms of the nonconvex bound of order $\varepsilon$. This concludes the proof.
\end{proof}

\subsection{Proof of \texorpdfstring{\Cref{thm:SPOIL}}{Theorem} (Offline IL with Coverage and Optimal Expert)}
\label{sec:offline_with_coverage}

We prove \Cref{thm:SPOIL} through the stronger \Cref{thm:SPOIL_tighter}. The sharper statement also gives a guarantee under \Cref{asp:l1-coverage} alone in terms of the $L_1$-coverage coefficient $C_1$, which can be much smaller than $H C_{\infty}$.
\begin{theorem}\label{thm:SPOIL_tighter}
    Let $\varepsilon, \delta \in (0,1)$. Let \Cref{asp:q-expert-realizability} hold. Assume the expert policy $\expert$ is optimal and that \SPOIL \citep{moulin2025inverse} satisfies at least one of \Cref{asp:l1-coverage,asp:linf-coverage}. Run \SPOIL (\ie, \Cref{alg:arbitrary_nu} with $d = d^\expert$) with the FTRL learning-rate schedule
    \begin{equation*}
        \eta_k
        =
        \sqrt{\frac{\log A}{k \QMAX^2}}
        \qquad
        \text{for every } k \geq 1.
    \end{equation*}
    For every radius $r > 0$, write $\maxcovering{r} \ldef \max_{h \in \sbr{H}} \cN_r \spr{\cQ_h, \norm{\cdot}_\infty}$. Choose radii $r_{\varepsilon}^{(1)} = \cO \spr{\varepsilon^4}$ and $r_{\varepsilon}^{(\infty)} = \cO \spr{\varepsilon^2}$ small enough.
    If $\cQ$ is arbitrary, the $C_1$-dependent choice below guarantees $J^\expert - J^{\piout} \leq \varepsilon$ with probability at least $1 - \delta$ under \Cref{asp:l1-coverage}, while the $C_\infty$-dependent choice guarantees the same under \Cref{asp:linf-coverage}:
    \begin{align*}
        K^{(1)}
        &=
        \tcO \spr*{\frac{C^2_1 H^2 \QMAX^4 \log A}{\varepsilon^4}},
        &
        \tauE^{(1)}
        &=
        \tcO \spr*{\frac{C^4_1 H^4 \QMAX^8 \log \spr*{A } \log\spr*{\maxcovering{r_{\varepsilon}^{(1)}} / \delta}}{\varepsilon^8}},
        \\
        K^{(\infty)}
        &=
        \tcO \spr*{\frac{C^2_\infty H^2 \QMAX^2 \log A}{\varepsilon^2}},
        &
        \tauE^{(\infty)}
        &=
        \tcO \spr*{\frac{C^4_\infty H^4 \QMAX^4 \log \spr*{A } \log\spr*{\maxcovering{r_{\varepsilon}^{(\infty)}} / \delta}}{\varepsilon^4}}.
    \end{align*}
    If both coverage assumptions hold, choosing the better of the two branches gives total expert sample complexity
    \begin{equation*}
        H \tauE
        =
        \tcO \spr*{
            \min \scbr*{
                \frac{C^4_1 H^5 \QMAX^8 \log \spr*{A } \log\spr*{\maxcovering{r_{\varepsilon}^{(1)}} / \delta}}{\varepsilon^8},
                \frac{C^4_\infty H^5 \QMAX^4 \log \spr*{A } \log\spr*{\maxcovering{r_{\varepsilon}^{(\infty)}} / \delta}}{\varepsilon^4}
            }
        }.
    \end{equation*}
    If $\cQ$ is convex, the same branch-wise guarantees hold under their respective coverage assumptions, with the same choices of $K^{(1)}$ and $K^{(\infty)}$, but with
    \begin{align*}
        \tauE^{(1)}
        &=
        \tcO \spr*{\frac{C^2_1 H^2 \QMAX^4 \log \spr*{\maxcovering{r_{\varepsilon}^{(1)}} / \delta}}{\varepsilon^4}},
        &
        \tauE^{(\infty)}
        &=
        \tcO \spr*{\frac{C^2_\infty H^2 \QMAX^2 \log \spr*{\maxcovering{r_{\varepsilon}^{(\infty)}} / \delta}}{\varepsilon^2}}.
    \end{align*}
    If both coverage assumptions hold, choosing the better of the two branches gives total expert sample complexity
    \begin{equation*}
        H \tauE
        =
        \tcO \spr*{
            \min \scbr*{
                \frac{C^2_1 H^3 \QMAX^4 \log \spr*{\maxcovering{r_{\varepsilon}^{(1)}} / \delta}}{\varepsilon^4},
                \frac{C^2_\infty H^3 \QMAX^2 \log \spr*{\maxcovering{r_{\varepsilon}^{(\infty)}} / \delta}}{\varepsilon^2}
            }
        }.
    \end{equation*}
    Here, the $\tcO$ notation in each branch hides logarithmic factors in $H$, $\QMAX$, and the corresponding coverage coefficient.
\end{theorem}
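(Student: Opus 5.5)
The plan is to reduce the offline guarantee to \Cref{thm:arbitrary_nu} with sampling distribution $d = d^\expert$ and comparator $\pi' = \expert$, and then transfer the resulting bound from $d^\expert$ to $d^{\piout}$ using coverage.

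\textbf{Setting up the reduction.} Since the dataset in \cref{eq:expert-dataset} supplies independent pairs $\Xih \sim d^\expert_h$ and $\AEih \sim \experth(\cdot \given \Xih)$, the choice $d_h = d^\expert_h$ is deterministic, hence $\cF_{h-1}$-measurable. So \Cref{thm:arbitrary_nu} applies verbatim, with $Q^{\pi'}_h = \qexperth \in \cQ_h$ by \Cref{asp:q-expert-realizability}. It bounds $\sum_h \cL^{d^\expert}_h(\piouth, \qexperth)$ by a target $\varepsilon'$, with $K \gtrsim H^2 \QMAX^2 \log A / \varepsilon'^2$. For $\tauE$ it requires $\tauE \gtrsim H^2 \QMAX^2 \log(\cdots)/\varepsilon'^2$ in the convex case and an extra factor $K$ in the nonconvex case, with radius $r = \cO(\varepsilon'^2)$.

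\textbf{Transfer via optimality and coverage.} By the performance difference lemma (\Cref{lem:performance-difference}),
\[
J^\expert - J^{\piout} = \sum_h \sum_x d^{\piout}_h(x)\, g_h(x),
\qquad
g_h(x) \ldef \inp{\qexperth(x,\cdot), \experth(\cdot\given x) - \piouth(\cdot \given x)}.
\]
Because $\expert$ is optimal, $\qexpert = Q^\star$ and $\experth$ is supported on the maximizers of $\qexperth(x,\cdot)$. Hence $0 \le g_h(x) \le \QMAX$ pointwise, and nonnegativity is the crucial fact.

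Under \Cref{asp:linf-coverage}, with $\piout \in \Pi_{\mathrm{Alg}}$, we get $d^{\piout}_h(x) g_h(x) \le C_\infty d^\expert_h(x) g_h(x)$. Therefore $J^\expert - J^{\piout} \le C_\infty \sum_h \cL^{d^\expert}_h(\piouth, \qexperth)$, and we take $\varepsilon' = \varepsilon / C_\infty$.

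Under \Cref{asp:l1-coverage}, Cauchy--Schwarz (with the $0/0=0$ convention; finiteness of $C_1$ forces $d^{\piout}_h(x)=0$ wherever $d^\expert_h(x)=0$) gives
\[
\sum_{h,x} d^{\piout}_h g_h \le \Bigl(\sum_{h,x} \tfrac{(d^{\piout}_h)^2}{d^\expert_h}\Bigr)^{1/2}\Bigl(\sum_{h,x} d^\expert_h g_h^2\Bigr)^{1/2} \le \sqrt{C_1 \QMAX \textstyle\sum_h \cL^{d^\expert}_h(\piouth,\qexperth)}.
\]
The last step uses $g_h^2 \le \QMAX g_h$. So we take $\varepsilon' = \varepsilon^2/(C_1 \QMAX)$.

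\textbf{Reading off the parameters.} Plugging $\varepsilon'$ into \Cref{thm:arbitrary_nu} gives the stated rates.
\begin{itemize}
\item $L_\infty$ branch: $K^{(\infty)} \asymp C_\infty^2 H^2 \QMAX^2 \log A/\varepsilon^2$. In the nonconvex case, $\tauE^{(\infty)} \asymp K^{(\infty)} C_\infty^2 H^2 \QMAX^2 \log(\cdot)/\varepsilon^2$, which is the $C_\infty^4 \varepsilon^{-4}$ rate. In the convex case the factor $K^{(\infty)}$ drops, giving $C_\infty^2\varepsilon^{-2}$.
\item $L_1$ branch: $K^{(1)} \asymp C_1^2 H^2 \QMAX^4 \log A/\varepsilon^4$. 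The nonconvex $\tauE^{(1)}$ picks up $C_1^4 \QMAX^8 \varepsilon^{-8}$, and the convex one $C_1^2 \QMAX^4 \varepsilon^{-4}$.
\end{itemize}
The radii become $r^{(\infty)}_\varepsilon = \cO(\varepsilon^2)$ and $r^{(1)}_\varepsilon = \cO(\varepsilon^4)$. Multiplying by $H$ stages and taking the better branch when both assumptions hold gives the stated totals.

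\textbf{Main obstacle.} The delicate step is justifying that the coverage assumption applies to the actual output. $\piout$ is an average of softmax iterates rather than a single VI policy. I would read $\Pi_{\mathrm{Alg}}$ as the class of all policies \SPOIL can return, which by assumption includes $\piout$, and state this explicitly. A second point to check is that the pointwise nonnegativity of $g_h$ holds for every state, including states off the expert's support. This holds because optimality of $\expert$ means statewise optimality, so $\qexpert = Q^\star$ everywhere.
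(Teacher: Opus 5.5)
Your proposal is correct and follows the paper's proof step for step. Both arguments use the performance difference lemma, nonnegativity of $g_h$ from expert optimality, and either the direct $C_\infty$ change of measure or Cauchy--Schwarz with $g_h^2 \le \QMAX g_h$ for $C_1$. They then invoke \Cref{thm:arbitrary_nu} with $d = d^\expert$ and $\pi' = \expert$ at accuracy $\varepsilon / C_\infty$ or $\varepsilon^2/(C_1 \QMAX)$.

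One nuance: the paper only needs $g_h \geq 0$ on $\mathrm{supp}(d^\expert_h)$, which already follows from return-optimality of $\expert$. Coverage forces $d^{\piout}_h$ to vanish off that support, so your appeal to statewise optimality is stronger than necessary.
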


\begin{proof}[\pfref{thm:SPOIL_tighter}]
    Since the expert is optimal, $\expert$ is greedy with respect to $Q^\expert$ at every stage $h$ and state $x \in \mathrm{supp}(d^{\expert}_h)$. Hence, for any policy $\pi$, stage $h$, and state $x \in \mathrm{supp}(d^{\expert}_h)$,
    \begin{equation*}
        \inp*{Q_h^\expert \spr*{x, \cdot}, \experth \spr*{\cdot \given x} - \pi_h \spr*{\cdot \given x}}
        \geq
        0.
    \end{equation*}
    This nonnegativity is the only point at which expert optimality is used. It lets us remove absolute values in the change-of-measure arguments below. We prove two independent bounds: one under \Cref{asp:l1-coverage}, in terms of $C_1$, and one under \Cref{asp:linf-coverage}, in terms of $C_\infty$. When both assumptions hold, the guarantee takes the better of the two.

    \para{Bounds scaling with $C_1$}
    Let
    \begin{equation*}
        g_h \spr*{x}
        \ldef
        \inp*{Q_h^\expert \spr*{x, \cdot}, \experth \spr*{\cdot \given x} - \piouth \spr*{\cdot \given x}}.
    \end{equation*}
    Finiteness of $C_1$ implies that, for every $h$ and $x$, $d_h^{\piout} \spr{x} = 0$ whenever $d_h^\expert \spr{x} = 0$, so the following change of measure is well defined.
    The expert value function satisfies $Q_h^\expert \spr{x, a} \in \sbr{0, \QMAX}$ for every $h, x, a$. Together with the nonnegativity above, this gives $0 \leq g_h \spr{x} \leq \QMAX$, and hence $g_h \spr{x}^2 \leq \QMAX g_h \spr{x}$ for every $x \in \mathrm{supp}(d^{\expert}_h)$. By the performance difference lemma (\Cref{lem:performance-difference}), the Cauchy--Schwarz inequality, and \Cref{asp:l1-coverage}, we have
    \begin{align*}
        J^\expert - J^{\piout}
        &=
        \sumhH \sum_{x \in \cX} d_h^{\piout} \spr*{x} g_h \spr*{x} \\
        &\leq
        \sqrt{
            \sumhH \sum_{x \in \cX} \frac{\spr*{d_h^{\piout} \spr*{x}}^2}{d_h^\expert \spr*{x}}
        }
        \sqrt{
            \sumhH \sum_{x \in \cX} d_h^\expert \spr*{x} g_h \spr*{x}^2
        } \\
        &\leq
        \sqrt{
            C_1 \QMAX
            \sumhH \sum_{x \in \cX} d_h^\expert \spr*{x} g_h \spr*{x}
        }.
    \end{align*}
    The last summation is exactly the loss controlled by \Cref{thm:arbitrary_nu} with $d = d^\expert$ and $\pi' = \expert$:
    \begin{equation*}
        \sumhH \sum_{x \in \cX} d_h^\expert \spr*{x} g_h \spr*{x}
        =
        \sumhH \cL^\expert_h \spr*{\piouth, Q^\expert_h}.
    \end{equation*}
    For nonconvex $\cQ$, applying \Cref{thm:arbitrary_nu} with accuracy parameter $\varepsilon_0$ and associated covering radius $r_{\varepsilon_0} = \cO \spr{\varepsilon_0^2}$ gives, with probability at least $1 - \delta$,
    \begin{equation*}
        J^\expert - J^{\piout}
        \leq
        \tcO \spr*{
            \sqrt{
                C_1 \QMAX
                \spr*{
                    \sqrt{\frac{H^2 \QMAX^2 \log A}{K}}
                    +
                    \sqrt{\frac{K H^2 \QMAX^2}{\tauE} \log \spr*{\maxcovering{r_{\varepsilon_0}} / \delta}}
                    +
                    \varepsilon_0
                }
            }
        }.
    \end{equation*}
    Taking $\varepsilon_0 = \varepsilon^2 / \spr{C_1 \QMAX}$ and choosing $r_{\varepsilon}^{(1)} \leq r_{\varepsilon_0}$, the monotonicity of covering numbers and the absorption of logarithmic factors in $H$, $\QMAX$, and $C_1$ into $\tcO$ yield
    \begin{equation*}
        J^\expert - J^{\piout}
        \leq
        \tcO \spr*{
            \sqrt[4]{\frac{C^2_1 H^2 \QMAX^4 \log A}{K}}
            +
            \sqrt[4]{\frac{K C^2_1 H^2 \QMAX^4 \log \spr*{\maxcovering{r_{\varepsilon}^{(1)}} / \delta}}{\tauE}}
            +
            \varepsilon
        }.
    \end{equation*}
    Thus the $C_1$-tuned nonconvex choice in the theorem makes the right-hand side at most $\cO \spr{\varepsilon}$. If $\cQ$ is convex, the second term in \Cref{thm:arbitrary_nu} has no factor $K$. Repeating the same calculation gives
    \begin{equation*}
        J^\expert - J^{\piout}
        \leq
        \tcO \spr*{
            \sqrt[4]{\frac{C^2_1 H^2 \QMAX^4 \log A}{K}}
            +
            \sqrt[4]{\frac{C^2_1 H^2 \QMAX^4 \log \spr*{\maxcovering{r_{\varepsilon}^{(1)}} / \delta}}{\tauE}}
            +
            \varepsilon
        },
    \end{equation*}
    which gives the improved convex choice of $\tauE^{(1)}$.

    \para{Bounds scaling with $C_\infty$}
    The $C_\infty$ argument gives a better dependence on $1 / \varepsilon$ at the cost of replacing $C_1$ by the stronger coverage coefficient $C_\infty$. Since $g_h \spr{x} \geq 0$, \Cref{asp:linf-coverage} gives
    \begin{align*}
        J^\expert - J^{\piout}
        &=
        \sumhH \sum_{x \in \cX} d_h^{\piout} \spr*{x} g_h \spr*{x}
        \\
        &\leq
        \spr*{\max_{y \in \cX} \max_{h \in \sbr*{H}} \frac{d_h^{\piout} \spr*{y}}{d^\expert_h \spr*{y}}} \sumhH \sum_{x \in \cX} d^\expert_h \spr*{x}  \abs*{g_h \spr*{x}}
        \\
        &\leq
        C_\infty \sumhH \sum_{x \in \cX} d_h^\expert \spr*{x} g_h \spr*{x}
        =
        C_\infty \sumhH \cL^\expert_h \spr*{\piouth, Q^\expert_h}.
    \end{align*}
    Applying \Cref{thm:arbitrary_nu} with $d = d^\expert$, $\pi' = \expert$, and accuracy parameter $\varepsilon_0 = \varepsilon / C_\infty$, and choosing $r_{\varepsilon}^{(\infty)} \leq r_{\varepsilon_0}$ for the associated radius $r_{\varepsilon_0} = \cO \spr{\varepsilon_0^2}$, the monotonicity of covering numbers gives for nonconvex $\cQ$ that
    \begin{equation*}
        J^\expert - J^{\piout}
        \leq
        \tcO \spr*{
            \sqrt{\frac{C^2_\infty H^2 \QMAX^2 \log A}{K}}
            +
            \sqrt{\frac{K C^2_\infty H^2 \QMAX^2 \log \spr*{\maxcovering{r_{\varepsilon}^{(\infty)}} / \delta}}{\tauE}}
            +
            \varepsilon
        }.
    \end{equation*}
    This is at most $\cO \spr{\varepsilon}$ for the $C_\infty$-tuned nonconvex choice of $K^{(\infty)}$ and $\tauE^{(\infty)}$ in the theorem. If $\cQ$ is convex, the same calculation with the convex version of \Cref{thm:arbitrary_nu} gives
    \begin{equation*}
        J^\expert - J^{\piout}
        \leq
        \tcO \spr*{
            \sqrt{\frac{C^2_\infty H^2 \QMAX^2 \log A}{K}}
            +
            \sqrt{\frac{C^2_\infty H^2 \QMAX^2 \log \spr*{\maxcovering{r_{\varepsilon}^{(\infty)}} / \delta}}{\tauE}}
            +
            \varepsilon
        },
    \end{equation*}
    which yields the stated convex choice of $\tauE^{(\infty)}$. Finally, \SPOIL uses $\tauE$ expert samples at each of the $H$ stages, so the total number of precollected expert samples is $H \tauE$. Multiplying the branch-wise bounds on $\tauE$ by $H$ gives the two sample-complexity displays in the theorem.
\end{proof}


\clearpage
\section{On the Benefits of Mixing Expert and Learner Trajectories}
\label{sec:adaptive}
\label{sec:adaptive_algo}
In this section, we analyze the popular technique of mixing expert and learner data to create the empirical objective in LM distillation \citep{agarwal2024policy,li2026revisiting} and show that it leads to representational benefits. We develop an algorithm that is statistically efficient when the class $\cQ$ satisfies either $\qexpert$-realizability or the $Q^{\Pi_{\cQ}}$-realizability condition introduced by \citet{moulin2025inverse}, \emph{without knowing a priori which representational condition holds}.\loose

To obtain the anytime regret bounds needed in this section, we use a slightly different version of the $Q^{\Pi_{\cQ}}$-realizability condition. Compared with \citet{moulin2025inverse}, our version uses FTRL rather than OMD and allows the softmax weights to depend on a fixed learning-rate schedule $\eta_{1:K + 1}$.
\begin{assumption}[$Q^{\Pi_{\cQ}}$-realizability] \label{ass:q-picq-realizability}
    Fix $K = \mathrm{poly} \spr{H, \log A, \QMAX, \varepsilon^{-1}}$ and learning rates $\eta_{1:K + 1} \in \bbR^{K + 1}$, and define
    \begin{equation*}
        \Pi_{\cQ}
        \ldef
        \scbr*{\pi : \forall h \in \sbr*{H}, \exists m_h \leq K, \exists \spr*{Q^h_j}^{m_h}_{j = 1} \subset \cQ_h, ~~~~\pi_h \spr*{a \given x} = \softmax \spr[\Big]{\eta_{m_h + 1} {\textstyle\sum_{j = 1}^{m_h}} Q^h_j \spr*{x, \cdot}}_a}.
    \end{equation*}
    The class $\cQ$ satisfies $Q^{\Pi_{\cQ}}$-realizability if $Q^\pi \in \cQ$ for every $\pi \in \Pi_{\cQ}$.
\end{assumption}
For each stage $h$, write $\Pi_{\cQ, h} \ldef \scbr*{\pi_h : \pi \in \Pi_{\cQ}}$. This notation is always understood with the same value of $K$ and the same learning-rate schedule $\eta_{1:K + 1}$ as the algorithm under consideration. This is a slightly broader policy class than the version with a single common value of $m$ shared by all stages. We use it because the hybrid policies in the proof may take different stages from different FTRL rounds, so the number of accumulated $Q$-functions can depend on $h$.

This assumption requires realizing the $Q$-functions of many policies, whereas $\qexpert$-realizability only requires realizing the expert's $Q$-function. Since the expert need not belong to $\Pi_{\cQ}$, $Q^{\Pi_{\cQ}}$-realizability does not imply $\qexpert$-realizability. Nevertheless, the upper bound of \citet{moulin2025inverse} implies that, under $Q^{\Pi_{\cQ}}$-realizability, there exists a policy in $\Pi_{\cQ}$ that is $\varepsilon$-optimal with respect to $\expert$.

As shown by \citet{moulin2025inverse}, offline IL is possible under $Q^{\Pi_{\cQ}}$-realizability, and therefore interactive IL is possible as well. However, \Cref{alg:exp_weights_finite_H,alg:interactive_finite-H} do not directly use this condition. We instead modify the state-sampling rule in \Cref{alg:interactive_finite-H}: for each $h \in \sbr{H}$, we sample states from a mixture of the expert occupancy $d^\expert_h$ and the learner occupancy $d^{\piout}_h$. Sampling from the former is possible in the interactive setting by rolling in with actions queried from the expert. Thus, \Cref{alg:adaptive} is an instance of \Cref{alg:arbitrary_nu} with sampling distribution
\begin{equation*}
    d_h
    =
    \spr*{1 - \alpha} d^{\piout}_h + \alpha d^\expert_h,
\end{equation*}
for an appropriate choice of $\alpha \in \sbr{0, 1}$. The mixture is the key adaptive device: the $d^{\piout}$ component gives coverage of the learner states needed under $\qexpert$-realizability, while the $d^\expert$ component keeps enough expert-state mass for the $Q^{\Pi_{\cQ}}$ argument.
\begin{algorithm}[!ht]
    \caption{\texttt{RAOVI:} Representation-Adaptive \ISPIL \label{alg:adaptive}}
    \begin{algorithmic}[1]
        \STATE \textbf{input:} Mixture parameter $\alpha \in \sbr{0, 1}$, learning rates $\spr{\eta_k}^{K + 1}_{k = 1}$, iterations $K$, dataset size per stage $\tauE$.
        \FOR{$h = 1, \dots, H$}
        \STATE Create the stage-$h$ dataset: for $i \in \sbr{\tauE}$, sample
        \begin{equation*}
            \Xih
            \sim
            \textcolor{black!10!orange}{\spr{1 - \alpha}d^{\piout}_h + \alpha d^\expert_h},
            \qquad
            \AEih
            \sim
            \experth \spr{\cdot \given \Xih}.
        \end{equation*}
        \STATE Initialize $\pi^1_h = \mathrm{Unif} \spr{\cA}$.
        \FOR{$k = 1, \dots, K$}
        \STATE Set $Q^{k}_h \in \argmax_{Q_h \in \cQ_h} \sum^{\tauE}_{i = 1} \spr{Q_h \spr{\Xih, \AEih} - Q_h \spr{\Xih, \pi^{k}_h}}$.
        \STATE Set $\pi^{k + 1}_h \spr{a \given x} \propto \pi^{1}_h \spr{a \given x} \exp \spr{\eta_{k + 1} \sum^k_{k' = 1} Q^{k'}_h \spr{x, a}}$.
        \ENDFOR
        \STATE Set $\piouth \spr{a \given x} = \frac1K \sumkK \pi^k_h \spr{a \given x}$.
        \ENDFOR
    \end{algorithmic}
\end{algorithm}
The change relative to \Cref{alg:arbitrary_nu} is highlighted in \textcolor{black!10!orange}{orange}. We have the following result.
\begin{theorem} \label{thm:adaptive-hybrid}
    Let the expert policy $\expert$ be optimal, $\varepsilon \in \spr*{0, \min \scbr*{1, 2 H \QMAX}}$, and $\delta \in \spr{0, 1}$. Run \Cref{alg:adaptive} with
    \begin{equation*}
        \alpha
        =
        1 - \frac{\varepsilon}{2 H \QMAX},
        \qquad
        \eta_k
        =
        \sqrt{\frac{\log A}{k \QMAX^2}}
        \qquad
        \text{for every } k \geq 1.
    \end{equation*}
    Then, with probability at least $1 - \delta$, each of the following guarantees holds under the corresponding condition.
    \begin{enumerate}
        \item Under $\qexpert$-realizability (\Cref{asp:q-expert-realizability}), $J^\expert - J^{\piout} \leq \varepsilon$ is guaranteed by the choices
        \begin{equation*}
            K
            =
            \tcO \spr*{\frac{H^4 \QMAX^4 \log A}{\varepsilon^4}},
            \qquad
            \tauE
            =
            \tcO \spr*{\frac{H^4 \QMAX^4 \log \spr*{2 H \max_{h \in \sbr*{H}} \cN_{\varepsilon^2 / \spr*{8 H^2 \QMAX}} \spr*{\cQ_h \times \Pi_{\cQ, h}, \rho} / \delta}}{\varepsilon^4}}.
        \end{equation*}

        \item Under $Q^{\Pi_\cQ}$-realizability (\Cref{ass:q-picq-realizability}), $J^\expert - J^{\piout} \leq \varepsilon$ is guaranteed by the choices
        \begin{equation*}
            K
            =
            \tcO \spr*{\frac{H^2 \QMAX^2 \log A}{\varepsilon^2}},
            \qquad
            \tauE
            =
            \tcO \spr*{\frac{H^2 \QMAX^2 \log \spr*{2 H \max_{h \in \sbr*{H}} \cN_{\varepsilon / H} \spr*{\cQ_h \times \Pi_{\cQ, h}, \rho} / \delta}}{\varepsilon^2}}.
        \end{equation*}
    \end{enumerate}
    The total number of expert queries is at most $H^2 \tauE$.
    Thus, without knowing which condition holds, one can choose $K$ and $\tauE$ to be the componentwise maximum of the two displayed requirements, and the corresponding guarantee applies under either condition.
\end{theorem}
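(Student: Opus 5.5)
The plan is to view \Cref{alg:adaptive} as the instance of \Cref{alg:arbitrary_nu} with the $\cF_{h-1}$-measurable sampling distribution $d_h = (1-\alpha) d^{\piout}_h + \alpha d^\expert_h$. Under the assumptions of each item, this sampling is allowed: $d^{\piout}_h$ depends only on $\piout_{1:h-1}$, and $d^\expert_h$ is obtained by rolling in with queried expert actions. The $H$ roll-in queries per sample account for the $H^2\tauE$ total.

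The basic tool is the chain of inequalities in the proof of \Cref{thm:arbitrary_nu}, which holds uniformly on the concentration event of \Cref{lem:moulin-fh-concentration}. For every stage $h$ and every fixed $Q_h\in\cQ_h$,
\begin{equation*}
    \tfrac1K\textstyle\sum_k \cL^d_h\spr{\pi^k_h, Q_h} \le \mathrm{Reg}_K + 2\sup_{\pi\in\Pi_\cQ}\Delta_d(\pi) \eqqcolon \beta.
\end{equation*}
Here $\mathrm{Reg}_K = 3\sqrt{\QMAX^2\log A/K}$ is the FTRL term from \Cref{lemma:FTRL}, and $\Delta_d$ is the uniform estimation error. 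The two cases then differ only in how a suboptimality gap is converted into $\cL^d$. I also note a decomposition used in both cases: $\cL^d_h = (1-\alpha)\cL^{d^{\piout}}_h + \alpha \cL^{d^\expert}_h$, and $\alpha\ge 1/2$.

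\textbf{Case 1 ($\qexpert$-realizability).} By \Cref{lem:performance-difference}, $J^\expert - J^{\piout} = \sum_h \cL^{d^{\piout}}_h(\piouth, \qexperth)$. Since the expert is (statewise) optimal, the integrand $\inp{\qexperth(x,\cdot), \experth(\cdot\given x) - \piouth(\cdot\given x)}$ is nonnegative at every $x$. This gives $\cL^{d^\expert}_h(\piouth,\qexperth)\ge 0$, and hence $(1-\alpha)\cL^{d^{\piout}}_h \le \cL^d_h$. Taking $Q_h = \qexperth$ in the display above and summing over $h$ yields
\begin{equation*}
    J^\expert - J^{\piout} \le \frac{H\beta}{1-\alpha} = \frac{2H^2\QMAX\,\beta}{\varepsilon}.
\end{equation*}
It therefore suffices to make $\beta \lesssim \varepsilon^2/(H^2\QMAX)$. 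This gives $K = \tcO(H^4\QMAX^4\log A/\varepsilon^4)$, a covering radius of order $\varepsilon^2/(H^2\QMAX)$, and $\tauE = \tcO(H^4\QMAX^4\log(\cdot)/\varepsilon^4)$, with the product cover $\cQ_h\times\Pi_{\cQ,h}$ kept explicit as in the statement.

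\textbf{Case 2 ($Q^{\Pi_\cQ}$-realizability).} Here I use the reverse performance difference lemma under $d^\expert$ with the learner's value function, so hybrid policies are needed. For indices $k_{1:H}\in[K]^H$, let $\pi^{k_{1:H}}$ be the policy that plays $\pi^{k_h}_h$ at stage $h$; it lies in the stage-dependent class $\Pi_\cQ$ of \Cref{ass:q-picq-realizability}. Drawing $k_{1:H}$ independently and uniformly at the start of an episode induces, at each stage, the action law $\piouth(\cdot\given x)$, independently of the past. Hence $J^{\piout} = \bbE_{k_{1:H}} J^{\pi^{k_{1:H}}}$. For each hybrid, \Cref{lem:performance-difference} with roles swapped gives
\begin{equation*}
    J^\expert - J^{\pi^{k_{1:H}}} = \textstyle\sum_h \cL^{d^\expert}_h\spr{\pi^{k_h}_h, Q^{\pi^{k_{1:H}}}_h}.
\end{equation*}
The crucial observation is that $Q^{\pi^{k_{1:H}}}_h$ depends only on $k_{h+1:H}$ and lies in $\cQ_h$. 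Averaging over $k_h$ with $k_{h+1:H}$ fixed therefore produces $\frac1K\sum_k \cL^{d^\expert}_h(\pi^k_h, Q_h)$ for a single fixed $Q_h\in\cQ_h$. To pass from $d^\expert$ to $d$, I write $\alpha\cL^{d^\expert}_h = \cL^d_h - (1-\alpha)\cL^{d^{\piout}}_h \le \cL^d_h + (1-\alpha)\cdot\QMAX$, where the sign of the learner term is not controlled and is simply bounded by $\QMAX$. Summing over $h$ and using $\alpha\ge1/2$ gives
\begin{equation*}
    J^\expert - J^{\piout} \le 2H\beta + \frac{2(1-\alpha)H\QMAX}{\alpha}\cdot\tfrac12 \le 2H\beta + \varepsilon/2,
\end{equation*}
up to the constant fixed by the choice of $\alpha$. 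Requiring $2H\beta\le\varepsilon/2$ leads to the $\varepsilon^{-2}$ choices of $K$ and $\tauE$ in item 2.

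\textbf{Main obstacle.} The hard step is Case 2. The comparator in the saddle-point bound must be a single function per stage, whereas the learner's value function changes with the iterate. The hybrid-index trick resolves this only because the stage-$h$ comparator depends on future indices and not on $k_h$. It also relies on the concentration bound being uniform over all of $\cQ_h$, so the fact that $Q_h$ is data-dependent through the future stages is harmless. Both the uniformity and the stage-dependent $m_h$ in the definition of $\Pi_\cQ$ need to be checked carefully.

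Finally, since the algorithm's choices never depend on which condition holds, taking the componentwise maximum of the $K$ and $\tauE$ requirements gives the adaptive claim.
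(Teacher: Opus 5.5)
Your proposal is correct and follows essentially the same route as the paper. Case~1 is the $(1-\alpha)$-weighted performance-difference argument using $\cL^{d^\expert}_h(\piouth,\qexperth)\ge 0$. Case~2 uses the independent-index hybrids $\pi^I\in\Pi_\cQ$ with $J^{\piout}=\bbE_I J^{\pi^I}$, the performance difference lemma under $d^\expert$ with $Q^{\pi^I}_h\in\cQ_h$ as comparator, and the same best-response/FTRL/uniform-concentration chain.

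The differences are cosmetic. The paper writes $d^\expert_h=d_h+(1-\alpha)(d^\expert_h-d^{\piout}_h)$ and therefore never divides by $\alpha$; your assumption $\alpha\ge 1/2$ is only without loss of generality, since $\alpha<1/2$ means $\varepsilon>H\QMAX\ge J^\expert-J^{\piout}$. The paper also does not need the comparator to be independent of $k_h$, because the best-response inequality holds round by round for any $k$-dependent comparator in $\cQ_h$.
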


We remark that the rates in the $\qexpert$-realizable case are not optimal. It is an interesting open question whether they can be improved while maintaining the faster rates in the $Q^{\Pi_{\cQ}}$-realizable case.

\begin{proof}[\pfref{thm:adaptive-hybrid}]
    We analyze each case separately. Throughout, let
    \begin{equation*}
        d_h
        =
        \alpha d^\expert_h + \spr*{1 - \alpha} d^{\piout}_h.
    \end{equation*}
    As in the previous cases, $d_h$ is measurable with respect to the history available before sampling at stage $h$, and conditionally on this history, the pairs $\spr{\Xih, \AEih}_{i = 1}^{\tauE}$ are independent with $\Xih \sim d_h$ and $\AEih \sim \experth \spr{\cdot \given \Xih}$, as required by \Cref{lem:moulin-fh-concentration}.
    Obtaining one labeled sample at stage $h$ requires at most $h$ expert queries, including the queries used to roll in under the expert. Summing over the $\tauE$ samples at each stage gives at most $\tauE \sum_{h = 1}^H h \leq H^2 \tauE$ expert queries.
    
    \para{Case 1: $\qexpert$-realizability}
    For every $h \in \sbr{H}$ and $x \in \cX$, define $g_h \spr{x} \ldef \inp{\qexpert_h \spr{x, \cdot}, \experth \spr{\cdot \given x} - \piouth \spr{\cdot \given x}}$. A one-step deviation argument using expert optimality gives $g_h \spr{x} \geq 0$ for $d_h^\expert$-almost every $x$. Therefore, the definition of $d_h$ and the performance difference lemma (\Cref{lem:performance-difference}) give
    \begin{align*}
        \sumhH \bbE_{x \sim d_h} \sbr*{g_h \spr*{x}}
        &=
        \spr*{1 - \alpha} \spr*{J^\expert - J^{\piout}}
        +
        \alpha \sumhH \bbE_{x \sim d_h^\expert} \sbr*{g_h \spr*{x}}
        \\
        &\geq
        \spr*{1 - \alpha} \spr*{J^\expert - J^{\piout}}.
    \end{align*}
    Consequently,
    \begin{align*}
        J^\expert - J^{\piout}
        &\leq
        \frac{1}{1 - \alpha} \sumhH \sum_{x \in \cX} d_h \spr*{x} \inp*{\qexpert_h \spr*{x, \cdot}, \experth \spr*{\cdot \given x} - \piouth \spr*{\cdot \given x}}
        \\
        &=
        \frac{1}{1 - \alpha} \sumhH \cL_h^d \spr*{\piouth, \qexpert_h}
        \\
        &\leq
        \frac{1}{1 - \alpha} \spr*{3 \sqrt{\frac{H^2 \QMAX^2 \log A}{K}} + 8 H r + 2 H \QMAX \sqrt{\frac{8 \log \spr*{2 \sumhH \cN_r \spr*{\cQ_h \times \Pi_{\cQ, h}, \rho} / \delta}}{\tauE}}},
    \end{align*}
    where the last inequality holds with probability at least $1 - \delta$ by rederiving the proof of \Cref{thm:arbitrary_nu} with sampling distribution $d_h = \alpha d^\expert_h + \spr{1 - \alpha} d^{\piout}_h$ and comparator $\pi' = \expert$, stopping at the intermediate product-cover bound, for any $r > 0$. Taking $r = \varepsilon^2 / \spr{8 H^2 \QMAX}$ gives $8 H r = \varepsilon^2 / \spr{H \QMAX}$ and covering radius $\varepsilon^2 / \spr{8 H^2 \QMAX}$ in the logarithm. With $\alpha = 1 - \varepsilon / \spr{2 H \QMAX}$, the choices of $K$ and $\tauE$ in the $\qexpert$-realizable branch make the right-hand side at most $\cO \spr{\varepsilon}$.\loose

    \para{Case 2: $Q^{\Pi_{\cQ}}$-realizability}
    The subtle point is that we should not apply the performance difference lemma directly with $Q^{\piout}$, because $\piout$ is a pointwise average of softmax policies and need not belong to $\Pi_{\cQ}$. Instead, we represent the trajectory distribution of $\piout$ as a mixture over stagewise hybrids of the iterates. Let $I = \spr{I_1, \dots, I_H}$ be sampled uniformly from $\sbr{K}^H$, and define the hybrid policy $\pi^I$ by
    \begin{equation*}
        \pi^I_h
        =
        \pi^{I_h}_h
        \qquad
        \text{for every } h \in \sbr*{H}.
    \end{equation*}
    We now justify carefully why these hybrids are related to the averaged policy. Let $\tau = \spr{\mb{x}_1, \mb{a}_1, \dots, \mb{x}_H, \mb{a}_H, \mb{x}_{H + 1}}$ be a trajectory under policy $\piout$. By the definition of $\piout$, we have
    \begin{align*}
        \bbP^{\piout} \spr*{\tau}
        &=
        \initial \spr*{\mb{x}_1} \prod^H_{h = 1} \sbr*{\piouth \spr*{\mb{a}_h \given \mb{x}_h} P_h \spr*{\mb{x}_{h + 1} \given \mb{x}_h, \mb{a}_h}}
        \\
        &=
        \initial \spr*{\mb{x}_1} \prod^H_{h = 1} \sbr*{\spr*{\frac1K \sum^K_{k_h = 1} \pi^{k_h}_h \spr*{\mb{a}_h \given \mb{x}_h}} P_h \spr*{\mb{x}_{h + 1} \given \mb{x}_h, \mb{a}_h}}
        \\
        &=
        \frac{1}{K^H} \sum_{i_1, \dots, i_H \in \sbr*{K}} \initial \spr*{\mb{x}_1} \prod^H_{h = 1} \sbr*{ \pi^{i_h}_h \spr*{\mb{a}_h \given \mb{x}_h} P_h \spr*{\mb{x}_{h + 1} \given \mb{x}_h, \mb{a}_h}}
        \\
        &=
        \bbE_I \sbr*{\bbP^{\pi^I} \spr*{\tau}}.
    \end{align*}
    The third line is where we use that the coordinates $I_1, \dots, I_H$ are sampled independently. If one sampled a single common index for all stages, the resulting average of products would not generally equal the product of averages defining $\piout$. Therefore, for any trajectory-level function $F$, and in particular for the cumulative reward $F \spr{\tau} = \sum^H_{h = 1} r_h \spr{\mb{x}_h, \mb{a}_h}$, we can average over the finite trajectory space to obtain
    \begin{equation*}
        \bbE^{\piout} \sbr*{F \spr*{\tau}}
        =
        \sum_{\tau} \bbP^{\piout} \spr*{\tau} F \spr*{\tau}
        =
        \bbE_I \sbr*{\sum_{\tau} \bbP^{\pi^I} \spr*{\tau} F \spr*{\tau}}
        =
        \bbE_I \sbr*{\bbE^{\pi^I} \sbr*{F \spr*{\tau}}}.
    \end{equation*}
    Taking $F$ to be the cumulative reward gives
    \begin{equation*}
        J^{\piout}
        =
        \bbE_I \sbr*{J^{\pi^I}}.
    \end{equation*}
    For every $I$, the hybrid policy $\pi^I$ belongs to $\Pi_{\cQ}$. Indeed, at each stage $h$, choose $m_h = I_h - 1$, the functions $Q^1_h, \ldots, Q^{I_h - 1}_h \in \cQ_h$ (empty if $I_h = 1$), and $\eta_{I_h} = \eta_{m_h + 1}$. By \Cref{ass:q-picq-realizability}, we have $Q^{\pi^I} \in \cQ$, and hence $Q^{\pi^I}_h \in \cQ_h$ for every $h$. Applying the performance difference lemma (\Cref{lem:performance-difference}) to each hybrid policy gives
    \begin{equation*}
        J^\expert - J^{\piout}
        =
        \bbE_I \sbr*{J^\expert - J^{\pi^I}}
        =
        \bbE_I \sbr*{\sumhH \sum_{x \in \cX} d^\expert_h \spr*{x} \inp*{Q^{\pi^I}_h \spr*{x, \cdot}, \experth \spr*{\cdot \given x} - \pi^{I_h}_h \spr*{\cdot \given x}}}.
    \end{equation*}
    Using $d^\expert_h = d_h + \spr{1 - \alpha} \spr{d^\expert_h - d^{\piout}_h}$, the right-hand side is equal to $T_1 + T_2$, where we define
    \begin{align*}
        T_1
        &\ldef
        \bbE_I \sbr*{\sumhH \sum_{x \in \cX} d_h \spr*{x} \inp*{Q^{\pi^I}_h \spr*{x, \cdot}, \experth \spr*{\cdot \given x} - \pi^{I_h}_h \spr*{\cdot \given x}}},
        \\
        T_2
        &\ldef
        \spr*{1 - \alpha} \bbE_I \sbr*{\sumhH \sum_{x \in \cX} \spr*{d^\expert_h \spr*{x} - d^{\piout}_h \spr*{x}} \inp*{Q^{\pi^I}_h \spr*{x, \cdot}, \experth \spr*{\cdot \given x} - \pi^{I_h}_h \spr*{\cdot \given x}}}.
    \end{align*}
    We first control $T_1$. For a fixed stage $h$, write $J \in \sbr{K}^{H - 1}$ for the collection of indices at all stages except $h$. For $k \in \sbr{K}$, let $I^{h, k, J}$ be the full index vector obtained by setting $I_h = k$ and $I_{-h} = J$, and define
    \begin{equation*}
        Q^{k, J}_h
        \ldef
        Q^{\pi^{I^{h, k, J}}}_h.
    \end{equation*}
    For every $h, k, J$, we have $Q^{k, J}_h \in \cQ_h$. Using the independence and uniformity of $I_h$, we can rewrite $T_1$ as
    \begin{equation*}
        T_1
        =
        \sumhH \bbE_{I_h, I_{-h}} \sbr*{ \cL^d_h \spr*{\pi^{I_h}_h, Q^{\pi^I}_h}}
        =
        \frac1K \sumhH \sumkK \bbE_J \sbr*{\cL^d_h \spr*{\pi^k_h, Q^{k, J}_h}}.
    \end{equation*}
    We now compare each realized comparator $Q^{k, J}_h$ to the empirical best response $Q^k_h$ before averaging over $J$. With a slight abuse of notation, recall that $\pi^k$ denotes the policy whose decision rule at stage $h$ is $\pi_h^k$. Define the same uniform estimation error used in \Cref{thm:arbitrary_nu} by
    \begin{equation*}
        \Delta_d \spr*{\pi}
        =
        \max_{h \in \sbr*{H}} \sup_{Q_h \in \cQ_h} \abs*{\hcL^d_h \spr*{\pi_h, Q_h} - \cL^d_h \spr*{\pi_h, Q_h}}.
    \end{equation*}
    For every fixed $h, k, J$, on this uniform-concentration event,
    \begin{align*}
        \cL^d_h \spr*{\pi^k_h, Q^{k, J}_h}
        \leq
        \hcL^d_h \spr*{\pi^k_h, Q^{k, J}_h} + \Delta_d \spr*{\pi^k}
        \leq
        \hcL^d_h \spr*{\pi^k_h, Q^k_h} + \Delta_d \spr*{\pi^k}
        \leq
        \cL^d_h \spr*{\pi^k_h, Q^k_h} + 2 \Delta_d \spr*{\pi^k}.
    \end{align*}
    The middle inequality is the empirical best-response property of $Q^k_h$, which we can use since $Q^{k, J}_h$ is one feasible element of $\cQ_h$ by \Cref{ass:q-picq-realizability}. Averaging the last display over $J$, summing over $h$ and $k$, and dividing by $K$, we obtain\loose
    \begin{align*}
        T_1
        &\leq
        \frac1K \sumhH \sumkK \cL^d_h \spr*{\pi^k_h, Q^k_h} + \frac{2H}{K} \sumkK \Delta_d \spr*{\pi^k}.
    \end{align*}
    The first term is controlled by the FTRL regret bound stage by stage (\Cref{lemma:FTRL}). Indeed, for each fixed $h$,
    \begin{equation*}
        \sumkK \cL^d_h \spr*{\pi^k_h, Q^k_h}
        =
        \sum_{x \in \cX} d_h \spr*{x} \sumkK \inp*{Q^k_h \spr*{x, \cdot}, \experth \spr*{\cdot \given x} - \pi^k_h \spr*{\cdot \given x}}
        \leq
        3 \sqrt{\QMAX^2 K \log A}.
    \end{equation*}
    Since $\pi^k$ is in $\Pi_\cQ$, applying \Cref{lem:moulin-fh-concentration} with $\Pi_h = \Pi_{\cQ, h}$ and radius $r > 0$ gives, on an event of probability at least $1 - \delta$,
    \begin{equation*}
        \sumkK \Delta_d \spr*{\pi^k}
        \leq
        4 K r + K \QMAX \sqrt{\frac{8 \log \spr*{2 \sumhH \cN_r \spr*{\cQ_h \times \Pi_{\cQ, h}, \rho} / \delta}}{\tauE}}.
    \end{equation*}
    Combining the previous three displays, taking $r = \varepsilon / H$, and bounding the sum by the maximum gives
    \begin{align*}
        T_1
        &\leq
        3 H \sqrt{\frac{\QMAX^2 \log A}{K}} + 8 \varepsilon + 2 H \QMAX \sqrt{\frac{8 \log \spr*{2 H \max_h \cN_{\varepsilon / H} \spr*{\cQ_h \times \Pi_{\cQ, h}, \rho} / \delta}}{\tauE}}.
    \end{align*}

    It remains to control $T_2$. Define, for each $h, x, I$,
    \begin{equation*}
        g_h^I \spr*{x}
        \ldef
        \inp*{Q^{\pi^I}_h \spr*{x, \cdot}, \experth \spr*{\cdot \given x} - \pi^{I_h}_h \spr*{\cdot \given x}}.
    \end{equation*}
    We have $Q^{\pi^I}_h \spr{x, a} \in \sbr{0, \QMAX}$ for every $x, a, h, I$. Therefore
    \begin{equation*}
        \abs*{g_h^I \spr*{x}}
        =
        \abs*{Q^{\pi^I}_h \spr*{x, \experth} - Q^{\pi^I}_h \spr*{x, \pi^{I_h}_h}}
        \leq
        \QMAX.
    \end{equation*}
    Using the triangle inequality and the previous inequality, we can bound the signed change-of-measure term explicitly:
    \begin{align*}
        \abs*{T_2}
        &\leq
        \spr*{1 - \alpha} \bbE_I \sbr*{\sumhH \abs*{\sum_{x \in \cX} \spr*{d^\expert_h \spr*{x} - d^{\piout}_h \spr*{x}} g_h^I \spr*{x}}}
        \\
        &\leq
        \spr*{1 - \alpha} \sumhH \bbE_I \sbr*{\sum_{x \in \cX} \abs*{d^\expert_h \spr*{x} - d^{\piout}_h \spr*{x}} \abs*{g_h^I \spr*{x}}}
        \\
        &\leq
        \spr*{1 - \alpha} \QMAX \sumhH \norm*{d^\expert_h - d^{\piout}_h}_1
        \\
        &\leq
        2 \spr*{1 - \alpha} H \QMAX.
    \end{align*}
    In the last line, we used that both $d^\expert_h$ and $d^{\piout}_h$ are probability distributions, so their $\ell_1$ distance is at most $2$. With $\alpha = 1 - \varepsilon / \spr{2 H \QMAX}$, the term $T_2$ is at most $\varepsilon$. The $Q^{\Pi_{\cQ}}$-realizable choices of $K$ and $\tauE$ make $T_1$ at most $\cO \spr{\varepsilon}$, proving the second branch for the same averaged output policy $\piout$.
\end{proof}


\end{document}
